\newcommand\pdfmath[1]{\texorpdfstring{$#1$}{#1}}
\newcommand{\bc}[1]{\left\{{#1}\right\}}
\newcommand{\br}[1]{\left({#1}\right)}
\newcommand{\bs}[1]{\left[{#1}\right]}
\newcommand{\abs}[1]{\left| {#1} \right|}
\newcommand{\inner}[1]{\langle#1\rangle}
\newcommand{\norm}[1]{\left\| {#1} \right\|}
\renewcommand{\P}[1]{\mathbb{P}\bs{{#1}}}
\newcommand{\E}[1]{\mathbb{E}\bs{{#1}}}
\newcommand{\Ee}[2]{\underset{#1}{\mathbb{E}}\bs{{#2}}}
\DeclareMathOperator*{\argmin}{arg\,min}
\DeclareMathOperator*{\argmax}{arg\,max}
\newtheorem{theorem}{\protect\theoremname}
  \newtheorem{lemma}{\protect\lemmaname}
  \newtheorem{corollary}{\protect\corrolaryname}
  \newtheorem{defn}{\protect\definitionname}
  \newtheorem{proposition}{\protect\propositionname}
  \newtheorem{remark}{Remark}
    \newtheorem{assumption}{\protect\assumptionname}
    \newenvironment{proofof}[1]{\begin{proof}[{#1}]}{\end{proof}}
\providecommand{\definitionname}{Definition}
\providecommand{\examplename}{Example}
\providecommand{\lemmaname}{Lemma}
\providecommand{\corrolaryname}{Corollary}
\providecommand{\propositionname}{Proposition}
\providecommand{\conditionsname}{Conditions}
\providecommand{\theoremname}{Theorem}
\providecommand{\assumptionname}{Assumption}
\newcommand{\dv}{\mbf{d}}
\newcommand{\trans}{\intercal}
\newcommand{\qval}{\mbf{Q}}
\newcommand{\FEV}[1]{\mbs{\rho}_{\phim}(#1)}
\newcommand{\EFEV}[1]{\mbs{\rho}_{\phim}(\widehat{#1})}
\newcommand{\mcf}{\mathcal}
\newcommand{\mbf}{\mathbf}
\DeclareMathOperator{\Exp}{\mathds{E}} 
\DeclareMathOperator{\Prob}{\mathds{P}} 
\renewcommand{\Re}{\mathds{R}} 
\newcommand{\ar}{\mathds{R}} 
\newcommand{\sspace}{\mcf{S}}     
\newcommand{\aspace}{\mcf{A}}     
\newcommand{\wspace}{\mcf{W}}     
\newcommand{\normm}[1]{\lVert#1\rVert}               
\newcommand{\innerprod}[2]{\left\langle{#1},{#2}\right\rangle}
\newcommand{\expert}{{\pi_{\textup{E}}}}
\newcommand{\apprentice}{\pi_{\textup{A}}}
\newcommand{\mbs}{\boldsymbol}
\newcommand{\cost}{\mbf{c}}
\newcommand{\true}{\mbf{c_{\textup{true}}}}
\newcommand{\weight}{\mbf{w}}
\newcommand{\uv}{\mbf{u}}
\newcommand{\mv}{\mbs{\mu}}
\newcommand{\initial}{\mbs{\nu}_0}
\newcommand{\val}{\mbf{V}}
\newcommand{\thv}{\mbs{\theta}}
\newcommand{\lv}{\mbs{\lambda}}
\newcommand{\phim}{\mbs{\Phi}}
\newcommand{\phiv}{\mbs{\phi}}
\newcommand{\efevphi}{\mbs{\rho}_{\phim}(\widehat{\expert})}
\newcommand{\mexp}{\mv_{\expert}}
\newcommand{\xv}{\mbf{x}}
\newcommand{\yv}{\mbf{y}}
\newcommand{\zv}{\mbf{z}}
\newcommand{\wa}{\weight_{\textup{A}}}
\newcommand{\mmat}{\mbf{M}}
\newcommand{\bmat}{\mbf{B}}
\newcommand{\pmat}{\mbf{P}}
 \newcommand{\gv}{\mbf{g}}
\newcommand{\colorNoteBox}[3]{
	\begin{center}
		\vspace{-2ex}\small
		\fcolorbox[rgb]{#1}{#2}{\parbox[t]{\linewidth}{\setlength{\parskip}{1.5ex}%
				#3}}
	\end{center}}
\newcommand{\AK}[1]{\colorNoteBox{0,1,0}{0.9,0.9,0.9}{AK:~#1}}
\title{Proximal Point Imitation Learning}
\author{%
  Luca Viano \\
  LIONS, EPFL\\
  Lausanne, Switzerland\\
  \texttt{luca.viano@epfl.ch} \\
  \And
  Angeliki Kamoutsi \\
  ETH Zurich \\
  Zurich, Switzerland\\
  \texttt{kamoutsa@ethz.ch} \\
  \AND
  Gergely Neu \\
  Universitat Pompeu Fabra \\
  Barcelona, Spain\\
  \texttt{gergely.neu@gmail.com} \\
  \And
  Igor Krawczuk \\
  LIONS, EPFL\\
  Lausanne, Switzerland\\
  \texttt{igor.krawczuk@epfl.ch} \\
  \And
  Volkan Cevher \\
  LIONS, EPFL \\
  Lausanne, Switzerland\\
  \texttt{volkan.cevher@epfl.ch} \\
}
\begin{document}

\maketitle

\begin{abstract}
This work develops new algorithms with rigorous efficiency guarantees for infinite horizon imitation learning (IL) with linear function approximation without restrictive coherence assumptions. We begin with the minimax formulation of the problem and then outline how to leverage classical tools from optimization, in particular, the proximal-point method (PPM) and dual smoothing, for online and offline IL, respectively. Thanks to PPM, we avoid nested policy evaluation and cost updates for online IL appearing in the prior literature. In particular, we do away with the conventional alternating updates
by the optimization of a single convex and smooth objective over both cost and $Q$-functions. When solved inexactly, we relate the optimization errors to the suboptimality of the recovered policy. As an added bonus, by re-interpreting PPM as dual smoothing with the expert policy as a center point, we also obtain an offline IL algorithm enjoying theoretical guarantees in terms of required expert trajectories. Finally, we achieve convincing empirical performance for both linear and neural network function approximation.
\end{abstract}

\section{Introduction}\label{introduction}
This work is concerned with the prototypical setting of imitation learning (IL) where
\begin{enumerate}
    \item An expert provides demonstrations of state-action pairs in an environment. The expert could be optimal or suboptimal with respect to an unknown cost/reward function.
    \item The learner chooses distance measure between its policy to be learned and the expert empirical  distribution estimated from demonstrations. 
    \item The learner employs an algorithm, which additionally may or may not use interactions with the environment, to minimize the chosen distance. 
\end{enumerate}


In IL, the central goal of the learner is to recover a policy competitive with expert with respect to the underlying unknown cost function. IL is important for several real world applications like driving \cite{Knox:2021}, robotics \cite{Osa:2018}, and economics/finance \cite{Charpentier:2020} at the expense of following resources: ({\sc R1}) expert demonstrations, ({\sc R2})  (optional) interactions  with the environment where the expert collected the demonstrations, and ({\sc R3}) computational resources for solving the problem template. 

Interestingly, while there is a vast amount of literature using optimization ideas on the IL problem template, i.e. Lagrangian duality \cite{Ho:2016,Fu:2018, Ke:2020, Kostrikov:2019, Kostrikov:2020}, resource guarantees are still widely missing since the optimization literature focuses on the resource ({\sc R3}) where IL literature mainly focuses on the first two resources ({\sc R1}) and ({\sc R2}). Our work leverages deeper connections between optimization tools and IL by showing how classical optimization tools can be applied in a linear programming formulation of IL problem guaranteeing efficiency in all ({\sc R1}), ({\sc R2}), ({\sc R3}). 
\textbf{Our contributions:}  
This work aims at designing an algorithm enjoying both theoretical guarantees and convincing empirical performance. Our methodology is rooted in classical optimization tools and the LP approach to MDPs. More precisely, the method uses the recently repopularized overparameterization technique to obtain the Q-function as a Lagrangian multiplier~\cite{Mehta:2020,Bas-Serrano:2021} and solves the associated program using a \textsc{PPM} update with appropriately chosen Bregman divergences.
This results to an actor-critic algorithm, with the key feature that the policy evaluation step involves optimization of a single concave and smooth objective over both cost and $Q$-functions.  In this way, we avoid instability or poor convergence due to adversarial training~\cite{Ho:2016,Zhang:2020,Liu:2022,Shani:2021}, and can also recover an explicit cost along with Q-function.  We further  account for potential optimization errors, presenting an error propagation analysis that leads to rigorous guarantees for both online and offline setting. For the context of linear MDPs~\cite{Bas-Serrano:2021, Yang:2019, Jin:2020, Cai:2020, Wang:2020b, Agarwal:2020b, Neu:2020}, we provide explicit convergence rates and error bounds for the suboptimality of the learned policy, under mild assumptions, significantly weaker than those found in the literature until now. 
To our knowledge, such guarantees in this setting are provided for the first time.
Finally, we demonstrate that our approach achieves convincing empirical performance for both linear and neural network function approximation.


\textbf{Related Literature.} 
The first algorithm addressing the imitation learning problem is behavioral cloning \cite{Pomerleau:1991}. Due to the covariate shift problem \cite{Ross:2010,Ross:2011}, it has low efficiency in terms of expert trajectories ({\sc R1}). To address this issue, \cite{Russell:1998, Ng:2000, Abbeel:2004, Ratliff:2006, Syed:2007, Neu:2007, Ziebart:2008, Abbeel:2008, Levine:2010, Levine:2011} proposed to cast the problem as inverse reinforcement learning (IRL). 
IRL improves the efficiency in terms of expert trajectories, at the cost of introducing the need of running reinforcement learning (RL) repetitively, which can be prohibitive in terms of environment samples ({\sc R2}) and computation ({\sc R3}). A successive line of work started with \cite{Syed:2008} highlights that repeated calls to an RL routine can be avoided. 
This work inspired generative adversarial imitation learning (GAIL) \cite{Ho:2016} and other follow-up works \cite{Fu:2018, Ke:2020, Kostrikov:2019, Kostrikov:2020} that leveraged optimization tools like primal-dual algorithms but did not try to deepen the optimization connections to derive efficiency guarantees in terms of all ({\sc R1}),({\sc R2}),({\sc R3}). Finally, a recent line of work~\cite{Garg:2021, Kalweit:2020} in IL bypasses the need of optimizing over cost functions and thus avoids instability due to adversarial training. Although these algorithms achieve impressive empirical performance in challenging high dimensional benchmark tasks, they are hampered by limited theoretical understanding. This is the fundamental difference from our work, which enjoys both favorable practical performance and strong theoretical guarantees.

Existing model-free IL theoretical papers with global convergence guarantees assume either a finite horizon episodic MDP setting~\cite{Liu:2022}, or tabular MDPs~\cite{Shani:2021}, or the infinite horizon case but with restrictive assumptions, such as linear quadratic regulator setting~\cite{Cai:2019}, continuous kernelized nonlinear regulator \cite{Chang:2021, Kakade:2020},  access to a generative model and coherence assumption on the choice of features~\cite{Kamoutsi:2021,Bas-Serrano:2021}, bounded strong concentrability coefficients~\cite{Zhang:2020} or a linear transition law that can be completely specified by a finite-dimensional matrix~\cite{Liu:2022}. On the other hand, we provide convergence guarantees and error bounds for the context of linear MDPs ~\cite{Bas-Serrano:2021, Yang:2019, Jin:2020, Cai:2020,  Wang:2020b, Agarwal:2020b, Neu:2020} under a mild \emph{feature excitation} condition assumption. Despite being linear, the transition law can still have infinite degrees of freedom. To our knowledge, such guarantees in this setting are provided for the first time.

Our work applies the technique known as regularization in the online learning literature \cite{Abernethy:2008, Shalev-Shwartz:2012} and Bregman proximal-point or smoothing in optimization literature \cite{Rockafellar:1976, Nesterov:2005} to the LP formulation for MDPs \cite{Manne:1960, DeGhellinck:1967, Denardo:1970, Borkar:1988, Hernandez-Lerma:1996, Hernandez-Lerma:1999, DeFarias:2003, DeFarias:2004, Schweitzer:1985, Petrik:2009, Petrik:2010, Abbasi-Yadkori:2014, Laksh:2018, Chen:2018, MohajerinEsfahani:2018, Wang:2019, Lee:2019a, Bas-Serrano:2020, Cheng:2020, Jin:2020, Shariff:2020}. From this perspective, we can see Deep Inverse Q-Learning~\cite{Kalweit:2020} and IQ-Learn~\cite{Garg:2021} that consider entropy regularization in the objective as smoothing using uniform distribution as center point. In our case, we instead use as center point the previous iteration of the algorithm (for the online case) or the expert (for the offline case). 

From the technical point of view, the most important related works are the analysis of  REPS/Q-REPS~\cite{Peters:2010, Bas-Serrano:2021, Pacchiano:2021} and O-REPS~\cite{Zimin:2013} that first pointed out the connection between REPS and PPM. We build on their techniques with some important differences. In particular, while in the LP formulation of RL, PPM and mirror descent \cite{Beck:2003, Hazan:2016} are equivalent, recognizing that they are \textit{not equivalent} in IL is critical for stronger empirical performance. As an independent interest, our techniques can be used to improve upon the best rate for REPS in the tabular setting \cite{Pacchiano:2021} and to extend the guarantees to linear MDPs.  In order to discuss in more detail  our research questions and situate them among prior related theoretical  and practical works, we provide in Appendix~\ref{app:related-literature} an extended literature review.
\section{Background}
\label{sec:background}
\subsection{Markov Decision Processes}\label{sec:IL:MDPs}
The RL environment and its underlying dynamics are typically abstracted as an MDP given by a tuple $(\sspace,\aspace,P,\initial,\cost,\gamma)$, where $\sspace$ is the state space, $\aspace$ is the action space, $P:\sspace\times\aspace\rightarrow\Delta_{\sspace}$ is the transition law, 
$\initial\in\Delta_{\sspace}$ is the initial state distribution, $\cost\in[0,1]^{|\sspace||\aspace|}$ is the cost,
and $\gamma\in(0,1)$ is the discount factor. For simplicity, we focus on problems where $\sspace$ and $\aspace$ are finite but too large to be enumerated.
A \emph{stationary Markov policy} $\pi\colon\sspace\to\Delta_{\aspace}$ interacts with the environment iteratively, starting with an initial state $s_0\sim\initial$. At round $t$, if the system is at state $s_t$, an action $a_t\sim\pi(\cdot|s_t)$ is sampled and applied to the environment. Then a cost $c(s,a)$ is incurred, and the system transitions to the next state $s_{t+1}\sim P(\cdot|s,a)$. The goal of RL is to solve the optimal control problem
		$
		\rho_\cost^\star\triangleq\min_{\pi}\rho_\cost(\pi),
		$
		where $\rho_\cost(\pi)\triangleq(1-\gamma)\innerprod{\initial}{\val^\pi_\cost}$ is the \emph{normalized total discounted expected cost} of $\pi$.

The \emph{state value function} $\val_\cost^\pi\in\ar^{|\sspace|}$ of $\pi$, given cost $\cost$, is defined by 
		$
		V_\cost^{\pi}(s) \triangleq\Exp_s^{\pi}\Big[\sum_{t=0}^\infty \gamma^t  c(s_t, a_t)\Big]$,
		where $\Exp^{\pi}_{s}$ denotes the expectation with respect to the trajectories generated by $\pi$ starting from $s_0=s$. The \emph{optimal value function} $\val_\cost^\star\in\ar^{|\sspace|}$ is defined by
		$
		V_\cost^\star(s) \triangleq \min_{\pi}V_\cost^\pi(s). 
		$
		The \emph{optimal state-action value function} $\qval^\star_\cost\in\ar^{|\sspace||\aspace|}$, given by $Q_\cost^\star(s,a)\triangleq c(s,a)+\gamma\sum_{s'}V_\cost^\star(s')P(s'|s,a)$, is known to characterize optimal behaviors. Indeed $\val^\star_\cost$ is the unique solution to the \emph{Bellman optimality equation} $V^\star_\cost(s)=\min_{a}Q^\star_\cost(s,a)$. In addition, any deterministic policy $\pi^\star_\cost(s)=\arg\min_a Q^\star_\cost(s,a)$ is known to be optimal. 
	
	For every policy $\pi$, we define the \emph{normalized state-action occupancy measure} $\mv_\pi\in\Delta_{\sspace\times\aspace}$, by
		$
		\mu_\pi(s,a) \triangleq (1-\gamma) \sum_{t=0}^\infty \gamma^t  \Prob_{\initial}^{\pi}\left[s_t=s,a_t=a\right],
		$
		where $\Prob_{\initial}^{\pi}[\cdot]$ denotes the probability of an event when following $\pi$ starting from $s_0\sim\initial$.
		The occupancy measure can be interpreted as the discounted visitation frequency of state-action pairs. This allows
us to write 
$\rho_{\cost}(\pi)=\innerprod{\mv_\pi}{\cost}$.


\subsection{Imitation Learning}
\looseness=-1
Similarly to RL, the IL problem is posed in the MDP formalism, with the critical difference that the true cost $\true$ is unknown. Instead, we have access to a finite set of truncated trajectories sampled \textrm{i.i.d.} by executing an expert policy $\expert$ in the environment. The goal is to learn a policy that performs better than $\expert$ with respect to the unknown $\true$. To this end, we adopt the \emph{apprenticeship learning} formalism~\cite{Abbeel:2004,Syed:2008,Ho:2016b,Ho:2016,Shani:2021}, which carries the assumption that $\true$ belongs to a class of cost functions $\mathcal{C}$. We then seek an \emph{apprentice policy} $\apprentice$ that outperforms the expert across $\mathcal{C}$ by solving the following optimization problem
\begin{equation}\label{eq:IL}
    \zeta^\star\triangleq\min_{\pi} d_{\mathcal{C}}(\pi,\expert),
\end{equation}
where $d_\mcf{C}(\pi,\expert)\triangleq\max_{\cost\in\mcf{C}} \big(\rho_\cost(\pi)-\rho_\cost(\expert)\big)$ defines the $\mcf{C}$-distance between $\pi$ and $\expert$~\cite{Ho:2016,Chen:2020a,Zhang:2020,Liu:2022}. Then, $\apprentice$ satisfies the goal of IL, since it holds that $\rho_{\true}(\apprentice)-\rho_{\true}(\expert)\le\zeta^\star\le 0$. Intuitively, the cost class $\mathcal{C}$ distinguishes the expert from other policies. The maximization in~(\ref{eq:IL}) assigns high total cost to non-expert policies and low total cost to $\expert$~\cite{Ho:2016}, while the minimization aims to find the policy that matches the expert as close as possible with respect to $d_{\mathcal{C}}$.

By writing $d_\mcf{C}$ in its \emph{dual} form $\bar{d}_{\mathcal{C}}(\mv_\pi,\mv_{\expert})\triangleq\max_{\cost\in\mcf{C}} \big(\innerprod{\mv_\pi}{\cost}-\innerprod{\mv_{\expert}}{\cost}\big)$, it can be interpreted as an \emph{integral probability metric}~\cite{Muller:1997,Kent:2021} 
between the occupancy measures $\mv_\pi$ and $\mv_{\expert}$. Depending on how $\mathcal{C}$ is chosen, $d_{\mathcal{C}}$ turns to a different metric of probability measures like the $1$-Wasserstein distance~\cite{Xiao:2019,Dadashi:2021} for $\mathcal{C}=\textup{Lip}_1(\sspace\times\aspace)$, the total variation for $\mathcal{C}=\{\cost\mid\norm{\cost}_\infty\le 1\}$, or the maximum mean discrepancy for $\mathcal{C}=\{\cost\mid\norm{\cost}_{\mathcal{H}}\le 1\}$, where $\textup{Lip}_1(\sspace\times\aspace)$ denotes the space of $1$-Lipschitz functions on $\sspace\times\aspace$, and $\norm{\cdot}_{\mathcal{H}}$ denotes the norm of a reproducing kernel Hilbert space $\mathcal{H}$~\cite{Shalev-Shwartz:2014}.

\looseness=-1
In our theoretical analysis, we focus on linearly parameterized cost classes~\cite{Syed:2007,Syed:2008,Ho:2016,Liu:2022,Shani:2021} of the form $\mcf{C}\triangleq\{\cost_{\weight}\triangleq\sum_{i=1}^m w_i \phiv_i \mid \weight\in\mathcal{W}\}$, where $\{\phiv_i\}_{i=1}^m\subset\Re_+^{\abs{\sspace}\abs{\aspace}}$ are fixed feature vectors, such that $\norm{\phiv_i}_1 \le 1$ for all $i\in[m]$, and $\mathcal{W}$ is a a convex constraint set for the cost weights $\weight$. This assumption is not necessarily restrictive as
usually in practice the true cost depends on just
a few key properties, but the desirable weighting that specifies how different desiderata should be traded-off is unknown~\cite{Abbeel:2004}. Moreover, the cost features can be complex nonlinear functions that can be
obtained via unsupervised learning from raw state observations~\cite{Brown:2020b,Chen:2020b}. The matrix $\phim\triangleq\begin{bmatrix}\phiv_1&\ldots&\phiv_{m}\end{bmatrix}$ gives rise a \emph{feature expectation vector} (FEV) 
$\FEV{\pi}
		\triangleq (\rho_{\phiv_1}(\expert),\ldots,\rho_{\phiv_{m}}(\expert))^{\mathsf{T}}
		\in\Re^m$ of a policy $\pi$. Then, by choosing $\mathcal{W}$ to be the $\ell_2$ unit ball $B_1^m\triangleq\{\weight\in\ar^m\mid\norm{\weight}_2\le1\}$~\cite{Abbeel:2004}, we get a \emph{feature expectation matching} objective $d_{\mathcal{C}}(\pi,\pi_{\expert})=\norm{\FEV{\pi}-\FEV{\expert}}_2$, while for $\mathcal{W}$ being the probability simplex $\Delta_{[m]}$~\cite{Syed:2007,Syed:2008} we have a worst-case excess cost objective $d_{\mathcal{C}}(\pi,\pi_{\expert})=\max_{i\in[m]}\big(\rho_{\phiv_i}(\pi)-\rho_{\phiv_i}(\expert)\big)$. For clarity, we will replace $\cost$ by $\weight$
		in the notation of the quantities defined in Section~\ref{sec:IL:MDPs}. 

\section{A \pdfmath{Q}-Convex-Analytic Viewpoint}
\label{sec:LP_form}
Our methodology builds upon the convex-analytic approach to AL, first introduced by~\cite{Syed:2008}, with the key difference that we consider a different convex formulation that introduces  $Q$-functions as slack variables. This allows to design a practical scalable model-free algorithm with theoretical guarantees.

Let $\mathfrak{F}\triangleq\{\mv\in\ar^{|\sspace||\aspace|}\mid (\mbf{B}-\gamma\pmat)^\trans\mv=(1-\gamma)\initial,\; \mv\geq\mbf{0}\}$ be the \emph{state-action polytope}, where $\pmat$ is the vector form of $P$, i.e., $P_{(s,a),s'}\triangleq P(s'|s,a)$, and $\bmat$ is a binary matrix defined by $B_{(s,a),s'}\triangleq 1$ if $s=s'$, and $B_{(s,a),s'}\triangleq 0$ otherwise. The linear constraints that define the set $\mathfrak{F}$, also known as \emph{Bellman flow constraints}, precisely characterize the set of state-action occupancy measures.

\begin{proposition}[\citealp{Puterman:1994}]\label{pror:state-action-polytope}
We have that $\mv\in\mathfrak{F}$ if and only if there exists a unique stationary Markov policy $\pi$ such that $\mv=\mv_\pi$. If $\mv\in\mathfrak{F}$ then the policy $
			\pi_{\mv}(a|x) \triangleq \frac{\mv(x,a)}{\sum_{a'\in\aspace}\mv(x,a')}
			$ has occupancy measure $\mv$.
		\end{proposition}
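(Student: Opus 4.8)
The plan is to prove the two implications and then the uniqueness claim. For the direction ``$\mv=\mv_\pi$ for some stationary Markov $\pi$ $\Rightarrow$ $\mv\in\mathfrak{F}$'', I would fix an arbitrary stationary Markov policy $\pi$; nonnegativity of $\mv_\pi$ is immediate from its definition, and for the flow equation I would fix a state $s'$ and compute $\sum_{s,a}\mu_\pi(s,a)\big(B_{(s,a),s'}-\gamma P(s'|s,a)\big)=\sum_a\mu_\pi(s',a)-\gamma\sum_{s,a}\mu_\pi(s,a)P(s'|s,a)$, then substitute the series $\mu_\pi(s,a)=(1-\gamma)\sum_{t\ge0}\gamma^t\Prob_\initial^\pi[s_t=s,a_t=a]$ and use the Chapman--Kolmogorov identity $\sum_{s,a}\Prob_\initial^\pi[s_t=s,a_t=a]P(s'|s,a)=\Prob_\initial^\pi[s_{t+1}=s']$ to shift the summation index by one in the second term. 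The two series then telescope, leaving only the $t=0$ contribution $(1-\gamma)\initial(s')$, so $(\bmat-\gamma\pmat)^\trans\mv_\pi=(1-\gamma)\initial$.

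For the converse, given $\mv\in\mathfrak{F}$ I would first extract two consequences of the constraint: summing it over $s'$ yields $(1-\gamma)\norm{\mv}_1=1-\gamma$, so $\mv$ is a probability distribution, and reading off coordinate $s'$ gives the marginal identity $\bar\mu(s'):=\sum_a\mv(s',a)=(1-\gamma)\initial(s')+\gamma\sum_{s,a}\mv(s,a)P(s'|s,a)$. I would then define $\pi_\mv(\cdot|s)$ by the stated ratio whenever $\bar\mu(s)>0$ and arbitrarily (say, uniformly) otherwise, and introduce the affine operator $T\colon\ar^{|\sspace||\aspace|}\to\ar^{|\sspace||\aspace|}$ given by $(T\xi)(s',a')=\pi_\mv(a'|s')\big((1-\gamma)\initial(s')+\gamma\sum_{s,a}\xi(s,a)P(s'|s,a)\big)$. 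The crux is that $\mv$ and $\mv_{\pi_\mv}$ are both fixed points of $T$: for $\mv$, substitute the marginal identity into the bracket (which then equals $\bar\mu(s')$) and use the definition of $\pi_\mv$, noting that when $\bar\mu(s')=0$ the constraint forces $\mv(s',\cdot)=0$ so the identity still holds; for $\mv_{\pi_\mv}$ it is the standard one-step unrolling of the discounted occupancy series. Since the linear part of $T$ has $\ell_1\!\to\!\ell_1$ operator norm at most $\gamma$ (each row of $P$ and each $\pi_\mv(\cdot|s)$ sums to one), $T$ is a $\gamma$-contraction and hence has a unique fixed point; therefore $\mv=\mv_{\pi_\mv}$, which establishes both the existence of an inducing policy and the last sentence of the proposition.

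Uniqueness then follows: if $\mv=\mv_\pi=\mv_{\pi'}$, then on every state with $\bar\mu(s)>0$ we must have $\bar\mu(s)\pi(a|s)=\mv(s,a)=\bar\mu(s)\pi'(a|s)$, so $\pi(\cdot|s)=\pi'(\cdot|s)=\pi_\mv(\cdot|s)$; the claim should be read modulo the behaviour on states with $\bar\mu(s)=0$, which are never visited under either policy (equivalently, the policy is literally unique when $\initial$ has full support). The step I expect to require the most care is the bookkeeping around zero-marginal states --- ensuring $\pi_\mv$ is well defined there and that the identity $T\mv=\mv$ still holds --- whereas the index shift in the first part and the $\ell_1$ contraction estimate are routine.
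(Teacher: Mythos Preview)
Your argument is correct and complete. The paper does not actually prove this proposition---it is stated with a citation to \cite{Puterman:1994} and used as background---so there is no ``paper's proof'' to compare against. Your approach (telescoping the occupancy series for one direction, and a Banach fixed-point argument on the affine map $T$ for the converse) is one of the standard routes to this classical result, and you have handled the delicate points properly: the zero-marginal states are dealt with correctly both in the definition of $\pi_{\mv}$ and in verifying $T\mv=\mv$, and your caveat that uniqueness is only literal modulo unreachable states (or under $\initial\in\Re_{++}^{|\sspace|}$, which the paper in fact assumes later in Appendix~\ref{app:strong-duality-proof}) is exactly the right qualification.
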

	
	Using Proposition~\ref{pror:state-action-polytope} and the dual form of the $\mathcal{C}$-distance $\bar{d}_{\mathcal{C}}(\mv,\mv_{\expert})=\max_{\weight\in\mathcal{W}}\innerprod{\mv-\mv_{\expert}}{\cost_{\weight}}$, it follows that~(\ref{eq:IL}) is equivalent to the primal convex program $\zeta^\star=\min_{\mv}\{\bar{d}_{\mathcal{C}}(\mv,\mv_{\expert})\mid\mv\in\mathfrak{F}\}$. In particular for $\mathcal{W}=\Delta_{[m]}$ and by using an epigraphic transformation, we end up with an LP program~\cite{Syed:2008}, while for $\mathcal{W}=B_1^m$ we get a quadratic objective with linear constraints~\cite{Abbeel:2004}.
	
	A slight variation of the above reasoning is to introduce a mirror variable $\dv$ and split the Bellman flow constraints in the definition of $\mathfrak{F}$. We then get the primal convex program 
	\begin{equation}\label{eq:primal}
	\zeta^\star=\min_{(\mv,\dv)}\{\bar{d}_{\mathcal{C}}(\mv,\mv_{\expert})\mid(\mv,\dv)\in\mathfrak{M}\}, \tag{\color{blue}Primal}
	\end{equation} where the new polytope is given by 
	$\mathfrak{M}\triangleq\{(\mv,\dv)\mid\mbf{B}^\trans\dv=\gamma\pmat^\trans\mv+(1-\gamma)\initial,\; \mv=\dv,\; \dv\geq\mbs{0}\}$. This overparameterization trick has been first introduced by Mehta and Meyn~\cite{Mehta:2009} and has been recently revisited by~\cite{Bas-Serrano:2021,Neu:2020,Lee:2019a,Neu:2021,Mehta:2020,Lu:2021}. A salient feature of this equivalent formulation is that it introduces a $Q$-function as Lagrange multiplier to the equality constraint $\dv=\mv$, and so lends itself to data-driven algorithms. To motivate further this new formulation, in Appendix~\ref{app:strong-duality}, we shed light to its dual and provide an interpretation of the dual optimizers. In particular, when $\mathcal{W}=B_1^m$, we show that $(\mbf{V}^\star_{\mbf{w_{\textup{true}}}},\mbf{Q}^\star_{\mbf{w_{\textup{true}}}},\mbf{w_{\textup{true}}})$ is a dual optimizer.
	
	
	For our theoretical analysis we focus on the linear MDP setting~\cite{Jin:2020}, i.e., we assume that the transition law is linear in the feature mapping. We denote by $\phiv(s,a)$ the $(s,a)$-th row of $\phim$.
	\begin{assumption}[Linear MDP]\label{ass:linear-MDP}
			There exists a collection of $m$ probability measures $\boldsymbol{\omega}=(\omega_1,\ldots,\omega_m)$ on $\sspace$, such that $P(\cdot|s,a)=\innerprod{\boldsymbol{\omega}(\cdot)}{\phi(s,a)}$, for all $(s,a)$. Moreover $\phiv(s,a)\in\Delta_{[m]}$, for all $(s,a)$.
		\end{assumption}
		\looseness=-1
		Assumption~\ref{ass:linear-MDP} essentialy says that the transition matrix $\pmat$ has rank at most $m$, and $\pmat=\phim\mmat$ for some matrix $\mmat\in\ar^{m\times|\sspace|}$. It is worth noting that in the case of continuous MDPs, despite being linear, the transition law $P(\cdot|s,a)$ can still have infinite degrees of freedom. This is a substantial difference from the recent theoretical works on IL~\cite{Liu:2022,Shani:2021} which consider either a linear quadratic regulator, or a transition law that can be completely specified by a finite-dimensional matrix such that the degrees of freedom are bounded.

\looseness=-1
Assumption~\ref{ass:linear-MDP} enables us to consider a relaxation of~(\ref{eq:primal}). In particular, we aggregate the constraints $\mv=\dv$ by imposing $\phim^\trans\mv=\phim^\trans\dv$ instead, and introduce a variable $\lv = \phim^\trans  \mv$.
It follows that $\lv$ lies in the $m$-dimensional simplex $\Delta_{[m]}$. Then, we get the following convex program 
\begin{equation}\label{eq:primal'}
	\zeta^\star=\min_{(\lv,\dv)}\{\max_{\weight\in\mathcal{W}}\innerprod{\lv}{\weight}-\innerprod{\mv_{\expert}}{\cost_{\weight}}\mid(\lv,\dv)\in\mathfrak{M}_{\phim}\}, \tag{\color{blue}Primal$^\prime$}
	\end{equation} where  
	$\mathfrak{M}_{\phim}\triangleq\{(\lv,\dv)\mid\mbf{B}^\trans\dv=\gamma\mmat^\trans\lv+(1-\gamma)\initial,\; \lv=\phim^\trans\dv,\;\lv\in\Delta_{[m]},\; \dv\in\Delta_{\sspace\times\aspace}\}$. As shown in~\cite{Neu:2020,Bas-Serrano:2021,Neu:2021}, for linear MDPs, the set of occupancy measures $\mathfrak{F}$ can be completely characterized by the set $\mathfrak{M}_{\phim}$ (c.f., Proposition~\ref{prop:q-update}). While the number of constraints and variables in~(\ref{eq:primal'}) is intractable for large scale MDPs, in the next paragraph, we show how this problem can be solved using a proximal point scheme.  

%



\section{Proximal Point Imitation Learning}
\label{sec:PPM}
By using a Lagrangian decomposition, we have that~(\ref{eq:primal'}) is equivalent to the following bilinear saddle-point problem 
\begin{equation}
    \min_{\xv\in \mathcal{X}} \max_{\yv \in \mathcal{Y}} \innerprod{\yv}{\mbf{A}\xv+\mbf{b}}
    \label{eq:SPP}, \tag{\color{blue}SPP}
\end{equation}
where $\mbf{A}\in\ar^{(2m+|\sspace|)\times(m+|\sspace||\aspace|})$, and $\mbf{b}\in\ar^{(m+|\sspace|+|\sspace||\aspace|)}$ are appropriately defined (see Appendix~\ref{app:SPP}),
 $\xv\triangleq[\lv^\trans$, $\dv^\trans ]^\trans $, $\yv\triangleq[\weight^\trans$, $\val^\trans , \thv^\trans ]^\trans$, $\mathcal{X} \triangleq \Delta_{[m]} \times \Delta_{\sspace\times\aspace}$, and $\mathcal{Y}\triangleq\mathcal{W}\times\ar^{|\sspace|}\times\ar^{m}$. 
 
 Since in practice we do not have access to the whole policy $\expert$, but instead can observe a finite set of \textrm{i.i.d.} sample trajectories $\mathcal{D}_{\textup{E}}\triangleq\{(x_0^{(l)},a_0^{(l)},x_1^{(l)},a_1^{(l)},\ldots,x_H^{(l)},a_H^{(l)})\}_{l=1}^{n_{\textup{E}}}\sim\expert$, we define the vector $\widehat{\mbf{b}}$ by replacing $\FEV{\expert}$ with its empirical counterpart $\EFEV{\expert}$ (by taking sample averages) in the definition of ${\mbf{b}}$. We then consider the empirical objective $f(\xv)\triangleq\max_{\yv\in\mathcal{Y}} \innerprod{\yv}{\mbf{A}\xv + \widehat{\mbf{b}}}$ and apply PPM  on the decision variable $\xv$. For the $\lv$-variable we use the relative entropy $D(\lv || \lv^\prime)\triangleq\sum^m_{i=1} \lambda(i)\log\frac{\lambda(i)}{\lambda^\prime(i)}$, while for the occupancy measure $\dv$ we use the conditional relative entropy $H(\dv||\dv^\prime)\triangleq\sum_{s,a} d(s,a)\log\frac{\pi_\dv(a|s)}{\pi_{\dv^\prime}(a|s)}$. With this choice we can rewrite the PPM update as
\begin{equation}\label{eq:q-update}
    (\lv_{k+1},\dv_{k+1})=\argmin_{\lv\in\Delta_{[m]},\dv \in\Delta_{\sspace\times\aspace}}\max_{\yv\in\mathcal{Y}}\innerprod{\yv}{ \mbf{A}\left[ {\begin{array}{ccc}
    \lv \\ \dv
  \end{array} } \right]+  \widehat{\mbf{b}}} + \frac{1}{\eta}D(\lv||\phim^\trans\dv_k) + \frac{1}{\alpha}H(\dv||\dv_k), \\
\end{equation}
where we used primal feasibility to replace $\lv_k$ with $\phim^\trans\dv_k$ as the center point of the relative entropy.
\looseness=-1
PPM is implicit, meaning that it requires the evaluation of the gradient at the next iterate $\xv_{k+1}$. Such a requirement makes it not implementable in general. However, in the following, we describe a procedure to apply proximal point to our specific $f(\xv)$.  
The following Proposition summarizes the result.
\begin{proposition}\label{prop:q-update}
For a parameter $\thv\in\ar^m$, we define the logistic state-action value function $\qval_{\thv}\in\ar^{|\sspace||\aspace|}$ by $\qval_{\thv}\triangleq\phim\thv$, and the $k$-step logistic state value function $\val_{\thv}^k\in\ar^{|\sspace|}$ by
\[
V_{\thv}^k(s)\triangleq-\frac{1}{\alpha}\log\left(\sum_a \pi_{\dv_{k-1}}(a|s)e^{-\alpha Q_{\thv}(s,a)}\right).
\]
Moreover, we define the $k$-step reduced Bellman error function $\boldsymbol{\delta}_{\weight,\thv}^k\in\ar^m$ by
$
\boldsymbol{\delta}_{\weight,\thv}^k\triangleq\weight+\gamma\mmat\val_{\thv}^k-\thv.
$
Then, the PPM update $(\lv_k^\star,\dv_k^\star)$ in~\ref{eq:q-update} is given by 
\begin{align}
\lambda_k^\star(i) &\propto (\phim^\trans \dv_{k-1})(i)\,e^{-\eta\delta_{\weight_k^\star,\thv_{k}^\star}^k(i)},\label{eq:update1}\\
\pi_{\dv_k^\star}(a|s)&\propto\pi_{\dv_{k-1}}(a|s)\,e^{-\alpha Q_{\thv_k^\star}(s,a)},\label{eq:update2}
\end{align}
where $(\weight_k^\star,\thv_k^\star)$ is the maximizer {over $\mathcal{W}\times\ar^m$} of the $k$-step logistic policy evaluation objective 
\begin{equation}
\mathcal{G}_k(\weight,\thv)\triangleq-\frac{1}{\eta}\log\sum_{i=1}^m (\phim^\trans \dv_{k-1})(i)   e^{-\eta\delta^k_{\weight,\thv}(i)}+(1-\gamma)\innerprod{\initial}{\val_{\thv}^k}- \innerprod{\EFEV{\expert}}{\weight}.\label{eq:PEobjective}
\end{equation}
Moreover, it holds that 
$\mathcal{G}_k(\weight_k^\star,\thv_k^\star)=\innerprod{\lv_{k}^\star}{\weight_k^\star} - \innerprod{\EFEV{\expert}}{\weight_k^\star} + \frac{1}{\eta}D(\lv_{k}^\star ||\phim^\trans \lv_{k-1}) + \frac{1}{\alpha}H(\dv_{k}^\star||\dv_{k-1}).$
If in addition Assumption~\ref{ass:linear-MDP} holds, then $\dv_k^\star$ is a valid occupancy measure, i.e., $\dv_k^\star\in\mathfrak{F}$ and so $\dv_k^\star=\mv_{\pi_{\dv_k^\star}}$.
\end{proposition}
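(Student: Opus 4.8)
The plan is to read \eqref{eq:q-update} as a convex--concave saddle problem, swap the $\min$ and $\max$, solve the inner minimization in closed form, and then eliminate the value-function multiplier. First I would dualize only the two equality blocks defining $\mathfrak{M}_{\phim}$ — the Bellman-flow constraint $\bmat^\trans\dv=\gamma\mmat^\trans\lv+(1-\gamma)\initial$ (multiplier $\val\in\R^{|\sspace|}$) and $\lv=\phim^\trans\dv$ (multiplier $\thv\in\R^m$) — while keeping the simplex constraints $\lv\in\Delta_{[m]}$, $\dv\in\Delta_{\sspace\times\aspace}$. With the signs on the multipliers chosen to match \eqref{eq:update1}--\eqref{eq:update2}, this reproduces exactly the bilinear form $\innerprod{\yv}{\mbf{A}\xv+\widehat{\mbf{b}}}$ of \eqref{eq:SPP}, so \eqref{eq:q-update} becomes $\min_{(\lv,\dv)}\max_{(\weight,\val,\thv)}L$ with $L$ the sum of that bilinear term and the two Bregman penalties $\tfrac1\eta D(\lv\|\phim^\trans\dv_{k-1})+\tfrac1\alpha H(\dv\|\dv_{k-1})$. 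Because $L$ is convex in $(\lv,\dv)$ over the compact set $\Delta_{[m]}\times\Delta_{\sspace\times\aspace}$ and linear in $(\weight,\val,\thv)$, Sion's minimax theorem lets me exchange the two operations.

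Next I would evaluate $\min_{(\lv,\dv)}L$ for fixed duals. The variables $\lv$ and $\dv$ decouple since both Bregman centers depend only on $\dv_{k-1}$. Minimizing the entropy-regularized linear objective over $\lv\in\Delta_{[m]}$ is classical: the minimizer is the Gibbs vector $\lambda(i)\propto(\phim^\trans\dv_{k-1})(i)\,e^{-\eta(w(i)+\gamma(\mmat\val)(i)-\theta(i))}$ and the optimal value is the negative log-partition; the exponent equals $\delta^k_{\weight,\thv}(i)$ exactly when $\val=\val^k_\thv$. For $\dv$ I would factor $d(s,a)=d_\sspace(s)\pi_\dv(a|s)$ into a state marginal $d_\sspace$ and a conditional; since $H(\dv\|\dv_{k-1})$ penalizes only the conditional, minimizing over $\pi_\dv(\cdot|s)$ state by state yields precisely the Gibbs policy $\pi_\dv(a|s)\propto\pi_{\dv_{k-1}}(a|s)e^{-\alpha Q_{\thv}(s,a)}$ of \eqref{eq:update2} with optimal per-state value $V^k_\thv(s)$, leaving a term linear in the free marginal, $\innerprod{\val^k_\thv-\val}{d_\sspace}$, whose minimum over $\Delta_\sspace$ is $\min_s\big(V^k_\thv(s)-V(s)\big)$.

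The only delicate step is then the outer maximization over $\val\in\R^{|\sspace|}$ of $-\tfrac1\eta\log\sum_i(\phim^\trans\dv_{k-1})(i)e^{-\eta(w(i)+\gamma(\mmat\val)(i)-\theta(i))}+\min_s(V^k_\thv(s)-V(s))+(1-\gamma)\innerprod{\val}{\initial}-\innerprod{\weight}{\EFEV{\expert}}$, which carries the nonsmooth $\min_s$. I would show the maximum is attained at $\val=\val^k_\thv$ by a first-order optimality check: there $V^k_\thv-V\equiv\mbf 0$, so the inner minimum is attained at every state and the subdifferential of $\val\mapsto\min_s(V^k_\thv(s)-V(s))$ is $\{-\mv:\mv\in\Delta_\sspace\}$; adding the gradients of the two smooth terms gives $\gamma\mmat^\trans\lambda^\star+(1-\gamma)\initial$ with $\lambda^\star$ the Gibbs vector at $\val^k_\thv$, i.e.\ \eqref{eq:update1}, and $\mbf 0$ lies in the subdifferential iff $\gamma\mmat^\trans\lambda^\star+(1-\gamma)\initial\in\Delta_\sspace$, which holds because $\mmat$ is nonnegative with $\mmat\ones_{|\sspace|}=\ones_m$ under Assumption~\ref{ass:linear-MDP}. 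At this $\val$ the $\min_s$ term vanishes and the log-partition collapses to $-\tfrac1\eta\log\sum_i(\phim^\trans\dv_{k-1})(i)e^{-\eta\delta^k_{\weight,\thv}(i)}$, so the reduced objective is exactly $\mathcal{G}_k(\weight,\thv)$ of \eqref{eq:PEobjective}, with maximizer $(\weight_k^\star,\thv_k^\star)$; substituting $(\weight_k^\star,\val^k_{\thv_k^\star},\thv_k^\star)$ into the inner minimizers from the previous step gives \eqref{eq:update1}--\eqref{eq:update2}. The value identity is strong duality (the minimax equality) combined with $\nabla_\weight\mathcal{G}_k=\lv_k^\star-\EFEV{\expert}$ at the optimum, which shows $\weight_k^\star$ also maximizes $\innerprod{\lv_k^\star}{\weight}-\innerprod{\EFEV{\expert}}{\weight}$.

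For the last claim I would observe that the stationarity conditions of $\mathcal{G}_k$ at $(\weight_k^\star,\thv_k^\star)$ encode primal feasibility: stationarity in $\val$ gives $\bmat^\trans\dv_k^\star=\gamma\mmat^\trans\lv_k^\star+(1-\gamma)\initial$ (precisely the recovered state marginal of $\dv_k^\star$), and a short computation of $\nabla_\thv\mathcal{G}_k$ — using $\nabla_\thv V^k_\thv(s)=\Ee{a\sim\pi_{\dv_k^\star}(\cdot|s)}{\phiv(s,a)}$ at the maximizer — gives $\nabla_\thv\mathcal{G}_k=\phim^\trans\dv_k^\star-\lv_k^\star$, hence $\lv_k^\star=\phim^\trans\dv_k^\star$; so $(\lv_k^\star,\dv_k^\star)\in\mathfrak{M}_{\phim}$. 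Under Assumption~\ref{ass:linear-MDP}, $\pmat=\phim\mmat$, so plugging $\lv_k^\star=\phim^\trans\dv_k^\star$ into the flow equation gives $\bmat^\trans\dv_k^\star=\gamma\mmat^\trans\phim^\trans\dv_k^\star+(1-\gamma)\initial=\gamma\pmat^\trans\dv_k^\star+(1-\gamma)\initial$, i.e.\ $(\bmat-\gamma\pmat)^\trans\dv_k^\star=(1-\gamma)\initial$ with $\dv_k^\star\ge\mbf 0$, so $\dv_k^\star\in\mathfrak{F}$, and Proposition~\ref{pror:state-action-polytope} gives $\dv_k^\star=\mv_{\pi_{\dv_k^\star}}$. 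I expect the main obstacle to be the $\val$-maximization: handling the nonsmooth $\min_s$, verifying that $\val^k_\thv$ is genuinely optimal rather than merely stationary, and the bookkeeping that identifies the reduced objective with $\mathcal{G}_k$; making the minimax exchange and the existence of the maximizer of $\mathcal{G}_k$ fully rigorous (unbounded $\mathcal{Y}$, $L$ strongly convex only in $(\lv,\pi_\dv)$, and $\mathcal{G}_k$ constant along $\thv\mapsto\thv+c\ones_m$) is a secondary technical point.
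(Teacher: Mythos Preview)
Your proposal is correct and follows the paper's route: dualize the two equality blocks of $\mathfrak{M}_\phim$, swap $\min$/$\max$ via Sion, solve the inner minimization in closed form, reduce to $\mathcal{G}_k$, and read off primal feasibility (hence $\dv_k^\star\in\mathfrak{F}$ under Assumption~\ref{ass:linear-MDP}) from stationarity. The one procedural difference is the handling of $\val$: you carry out the inner $\dv$-minimum for \emph{general} $\val$---producing the nonsmooth term $\min_s\big(V^k_\thv(s)-V(s)\big)$---and then eliminate $\val$ by concavity plus a subdifferential check, whereas the paper writes the first-order condition on $\dv$ without a simplex multiplier, observes that normalizing $\pi_\dv$ forces $\val=\val^k_\thv$, and substitutes this directly; your treatment is simply the more explicit version of that same step.
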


The proof of Proposition~\ref{prop:q-update} is broken down into a sequence of lemmas and is presented in Appendix~\ref{app:proof-of-upadates-proposition}. It employs an \texttt{analytical-oracle} $\gv:\mathcal{Y}\rightarrow\mathcal{X}$ given by
\begin{align*}
    \gv(\yv; \xv_k) &\triangleq\argmin_{\lv\in\Delta_{[m]},\dv \in\Delta_{\sspace\times\aspace}}\innerprod{\yv}{ \mbf{A}\left[ {\begin{array}{ccc}
    \lv \\ \dv
  \end{array} } \right]+  \widehat{\mbf{b}}} + \frac{1}{\eta}D(\lv||\phim^\trans\dv_k) + \frac{1}{\alpha}H(\dv||\dv_k),
\end{align*}
and a \texttt{max-oracle} $\mbf{h}:\mathcal{X}\rightarrow\mathcal{Y}$ given by
$
\mbf{h}(\xv) \triangleq \argmax_{\yv\in\mathcal{Y}} \innerprod{\yv}{\mbf{A}\gv(\yv;\xv)} + \frac{1}{\tau}D_{\Omega}(\gv(\yv;\xv)||\xv),
$ where we used $D_\Omega$ to compact the two divergences.
By noting that the PPM update~\Cref{eq:q-update} can be rewritten as
$ \xv_{k+1}
     =     \gv(\mbf{h}(\xv_k); \xv_k),
     \label{eq:proximal_point_compact_update}
$
its analytical computation is reduced to the characterization of the two aforementioned oracles. In particular, the updates~(\ref{eq:update1})--(\ref{eq:update2}) come from the \texttt{analytical-oracle} while~(\ref{eq:PEobjective}) is the objective of the \texttt{max-oracle}.

The choice of conditional entropy as Bregman divergence for the $\lv$ variable living in the probability simplex is standard in the optimization literature and is known to mitigate the effect of dimension. In  particular, as noted in~\cite{Neu:2007}, the classic REPS algorithm~\cite{Peters:2010} can be seen as mirror descent with relative entropy regularization. On the other hand, the choice of conditional entropy as Bregman divergence for the $\dv$ variable is less standard and has been popularized by Q-REPS \cite{Bas-Serrano:2021}. Such particular divergence leads to an actor-critic algorithm that comes with several merits.
 By Proposition~\ref{prop:q-update}, it is apparent that we get analytical softmin updates for the policy $\pi_{\dv}$ rather than the occupancy measure $\dv$. Moreover, these softmin updates are expressed in terms of the logistic $Q$-function and do not involve the unknown transition matrix $\pmat$. Consequently, we avoid the problematic occupancy measure approximation and the restrictive coherence assumption on the choice of features needed in~\cite{Bas-Serrano:2020,Kamoutsi:2021}, as well as the biased policy updates appearing in REPS \cite{Peters:2010, Pacchiano:2021}. In addition,  the newly introduced logistic policy evaluation objective $\mathcal{G}_k(\weight,\thv)$ has several desired properties. It is   concave and smooth in $(\weight,\thv)$ and has bounded gradients. 
 Therefore, it does not suffer from the pathologies of the squared Bellman error~\cite{Mnih:2015} and does not require heuristic gradient clipping techniques. Moreover, unlike~\cite{Kamoutsi:2021} it allows a model-free implementation without the need for a generative model (see Section~\ref{sec:PPM_model_free})
 
\looseness=-1
We stress the fact that the \texttt{max-oracle} of our proximal point scheme performs the cost update and policy evaluation phases jointly. 
This is a rather novel feature of our algorithm that differs from the separate cost update and policy evaluation step used in recent theoretical imitation learning works~\cite{Zhang:2020,Shani:2021,Liu:2022}. Our joint optimization over cost and $Q$-functions avoids instability due to adversarial training and can also recover an explicit cost along with the $Q$-function without requiring knowledge or additional interaction with the environment (see Section~\ref{sec:experiments}). It is worth noting that application of primal-dual mirror descent to~(\ref{eq:SPP}) does not have this favorable property. While in the standard MDP setting, proximal point and mirror descent coincide because of the linear objective, in imitation learning proximal point optimization makes a difference. In Appendix~\ref{sec:mirror-descent}, we include a more detailed discussion and numerical comparison between PPM and mirror descent updates.

 
 \subsection{Practical Implementation}
\label{sec:PPM_model_free}
Exact optimization of the logistic policy evaluation objective is infeasible in practical scenarios, due to unknown dynamics and limited computation power. In this section, we design a practical algorithm that uses only sample transitions by obtaining stochastic (albeit biased) gradient estimators.

Proposition~\ref{prop:q-update} gives rise to Proximal Point Imitation Learning (\texttt{P$^2$IL}), a model-free actor-critic IRL algorithm described in Algorithm~\ref{alg:PPIQL}. The key feature of \texttt{P$^2$IL} is  that  the policy evaluation step involves optimization of a single smooth and concave objective over both cost and state-action value function parameters. In this way, we avoid instability or poor convergence in optimization due to nested policy evaluation and cost updates, as well as the undesirable properties of the widely used squared Bellman error. In particular, the $k$th iteration of \texttt{P$^2$IL} consists of the following two steps : 
(i) (\textbf{Critic Step}) Computation of an approximate maximizer  $(\weight_k,\thv_k)\approx\argmax_{\weight,\thv}{\mathcal{G}}_k(\weight,\thv)$ of the concave logistic policy evaluation objective, by using a biased stochastic gradient ascent subroutine;
(ii) (\textbf{Actor Step}) Soft-min policy update $
				\pi_{k}(a|s)\propto\pi_{k-1}(a|s)\,e^{-\alpha Q_{\thv_k}(s,a)}
				$ expressed in terms of the logistic $Q$-function.

	\begin{algorithm}[!t]
			\caption{Proximal Point Imitation Learning: \texttt{P$^2$IL}$(\phim,\mathcal{D}_{\textup{E}},K,\eta, \alpha)$}
			\label{alg:PPIQL}
			\begin{algorithmic}
				\STATE {\bfseries Input:} Feature matrix $\mbs{\Phi}$, expert demonstrations $\mathcal{D}_{\textup{E}}$, number of iterations $K$, step sizes $\eta$ and $\alpha$,
				number of SGD iterations T, SGD learning rates $\mbs{\beta}=\{\beta_t\}_{t=0}^{T-1}$, number-of-samples function $n:\mathds{N}\rightarrow\mathds{N}$
				\STATE Initialize $\pi_0$ as uniform distribution over $\aspace$
				\STATE Compute the empirical FEV $\EFEV{\expert}$ using expert demonstrations $\mathcal{D}_{\textup{E}}$
				\FOR{$k=1,\ldots K$}
			    \STATE \texttt{// Critic-step (policy evaluation)}
					\STATE Initialize $\thv_{k,0}=\mbf{0}$ and $\weight_{k,0}=\mbf{0}$
				\STATE Run $\pi_{k-1}$ and collect \textrm{i.i.d.} samples $\mathcal{B}_k=\{(s_{{k-1}}^{(n)},a_{{k-1}}^{(n)},s_{{k-1}}^{\prime (n)})\}_{n=1}^{n(T)}$ such that 
				\STATE $(s_{{k-1}}^{(n)},a_{{k-1}}^{(n)})\sim\mv_{\pi_{k-1}}$ and $s_{{k-1}}^{\prime (n)}\sim \mathsf{P}(\cdot|s_{k-1}^{(n)},a_{k-1}^{(n)})$
			\FOR{$t=0,\ldots T-1$}
			    \STATE Compute biased stochastic gradient estimators $$\big(\widehat{\nabla}_{\weight}\mathcal{G}_k(\weight_{k,t},\thv_{k,t}),\widehat{\nabla}_{\thv}\mathcal{G}_k(\weight_{k,t},\thv_{k,t})\big)=\textrm{BSGE}\big(k,\weight_{k,t},\thv_{k,t},n(t)\big)$$\vspace{-0.5cm}
				\STATE $\weight_{k,t+1}=\Pi_{\mathcal{W}}\big(\weight_{k,t}+\beta_t \widehat{\nabla}_{\weight}\mathcal{G}_k(\weight_{k,t},\thv_{k,t})\big)$
				\STATE  $\thv_{k,t+1}=\Pi_{\Theta}\big(\thv_{k,t}+\beta_t \widehat{\nabla}_{\thv}\mathcal{G}_k(\weight_{k,t},\thv_{k,t})\big)$
				\ENDFOR
				\STATE  $(\weight_k,\thv_k)=(\frac{1}{T}\sum_{t=1}^T\weight_{k,t},\frac{1}{T}\sum_{t=1}^T\thv_{k,t})$
				\STATE \texttt{// Actor-step (policy update)}
				\STATE Policy update:
				$
				\pi_{k}(a|s)\propto\pi_{k-1}(a|s)\,e^{-\alpha Q_{\thv_k}(s,a)}
				$
				\ENDFOR
				\STATE {\bfseries Output:} Mixed policy $\widehat{\pi}_K$ of $\{\pi_k\}_{k\in[K]}$
				
			\end{algorithmic}
		\end{algorithm}
 The domain $\Theta$ in Algorithm~\ref{alg:PPIQL} is the $\ell_{\infty}$-ball with appropriately chosen radius $D$ to be specified later (see Proposition~\ref{prop:optimal_theta_bound}). Moreover, $\Pi_{\Theta}(\mbf{x})\triangleq\arg\min_{\mbf{y}\in\Theta}\norm{\mbf{x}-\mbf{y}}_{2}$ (resp.  $\Pi_{\mathcal{W}}(\mbf{\weight})$) denotes the  Euclidean projection of $\mbf{x}$ (resp. $\weight$)  onto $\Theta$ (resp. $\mathcal{W}$).   


In order to estimate the gradients $\nabla_{\thv}\, \mathcal{G}_k(\weight,\thv)$ and $\nabla_{\weight}\, \mathcal{G}_k(\weight,\thv)$ we invoke the Biased Stochastic Gradient Estimator subroutine (\textrm{BSGE}) (Algorithm~\ref{alg:BSGE}) given in Appendix~\ref{app:stochastic gradients}. By using the linear MDP Assumption~\ref{ass:linear-MDP} and leveraging ridge regression and plug-in estimators, the proposed stochastic gradients can be computed via simple linear algebra with computational complexity  $\textup{poly}(m,n(t))$, independent of the size of the state space.
\subsection{Theoretical Analysis}
\label{sec:theorems}
The first step in our theoretical analysis is to study the propagation of optimization errors made by the algorithm on the true policy evaluation objective. In particular at each iteration step $k$, the ideal policy evaluation update $(\weight_k^\star,\thv_k^\star)$ and the ideal policy update $\pi_k^\star$ are given by
		$
		(\weight_k^\star,\thv^\star_k)=\arg\max_{\weight,\thv}\mathcal{G}_k(\weight,\thv)$, and $\pi_k^\star(a|s)=\pi_{k-1}(a|s)e^{-\alpha(Q_{\thv^\star_k}(s,a)-V^k_{\thv^\star_k}(s))}.
		$
		On the other hand, consider the realised policy evaluation update $(\weight_k,\thv_k)$ such that ${\mathcal{G}}_k(\weight_k^\star,\thv_k^\star)-\mathcal{G}_k(\weight_k,\thv_k)=\epsilon_k$, the corresponding policy $\pi_k$ given by  $\pi_k=\pi_{k-1}(a|s)e^{-\alpha(Q_{\thv_k}(s,a)-V^k_{\thv_k}(s))}$, and let $\dv_k\triangleq\mv_{\pi_k}$. We denote by $\widehat{\pi}_K$ the extracted mixed policy of $\{\pi_k\}_{k=1}^K$. We are interested in upper-bounding the suboptimality gap $d_{\mathcal{C}}(\widehat{\pi}_K,\expert)$ of Algorithm~\ref{alg:PPIQL} as a function of $\varepsilon_k$. To this end, we need the following assumption.

\begin{assumption}
\label{ass:eigenvalue}
It holds that
$
\lambda_{\mathrm{min}}(\Exp_{(s,a)\sim \dv_k}{\phiv(s,a)\phiv(s,a)^{\mathsf{T}}}) \geq \beta
$, for all $k\in[K]$.
\end{assumption}
 Assumption~\ref{ass:eigenvalue} states that every occupancy measure $\dv_k$ induces a positive definite feature covariance matrix, and so every policy $\pi_k$ explores uniformly well in the feature space. This assumption is common in the RL theory literature~\cite{Abbasi-Yadkori:2019b, Hao:2021, Duan:2020, Lazic:2020, Abbasi-Yadkori:2019c, Agarwal:2020b}. It is also related to the condition of persistent excitation from the control literature~\cite{Narenda:1987}.

The following proposition ensures that $\max_{\weight,\thv\in\wspace\times\mathbb{R}^m}{\mathcal{G}}_k(\weight,\thv) = \max_{\weight,\thv\in\wspace\times\Theta}{\mathcal{G}}_k(\weight,\thv)$.  Therefore, this constraint does not change the problem optimality, but will considerably accelerate the convergence of the algorithm by considering smaller domains.
\begin{proposition}\label{prop:optimal_theta_bound}
There exists a maximizer $\thv^\star_k$ such that $\norm{\thv^\star_k}_{\infty}\le\frac{1 + \abs{\log\beta}}{1 - \gamma}\triangleq D$. 
\end{proposition}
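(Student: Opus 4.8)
The plan is to produce one concrete maximizer of $\mathcal{G}_k(\weight_k^\star,\cdot)$ whose $\ell_\infty$-norm can be bounded coordinatewise, combining (i) the closed form of the ideal updates from Proposition~\ref{prop:q-update}, (ii) a shift invariance of $\mathcal{G}_k$ in $\thv$, and (iii) feature excitation. By Proposition~\ref{prop:q-update}, at a maximizer $(\weight_k^\star,\thv_k^\star)$ the multiplier satisfies $\lambda_k^\star(i)\propto(\phim^\trans\dv_{k-1})(i)\,e^{-\eta\boldsymbol{\delta}^k_{\weight_k^\star,\thv_k^\star}(i)}$, and under Assumption~\ref{ass:linear-MDP} it obeys $\lv_k^\star=\phim^\trans\dv_k^\star$ with $\dv_k^\star=\mv_{\pi_{\dv_k^\star}}\in\mathfrak{F}$ a valid occupancy measure. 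Solving for the reduced Bellman error yields, for every coordinate $i$,
\[
\boldsymbol{\delta}^k_{\weight_k^\star,\thv_k^\star}(i)=-\tfrac1\eta\log\!\frac{\lambda_k^\star(i)}{(\phim^\trans\dv_{k-1})(i)}+c_0,
\]
where $c_0$ is an $i$-independent constant (the log-normalizer of $\lv_k^\star$).

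The quantitative input is that any occupancy measure $\mv$ with feature excitation has $(\phim^\trans\mv)(i)=\Exp_{(s,a)\sim\mv}[\phi_i(s,a)]\in[\beta,1]$: the upper bound follows from $\sum_{i=1}^m\phi_i(s,a)=1$, and the lower bound from $\phi_i^2\le\phi_i$ together with $\Exp_{(s,a)\sim\mv}[\phi_i(s,a)^2]=\big(\Exp_{(s,a)\sim\mv}[\phiv(s,a)\phiv(s,a)^\trans]\big)_{ii}\ge\lambda_{\mathrm{min}}\big(\Exp_{(s,a)\sim\mv}[\phiv(s,a)\phiv(s,a)^\trans]\big)\ge\beta$. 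Applying this to $\mv=\dv_{k-1}$ and $\mv=\dv_k^\star$, the log-ratio above lies in $[-|\log\beta|,|\log\beta|]$, so $\boldsymbol{\delta}^k_{\weight_k^\star,\thv_k^\star}$ equals $c_0\mathbf{1}$ plus a vector of $\ell_\infty$-norm at most $|\log\beta|/\eta$. To dispose of $c_0$, I would use that $\mathcal{G}_k(\weight,\cdot)$ is invariant under $\thv\mapsto\thv+c\mathbf{1}$: since $\phiv(s,a)\in\Delta_{[m]}$, this sends $\qval_\thv\mapsto\qval_\thv+c\mathbf{1}$, hence $\val_\thv^k\mapsto\val_\thv^k+c\mathbf{1}$ and $\boldsymbol{\delta}^k_{\weight,\thv}\mapsto\boldsymbol{\delta}^k_{\weight,\thv}+(\gamma-1)c\mathbf{1}$, and one checks directly from~\eqref{eq:PEobjective} that the log-sum-exp term and the $(1-\gamma)\innerprod{\initial}{\val_\thv^k}$ term change by $(\gamma-1)c$ and $(1-\gamma)c$, leaving $\mathcal{G}_k$ fixed. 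Thus $\thv_k^\star+c\mathbf{1}$ is again a maximizer for every $c$, and since $\gamma<1$ one may choose $c$ so that $c_0$ is cancelled; denote the resulting maximizer by $\thv$, for which $\norm{\boldsymbol{\delta}^k_{\weight_k^\star,\thv}}_\infty\le|\log\beta|/\eta$.

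It remains to close the bound with a contraction-type estimate. By the definition of the reduced Bellman error, $\thv=\weight_k^\star+\gamma\mmat\val_\thv^k-\boldsymbol{\delta}^k_{\weight_k^\star,\thv}$; the rows of $\mmat$ are probability vectors and $V_\thv^k(s)$ lies between $\min_a Q_\thv(s,a)$ and $\max_a Q_\thv(s,a)$ with $|Q_\thv(s,a)|=|\phiv(s,a)^\trans\thv|\le\norm{\thv}_\infty$ (as $\norm{\phiv(s,a)}_1=1$), so $\norm{\mmat\val_\thv^k}_\infty\le\norm{\val_\thv^k}_\infty\le\norm{\thv}_\infty$. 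Using $\norm{\weight_k^\star}_\infty\le1$ (which holds both for $\mathcal{W}=\Delta_{[m]}$ and for $\mathcal{W}=B_1^m$) this gives $\norm{\thv}_\infty\le1+\gamma\norm{\thv}_\infty+|\log\beta|/\eta$, i.e. $\norm{\thv}_\infty\le\frac{1+|\log\beta|/\eta}{1-\gamma}$, which is at most $D$ for the step size used (in particular whenever $\eta\ge1$). The main obstacle is really the bookkeeping around non-uniqueness and scope: one must ensure feature excitation is available for the \emph{ideal} occupancy measure $\dv_k^\star$ appearing in Proposition~\ref{prop:q-update} and not only for the realized $\dv_{k-1}$, verify the shift invariance and that the maximizer set contains the whole line $\{\thv_k^\star+c\mathbf{1}\}$, and track the step-size factor so that it collapses into the stated constant $D$.
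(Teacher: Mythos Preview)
Your approach is essentially the paper's proof, reorganized. Both arguments use the same four ingredients: (i) the closed form of $\lv_k^\star$ from Proposition~\ref{prop:q-update} to write $\boldsymbol{\delta}^k_{\weight_k^\star,\thv_k^\star}(i)=c_0-\tfrac{1}{\eta}\log\frac{\lambda_k^\star(i)}{(\phim^\trans\dv_{k-1})(i)}$; (ii) feature excitation (Assumption~\ref{ass:eigenvalue}) to bound the log-ratio by $|\log\beta|$; (iii) shift invariance of $\mathcal{G}_k$ in $\thv$; and (iv) a $\gamma$-contraction to close. The paper packages step~(iv) as a contraction of the soft-Bellman operator in the span seminorm $\norm{\cdot}_{\mathrm{sp}}=\inf_c\norm{\cdot-c\mathbf{1}}_\infty$ and applies shift invariance at the end to convert span norm to $\ell_\infty$; you apply the shift first to remove $c_0$ and then close directly via $\norm{\val_\thv^k}_\infty\le\norm{\thv}_\infty$. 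Your route is slightly more elementary, since it avoids naming the soft-Bellman operator and working in span norm, but the underlying logic is identical.

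The two caveats you flag---needing feature excitation for the ideal $\dv_k^\star$ (to lower-bound $\lambda_k^\star(i)$) and the residual $1/\eta$ that only collapses into the stated $D$ when $\eta\ge1$---are equally present in the paper's argument: the paper also uses $\lv_k^\star$ in the log-ratio, and writes $\norm{\widetilde{\weight}_k^\star}_{\mathrm{sp}}\le1+\log(1/\beta)$ without tracking the $1/\eta$. So your bookkeeping concerns are honest observations about the statement's scope rather than gaps in your strategy.
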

We can now state our error propagation theorem.
\begin{theorem}
\label{thm:error_propagation}
Let $\widehat{\pi}_K$ be the output of running Algorithm~\ref{alg:PPIQL} for $K$ iterations, with $n_{\textup{E}}\geq\frac{2\log(\frac{2m}{{\boldsymbol{\delta}}})}{\varepsilon^2}$ expert trajectories of length $H\geq\frac{1}{1-\gamma}\log(\frac{1}{\varepsilon})$. Let $C\triangleq \frac{1}{\beta\eta}\big(\sqrt{\frac{2 \alpha}{1 - \gamma}} + \sqrt{8 \eta}\big) + \sqrt{\frac{18 \alpha}{1 - \gamma}}$. Then, with probability at least $1-\delta$, it holds that 
$
    d_{\mathcal{C}}(\widehat{\pi}_K, \expert) 
    \leq \frac{1}{K}\Big( \frac{\log d}{\eta} + \frac{\log \abs{\aspace}}{\alpha}
    + C\sum_k \sqrt{\epsilon_k} + \sum_k \epsilon_k\Big)+\varepsilon.
    \label{eq:lemma_bound_on_saddle_point}
$

\end{theorem}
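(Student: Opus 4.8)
The plan is to run a standard proximal-point/mirror-descent regret analysis on the saddle-point problem~\eqref{eq:SPP} and then convert the regret bound into a suboptimality bound on the mixed policy, carefully tracking (i) the statistical error from using $\EFEV{\expert}$ in place of $\FEV{\expert}$, and (ii) the per-iteration optimization errors $\epsilon_k$ coming from the inexact critic step. First I would invoke the classical three-point identity for Bregman divergences to obtain, for every feasible comparator $(\lv,\dv)\in\mathfrak{M}_\phim$ (in particular the one induced by the expert occupancy measure), the telescoping inequality
\[
\innerprod{\yv_k}{\mbf{A}\xv_k+\widehat{\mbf b}} - \innerprod{\yv_k}{\mbf{A}\xv+\widehat{\mbf b}}
\le \tfrac1\eta\big(D(\lv\|\phim^\trans\dv_{k-1}) - D(\lv\|\phim^\trans\dv_k)\big) + \tfrac1\alpha\big(H(\dv\|\dv_{k-1}) - H(\dv\|\dv_k)\big) + \text{(error terms)},
\]
where $\yv_k$ is the max-oracle response at step $k$. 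Summing over $k=1,\dots,K$ the divergence terms telescope, leaving $\frac{\log d}{\eta}+\frac{\log|\aspace|}{\alpha}$ from the initial divergences (using that $\dv_0,\lv_0$ are uniform). Dividing by $K$ and using the definition of the mixed policy $\widehat\pi_K$ together with Jensen/linearity, the left-hand side becomes, up to the error terms, an upper bound on $d_{\widehat{\mathcal C}}(\widehat\pi_K,\expert)$ where $\widehat{\mathcal C}$ uses the empirical FEV; here I would use Proposition~\ref{prop:q-update} to guarantee each $\dv_k$ is a genuine occupancy measure $\mv_{\pi_k}$, so that the bilinear objective evaluated at $\xv_k$ really equals $\rho_{\cost_\weight}(\pi_k)-\innerprod{\EFEV{\expert}}{\weight}$.

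Next I would handle the two error sources separately. For the statistical error: a Hoeffding bound on each coordinate of the empirical feature expectation, combined with the truncation bias incurred by trajectories of length $H$ (which is $\le\gamma^H\le\varepsilon$ by the choice $H\ge\frac{1}{1-\gamma}\log\frac1\varepsilon$), gives $\norm{\EFEV{\expert}-\FEV{\expert}}_\infty\le\varepsilon$ with probability $1-\delta$ provided $n_{\textup E}\ge\frac{2\log(2m/\delta)}{\varepsilon^2}$; since $\norm{\weight}_1$ or $\norm\weight_2$ is bounded on $\mathcal W$, this converts $d_{\widehat{\mathcal C}}$ into $d_{\mathcal C}$ at the cost of an additive $\varepsilon$, and this is where the final $+\varepsilon$ in the statement comes from. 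For the optimization error: the realized $(\weight_k,\thv_k)$ is $\epsilon_k$-suboptimal for $\mathcal G_k$, so by the variational characterization of the proximal update and strong convexity of the Bregman regularizers (relative entropy is $1$-strongly convex in $\ell_1$ on the simplex), the realized iterate $\xv_k=(\lv_k,\dv_k)$ is within $O(\sqrt{\epsilon_k/\eta})$ in the relevant norm of the ideal proximal iterate $\xv_k^\star$, and the policy $\pi_k$ differs from $\pi_k^\star$ by a KL of order $\epsilon_k$. Propagating these perturbations through the telescoping argument — each appears once linearly and once through the mismatch between $D(\lv\|\phim^\trans\dv_k)$ at the realized versus ideal point, which is controlled via Assumption~\ref{ass:eigenvalue} giving $\phim^\trans\dv_k$ bounded away from the boundary by $\beta$, hence a $\frac1\beta$ factor — yields precisely the two sums $C\sum_k\sqrt{\epsilon_k}+\sum_k\epsilon_k$ with the stated constant $C$ built from $\eta,\alpha,\gamma,\beta$.

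The main obstacle, and the part I would spend the most care on, is the error-propagation bookkeeping in the presence of the \emph{logistic} (softmax) reparameterization: the ideal analysis would telescope cleanly if we had exact occupancy measures, but because the critic is solved only to accuracy $\epsilon_k$, the center point $\phim^\trans\dv_k$ of the next relative-entropy term is itself perturbed, so the telescoping is not exact and one must absorb the discrepancy $D(\lv\|\phim^\trans\dv_k^\star)-D(\lv\|\phim^\trans\dv_k)$. Bounding this requires a lower bound on the entries of $\phim^\trans\dv_k$ — exactly what Assumption~\ref{ass:eigenvalue} supplies (the feature covariance eigenvalue lower bound forces $(\phim^\trans\dv_k)(i)\ge\beta$), and this is why $\beta$ enters $C$ in the denominator. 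A secondary technical point is relating the suboptimality in $\mathcal G_k$ to the distance of the iterates: this goes through the identity at the end of Proposition~\ref{prop:q-update} expressing $\mathcal G_k$ at the optimum in terms of the Bregman divergences, combined with Pinsker-type inequalities; the bound on $\norm{\thv_k^\star}_\infty\le D$ from Proposition~\ref{prop:optimal_theta_bound} ensures the logistic value functions $V^k_{\thv}$ stay bounded so that all the smoothness/Lipschitz constants used are finite, in particular the $\log d$ term in the statement (with $d=m$) is the diameter of the $\lv$-simplex in relative entropy.
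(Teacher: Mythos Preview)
Your proposal is correct and follows essentially the same route as the paper's proof: a telescoping regret bound via the Bregman three-point structure, the statistical error handled by a Hoeffding-type concentration on the empirical FEV, and the per-step critic errors $\epsilon_k$ propagated through the duality identity at the end of Proposition~\ref{prop:q-update} together with Assumption~\ref{ass:eigenvalue} (which gives $(\phim^\trans\dv_k)(i)\ge\beta$) to control the non-telescoping residual $D(\lv^\star\|\phim^\trans\dv_k)-D(\lv^\star\|\lv_k)$. The one step your sketch leaves implicit is that bounding $\tfrac{1}{\eta}D(\lv_k^\star\|\lv_k)+\tfrac{1}{\alpha}H(\dv_k^\star\|\dv_k)$ by $\epsilon_k$ alone requires the first-order optimality condition $\langle\efevphi-\lv_k^\star,\weight_k^\star-\weight_k\rangle\le 0$ for the concave objective $\mathcal{G}_k$ at $\weight_k^\star$; this is not a generic strong-convexity consequence but a specific use of the saddle structure, and without it an uncontrolled cross term would survive in the error propagation.
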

By Theorem~\ref{thm:error_propagation}, whenever the policy evaluation errors $\varepsilon_k$, as well as the estimation error $\varepsilon$ can be kept small, Algorithm~\ref{alg:PPIQL} ouputs a policy $\widehat{\pi}_K$ with small suboptimality gap $\rho_{\true}(\widehat{\pi}_K)-\rho_{\true}(\expert)$. Notably, there is no direct dependence on the size of the state space or the dimension of the feature space. In the ideal case, where $\varepsilon_k=0$ for all $k$, the convergence rate is $\mathcal{O}(1/K)$. 
The provided error propagation analysis still holds with general function approximation, i.e., in the context of deep RL. Indeed, by choosing $\phim=\mbf{I}$, Assumption~\ref{ass:linear-MDP} is trivially satisfied and the $\thv$ variable in the objective $\mathcal{G}_k$ is replaced by a $Q$-function. 
In practice, the estimation error $\varepsilon$ can be made arbitrary small, by increasing the number of expert demonstrations $n_{\textup{E}}$. Moreover, the next theorem ensures that under Assumptions~\ref{ass:linear-MDP} and~\ref{ass:eigenvalue} the biased stochastic gradient ascent (BSGA) subroutine has sublinear convergence rate. 
\begin{theorem}
\label{thm:biased_sgd}
Let $(\weight_k,\thv_k)$ be the output of the BSGA subroutine in \Cref{alg:PPIQL} for $T$ iterations, with $n(t) \geq \max\br{\mathcal{O}\br{\frac{\gamma^2 m D t }{(\eta+\alpha)^2\beta}\log\frac{Tm}{\delta}}, \mathcal{O}\br{\frac{m t}{(\eta+\alpha)^2\beta}\log\frac{Tm}{\delta}}}$ sample transitions, and learning rates $\beta_t=\mathcal{O}(\frac{1}{\sqrt{t}})$. Then, $\epsilon_k = {\mathcal{G}}_k(\weight_k^\star,\thv_k^\star)-\mathcal{G}_k(\weight_k,\thv_k) \leq \mathcal{O}(\frac{\max\{\eta,1\} m D}{\beta \sqrt{T}})$, with probability $1-\delta$.
\end{theorem}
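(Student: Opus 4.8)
The plan is to treat the critic subroutine as projected stochastic gradient ascent on the concave map $\mathcal{G}_k$ over the bounded set $\mathcal{W}\times\Theta$, and to pair a textbook average-iterate ascent bound with a careful accounting of the oracle's \emph{bias}; the role of the growing sample schedule $n(t)\propto t$ is exactly to push both the bias and the martingale noise of the gradient estimator below the $\mathcal{O}(1/\sqrt{T})$ floor of the optimization error. I first record the regularity of $\mathcal{G}_k$. The logistic value $\val_{\thv}^k$ is a soft-min of the affine maps $\thv\mapsto\phiv(s,a)^\trans\thv$, hence concave; since $\mmat\ge\mbf 0$, the reduced Bellman error $\boldsymbol{\delta}_{\weight,\thv}^k=\weight+\gamma\mmat\val_{\thv}^k-\thv$ is concave coordinatewise and affine in $\weight$; and the leading term of $\mathcal{G}_k$ applies a monotone concave soft-min coordinatewise to $\boldsymbol{\delta}_{\weight,\thv}^k$, so the sum and composition rules give concavity. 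Since $\lv_k^\star\in\Delta_{[m]}$, the softmax weights lie in $\Delta_{\aspace}$, the rows of $\mmat$ and each $\phiv(s,a)$ lie in $\Delta_{[m]}$, and $\|\EFEV{\expert}\|_\infty\le1$, one gets a uniform gradient bound $\|\widehat{\nabla}\mathcal{G}_k\|_2\le G$ with $G$ polynomial in $m$ and $\gamma$ on $\mathcal{W}\times\Theta$; and by Proposition~\ref{prop:optimal_theta_bound}, restricting $\thv$ to the $\ell_\infty$-ball $\Theta$ of radius $D$ loses nothing while capping the Euclidean diameter of $\mathcal{W}\times\Theta$ at $\mathcal{O}(D\sqrt{m})$.

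Next I dissect the biased estimator, writing the \textrm{BSGE} output at inner step $t$ as $\widehat{\nabla}\mathcal{G}_k(\weight_{k,t},\thv_{k,t})=\nabla\mathcal{G}_k(\weight_{k,t},\thv_{k,t})+\mbf b_t+\boldsymbol{\xi}_t$, where $\mbf b_t$ is the conditional bias given the history $\mathcal{F}_{t-1}$ and $\boldsymbol{\xi}_t$ a martingale-difference noise. The bias has two sources: (i) the ridge regularization in the linear-regression estimate of $\mmat\val_{\thv}^k$ from the transitions in $\mathcal{B}_k$, and (ii) the nonlinear dependence of $\nabla\mathcal{G}_k$, through the two soft-min normalizations, on the plug-in quantities $\widehat{\mmat\val_{\thv}^k}$, $\widehat{\initial^\trans\val_{\thv}^k}$ and the empirical expectation under $\dv_{k-1}$. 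Using Assumption~\ref{ass:linear-MDP} to make the ridge solve explicit and Assumption~\ref{ass:eigenvalue} together with a matrix-Bernstein argument to show the empirical feature covariance is $\succeq\tfrac{\beta}{2}\mbf{I}$ on a $1-\delta$ event simultaneously for all $t\le T$ (the source of the $\log\tfrac{Tm}{\delta}$ factor inside $n(t)$), and propagating these errors through the Lipschitz estimates of the first step, I bound $\|\mbf b_t\|_2$ and $\|\boldsymbol{\xi}_t\|_2$ by $\mathcal{O}(1/\sqrt{t})$ up to the problem constants carried by the stated lower bound on $n(t)$; the $1/\beta$ amplification of the ridge inverse enters here.

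For the ascent bound, put $z_t=(\weight_{k,t},\thv_{k,t})$ and $z^\star=(\weight_k^\star,\thv_k^\star)$. Nonexpansiveness of the Euclidean projection onto $\mathcal{W}\times\Theta$ gives $\|z_{t+1}-z^\star\|_2^2\le\|z_t-z^\star\|_2^2+2\beta_t\innerprod{\widehat{\nabla}\mathcal{G}_k(z_t)}{z_t-z^\star}+\beta_t^2\|\widehat{\nabla}\mathcal{G}_k(z_t)\|_2^2$; rearranging, invoking concavity $\mathcal{G}_k(z^\star)-\mathcal{G}_k(z_t)\le\innerprod{\nabla\mathcal{G}_k(z_t)}{z^\star-z_t}$, and substituting the decomposition yields
\[
\mathcal{G}_k(z^\star)-\mathcal{G}_k(z_t)\le\frac{\|z_t-z^\star\|_2^2-\|z_{t+1}-z^\star\|_2^2}{2\beta_t}+\frac{\beta_t}{2}\|\widehat{\nabla}\mathcal{G}_k(z_t)\|_2^2+\|\mbf b_t\|_2\,\mathrm{diam}+\innerprod{\boldsymbol{\xi}_t}{z_t-z^\star}.
\]
Summing over $t=1,\dots,T$ with $\beta_t=c/\sqrt{t}$: the first sum telescopes to $\mathcal{O}(\mathrm{diam}^2/\beta_T)=\mathcal{O}(\mathrm{diam}^2\sqrt{T}/c)$; the second is $\mathcal{O}(c\,G^2\sqrt{T})$ after absorbing the already-controlled $\|\mbf b_t\|_2,\|\boldsymbol{\xi}_t\|_2$ into $G$; the bias sum is $\mathrm{diam}\sum_t\|\mbf b_t\|_2=\mathcal{O}(\mathrm{diam}\sqrt{T})$ by the previous step; and $\sum_t\innerprod{\boldsymbol{\xi}_t}{z_t-z^\star}$, a bounded martingale with increments $\le\|\boldsymbol{\xi}_t\|_2\,\mathrm{diam}=\mathcal{O}(t^{-1/2}\mathrm{diam})$, is $\mathcal{O}(\mathrm{diam}\sqrt{\log T\log(1/\delta)})$ by Azuma--Hoeffding, hence lower order. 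Dividing by $T$ and applying Jensen to the averaged iterate $(\weight_k,\thv_k)=\tfrac1T\sum_t z_t$ gives $\epsilon_k=\mathcal{O}(\mathrm{diam}\cdot G/\sqrt{T})$ up to lower-order terms; balancing $\mathrm{diam}^2/\beta_T$ against $cG^2\sqrt{T}$ fixes $\beta_t=\mathcal{O}(1/\sqrt{t})$, and substituting $\mathrm{diam}=\mathcal{O}(D\sqrt{m})$, the gradient bound $G$, and the $m$, $D$, $1/\beta$ factors carried by the bias term produces the claimed $\mathcal{O}\big(\max\{\eta,1\}\,mD/(\beta\sqrt{T})\big)$.

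The main obstacle is the second step: controlling the bias of the \textrm{BSGE} through the two nonlinear soft-min normalizations and through the ridge-regression solve for $\mmat\val_{\thv}^k$, while establishing high-probability, uniform-over-$t\le T$ invertibility of the empirical feature covariance under Assumption~\ref{ass:eigenvalue}. This is where $m$, $D$, the excitation constant $\beta$, the smoothing parameters $\eta,\alpha$, and the union-bound factor $\log\tfrac{Tm}{\delta}$ all enter, and making them coalesce into $\mathcal{O}(\max\{\eta,1\}mD/(\beta\sqrt{T}))$ after the summation is precisely what pins down the sample schedule $n(t)\propto t$.
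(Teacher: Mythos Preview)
Your high-level plan matches the paper's: decompose the \textrm{BSGE} output as true gradient plus bias plus martingale noise, drive the bias down via ridge regression under Assumptions~\ref{ass:linear-MDP} and~\ref{ass:eigenvalue}, and combine a projected average-iterate SGD bound with an Azuma argument. The concavity argument and the role of Proposition~\ref{prop:optimal_theta_bound} are also correct.

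There is, however, a genuine gap in your noise accounting. You claim $\|\boldsymbol{\xi}_t\|_2=\mathcal{O}(t^{-1/2})$ and hence that the martingale sum is lower order. This is not how \textrm{BSGE} works: the $n(t)$ samples are used \emph{only} to build the plug-in quantities $\widehat{\bmat}_{\weight,\thv}^k$, $\widehat{\mbs{\Gamma}}_k$, $\EFEV{\pi_{k-1}}$---this is what forces the \emph{bias} $\|\mbf b_t\|$ down to $\mathcal{O}(t^{-1/2})$---but the stochastic gradient itself is then formed from a \emph{single} fresh draw $i_{k-1}^{(N+1)}$ (see Algorithm~\ref{alg:BSGE}). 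The martingale increment is therefore of constant order, roughly $m/\beta$ in $\ell_1$, giving $|\innerprod{\boldsymbol{\xi}_t}{z_t-z^\star}|=\mathcal{O}(mD/\beta)$ after pairing with $\|z_t-z^\star\|_\infty\le 2D$. Azuma then yields $\sum_t\innerprod{\boldsymbol{\xi}_t}{z_t-z^\star}=\mathcal{O}\big(\tfrac{mD}{\beta}\sqrt{T\log(1/\delta)}\big)$, not $\mathcal{O}(\mathrm{diam}\sqrt{\log T})$. In the paper's proof this martingale term is in fact the \emph{leading} contribution and is exactly what produces the $\max\{\eta,1\}\,mD/\beta$ prefactor; your argument misattributes the rate to the diameter--gradient trade-off. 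The final $\mathcal{O}(1/\sqrt{T})$ survives because $\sqrt{T}/T=1/\sqrt{T}$, but the constants and the logic as written are wrong.

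Two smaller points you should also address: (i) the per-step bounds on $\boldsymbol{\xi}_t$ and $\mbf b_t$ hold only with high probability (they depend on the ridge estimates being accurate), so a vanilla Azuma--Hoeffding does not apply; the paper proves and uses a modified version (Lemma~\ref{lemma:modified_azuma}) that accommodates increments bounded with probability $1-\delta_2$ rather than almost surely. (ii) Your uniform gradient bound ``$G$ polynomial in $m$ and $\gamma$'' is incomplete: $\widehat{\bmat}_{\weight,\thv}^k(i)$ can be as large as $(1+\mathcal{O}(\eta))/\beta$, so $G$ also carries $1/\beta$ and $\eta$, which is how $\max\{\eta,1\}$ enters.
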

\begin{corollary}[Resource guarantees]
\label{cor:sample_complexity}
Choose $\eta=\alpha=1$ and let $K=\Omega\br{\epsilon^{-1}}$, $T=\Omega\br{\epsilon^{-4}}$. Then for $\Omega\br{KT} = \Omega\br{\epsilon^{-5}}$ sample transitions, $\Omega\br{\varepsilon^{-2}}$ expert trajectories and approximately solving $\Omega\br{\epsilon^{-1}}$ concave maximization problems, we can ensure $d_{\mathcal{C}}(\widehat{\pi}, \expert)\leq \mathcal{O}(\epsilon + \varepsilon)$, with high probability.
\end{corollary}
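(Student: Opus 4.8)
The plan is to obtain the corollary purely by composing the two convergence guarantees already established, Theorem~\ref{thm:error_propagation} (error propagation) and Theorem~\ref{thm:biased_sgd} (convergence of the biased SGA critic), and then tuning the free parameters $K$, $T$, $n_{\textup{E}}$, $H$. First I would fix $\eta=\alpha=1$, which makes the constant $C$ in Theorem~\ref{thm:error_propagation}, the radius $D=\tfrac{1+\abs{\log\beta}}{1-\gamma}$ from Proposition~\ref{prop:optimal_theta_bound}, and the terms $\log d$ and $\log\abs{\aspace}$ all depend only on the fixed problem data $(\beta,\gamma,m,\abs{\aspace})$ and not on the target accuracy $\epsilon$; every such quantity is then absorbed into the $\mathcal{O}(\cdot)$ notation.

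Next I would control the per-iteration critic errors $\epsilon_k$. Running \textrm{BSGA} at each critic step $k\in[K]$ with its failure probability set to $\delta/(2K)$ rather than $\delta$ only inflates the sample requirement $n(t)$ of Theorem~\ref{thm:biased_sgd} by a $\log K$ factor, so it stays $\widetilde{\mathcal{O}}(t)$; a union bound over $k\in[K]$ then gives, with probability at least $1-\delta/2$, the simultaneous bound $\epsilon_k=\mathcal{O}\!\big(mD/(\beta\sqrt{T})\big)$ for all $k$. From this, $\tfrac1K\sum_k\sqrt{\epsilon_k}\le\max_k\sqrt{\epsilon_k}=\mathcal{O}\big(\sqrt{mD/\beta}\,T^{-1/4}\big)$ and $\tfrac1K\sum_k\epsilon_k=\mathcal{O}\big(mD/(\beta\sqrt T)\big)$.

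I would then substitute these into Theorem~\ref{thm:error_propagation}, invoked with confidence $1-\delta/2$ and $n_{\textup{E}}\ge\Omega(\varepsilon^{-2})$ expert trajectories of length $H\ge\tfrac{1}{1-\gamma}\log(1/\varepsilon)$, and take a union bound over the two events, which gives with probability at least $1-\delta$
\[
d_{\mathcal{C}}(\widehat{\pi}_K,\expert)\le \frac{\log d+\log\abs{\aspace}}{K}+\mathcal{O}\!\Big(\frac{\sqrt{mD/\beta}}{T^{1/4}}\Big)+\mathcal{O}\!\Big(\frac{mD}{\beta\sqrt T}\Big)+\varepsilon .
\]
Choosing $K=\Omega(\epsilon^{-1})$ makes the first term $\mathcal{O}(\epsilon)$, and $T=\Omega(\epsilon^{-4})$ makes the second term $\mathcal{O}(\epsilon)$ and the third $\mathcal{O}(\epsilon^2)$, so $d_{\mathcal{C}}(\widehat{\pi}_K,\expert)\le\mathcal{O}(\epsilon+\varepsilon)$.

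Finally I would tally the resources. Each critic step draws one batch $\mathcal{B}_k$ of $n(T)$ fresh transitions (the largest count needed over $t\le T$, since $n(\cdot)$ is nondecreasing), and $n(T)=\widetilde{\mathcal{O}}(T)$ by Theorem~\ref{thm:biased_sgd}; summing over the $K$ iterations gives $\widetilde{\mathcal{O}}(KT)=\widetilde{\mathcal{O}}(\epsilon^{-5})$ environment transitions, while the algorithm (approximately) solves $K=\Omega(\epsilon^{-1})$ concave maximizations of $\mathcal{G}_k$ and consumes $\Omega(\varepsilon^{-2})$ expert trajectories. There is no genuinely hard step here: the only things to be careful about are the union bound over the $K$ critic steps (keeping each failure probability at $\delta/(2K)$, at the cost of a $\log K$ inside $n(t)$) and the bookkeeping that $D$, $C$, $\log d$ and $\log\abs{\aspace}$ are $\epsilon$-independent so that they may legitimately be hidden in the $\mathcal{O}(\cdot)$; everything else is direct substitution into the two theorems.
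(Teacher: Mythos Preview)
Your proposal is correct and follows essentially the same approach as the paper: substitute the per-iteration critic bound from Theorem~\ref{thm:biased_sgd} into the error-propagation bound of Theorem~\ref{thm:error_propagation}, union-bound the failure events, and tune $K$ and $T$ so that the $1/K$ and $T^{-1/4}$ terms are each $\mathcal{O}(\epsilon)$, then read off $Kn(T)=\widetilde{\mathcal{O}}(KT)=\widetilde{\mathcal{O}}(\epsilon^{-5})$. If anything, you are slightly more explicit than the paper about the union bound over the $K$ critic steps (splitting $\delta$ as $\delta/(2K)$ per step), which the paper handles in a single line.
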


\textbf{Offline Setting.} Finally, we notice that using $\phim^\trans\mv_{\expert}$ as the reference distribution for the relative entropy we can obtain an offline algorithm that does not require environment interactions. By reinterpreting smoothing \cite{Nesterov:2005} as one step of proximal point, and using similar arguments as in the proof of \Cref{thm:error_propagation}, we can provide  similar theoretical guarantees for the offline setting. The formal statement of the theoretical result as well as the optimization of the empirical policy evaluation objective are presented in Appendix~\ref{app:offline} (see Theorems~\ref{thm:offline_error_propagation} { and~\ref{thm:Donsker-Varadhan}}).
\section{Experiments}

In this section, we demonstrate that our approach achieves convincing empirical performance in both online and offline IL settings on several environments.\footnote{
The code is available at the following link \url{https://github.com/lviano/P2IL}.} The precise setting is detailed in Appendix~\ref{app:experiments}. 

\looseness=-1
\textbf{Online Setting.} We first present results in various tabular environments where we can implement our algorithm without any practical relaxation outperforming GAIL \cite{Ho:2016}, AIRL \cite{Fu:2018} and IQ-Learn \cite{Garg:2021}. Results are given in \Cref{fig:simple_env_results}. Good performance but inferior to IQ-Learn is observed also for continuous states environments (CartPole and Acrobot) where we used neural networks function approximation.

\begin{figure*}[t] 
\centering
\begin{tabular}{ccccc}
\subfloat[RiverSwim]{%
       \includegraphics[width=0.16\linewidth]{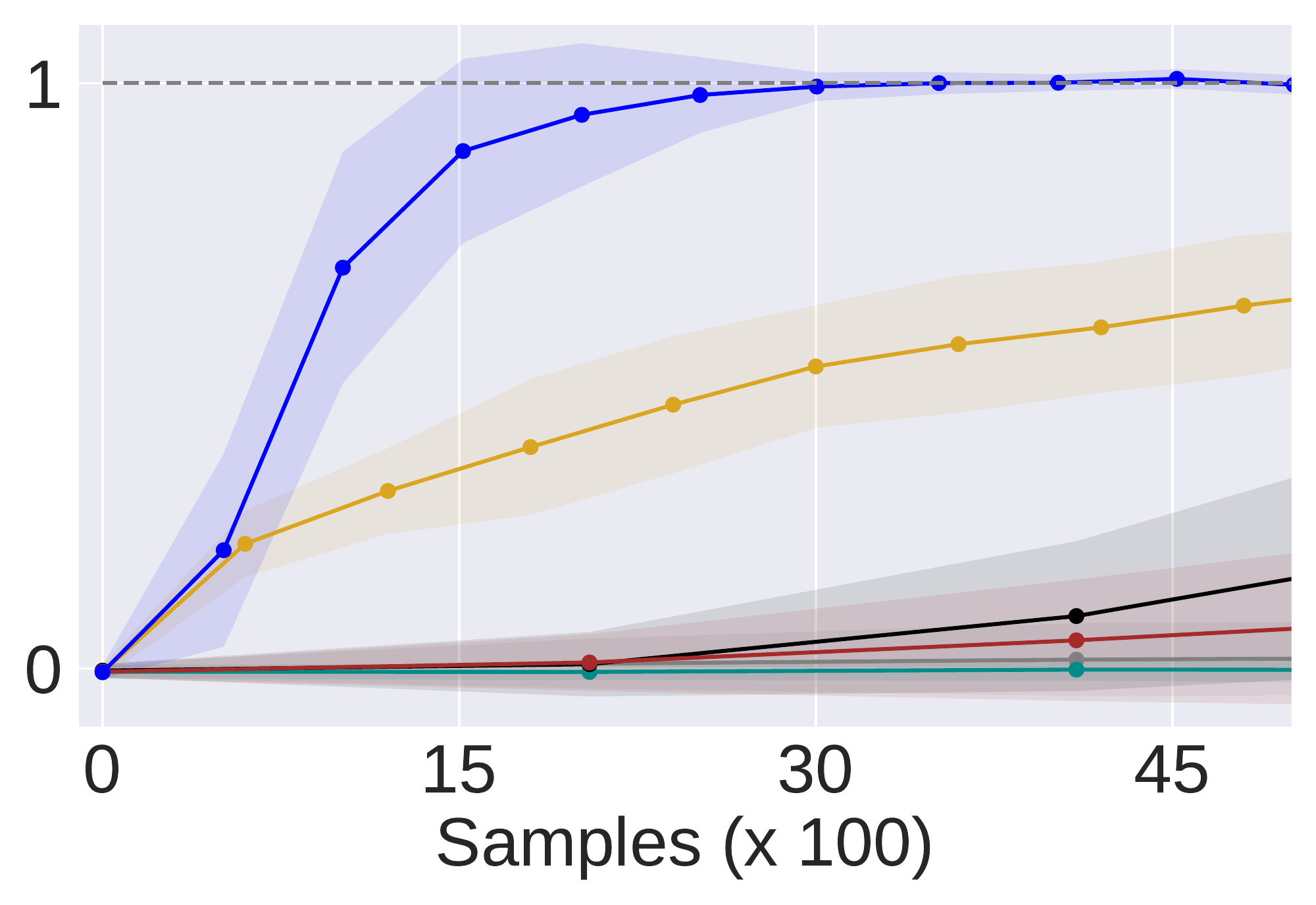}
     } &
     \subfloat[CartPole]{%
       \includegraphics[width=0.16\linewidth]{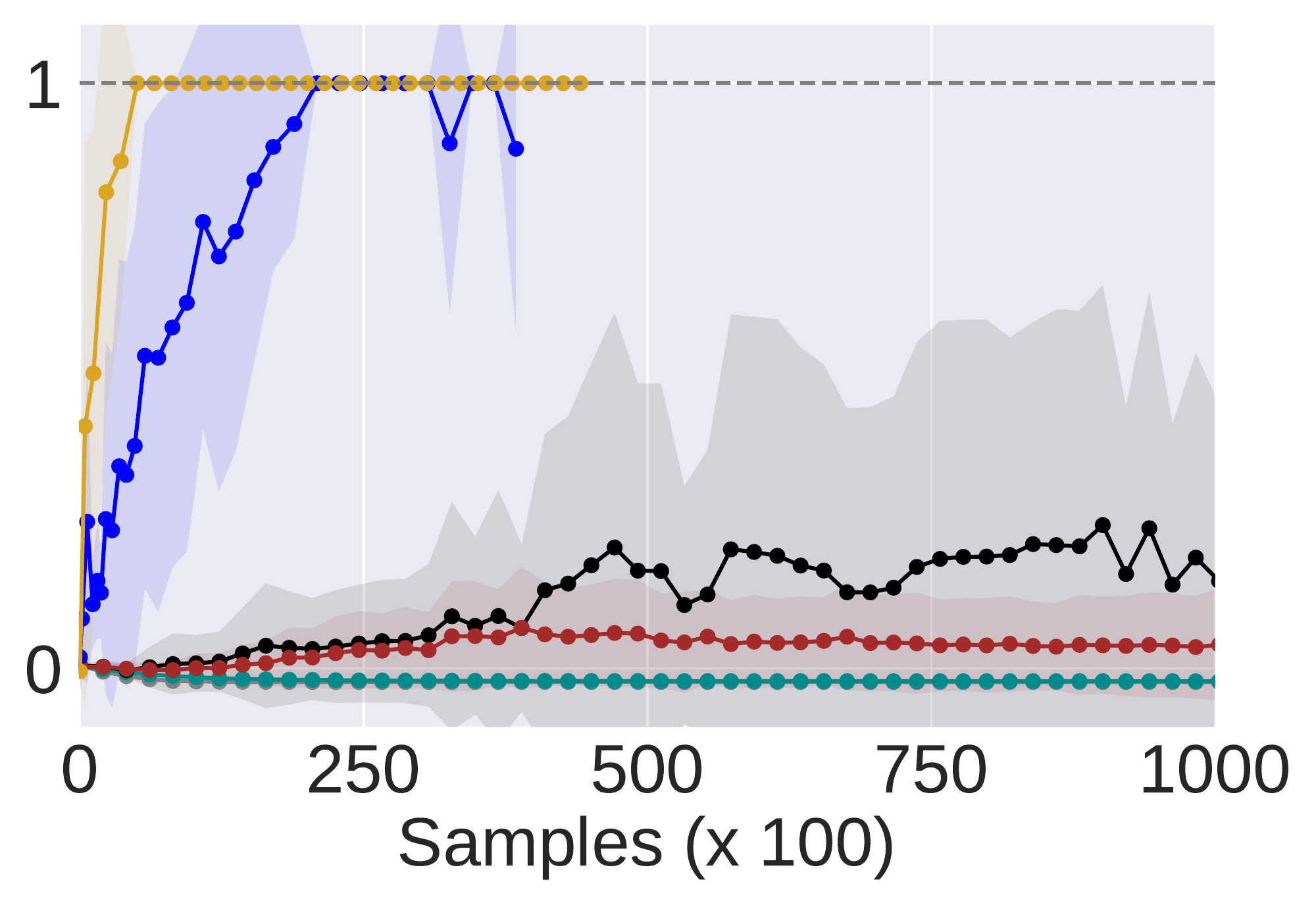}
     } &
\subfloat[DoubleChain]{%
       \includegraphics[width=0.16\linewidth]{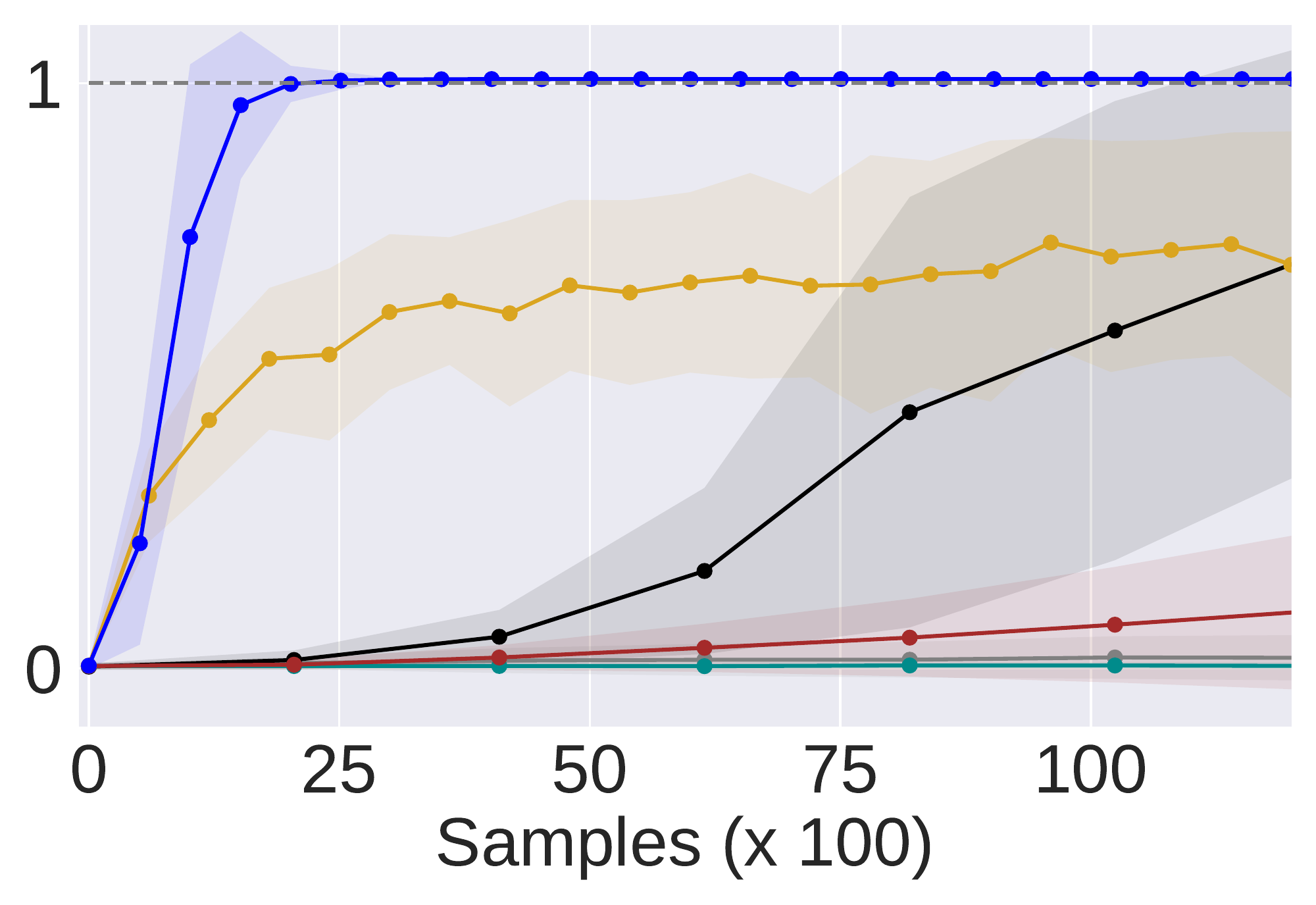}
     } &
\subfloat[Gridworld]{%
       \includegraphics[width=0.16\linewidth]{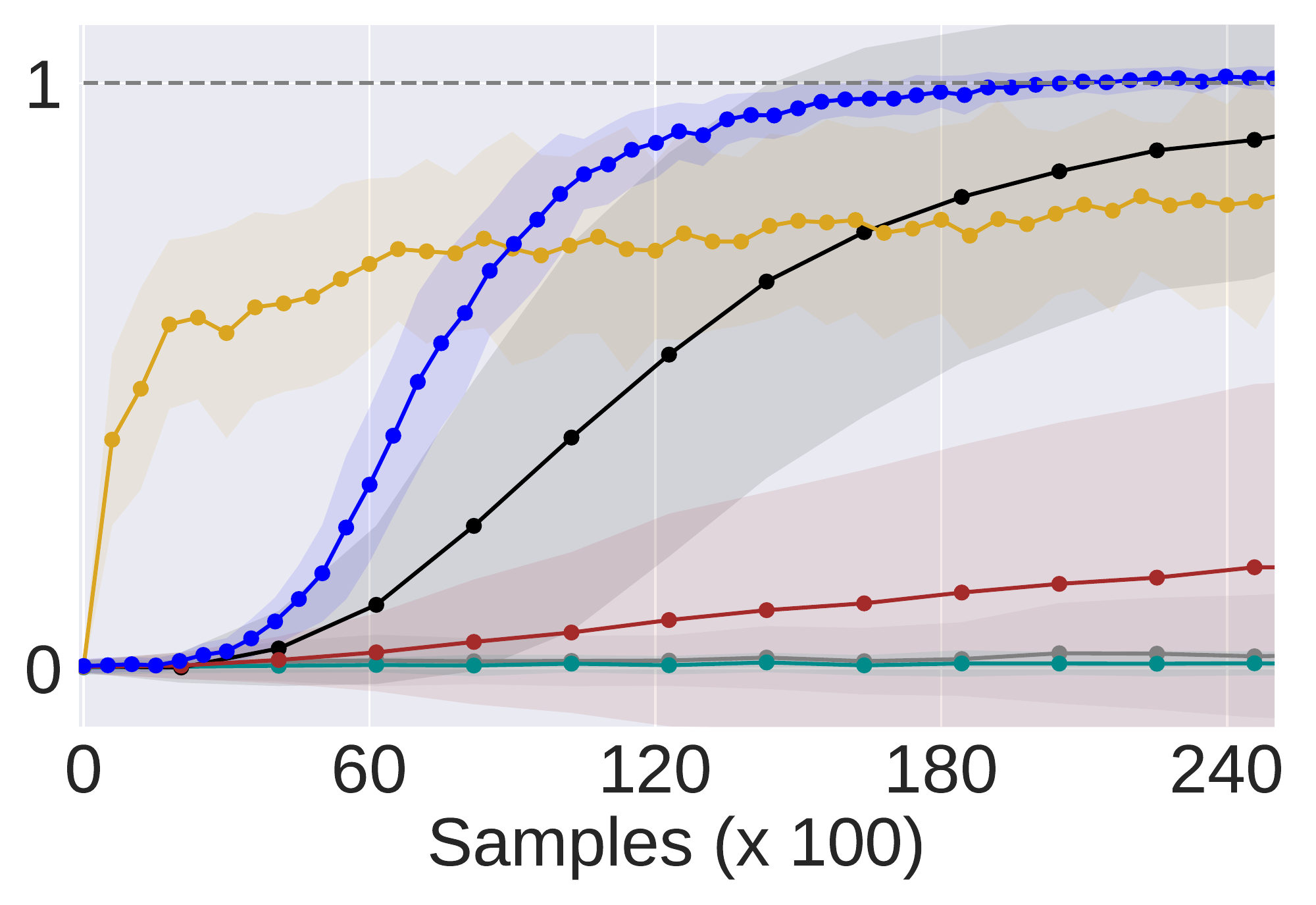}
     } &
     \subfloat[Acrobot]{%
       \includegraphics[width=0.16\linewidth]{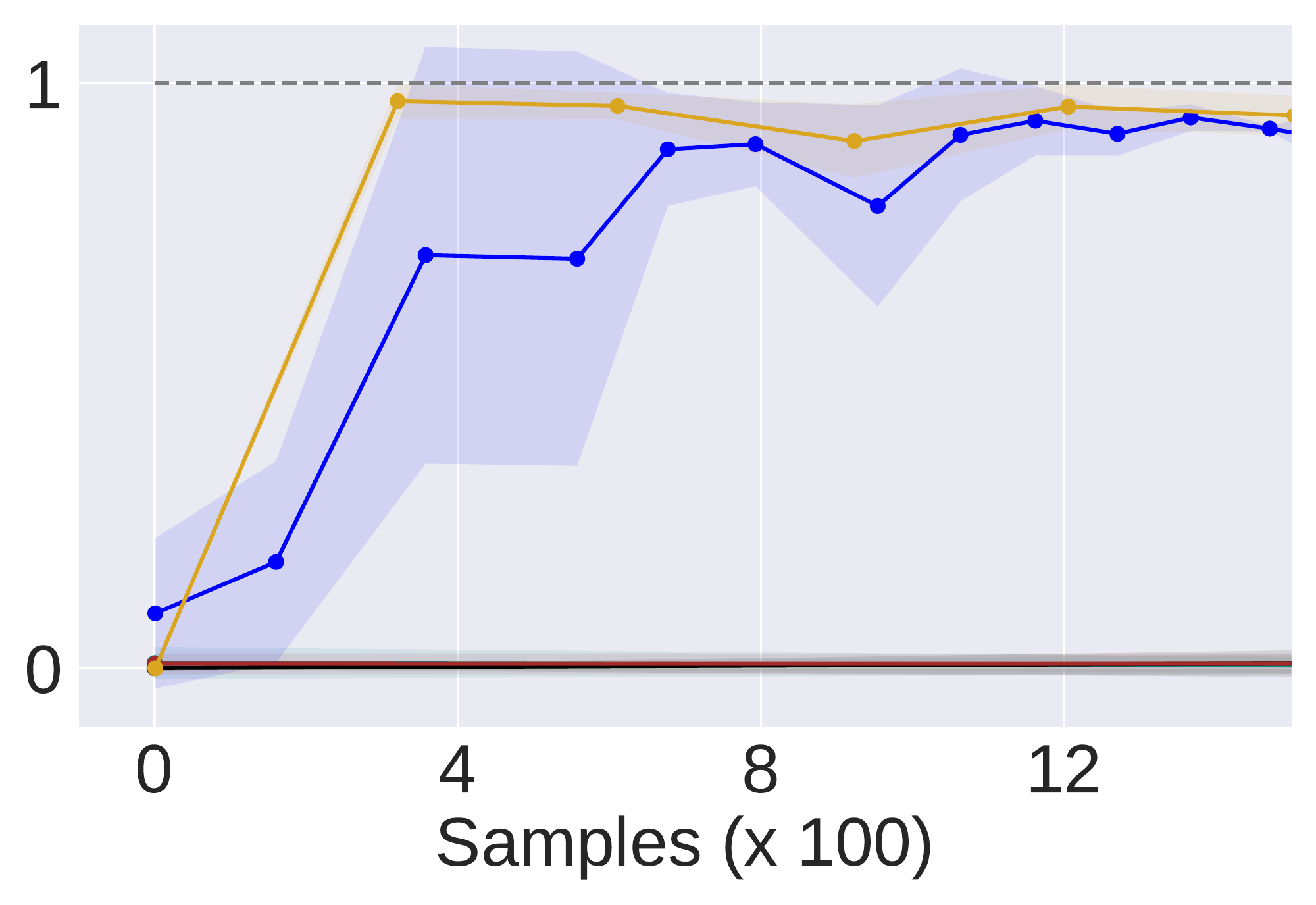}
     } \\ 
    \multicolumn{5}{c}{
       \includegraphics[scale=0.5]{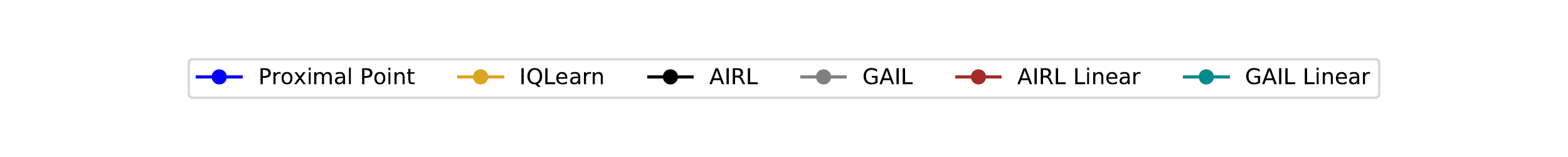}
     }
\end{tabular}
\caption{\textbf{Online IL Experiments}. We show the total returns vs the number of env steps. }
\label{fig:simple_env_results}
\end{figure*}

\textbf{Offline Setting.} \Cref{fig:offline_cartpole,fig:offline_acrobot,fig:offline_lunarlander} shows that our method is competitive with the state-of-the-art offline IL methods IQLearn \cite{Garg:2021} and AVRIL \cite{Chan:2021} that recently showed performances superior to other methods like \cite{Jarrett:2021}\cite{Kostrikov:2020}.
 We also tried our algorithm in the complex image-based \texttt{Pong} task from the Atari suite. 
 \Cref{fig:pong} shows that the algorithm reaches the expert level after observing $2e5$ expert samples. We did not find AVRIL competitive in this setting, and skip it for brevity. 
 In these settings, we verified that the algorithmic performance is convincing even for costs parameterized by neural networks. 

\textbf{Continuous control experiments.}
We attain the expert performance also in $2$ MuJoCo environments: \texttt{Ant}, \texttt{HalfCheetah}, {\texttt{Hopper}, and \texttt{Walker}} (see \Cref{fig:ant,fig:halfcheetah,fig:hopper,fig:walker}).
The additional difficulty in implementing the algorithm in continuous control experiments is that the analytical form of the policy improvement step is no longer computationally tractable because this would require to compute an integral over the continuous action space. Therefore, we approximated this update using the Soft Actor Critic (SAC) \cite{Haarnoja:2018} algorithm. SAC requires environment samples making the algorithm online. The good empirical result opens the question of analyzing policy improvement errors as in \cite{Geist:2019}. 
\begin{figure*}[h]
    \centering
\begin{tabular}{ccccc}
\subfloat[Acrobot\label{fig:offline_acrobot}]{%
       \includegraphics[width=0.16\linewidth]{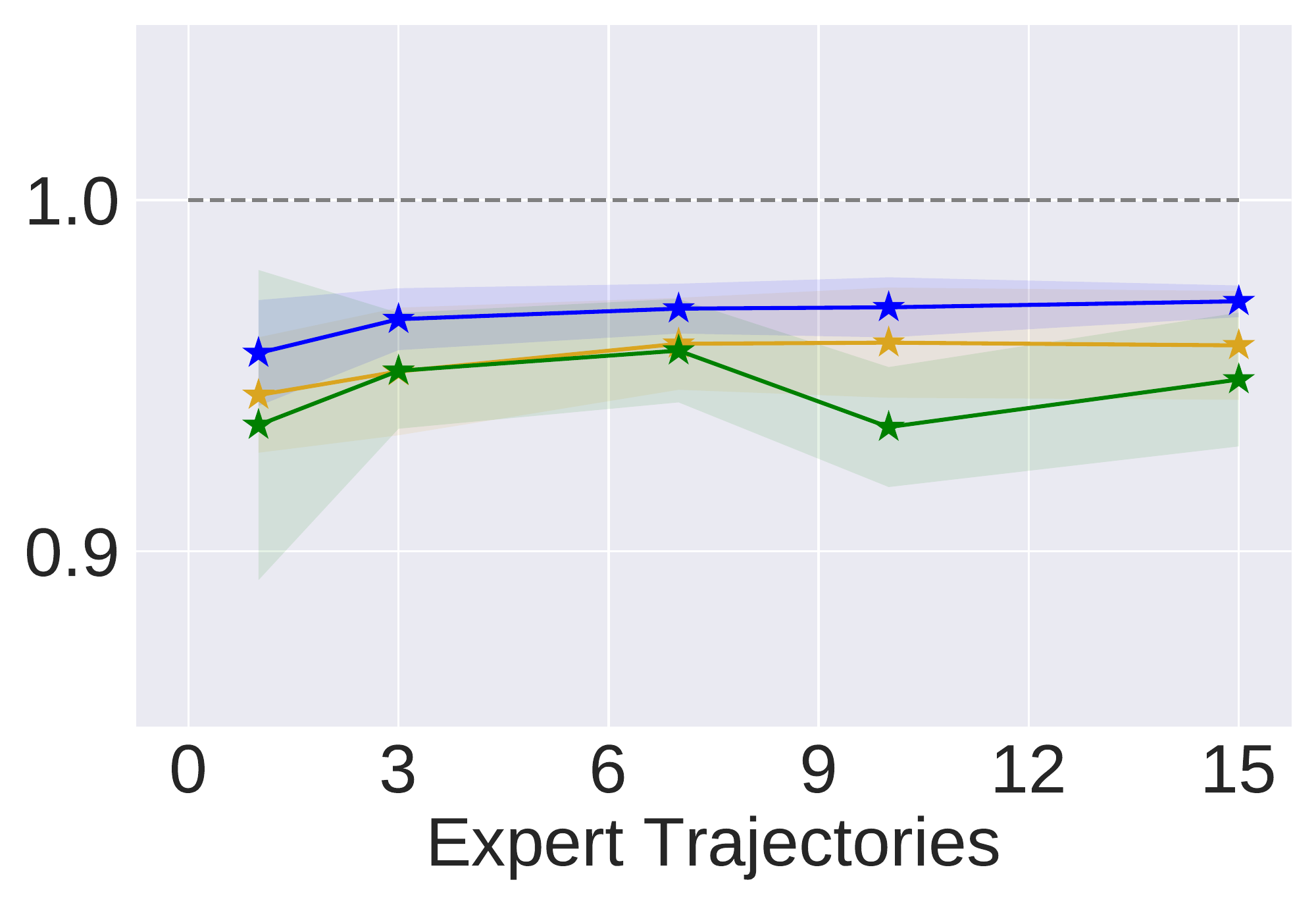}
     } &
     \subfloat[CartPole\label{fig:offline_cartpole}]{%
       \includegraphics[width=0.16\linewidth]{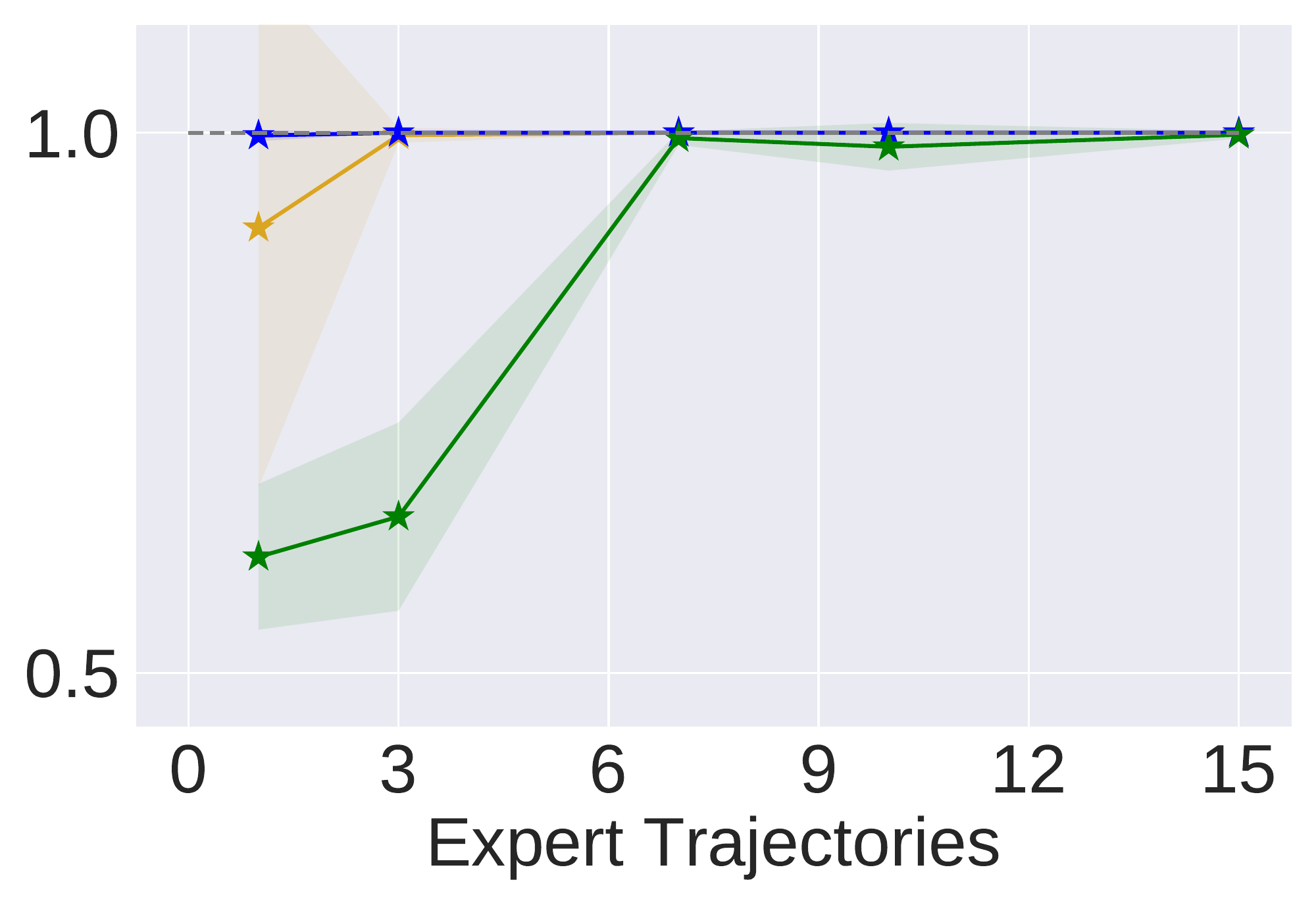}
     } &
     \subfloat[LunarLander\label{fig:offline_lunarlander}]{%
       \includegraphics[width=0.16\linewidth]{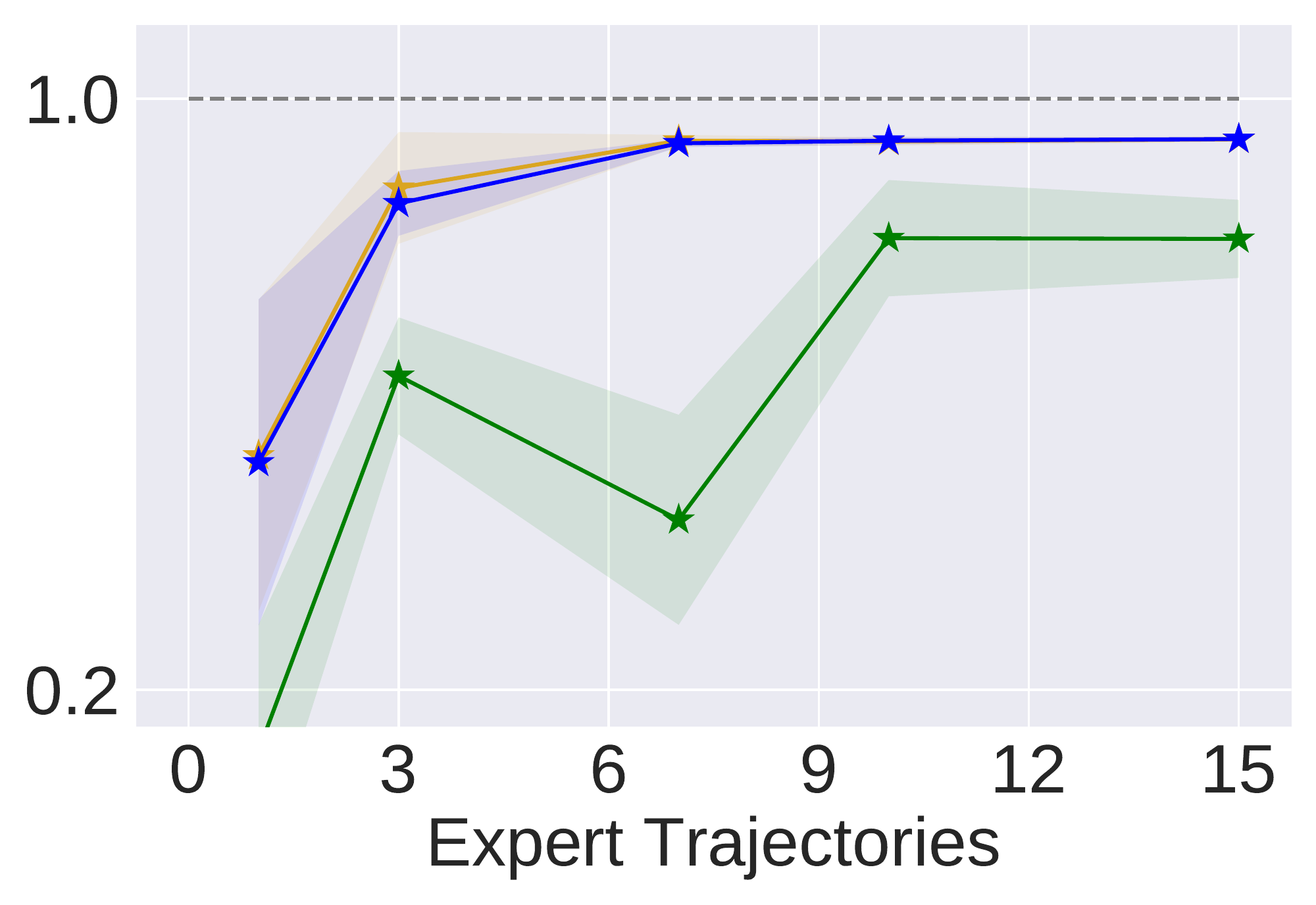}
     } 
     &
     \subfloat[Pong\label{fig:pong}]{%
       \includegraphics[width=0.16\linewidth]{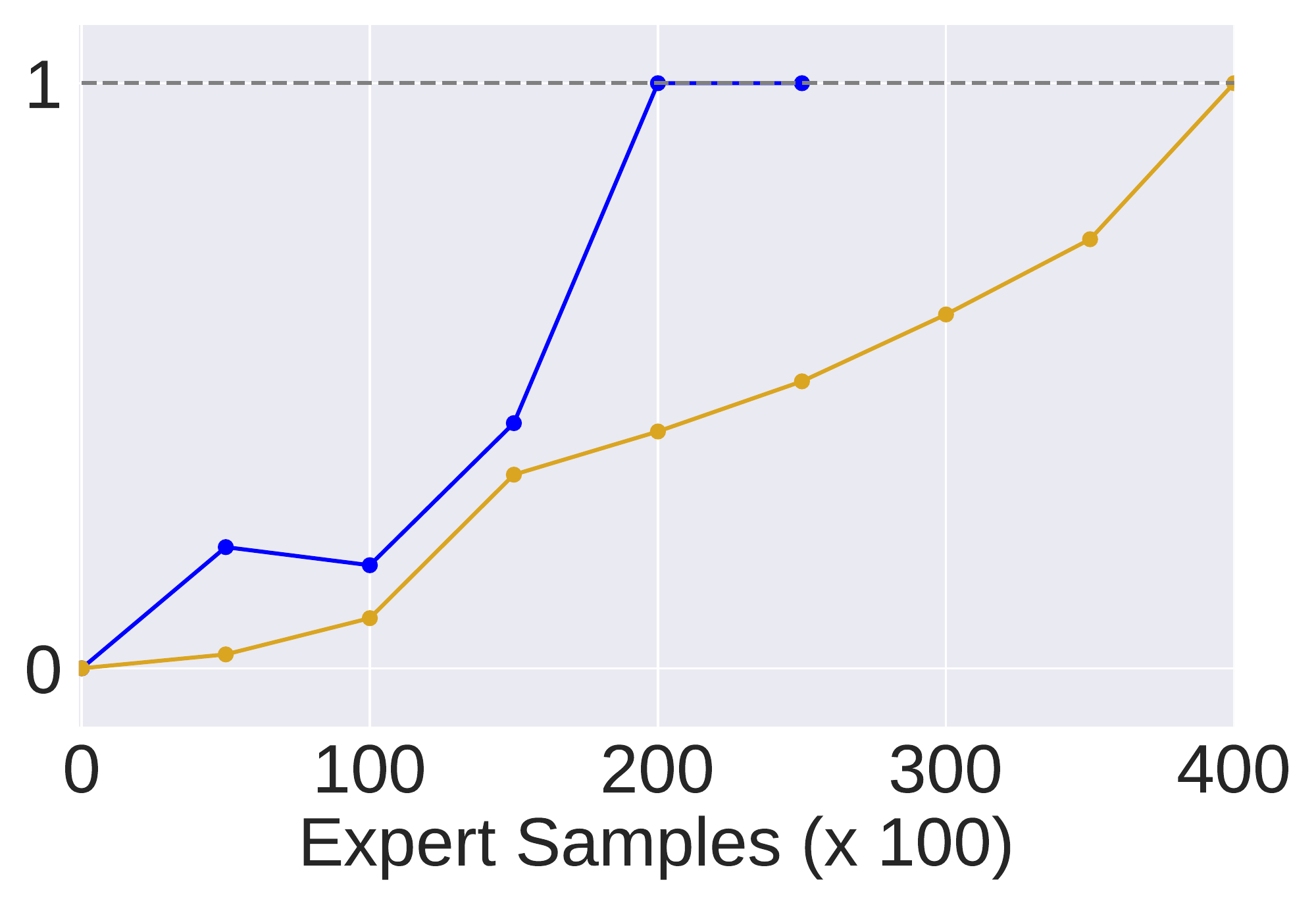}
     } &
       \includegraphics[width=0.16\linewidth]{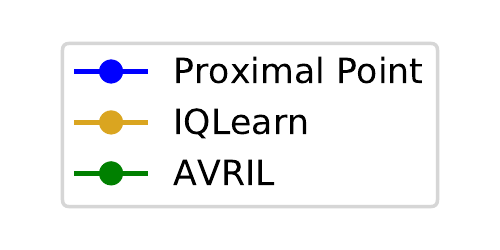}
      \\
     \subfloat[HalfCheetah\label{fig:halfcheetah}]{%
       \includegraphics[width=0.16\linewidth]{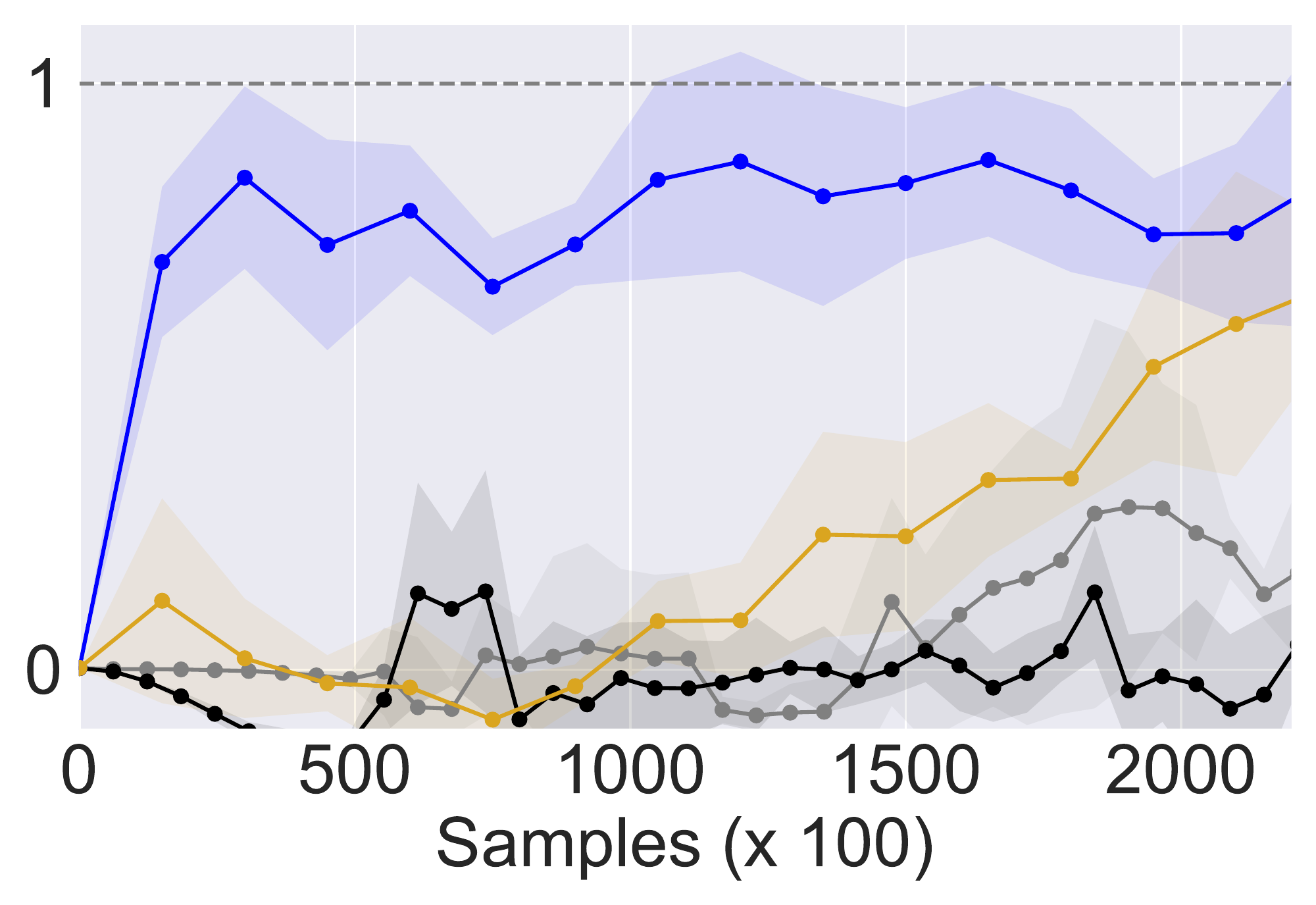}
     } &
     \subfloat[Ant\label{fig:ant}]{%
       \includegraphics[width=0.16\linewidth]{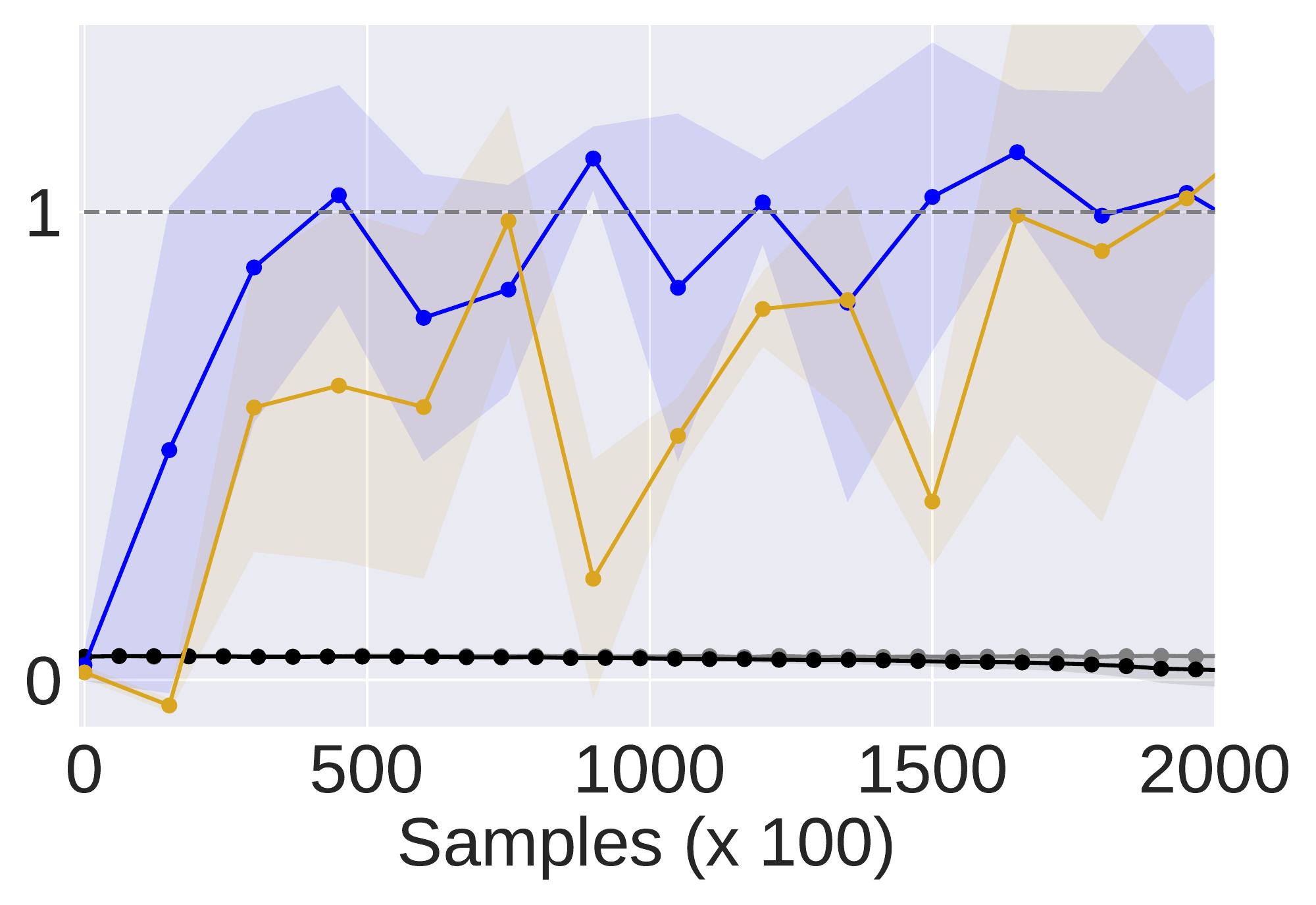}
     } &
    \subfloat[Hopper\label{fig:hopper}]{%
       \includegraphics[width=0.16\linewidth]{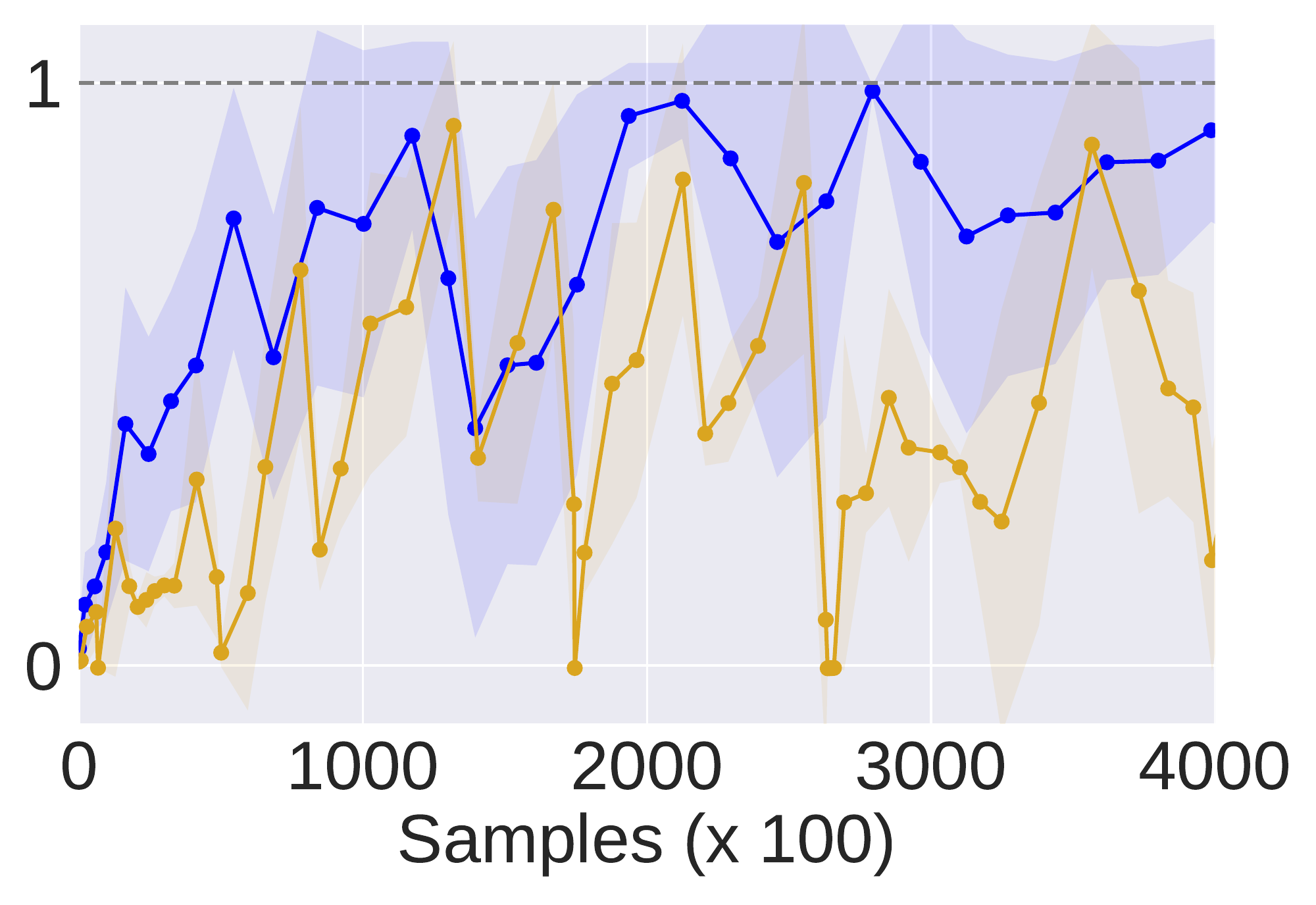}
     } &
     \subfloat[Walker2d\label{fig:walker}]{%
       \includegraphics[width=0.16\linewidth]{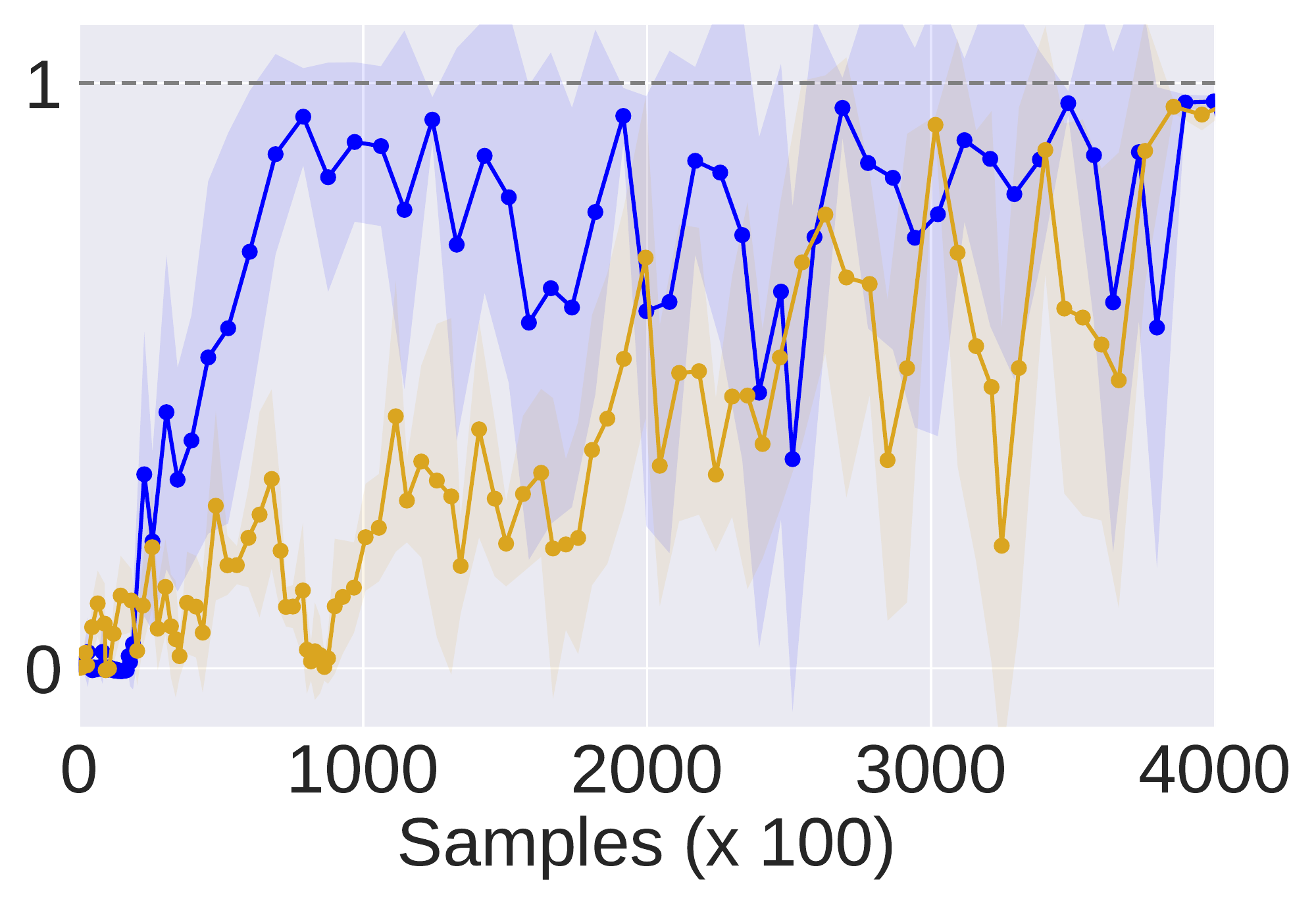}
     } &
       \includegraphics[width=0.16\linewidth]{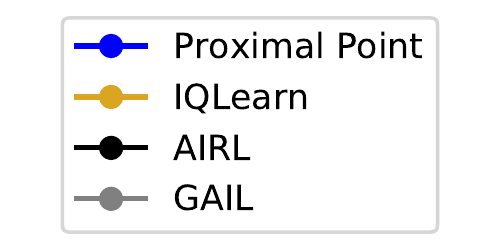}
      \\
\end{tabular}
\caption{\textbf{Neural function approximation experiments.} \Cref{fig:offline_cartpole,fig:offline_acrobot,fig:offline_lunarlander} show the total returns vs the number of expert trajectories. \Cref{fig:ant,fig:halfcheetah,fig:hopper,fig:walker} show the total returns vs the number of env steps. \Cref{fig:pong} shows the total return vs the number of expert state-action pairs.
\label{fig:offline_experiments}}
\end{figure*}
\label{sec:experiments}


\textbf{Recovered Costs.}
 A unique algorithmic feature of the proposed methodology is that we can explicitly recover a cost along with the $\mathsf{Q}$-function without requiring adversarial training. In Figure~\ref{fig:gridworld_cost}, we visualize our recovered costs in a simple 5x5 \texttt{Gridworld}. Most importantly, we verify that the recovered costs induce nearly optimal policies w.r.t. the unknown true cost function.  Compared to I$\mathsf{Q}$-Learn~\citep{Garg:2021}, we do not require knowledge or further interaction with the environment. Therefore, the recovered cost functions show promising transfer capability to new dynamics. 
 \begin{figure}[h]
    \centering
\begin{tabular}{cccc}
\subfloat[$- \mbf{c}_{\mathrm{true}}$]{%
       \includegraphics[height=5cm]{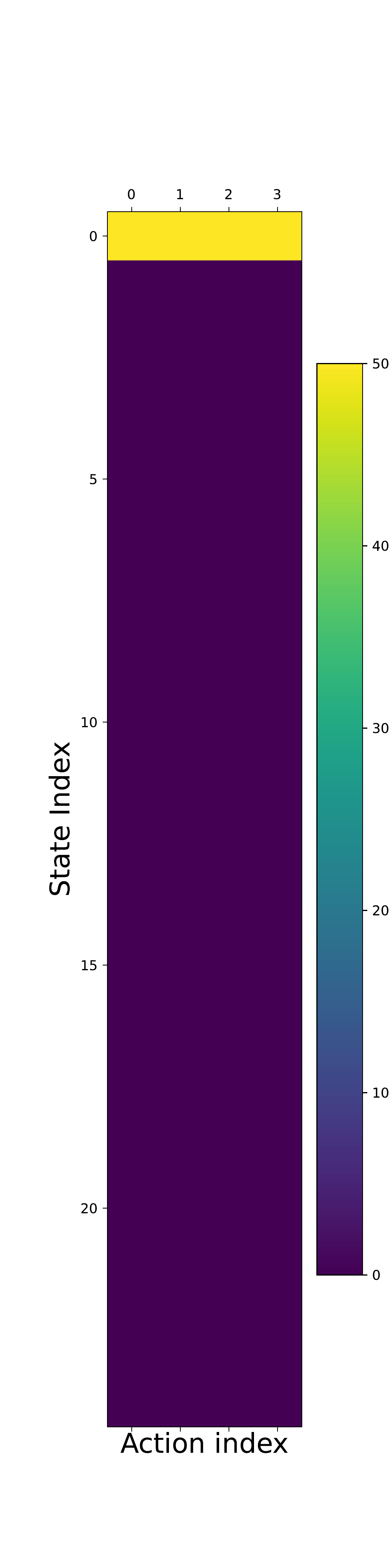}
     } &
     \subfloat[$- \mbf{c}_{K}$]{%
       \includegraphics[height=5cm]{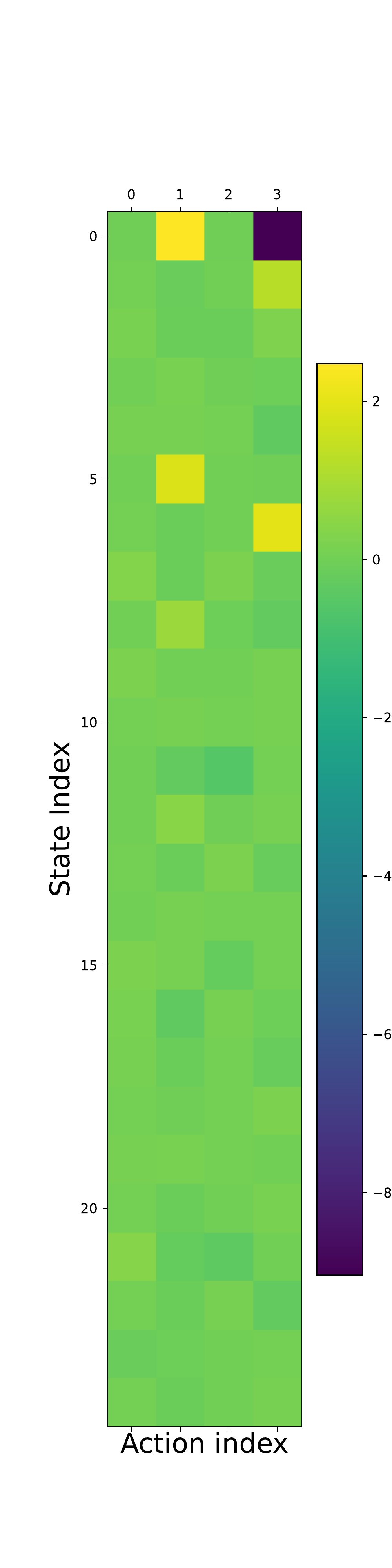}
     } &
\subfloat[-$V^\star_{\mbf{c}_{\mathrm{true}}}$]{%
       \includegraphics[height=3cm]{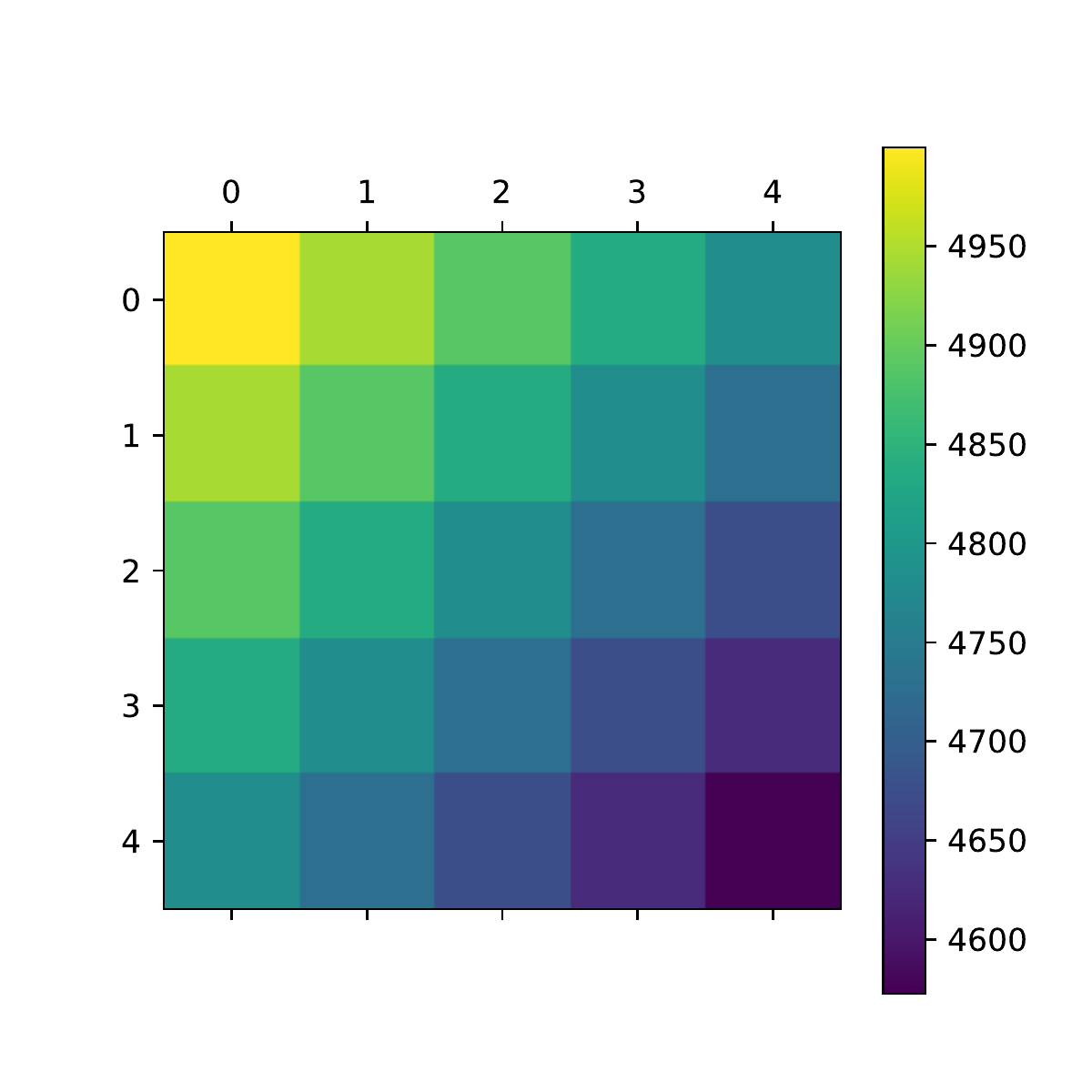}
     } &
     \subfloat[-$V^\star_{\mbf{c}_{K}}$]{%
       \includegraphics[height=3cm]{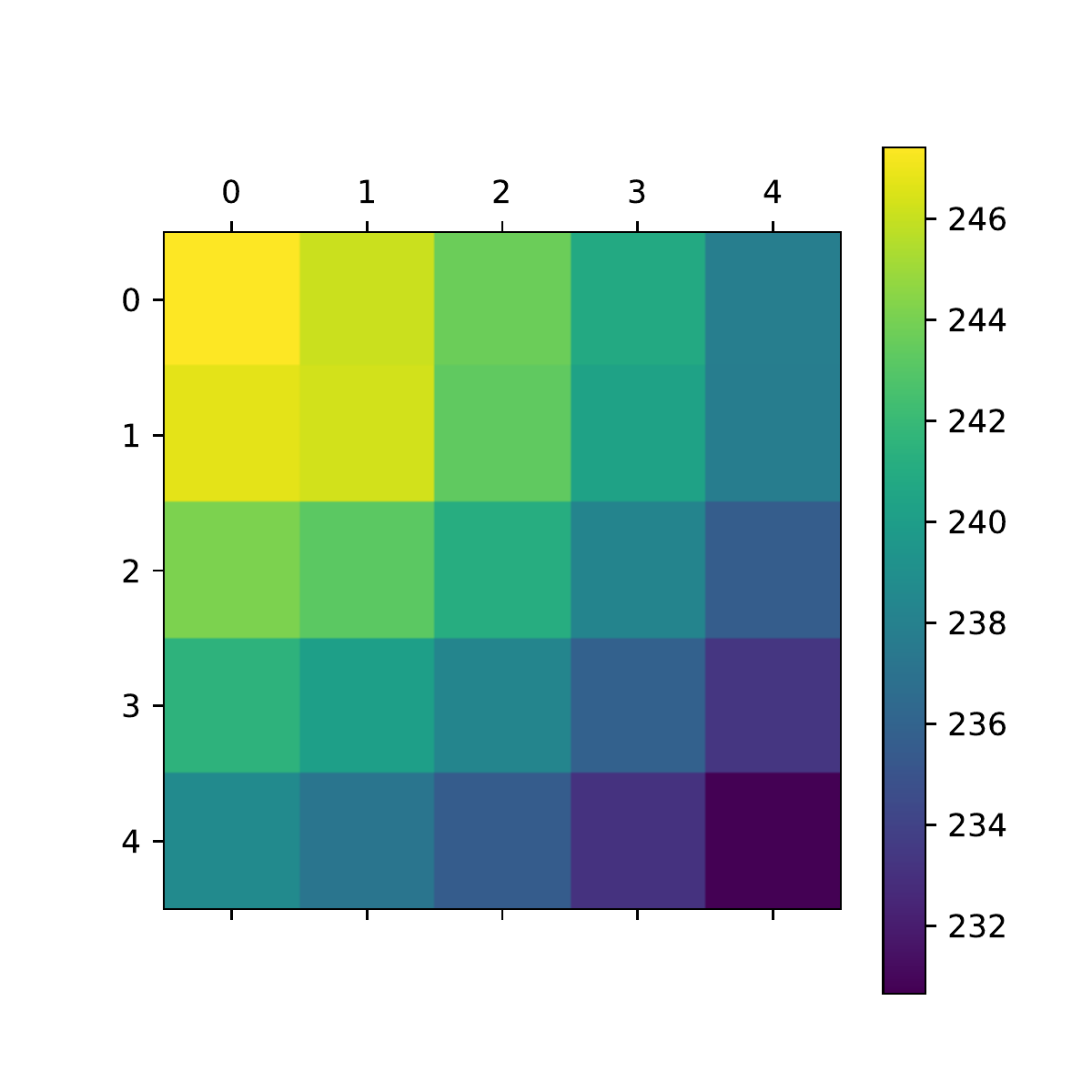}}
\end{tabular}
\caption{\textbf{Recovered Costs in \texttt{Gridworld}.} Comparison between the true cost $\cost_{\mathrm{true}}$ and the cost $\cost_K$ recovered by \texttt{P$^2$IL}. We notice that the optimal value functions $V^\star_{\mbf{c}_{\mathrm{true}}}$ and $V^\star_{\mbf{c}_{K}}$ present the same pattern. Hence, the optimal policy with respect to $\mbf{c}_{K}$ is nearly optimal with respect to $\mbf{c}_{\mathrm{true}}$.
\label{fig:gridworld_cost}}
\end{figure}

\textbf{Cost Transfer Setting.} We experimented with a transfer cost setting on a \texttt{Gridworld} (Figure~\ref{fig:transfer_cost}).
We consider two different Gridworld MDP environments, say $M$ and $\widetilde{M}$, with opposite action effects. This means that action \texttt{Down} in $\widetilde{M}$ corresponds to action \texttt{Left} in $M$ and vice versa. Similarly, the effects of \texttt{Up} and \texttt{Right} are swapped between $\widetilde{M}$ and $M$. We denote by $\val^\pi_{\widetilde{M},\cost_{\mathrm{true}}}$ (resp. $\val^\star_{\widetilde{M},\cost_{\mathrm{true}}})$ the value function of policy $\pi$ (resp. optimal value function) in the MDP environment $\widetilde{M}$ with cost function $\cost_{\mathrm{true}}$. Moreover, we denote by $\pi^\star_{{M},\cost}$ the optimal policy in the MDP environment $M$ under cost function $\cost$. Figure~(a) gives the corresponding optimal value function.
Figure~(b) presents the value function of the expert policy $\expert=\pi^\star_{M,\cost_{\mathrm{true}}}$ used as target by \texttt{P$^2$IL}. Figure~(d) shows the value function of the learned
 imitating policy $\pi_K$ from \texttt{P$^2$IL}. Finally, Figure~(b) depicts the value function of the optimal policy $\pi^\star_{\widetilde{M},\cost_K}$ for the environment $\widetilde{M}$ endowed with the recovered cost function $\cost_K$ by \texttt{P$^2$IL} (with access to samples from $M$).
 We conclude that the policy $\pi^\star_{\widetilde{M},\cost_K}$ is optimal in $\widetilde{M}$ with cost $\cost_{\mathrm{true}}$.
By contrast, the expert policy $\expert=\pi^\star_{M,\cost_{\mathrm{true}}}$ used as target by \texttt{P$^2$IL} performs poorly and as a consequence also the imitating policy $\pi_K$ does so. All in all,
we notice that the recovered cost induces an optimal policy for the new dynamics while the imitating policy fails.  Albeit, cost transfer is successful in this experiment we do not expect this fact to be true in general because we do not tackle the issue of cost shaping \citep{Ng:2000}.

\begin{figure}[h]
    \centering
\begin{tabular}{cccc}
\subfloat[$-\val^\star_{\widetilde{M},\cost_{\mathrm{true}}}$]{%
       \includegraphics[width=0.2\linewidth]{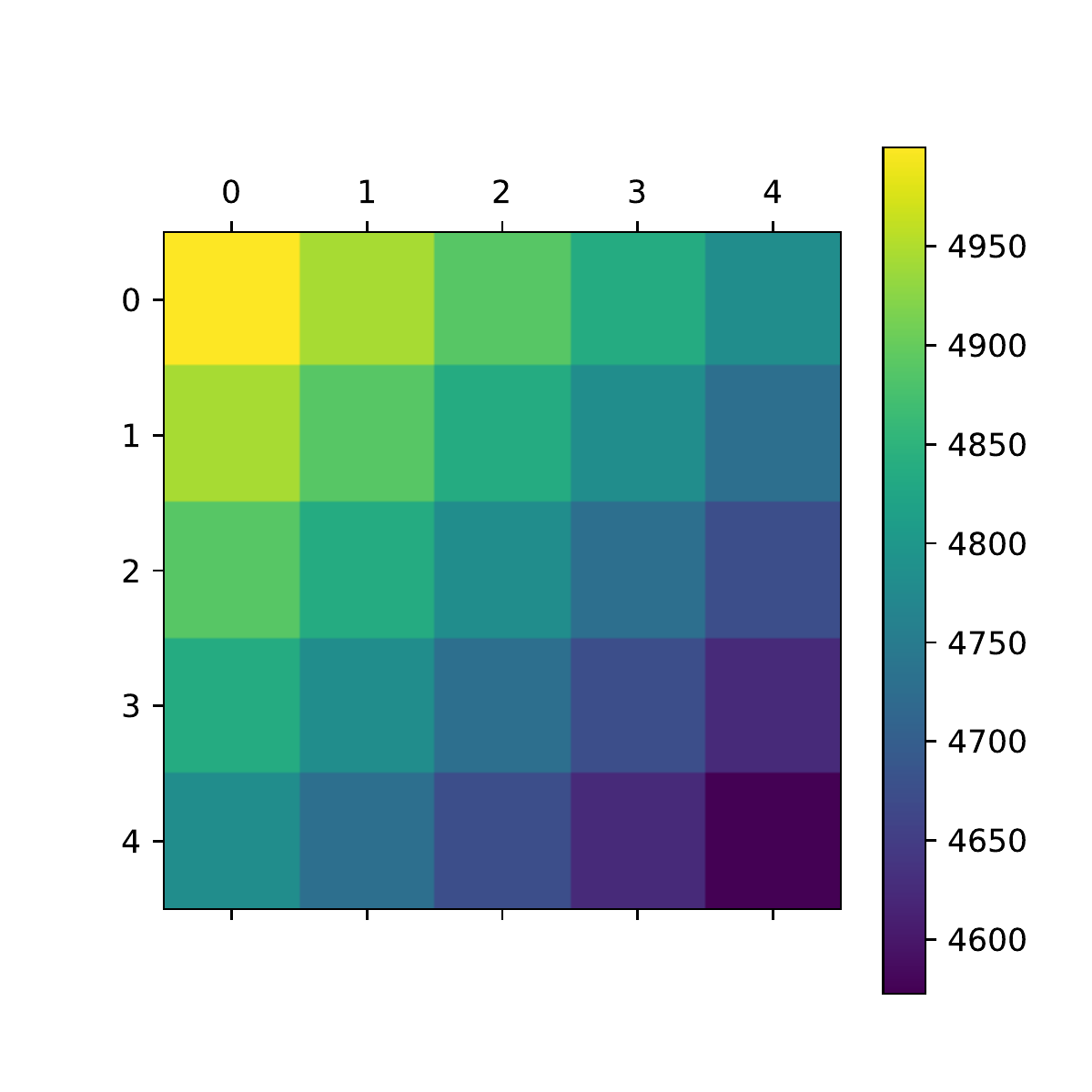}
     } &
     \subfloat[$-\val_{\widetilde{M}, \cost_{\mathrm{true}}}^{\pi^\star_{M,\cost_{\mathrm{true}}}}$]{%
       \includegraphics[width=0.2\linewidth]{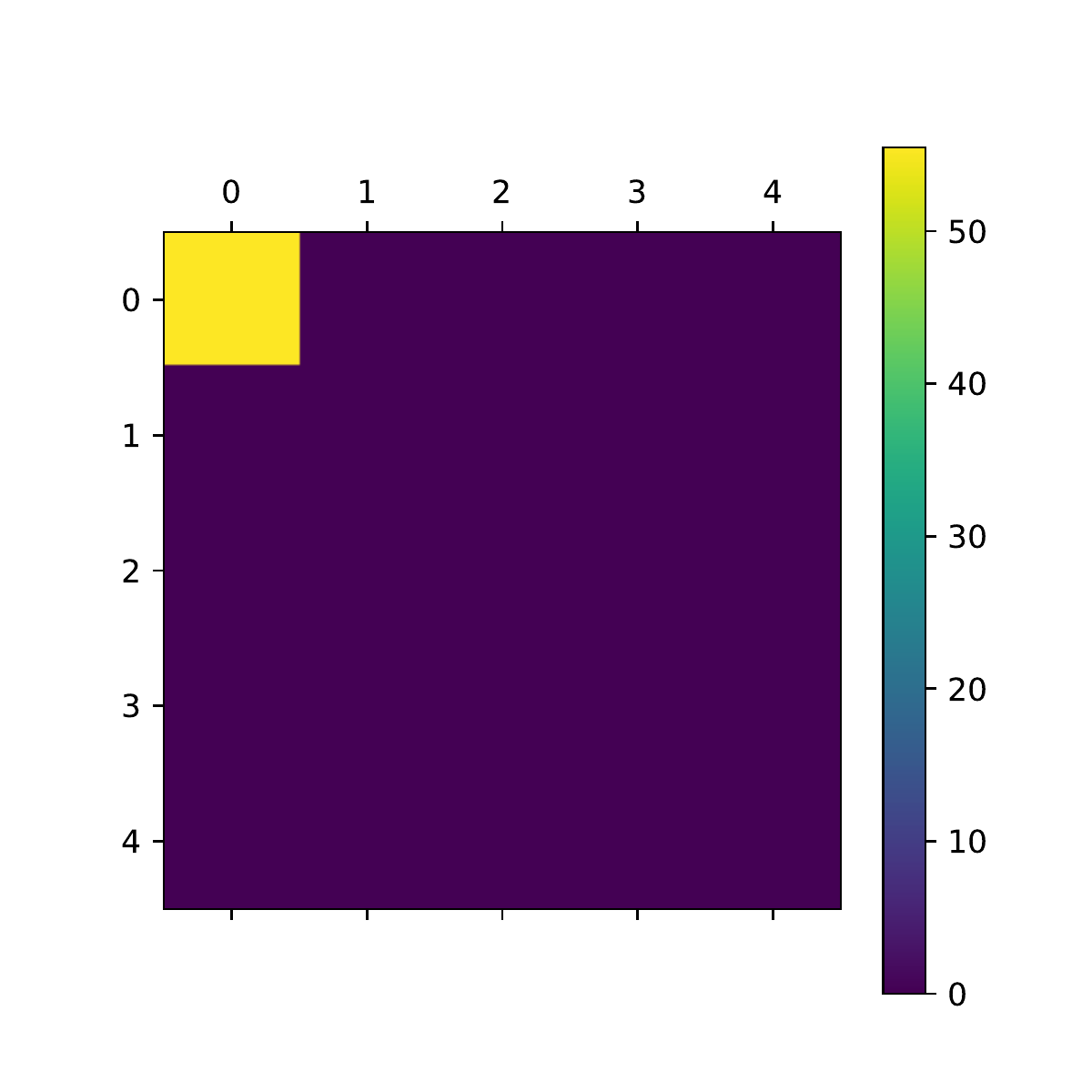}
     } &
\subfloat[$-\val_{\widetilde{M},\cost_{\mathrm{true}}}^{\pi^\star_{\widetilde{M},\cost_K}} $]{%
       \includegraphics[width=0.2\linewidth]{plot/cost/True_ValueWindyGrid-v0201.pdf}
     } &
     \subfloat[$-\val_{\widetilde{M}, \cost_{\mathrm{true}}}^{\pi_K}$]{%
       \includegraphics[width=0.2\linewidth]{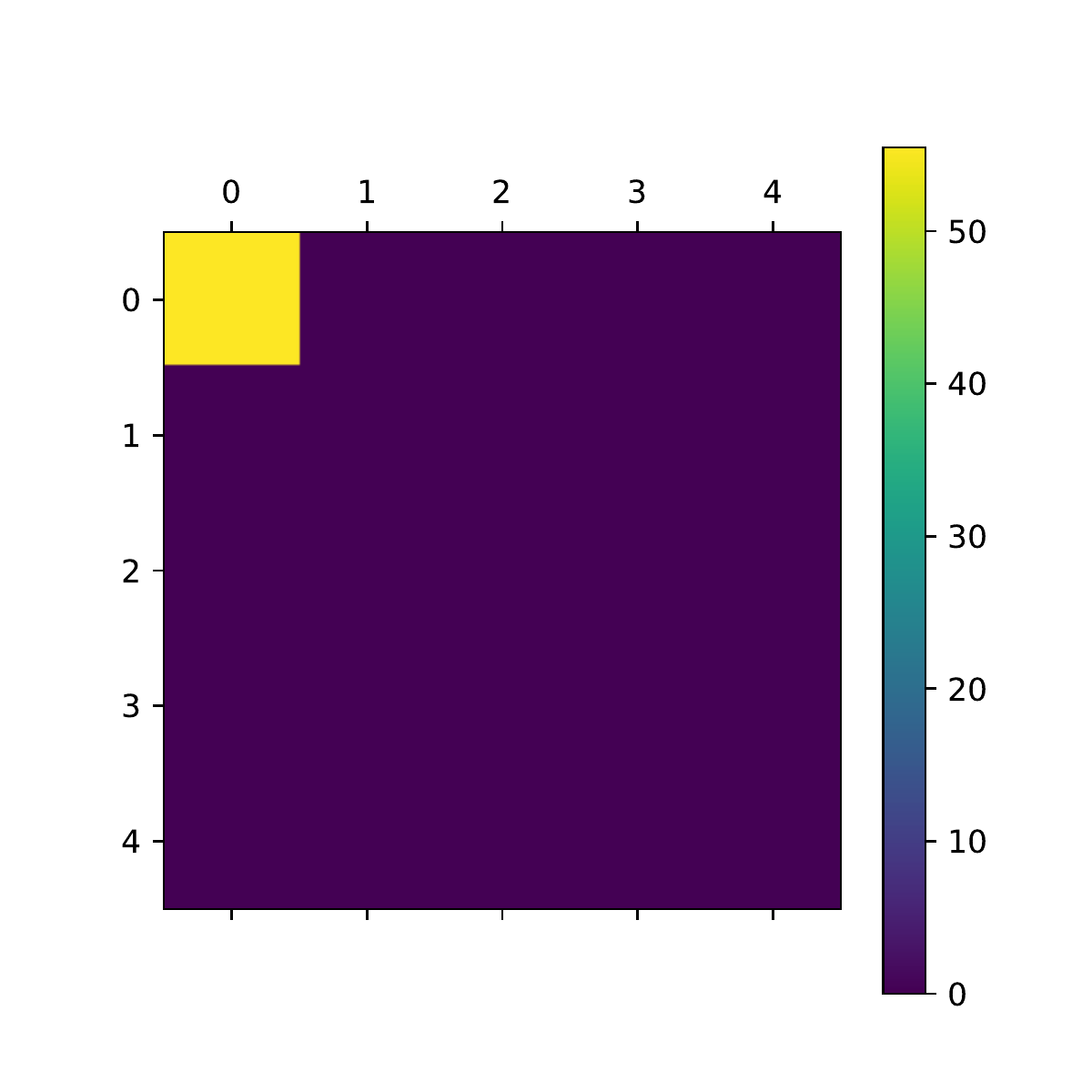}
     }
\end{tabular}
\caption{\textbf{Cost Transfer Experiment in \texttt{Gridworld}.} We compare the performance of several policies in the new MDP environment $\widetilde{M}$ with cost function $\true$. We notice that the recovered cost induces an optimal policy for the new dynamics while the imitating policy fails. 
\label{fig:transfer_cost}}
\end{figure}

\section{Discussion and Outlook}
In this work, we studied a Proximal Point Imitation Learning (\texttt{P$^2$IL}) algorithm with both theoretical guarantees and convincing empirical performance. Our methodology is rooted in classical optimization tools and the LP approach to MDPs.
The most significant merits of \texttt{P$^2$IL} are the following:
(i) It optimizes a convex and smooth logistic Bellman evaluation objective over both cost and Q-functions. In particular, it avoids instability due to adversarial training and can also recover an explicit cost along with Q function;
(ii) In the context of linear MDPs, it comes with efficient resource guarantees and error bounds for the suboptimality of the learned policy (Theorem~\ref{thm:biased_sgd} and Corollary~\ref{cor:sample_complexity}). In particular, given $\mathrm{poly}(1/\varepsilon,\log(1/\delta),m)$ many samples , it recovers an $\varepsilon$-optimal policy, with probability $1-\delta$. Notably, the bound is independent of the size of the state-action space;
(iii) Beyond the linear MDP setting, it can be implemented in a model-free manner, for both online and offline setups, with general function approximation without losing its theoretical specifications. This is justified by providing an error propagation analysis (Theorems~\ref{thm:error_propagation} and~\ref{thm:offline_error_propagation}), guaranteeing that small optimization errors lead to high-quality output policy; (iv) It enjoys not only strong theoretical guarantees but also favorable empirical performance. At the same time, our newly introduced methods bring challenges and open questions. One interesting question is whether one can accelerate the PPM updates and improve the convergence rate. 
Another  direction for future work is to provide rigorous arguments for the near-optimality of the recovered cost function. On the practical side, we plan to conduct experiments
in more challenging environments than MuJoCo and Atari. We hope our new techniques will be useful to future algorithm designers and lay the foundations for overcoming current limitations and challenges. In Appendix~\ref{app:future}, we point out in detail a few interesting future directions.

\section*{Acknowledgements}
The authors would like to thank the anonymous reviewer for their suggestions to improve the presentation and for motivating us to inspect the recovered cost function. This work has received funding from the Enterprise for Society Center (E4S), the European Research Council (ERC) under the European Union’s Horizon 2020 research and innovation programme grant agreement OCAL, No.~787845, the European Research Council (ERC) under the European Union's Horizon 2020 research and innovation programme (grant agreement n° 725594 - time-data), the Swiss National Science Foundation (SNSF) under  grant number 200021\_205011. Gergely Neu was supported by the European Research Council (ERC) under the
European Union’s Horizon 2020 research and innovation programme (Grant agreement No.~950180).
Luca Viano acknowledges travel support from ELISE (GA no 951847).
\bibliographystyle{abbrvnat}
\bibliography{neurips}

\newpage
\appendix
\section{Related Literature (Extended)}\label{app:related-literature}
In order to state our research questions and situate them among prior related theoretical and practical works, we provide an extended literature review.

\textbf{Theoretical Imitation Learning.} Our work is related to recent actor-critic IL schemes with theoretical guarantees for different MDP models, and different policy evaluation objectives (e.g., minimizing the squared Bellman error)~\cite{Cai:2019,Zhang:2020,Chang:2021,Liu:2022,Shani:2021}. Contrary to these actor-critic schemes, in our proximal-point imitation learning algorithm, the policy evaluation step involves optimization of a single objective over both cost and $Q$-functions. In this way, we avoid instability or poor convergence due to nested policy evaluation and cost update steps~\cite{Garg:2021} as well as the undesirable properties of the widely used squared Bellman error~\cite{Mnih:2015}. Moreover, for the context of linear MDPs~\cite{Bas-Serrano:2021, Yang:2019, Jin:2020, Cai:2020,  Wang:2019, Agarwal:2020b, Neu:2020}, we provide guarantees and convergence rates for the suboptimality of the learned policy, under mild assumptions, significantly weaker than those found in the literature
 until now. To our knowledge, such guarantees in this setting are provided for the first time. It is worth noting that in the case of continuous MDPs, despite being linear, the transition law can still have infinite degrees of freedom. This is a substantial difference from the recent theoretical works on IL~\cite{Cai:2019,Zhang:2020,Chang:2021,Liu:2022,Shani:2021} which consider either tabular MDPs~\cite{Shani:2021}, or a linear quadratic regulator~\cite{Cai:2019}, or a linear transition law that can be completely specified by a finite-dimensional matrix~\cite{Liu:2022}. In the last case, the degrees of freedom are bounded, and thus mitigate the challenges in estimating the transition model. Indeed, the linear MDP setting studied in~\cite{Liu:2022} reduces the unknown dynamics problem to estimating an unknown finite-dimensional matrix, which differs from our nonparametric approach. We also note that~\cite{Zhang:2020, Xu:2020} require the restrictive assumption of bounded concentrability coefficients, while this is not the case for the analysis in this paper. The convergence and generalization of actor-critic IL schemes for general MDPs has been studied in~\cite{Chen:2020a}. However, the authors in~\cite{Chen:2020a} only provide local optimality convergence guarantees, i.e., convergence to a stationary point. On the contrary, our algorithm provides global convergence guarantees for the linear MDP setting. Moreover, we account for potential policy evaluation errors , presenting an error
propagation analysis that leads to rigorous guarantees for both online and offline setting, beyond the linear MDP assumption. Indeed, it is worth noting that the provided error propagation analysis justifies using our derived actor-critic scheme with general function approximation. A scalable deep reinforcement learning implementation is possible, without losing the theoretical guarantees of Theorem~\ref{thm:error_propagation}. The work~\cite{Chang:2021} studies offline IL for the continuous kernelized nonlinear regulator and Gaussian process setting~\cite{Kakade:2020}. We notice that  this setting is different from the linear MDP model studied in this paper, and each one does not imply the other. Finally, a recent theoretical IL work that is rooted in the LP approach to MDPs is~\cite{Kamoutsi:2021}. The authors consider a Lagrangian reformulation of the problem and design a stochastic primal-dual algorithm with explicit performance bounds  on the quality of the extracted policy. The most important limitations of the primal-dual algorithm~\cite{Kamoutsi:2021} are (i) the need of a generative oracle, (ii) restricted coherence assumptions on the choice of features, as well as (iii) the problematic occupancy measure approximation. These limitations lead to poor practical performance for challenging high-dimensional and model-free IL setups.
On the other hand, our algorithm overcomes these difficulties by applying a proximal point update to an alternative $Q$-LP formulation~\cite{Mehta:2020,Neu:2021}. This results to a model-free actor-critic scheme with explicit tractable softmax policy updates. Compared with the setting in~\cite{Kamoutsi:2021}, 
where access to a generative-model oracle is assumed, we only have the ability to execute learned policies in the underlying MDP to generate trajectories. This assumption is considerably weaker that having a simulator-based MDP, however it is stronger than having ''irreversible experience'', where the learner must follow a single trajectory without having access to a \emph{reset action}, that obtains a new trajectory from the initial state distribution. Most importantly our algorithm enjoys not only strong theoretical guarantees, but also favorable practical performance.  

\textbf{Approximate Linear Programming.} There is an emerging body of literature~\cite{DeFarias:2003,Abbasi-Yadkori:2014,Chen:2018,Sutter:2017,Laksh:2018,MohajerinEsfahani:2018,Wang:2019,Lee:2019a,Banjac:2019,Beuchaut:2020,Martinelli:2020,Cheng:2020,Jin:2020,Shariff:2020,Bas-Serrano:2021} that studies ALP for the forward RL. While this approach dates back to 1960s~\cite{Manne:1960}, it has recently witnessed an interesting renaissance for its potential to provide a solid formal framework for newly derived methods, as well as a deeper understanding of existing empirically successful algorithms. In this paper, we present scalable imitation learning algorithms with theoretical guarantees rooted in the LP approach, highlighting how historical key limitations have been eliminated. Prior approximate linear programming (ALP) approaches developed algorithms for solving large-scale and/or continuous MDPs on a low-dimensional subspace by reducing the number of constraints (e.g., by constraint sampling)~\cite{DeFarias:2003,DeFarias:2004}. However, these prior works either scale badly with the size of the state-action spaces or require access to samples from a distribution that depends on the optimal policy. Moreover, they focus mainly on the approximation of the optimal value but not so much on extracting a near optimal policy. On the other hand, a recent line of works~\cite{Chen:2018,Lee:2019a,Wang:2019,Jin:2020,Shariff:2020} solve the problem for large-scale MDPs by employing stochastic primal-dual methods, in light of Lagrangian duality. Although this approach achieves state-of-the-art sample complexity guarantees, it shows poor performance in practice. First, current primal-dual algorithms need access to a simulator, mitigating implicitly the problem of exploration, Second, when dealing with linear relaxations of MDPs \cite{Bas-Serrano:2021, Kamoutsi:2021} one needs to impose a restrictive coherence assumptions to ensure that small duality gap for the linearly relaxed LP implies small suboptimality gap for the extracted policy. Finally, while their is enough intuition behind the use of linear function approximation for value functions, this is not the case for occupancy measure approximation. A new breed of algorithms that seem to overcome these difficulties is based on an alternative $Q$-LP formulation of RL. This approach has been first introduced by Mehta and Meyn~\cite{Mehta:2009} and has been recently revisited by~\cite{Bas-Serrano:2021,Neu:2020,Lee:2019a,Neu:2021,Mehta:2020,Lu:2021}. A salient feature of this equivalent formulation is that it introduces a $Q$-function as slack variables, and so lends itself to data-driven algorithms. Our work is inspired by these line of works. The most related works are the analysis of  REPS/Q-REPS~\cite{Peters:2010, Bas-Serrano:2021, Pacchiano:2021} and O-REPS~\cite{Zimin:2013} that first pointed out the connection between REPS and PPM. We build on their techniques with some important differences. In particular, while in the LP formulation of RL, PPM and mirror descent \cite{Beck:2003, Hazan:2016} are equivalent, recognizing that they are \textit{not equivalent} in IL is critical for stronger empirical performance. Moreover, our techniques can be used to improve upon the best rate for REPS in the tabular setting \cite{Pacchiano:2021} and to extend their guarantees to Linear MDPs.

\textbf{State-of-the-art Imitation Learning.} 
 Generative adversarial imitation learning (GAIL) \cite{Ho:2016} and other follow-up works \cite{Fu:2018, Ke:2020, Kostrikov:2019, Kostrikov:2020} formulate the IL as a minimax adversarial problem similar to a GAN~\cite{Goodfellow:2014} and leverage primal-dual optimization tools. In particular, GAIL solves IL with alternating updates of both policy and cost functions.
 On the other hand, a recent line of work~\cite{Garg:2021,Barde:2020,Reddy:2020}  bypasses the need of optimizing over cost functions and thus avoids instability due to adversarial training. Although these algorithms achieve impressive empirical performance in challenging high dimensional benchmark tasks, they are hampered by limited theoretical understanding. This is the fundamental difference from our work, which enjoys both favorable practical performance and strong theoretical guarantees. Moreover, a unique algorithmic feature of our proposed methodology is a convex and smooth logistic policy evaluation objective that optimizes jointly cost and $Q$-functions. As a result, our algorithm has the additional practical benefit that can also recover an explicit cost along with the Q-function without requiring knowledge or further interaction with the environment (as  in~\cite{Garg:2021,Barde:2020,Reddy:2020}). Therefore, the recovered cost functions show promising transfer capability to new dynamics. In addition, unlike IQ-Learn~\cite{Garg:2021}, in our online IL algorithm, instead of regularizing the IL objective, the key idea is to penalize the divergence between the current policy and the policy obtained at the previous iteration. We do so by employing a Bregman proximal point update. Most importantly, as we have already highlighted, the convergence properties of~\cite{Garg:2021,Barde:2020,Reddy:2020}) remain largely elusive in the function approximation and model-free regime.  It is unclear whether the sampling-based variants of their  algorithms converge to a global optimum or if they converge at all, even for the simple tabular setting.

\section{Future directions}\label{app:future}
In this work, we studied a proximal point imitation learning algorithm with both theoretical guarantees and convincing empirical performance in challenging benchmark tasks. Our methodology is rooted in classical stochastic optimization tools and in the LP approach to MDPs. We hope that our new techniques will be useful for future algorithm designers and lay foundations for overcoming current limitations and challenges. We point out a few interesting directions.

\textbf{Accelerated proximal point.}  An appealing possibility is to study an accelerated proximal point scheme with inexact updates and achieve faster convergence rates. While there has been an effort in this direction \cite{Yan:2020,Yang:2021}, the acceleration relies on the triangle/quadrangle scaling property assumption \cite{Hanzely:2021} that does not hold for KL divergence over the simplex. Understanding if it is possible to accelerate PPM without such an assumption is an open question, whose solution has direct application to the LP formulation of RL and imitation learning. 

\textbf{Primal-dual methods with conditional relative entropy.} Recent primal-dual RL algorithms rooted in the LP approach to MDPs achieve state-of-the-art sample complexity guarantees. See, for example, \cite{Bas-Serrano:2020} for exact gradients, \cite{Carmon:2019, Jin:2020} for stochastic gradients, and \cite{Kamoutsi:2021} for the imitation learning problem. The most important disadvantages of primal-dual RL algorithms are (i) the need of a generative oracle, (ii) restricted coherence assumptions on the choice of features, as well as (iii) the problematic occupancy measure approximation. Unfortunately, these limitations lead to poor practical performance for challenging high-dimensional and model-free RL and IL setups.
On the other hand, our algorithm overcomes these difficulties but requires to approximately solve a small-dimensional convex program repetitively. It is also challenging to account for the biased gradient estimates beyond the linear MDP setting.
It is promising to investigate if by combining the alternative $Q$-LP formulation and the conditional relative entropy as Bregman divergence in a primal-dual mirror descent scheme, one can avoid the current practical limitations of primal-dual RL methods. It is also interesting that in this case, the action-value parameters will be updated by taking one gradient step each time, instead of solving a small-dimensional convex program.

\textbf{Inexact policy improvement update.}
The error propagation presented in this work accommodates for errors only in the policy evaluation phase, while it assumes that the policy improvement step can be implemented exactly. This happens in other related works like \cite{Bas-Serrano:2021, Vieillard:2020}. In contrast, the error propagation analysis in \cite{Geist:2019} takes into account an error in the policy improvement step but unfortunately it does not provide a way to ensure that such an error is small. 
Future research effort will aim to include in our error propagation analysis a term given by inexact policy improvement steps, ensure that such errors are small and characterizing the deterioration in the sample complexity under policy improvement errors. This kind of analysis would be important for continuous actions environment where the softmax policy update can not be computed in closed-form.
\section{Dual Program Interpretation}~\label{app:strong-duality}
To motivate further the~\ref{eq:primal} formulation, we shed light to its dual and provide an interpretation of the dual optimizers. For brevity, we focus on the case $\mathcal{W}=B_1^m$. The proof can be found in Appendix~\ref{app:strong-duality-proof} and is based on strong duality between the two convex programs. 
	
	\begin{proposition}\label{prop:primal-dual-q-function}
	The dual convex program is given by
			\begin{multline}\label{eq:dual}
\zeta^\star=\max_{(\weight,\val,\qval)}\Big\{(1-\gamma)\innerprod{\initial}{\val}-\innerprod{\mv_{\expert}}{\cost_{\weight}}\mid \qval\geq\bmat\val,\;
\qval=\cost_{\weight}+\gamma\pmat\val,\\
\;\val\in\ar^{|\sspace|}, \tag{\color{blue}Dual}
\;\qval\in\ar^{|\sspace||\aspace|},\;\weight\in\mathcal{W}\Big\}. 
\end{multline}
Moreover, for $\mathcal{W}=B_1^m$, a triple $(\val_{\textup{A}},\qval_{\textup{A}},\weight_{\textup{A}})$ is dual optimal if and only if (i) $\expert$ is optimal for the RL problem with cost $\cost=\cost_{\wa}$, (ii) $\val_{\textup{A}}=\mbf{V}_{\weight_{\textup{A}}}^\star$, (iii) $\qval_{\textup{A}}=\mbf{Q}_{\weight_{\textup{A}}}^\star$, and (iv) $\weight_{\textup{A}}\in\mathcal{W}$. In particular, $(\mbf{V}^\star_{\mbf{w_{\textup{true}}}},\mbf{Q}^\star_{\mbf{w_{\textup{true}}}},\mbf{w_{\textup{true}}})$ is a dual optimizer.
			\end{proposition}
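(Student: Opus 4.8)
The plan is to dualize the convex program~\eqref{eq:primal} and then read off the structure of its maximizers. First I would write~\eqref{eq:primal} with the objective in the unrolled form $\max_{\weight\in\mathcal{W}}\innerprod{\mv-\mv_{\expert}}{\cost_{\weight}}$ and attach a multiplier $\val\in\ar^{|\sspace|}$ to the flow constraint $\gamma\pmat^\trans\mv+(1-\gamma)\initial-\bmat^\trans\dv=\mbf{0}$ and a multiplier $\qval\in\ar^{|\sspace||\aspace|}$ to the splitting constraint $\dv-\mv=\mbf{0}$, keeping $\dv\geq\mbf{0}$ and $\weight\in\mathcal{W}$ as explicit domain constraints. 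Since the feasible polytope $\mathfrak{M}$ is nonempty — it contains $(\mv_{\expert},\mv_{\expert})$ by \cref{pror:state-action-polytope} — compact, and the coupling constraints are affine, strong duality holds (via Sion's minimax theorem for the $(\mv,\dv)$-versus-$\weight$ exchange together with LP duality for the affine constraints, or directly from strong duality for convex programs with affine constraints and finite value; note that $\zeta^\star=0$ once $\mbf{0}\in\mathcal{W}$, which holds for $B_1^m$). This lets me swap the inner minimization over $(\mv,\dv)$ with the maximization over $(\weight,\val,\qval)$.

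The main computational step, the inner minimization, is short. Collecting the Lagrangian by primal blocks leaves, besides the $(\mv,\dv)$-independent term $(1-\gamma)\innerprod{\initial}{\val}-\innerprod{\mv_{\expert}}{\cost_{\weight}}$, the linear form $\innerprod{\mv}{\cost_{\weight}+\gamma\pmat\val-\qval}+\innerprod{\dv}{\qval-\bmat\val}$. Minimizing over the free variable $\mv$ forces $\qval=\cost_{\weight}+\gamma\pmat\val$ (otherwise the value is $-\infty$); minimizing over $\dv\geq\mbf{0}$ forces $\qval\geq\bmat\val$, with the minimum attained at $\dv=\mbf{0}$ and equal to $0$. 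What survives is exactly~\eqref{eq:dual}.

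For the characterization of optimizers I would use that the two dual constraints $\qval=\cost_{\weight}+\gamma\pmat\val$, $\qval\geq\bmat\val$ together force $\val(s)\le\min_a\bigl(\cost_{\weight}+\gamma\pmat\val\bigr)(s,a)$ for all $s$, i.e.\ $\val$ is a pointwise subsolution of the Bellman optimality operator for cost $\cost_{\weight}$; since that operator is monotone with unique fixed point $\val^\star_{\weight}$, iterating gives $\val\le\val^\star_{\weight}$, hence (as $\initial\geq\mbf{0}$) $(1-\gamma)\innerprod{\initial}{\val}\le\rho^\star_{\cost_{\weight}}$, while $(\val^\star_{\weight},\qval^\star_{\weight})$ is itself dual feasible because $\qval^\star_{\weight}\geq\bmat\val^\star_{\weight}$ is the Bellman optimality equation. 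Thus~\eqref{eq:dual} collapses to $\zeta^\star=\max_{\weight\in\mathcal{W}}\bigl(\rho^\star_{\cost_{\weight}}-\rho_{\cost_{\weight}}(\expert)\bigr)$, every summand being $\le 0$. A triple $(\val_{\textup{A}},\qval_{\textup{A}},\weight_{\textup{A}})$ is then dual optimal iff the chain $(1-\gamma)\innerprod{\initial}{\val_{\textup{A}}}-\innerprod{\mv_{\expert}}{\cost_{\weight_{\textup{A}}}}\le\rho^\star_{\cost_{\weight_{\textup{A}}}}-\rho_{\cost_{\weight_{\textup{A}}}}(\expert)\le\zeta^\star$ consists of equalities: the right equality yields $\rho_{\cost_{\weight_{\textup{A}}}}(\expert)=\rho^\star_{\cost_{\weight_{\textup{A}}}}$, i.e.\ (i); the left one, combined with $\val_{\textup{A}}\le\val^\star_{\weight_{\textup{A}}}$ and $\initial\geq\mbf{0}$, yields (ii), whence (iii) follows from $\qval_{\textup{A}}=\cost_{\weight_{\textup{A}}}+\gamma\pmat\val_{\textup{A}}$, and (iv) is just feasibility; the converse implication is a direct verification of the same identities. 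Specializing to $\weight_{\textup{A}}=\wtrue$, for which $\expert$ is optimal under the apprenticeship assumption, gives that $(\val^\star_{\wtrue},\qval^\star_{\wtrue},\wtrue)$ is a dual optimizer. The argument is otherwise routine; the one delicate point, besides invoking strong duality, is the passage from $\innerprod{\initial}{\val_{\textup{A}}-\val^\star_{\weight_{\textup{A}}}}=0$ to the pointwise identity $\val_{\textup{A}}=\val^\star_{\weight_{\textup{A}}}$, which I expect to be the main obstacle and which needs $\initial$ to have full support (or a reachability argument over its support).
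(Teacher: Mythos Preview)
Your proposal is correct and follows essentially the same route as the paper: Lagrangian dualization of~\eqref{eq:primal} via Sion/LP duality, then reading off optimality from the sandwich $(1-\gamma)\innerprod{\initial}{\val_{\textup{A}}}\le\rho^\star_{\cost_{\weight_{\textup{A}}}}\le\rho_{\cost_{\weight_{\textup{A}}}}(\expert)$, with the $\initial\in\Re^{|\sspace|}_{++}$ hypothesis needed to upgrade $\innerprod{\initial}{\val_{\textup{A}}-\val^\star_{\weight_{\textup{A}}}}=0$ to pointwise equality --- the paper makes the same assumption explicitly. The only presentational difference is that the paper packages the subsolution step ($\val$ dual-feasible $\Rightarrow\val\le\val^\star_\weight$) by citing the known $Q$-LP duality for MDPs, whereas you reprove it via Bellman-operator monotonicity and explicitly collapse the dual to $\max_{\weight\in\mathcal{W}}\bigl(\rho^\star_{\cost_\weight}-\rho_{\cost_\weight}(\expert)\bigr)$; this is the same content, and your version is slightly more self-contained.
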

						Proposition~\ref{prop:primal-dual-q-function} states that the set of dual optimal costs $\cost_{\weight_{\textup{A}}}$ is the set of costs in $\mathcal{C}$  for which the expert is optimal. In this case, the optimal $\val_{\textup{A}}$ coincides with the corresponding optimal value function\footnote{To be precise, this is the case if $\initial\in\Re^{|\sspace|}_{++}$, otherwise they coincide $\initial$-almost surely.}, while the optimal $\qval_{\textup{A}}$ coincides with the corresponding optimal state-action value function. In particular, the true weights $\mbf{w_{\textup{true}}}$, the true optimal value function $\mbf{V}^\star_{\mbf{w_{\textup{true}}}}$ and the true optimal state-action value function $\mbf{Q}^\star_{\mbf{w_{\textup{true}}}}$ are dual optimizers. Therefore, the presented $Q$-convex approach allows to recover an optimal solution to the original problem~(\ref{eq:IL}) from both the~(\ref{eq:primal}) and~(\ref{eq:dual}) formulations:
						it can be obtained either as the induced policy of a primal optimal occupancy measure or as a greedy policy associated to a dual optimal $Q$-function. In \Cref{sec:PPM}, we  generalize the later observation to implement PPM using softmin updates in terms of $Q$-functions.
\subsection{Proof of Proposition~\ref{prop:primal-dual-q-function}} \label{app:strong-duality-proof}

	We recall the alternative $Q$-LP approach to MDPs~\cite{Mehta:2009,Mehta:2020,Neu:2020,Neu:2021,Bas-Serrano:2021}. Let $\cost\in\Re^{|\sspace||\aspace|}$ be a cost function. The forward RL problem is equivalent to the following linear programs\footnote{Note that usually in the literature the primal LP is~(\ref{MDP-dual}).}
\begin{align}
\rho_{\cost}^\star=&\min_{(\mv,\dv)\in\ar^{2|\sspace||\aspace|}}\{\innerprod{\mv}{\cost}\mid\bmat^\trans\dv=\gamma\pmat^\trans\mv+(1-\gamma)\initial,\;\dv=\mv,\;\dv\geq\boldsymbol{0}\}\tag{\color{blue}Primal $Q$-LP}\label{MDP-primal}\\
=& \max_{(\val,\qval)\in\ar^{|\sspace|+|\sspace||\aspace|}}\{(1-\gamma)\innerprod{\initial}{\uv}\mid \qval\geq\bmat\val,\;
\qval=\cost+\gamma\pmat\val,\;\val\in\ar^{|\sspace|}, \tag{\color{blue}Dual $Q$-LP}
\Big\},  \label{MDP-dual}
\end{align}
We have that if $\pi^\star$ is an optimal policy for the forward RL problem with cost $\cost$, then $(\mv_{\pi^\star},\mv_{\pi^\star})$ is optimal  for~(\ref{MDP-primal}) and conversely if $(\mv^\star,\dv^\star)$ is optimal for~(\ref{MDP-primal}), then $\pi_{\mv^\star}$ is an optimal policy for the forward RL problem with cost $\cost$. Moreover, $(\val_{\cost}^\star,\qval_{\cost}^\star)$ is an optimal solution to~(\ref{MDP-dual}) and it is the unique optimizer when $\initial\in\Re^{|\sspace|}_{++}$. For the following results, we will assume without loss of generality that $\initial\in\Re^{|\sspace|}_{++}$.

\begin{proofof}{Proof of Proposition~\ref{prop:primal-dual-q-function}} 

	We first derive the dual convex program. We have,
	\begin{align*}
	\zeta^\star&=\min_{(\mv,\dv)\in\mathfrak{M}}\max_{\weight\in\mathcal{W}}\innerprod{\mv-\mv_{\expert}}{\cost_{\weight}}\\
	&=\max_{\weight\in\mathcal{W}}\min_{(\mv,\dv)\in\mathfrak{M}}\innerprod{\mv-\mv_{\expert}}{\cost_{\weight}}\\
	&=\max_{\weight\in\mathcal{W}}\min_{\mv,\dv\geq\mbf{0}}\max_{\val,\qval}\{\innerprod{\mv-\mv_{\expert}}{\cost_{\weight}}+\innerprod{\gamma\pmat^\trans\mv+(1-\gamma)\initial-\bmat^\trans\dv}{\val}+\innerprod{\dv-\mv}{\qval}\}\\
&=	\max_{\weight\in\mathcal{W}}\min_{\mv,\dv\geq\mbf{0}}\max_{\val,\qval}\{(1-\gamma)\innerprod{\initial}{\val}-\innerprod{\mv_{\expert}}{\cost_{\weight}}+\innerprod{\mv}{\cost_{\weight}+\gamma\pmat\val-\qval}+\innerprod{\dv}{\qval-\bmat\val}\}\\
&=	\max_{\weight\in\mathcal{W}}\max_{\val,\qval}\min_{\mv,\dv\geq\mbf{0}}\{(1-\gamma)\innerprod{\initial}{\val}-\innerprod{\mv_{\expert}}{\cost_{\weight}}+\innerprod{\mv}{\cost_{\weight}+\gamma\pmat\val-\qval}+\innerprod{\dv}{\qval-\bmat\val}\}\\
	&=\max_{(\weight,\val,\qval)}\Big\{(1-\gamma)\innerprod{\initial}{\val}-\innerprod{\mv_{\expert}}{\cost_{\weight}}\mid \qval\geq\bmat\val,\;
\qval=\cost_{\weight}+\gamma\pmat\val,\\&\phantom{{}====================}\;\val\in\ar^{|\sspace|}, \tag{\color{blue}Dual}
\;\qval\in\ar^{|\sspace||\aspace|},\;\weight\in\mathcal{W}\Big\},
	\end{align*}
	where the second equality follows by Sion's minimax theorem~\cite{Sion:1958}, since $\mathcal{M}$ is convex and compact, $\mathcal{W}$ is convex and the objective is bilinear, the third equality follows by introducing Lagrange multipliers $\val$ and $\qval$, and the fifth equality follows by linear duality. Note that the derivations hold for any convex set $\mathcal{W}$.

From now on we consider the case $\mathcal{W}=B_1^m=\{\weight\in\ar^m\mid\norm{\weight}_2\le1\}$. Then, the~(\ref{eq:primal}) program can be written in the form
\begin{align*}
	\zeta^\star&=\min_{(\mv,\dv)}\{\bar{d}_{\mathcal{C}}(\mv,\mv_{\expert})\mid(\mv,\dv)\in\mathfrak{M}\}\\
	&=\min_{(\mv,\dv)}\{\max_{\weight\in\mathcal{W}}\innerprod{\mv-\mv_{\expert}}{\cost_{\weight}}\mid(\mv,\dv)\in\mathfrak{M}\}\\
	&=\min_{(\mv,\dv)}\{\max_{\weight\in\mathcal{W}}\innerprod{\phim^\trans\mv-\phim^\trans\mv_{\expert}}{\weight}\mid(\mv,\dv)\in\mathfrak{M}\}\\
	&=\min_{(\mv,\dv)}\{\norm{\phim^\trans\mv-\phim^\trans\mv_{\expert}}_2\mid(\mv,\dv)\in\mathfrak{M}\}, \tag{{\color{blue}Primal}}
	\end{align*}
	where in the last equality we used that the $\ell_2$-norm is self-dual, that is, the dual norm of the $\ell_2$-norm is still the $\ell_2$-norm. Therefore, when $\mathcal{W}=B_1^m$,  we get a quadratic objective with linear constraints~\cite{Abbeel:2004}.

Assume first that $(\val_{\textup{A}}, \qval_{\textup{A}},\wa)$ is optimal for~(\ref{eq:dual}). Then,
\begin{align}
&\qval_{\textup{A}}\geq\bmat\val_{\textup{A}},\quad
\qval_{\textup{A}}=\cost_{\wa}+\gamma\pmat\val_{\textup{A}},\quad\wa\in\,\mathcal{W}, \label{eq:feasibility}\\
&(1-\gamma)\innerprod{\initial}{\val_{\textup{A}}}-\innerprod{\mv_{\expert}}{\cost_{\wa}}=\zeta^\star=0, \label{eq:optimality}
\end{align}	
where~(\ref{eq:feasibility}) holds because $(\val_{\textup{A}}, \qval_{\textup{A}},\wa)$ is feasible to~(\ref{eq:dual}), and~(\ref{eq:optimality}) holds by optimality.
Therefore, $(\val_{\textup{A}}, \qval_{\textup{A}})$ is feasible for~(\ref{MDP-dual}) with cost $\cost=\cost_{\wa}$. Moreover, $(\mexp,\mexp)$ is feasible for~(\ref{MDP-primal}) with cost $\cost=\cost_{\wa}$. Therefore, 
\begin{equation}\label{eq:complementary}
(1-\gamma)\innerprod{\initial}{\val_{\textup{A}}}\le\rho^\star_{\wa}\le\innerprod{\mexp}{\cost_{\wa}}.
\end{equation}
However,  by~(\ref{eq:optimality}) we get that $(1-\gamma)\innerprod{\initial}{\val_{\textup{A}}}=\innerprod{\mv_{\expert}}{\cost_{\wa}}$.
Thus, $(\mexp,\mexp)$ is optimal for~(\ref{MDP-primal}) with cost $\cost=\cost_{\wa}$ and $(\val_{\textup{A}}, \qval_{\textup{A}})$ is optimal for~(\ref{MDP-dual}) with cost $\cost=\cost_{\wa}$. Thus $\expert$ is optimal for the forward RL problem with cost ${\cost_{\wa}}$, $\val_{\textup{A}}=\val^\star_{\cost_{\wa}}$, and $\qval_{\textup{A}}=\qval^\star_{\cost_{\wa}}$

Conversely, assume that $\wa\in\mathcal{W}$, $\expert$ is optimal for $\cost_{\weight_{\textup{A}}}$, $\val_{\textup{A}}=\val^\star_{\wa}$, and $\qval_{\textup{A}}=\qval^\star_{\wa}$. Then, we have that $(\mexp,\mexp)$ is optimal for~(\ref{MDP-primal}) with cost $\cost_{\wa}$, and $(\val_{\textup{A}},\qval_{\textup{A}})$ is optimal for~(\ref{MDP-dual}) with cost $\cost_{\wa}$. By dual feasibility, we get 
\begin{align}
\qval_{\textup{A}}\geq\bmat\val_{\textup{A}},\quad
\qval_{\textup{A}}=\cost_{\wa}+\gamma\pmat\val_{\textup{A}}. \label{eq:feasibility2}
\end{align}
Moreover, by primal-dual optimality, we have
\begin{align}
(1-\gamma)\innerprod{\initial}{\val_{\textup{A}}}=\innerprod{\mv_{\expert}}{\cost_{\wa}}.\label{eq:optimality2}
\end{align}
 From~(\ref{eq:feasibility2}), we get that $(\val_{\textup{A}}, \qval_{\textup{A}},\wa)$ is feasible to~(\ref{eq:dual}). Since $\zeta^\star=0$, by~(\ref{eq:optimality2}), we conclude that $(\val_{\textup{A}}, \qval_{\textup{A}},\wa)$ is optimal for~(\ref{eq:dual}).
\end{proofof}
\section{Saddle-Point Formulation}\label{app:SPP}

By using a compact notation, we have that~\ref{eq:primal'} is equivalent to the following bilinear saddle-point problem 
\begin{equation}
    \min_{\xv\in \mathcal{X}} \max_{\yv \in \mathcal{Y}} \innerprod{\yv}{\mbf{A}\xv+\mbf{b}}
    \label{eq:lagrangian}, \tag{\color{blue}SPP}
\end{equation}
where
\begin{align*}
  \mbf{A} &\triangleq
  \left[ {\begin{array}{ccc}
    \mbf{I}_{m} & 0 \\
    -\gamma \mmat^\trans  & \bmat^\trans  \\
    \mbf{I}_{m} & - \phim^\trans 
  \end{array} } \right], &  \mbf{b} &\triangleq
  \left[ {\begin{array}{ccc}
    - \FEV{\expert} \\
    (1 - \gamma) \initial \\\mbf{0}
  \end{array} } \right],
\end{align*}
$\xv\triangleq[\lv^\trans , \dv^\trans ]^\trans $, $\yv\triangleq[\weight^\trans , \val^\trans , \thv^\trans ]^\trans $, $\mathcal{X} \triangleq \Delta_{[m]} \times \Delta_{\sspace\times\aspace}$, and $\mathcal{Y}\triangleq\mathcal{W}\times\ar^{|\sspace|}\times\ar^{m}$.	

\section{Proof of Proposition~\ref{prop:q-update}}
\label{app:proof-of-upadates-proposition}
\begin{proofof}{Proof of Proposition~\ref{prop:q-update}} 
We break the proof in three parts. In the first two parts, we introduce and compute the explicit forms of the oracles, while in the third part we derive the proximal point updates.

\textbf{Analytical oracle.} We characterize the \texttt{analytical-oracle} by employing the first-order optimality conditions for $\lv$ and $\dv$. In particular, at each iteration step $k$, for any $[\weight^\trans, \val^\trans, \thv^\trans]^\trans$,
we have that the Lagrangian of the optimization problem in the definition of the \texttt{analytical-oracle} has the form
\begin{multline*} 
\innerprod{\lv}{\weight} - \innerprod{\efevphi}{  \weight}  + \innerprod{\val}{\gamma \mmat^{\mathsf{T}} \lv + (1 - \gamma) \initial - \bmat^{\mathsf{T}} \dv }\\
+\innerprod{\thv}{ \phim^{\mathsf{T}} \dv - \lv}+ \frac{1}{\eta}D(\lv || \lv_k) +\frac{1}{\alpha}H(\dv || \dv_{k})+\innerprod{\lv}{\tau\mbf{1}} - \tau,
\label{eq:appendix_lagrangian}
\end{multline*}
where we considered a Lagrangian multiplier $\tau$ for the simplex constraint $\sum_i\lambda(i)=1$.
Now taking the derivatives with respect to to $\lv$ and $\dv$, we obtain the following first order optimality conditions:
\begin{align*}
    \Big(\weight + \gamma \mmat \val - \thv \Big)(i)+ \tau + \frac{1}{\eta}\log\frac{\lv(i)}{\lv_k(i)} + \frac{1}{\eta}&= 0,\;\mbox{for all}\;i\in[m],
    \\
    \Big(\bmat \val + \phim \thv \Big)(s,a)+ \frac{1}{\alpha}\log \frac{\pi_{\dv}(a|s)}{\pi_{\dv_k}(a|s)} &= 0,\;\mbox{for all} (s,a)\in\sspace\times\aspace.
\end{align*}
Therefore, we obtain
\begin{equation}
    \lambda(i) = \lambda_k(i)\,e^{-\eta{\boldsymbol{\delta}}_{\weight,\thv}^{k}(i) + 1 - \eta\tau},
    \label{eq:lambda_almost}
\end{equation} where $\boldsymbol{\delta}_{\weight,\thv}^k\triangleq\weight+\gamma\mmat\val_{\thv}^k-\thv$.
In addition, the simplex constraint $\sum_i\lambda(i)=1$ is satisfied by choosing $\tau =\tau^k_{\weight,\thv}$, where 
\begin{equation}
    \tau^k_{\weight,\thv} \triangleq \frac{1}{\eta}\log \left( \sum_{i=1}^m (\phim^\trans\dv_k)(i) e^{-\eta {\boldsymbol{\delta}}^{k}_{\weight,\thv}(i)} \right).\label{eq:rho}
\end{equation}
Moreover, by setting $\qval_{\thv} = \phim \thv$, we get
\begin{equation}
    \pi_{\dv}(a|s)=\pi_{\dv_{k}}(a|s)\,e^{-\alpha \br{Q_{\thv}(s,a) - V(s)}}\label{d_via_pi}.
\end{equation}
 \Cref{eq:update2} follows by noting that the constraint $\sum_a\pi_{\dv}(a|x)$ implies that $\val$ equals the logistic value function $\val^k_{\thv}$ given in \Cref{prop:q-update}. Finally, since $(\lv_k,\dv_k)$ are ideal updates, they are primal feasible. Hence, we can use the constraint $\lv_k=\phim^\trans\dv_k$ in \Cref{eq:lambda_almost} to obtain \Cref{eq:update1}.

All in all, for any $\yv =[\weight^\trans , \val^\trans , \thv^\trans ]^\trans$ the \texttt{analytical-oracle} outputs $\gv(\yv;\xv_k) = \bs{\lv^\trans,\dv^\trans}^\trans$ with
\begin{align}
\lambda(i) &\propto (\phim^\trans\dv_k)(i)\,e^{-\eta{\boldsymbol{\delta}}_{\weight,\thv}^{k}(i)}\label{eq:lambda},\\
\pi_{\dv}(a|s)&=\pi_{\dv_{k}}(a|s)\,e^{-\alpha \br{Q_{\thv}(s,a) - V^k_{\thv}(s)}}.\label{eq:d}
\end{align}
Note that the derivatives with respect to $\lv$ and $\dv$ differ from the ones in Logistic Q-Learning \cite{Bas-Serrano:2021}. In our case,  $\boldsymbol{{\boldsymbol{\delta}}}_{\weight,\thv}^{k}$ depends on both cost weights $\weight$ and logistic action-value parameters $\thv$. In addition, $\boldsymbol{{\boldsymbol{\delta}}}_{\weight,\thv}^{k}$ is the reduced Bellman error in the feature space rather than in the high dimensional state-action space.

\textbf{Max oracle.} 
Since the objective in~(\ref{eq:q-update}) is convex in $\xv$ and linear in $\yv$, $\mathcal{X}$ is convex and compact, and $\mathcal{Y}$ is convex, by virtue of Sion's minimax theorem~\cite{Sion:1958}, we can exchange the $\mathrm{min}$ and $\mathrm{max}$ in \Cref{eq:q-update}. We then have 
 
\begin{equation*}
    \min_{\xv\in\mathcal{X}} \max_{\yv\in\mathcal{Y}} \innerprod{\yv}{ \mbf{A}\xv +  \widehat{\mbf{b}}} + \frac{1}{\tau}D_{\Omega}(\xv||\xv_k) =  \max_{\yv\in\mathcal{Y}}\min_{\xv\in\mathcal{X}} \innerprod{\yv}{ \mbf{A}\xv +  \widehat{\mbf{b}}} + \frac{1}{\tau}D_{\Omega}(\xv||\xv_k).
\end{equation*}
Therefore, we get
\begin{align*}
\yv^\star&=\argmax_{\yv\in\mathcal{Y}}
   \min_{\xv\in\mathcal{X}}\innerprod{\yv}{ \mbf{A}\xv +  \widehat{\mbf{b}}} + \frac{1}{\tau}D_{\Omega}(\xv||\xv_k) \\
    &=\argmax_{\yv\in\mathcal{Y}} \innerprod{\yv}{ \mbf{A}\gv(\yv;\xv_k) +  \widehat{\mbf{b}}} + \frac{1}{\tau}D_{\Omega}(\gv(\yv;\xv_k)||\xv_k) \\
    &= \mbf{h}(\xv_k).
\end{align*}

\textbf{Proximal point updates via max and analytical oracles.}
It remains to prove the closed-form expressions for $\pi_{\dv^\star}$ and $\lv^\star$ given in \Cref{eq:update1} and \Cref{eq:update2}, respectively. We start rewriting the objective of the \texttt{max-oracle} as a function of $\lv$ and $\dv$. In particular, we have
\begin{align*}
    &\innerprod{\yv}{\mbf{A}\gv(\yv;\xv_k) + \widehat{\mbf{b}}} + \frac{1}{\tau}D_{\Omega}(\gv(\yv;\xv_k)||\xv_k) \\
    &= \min_{\dv\in\Delta_{\sspace\times\aspace},\lv\in\Delta_{[m]}} \innerprod{\yv}{ \mbf{A}\left[ {\begin{array}{ccc}
    \lv \\ \dv
  \end{array} } \right]
     +  \widehat{\mbf{b}}}+ \frac{1}{\alpha}H(\dv||\dv_k) + \frac{1}{\eta}D(\lv||\lv_k).
\end{align*}

The minimizers of the previous expression are characterized via the \texttt{analytical-oracle}. In particular, plugging in the analytical forms for $\lv, \dv$ and $\val$, we obtain
\begin{align*} &\min_{\dv\in\Delta_{\sspace\times\aspace},\lv\in\Delta_{[m]}}  \innerprod{\yv}{ \mbf{A}\left[ {\begin{array}{ccc}
    \lv \\ \dv
  \end{array} } \right]
     +  \widehat{\mbf{b}}}+ \frac{1}{\alpha}H(\dv||\dv_k) + \frac{1}{\eta}D(\lv||\lv_k) \\ &= \innerprod{\lv}{\weight} - \innerprod{\efevphi}{\weight} + \frac{1}{\eta}\innerprod{\lv}{ -\eta {\boldsymbol{\delta}}^{k}_{\weight,\thv} - \eta \tau^k_{\weight,\thv}} \\
     &\phantom{{}=}
     +\frac{1}{\alpha}\innerprod{\dv}{-\alpha(\phim \thv - \bmat \val^k_{\thv})} + \innerprod{\lv}{\gamma \mmat^{\trans} \val^k_{\thv}} \\&\phantom{{}=}-\innerprod{\dv}{ \bmat\val^k_{\thv}}+ (1 - \gamma) \innerprod{\initial}{\val^k_{\thv}}
+\innerprod{\dv}{\phim \thv} - \innerprod{\lv}{\thv}  \\
&= - \innerprod{\efevphi}{ \weight} + (1 - \gamma) \innerprod{ \initial}{\val^k_{\thv}} - \tau^k_{\weight,\thv} \\
&= - \innerprod{\efevphi}{ \weight} + (1 - \gamma) \innerprod{ \initial}{\val^k_{\thv}} - \frac{1}{\eta} \log \left( \sum_{i=1}^m (\phim^\trans\dv_k)(i) e^{-\eta {\boldsymbol{\delta}}^{k}_{\weight, \thv}(i)} \right) \\ &\triangleq \mathcal{G}_k(\weight, \thv).
\end{align*}
This is the objective of the \texttt{max-oracle} in \Cref{prop:q-update}.
Given that the \texttt{max-oracle} returns $(\weight_k^\star, \thv_k^\star)$, the corresponding primal variables $(\dv_k^\star, \lv_k^\star)$ satisfy $(\dv_k^\star, \lv_k^\star)=\gv([\weight_k^\star, \val_{\thv^\star_k}, \thv_k^\star]; \xv_{k-1})$. This completes the proof of the first part of Proposition~\ref{prop:q-update}.

It remains to show the dual form of the max-oracle objective $\mathcal{G}_k(\weight, \thv)$. In particular, we will show that
\begin{equation}\label{eq:holds}
    \max_{\weight,\thv}\mathcal{G}_k(\weight, \thv) = \max_{\weight}\innerprod{\lv_{k+1}}{\weight} - \innerprod{\FEV{\expert}}{\weight} + \frac{1}{\eta}D(\lv_{k+1} || \lv_k) + \frac{1}{\alpha}H(\dv_{k+1}||\dv_k).
\end{equation}
We first recall that

\begin{equation*}
    \mathcal{G}_k(\weight, \thv) = \min_{\dv\in\Delta_{\sspace\times\aspace},\lv\in\Delta_{[m]}}  \innerprod{\yv}{ \mbf{A}\left[ {\begin{array}{ccc}
    \lv \\ \dv
  \end{array} } \right]
     +  \widehat{\mbf{b}}}+ \frac{1}{\alpha}H(\dv||\dv_k) + \frac{1}{\eta}D(\lv||\lv_k).
\end{equation*}
Then, by taking the maximum over $\yv=[\weight,\val,\thv]$ on both sides and using Sion's minimax theorem, we get
\begin{align*}
\max_{\weight,\thv}\mathcal{G}_k(\weight, \thv) &= \max_{\yv\in\mathcal{Y}} \min_{\dv\in\Delta_{\sspace\times\aspace},\lv\in\Delta_{[m]}}  \innerprod{\yv}{ \mbf{A}\left[ {\begin{array}{ccc}
    \lv \\ \dv
  \end{array} } \right]
     +  \widehat{\mbf{b}}}+ \frac{1}{\alpha}H(\dv||\dv_k) + \frac{1}{\eta}D(\lv||\lv_k) \\
    &= \min_{\dv\in\Delta_{\sspace\times\aspace},\lv\in\Delta_{[m]}} \max_{\yv\in\mathcal{Y}} \innerprod{\yv}{ \mbf{A}\left[ {\begin{array}{ccc}
    \lv \\ \dv
  \end{array} } \right]
     +  \widehat{\mbf{b}}}+ \frac{1}{\alpha}H(\dv||\dv_k) + \frac{1}{\eta}D(\lv||\lv_k) \\
     &=\max_{\yv\in\mathcal{Y}} \innerprod{\yv}{ \mbf{A}\left[ {\begin{array}{ccc}
    \lv_{k+1}\\ \dv_{k+1}
  \end{array} } \right]
     +  \widehat{\mbf{b}}}+ \frac{1}{\alpha}H(\dv_{k+1}||\dv_k) + \frac{1}{\eta}D(\lv_{k+1}||\lv_k),
\label{eq:dual}
\end{align*}
where in the last equality we used the definition of proximal point update in \Cref{eq:q-update}.
Finally, by LP strong duality, we have that
$
    \max_{\weight}\innerprod{\lv_{k+1}}{\weight} - \innerprod{\efevphi}{\weight} = \max_{\yv\in\mathcal{Y}} \innerprod{\yv}{ \mbf{A}\bs{\lv_{k+1}^\trans, \dv_{k+1}^\trans}^\trans
     +  \widehat{\mbf{b}}}.
$
Hence, we conclude that~(\ref{eq:holds}) holds.

\end{proofof}

\section{Proof of Proposition~\ref{prop:optimal_theta_bound}}
\begin{proofof}{Proof of Proposition~\ref{prop:optimal_theta_bound}} From first order optimality conditions for $\lv_k^\star$, we get
\begin{equation}
\label{1}
    \Big(\weight^\star_k + \gamma \mmat \val^k_{\thv^\star_k} - \thv^\star_k \Big)(i)+ \tau^k_{\weight^\star_{k},\thv^\star_{k}}+ \frac{1}{\eta}\log\frac{\lv_k^\star(i)}{\lv_{k-1}(i)} -\frac{1}{\eta}= 0,\;\mbox{for all}\;i\in[m].
\end{equation}
We define the regularized cost weights by $\widetilde\weight^\star_k \triangleq \weight^\star_k + \frac{1}{\eta}\log\frac{\lv_k^\star(i)}{\lv_{k-1}(i)}$, and the costant (wrt the vector index $i$) $c \triangleq - \tau^k_{\weight^\star_{k},\thv^\star_{k}} + \frac{1}{\eta}$. This gives for all $i \in [m]$
\begin{equation*}
    \Big(\widetilde{\weight}^\star_k + \gamma \mmat \val^k_{\thv^\star_k} - \thv^\star_k \Big)(i)  = c.
\end{equation*}
\if 0
Furthermore, by strong duality, it holds that
\begin{equation}
\label{2}
    \innerprod{\lv^\star_{k}}{\weight^\star_{k}} + \frac{1}{\eta}D(\lv^\star_{k} || \lv_{k-1}) + \frac{1}{\alpha}H(\dv^\star_{k}||\dv_k) = (1 - \gamma) \innerprod{ \initial}{\val^k_{\thv^\star_{k}}} - \tau^k_{\weight^\star_{k},\thv^\star_{k}}.
\end{equation}
By combining \Cref{2} and \Cref{1}, we obtain 
\begin{align}
\label{3}
    &\Big(\weight^\star_k + \gamma \mmat \val^k_{\thv^\star_k} - \thv^\star_k \Big)(i) + \frac{1}{\eta}\log\frac{\lv_k^\star(i)}{\lv_{k-1}(i)} -\frac{1}{\eta} \\ &= \innerprod{\lv^\star_{k}}{\weight^\star_{k}} + \frac{1}{\eta}D(\lv^\star_{k} || \lv_{k-1}) + \frac{1}{\alpha}H(\dv^\star_{k}||\dv_k) - (1 - \gamma) \innerprod{ \initial}{\val^k_{\thv^\star_{k}}}.
\end{align}
We define the regularized cost weights by $\widetilde\weight^\star_k \triangleq \weight^\star_k + \frac{1}{\eta}\log\frac{\lv_k^\star(i)}{\lv_{k-1}(i)}$.
Then, we can rewrite
\begin{multline}
\label{4}
    \Big(\widetilde{\weight}^\star_k + \gamma \mmat \val^k_{\thv^\star_k} - \thv^\star_k \Big)(i) -\frac{1}{\eta} = \innerprod{\lv^\star_{k}}{\widetilde{\weight}^\star_{k}}+ \frac{1}{\alpha}H(\dv^\star_{k}||\dv_k) - (1 - \gamma) \innerprod{ \initial}{\val^k_{\thv^\star_{k}}}.
\end{multline}
Furthermore, by noticing that $\frac{1}{\alpha}H(\dv^\star_{k}||\dv_k) = \innerprod{\dv^\star_{k}}{\bmat \val^k_{\thv^\star_k}-\phim\thv_k^\star}$, and $\innerprod{\lv^\star_{k}}{\widetilde{\weight}^\star_{k}} = \innerprod{\dv^\star_{k}}{\phim \widetilde{\weight}^\star_{k}}$, we obtain
\begin{multline*}
    \Big(\widetilde{\weight}^\star_k + \gamma \mmat \val^k_{\thv^\star_k} - \thv^\star_k \Big)(i)  = \underbrace{\frac{1}{\eta}\innerprod{\dv^\star_{k}}{\phim^\trans\widetilde{\weight}^\star_{k} + \bmat \val^k_{\thv^\star_k}-\phim\thv_k^\star} - (1 - \gamma) \innerprod{ \initial}{\val^k_{\thv^\star_{k}}}}_{c}.
\end{multline*}
\fi
We define the \emph{span norm} as $\norm{\xv}_{\mathrm{sp}} = \inf_{c\in\mathbb{R}}\norm{\xv - c \mbf{1}}_{\infty}$. 
Then multiplying by $\phim$ from the left, we have that $\phim\widetilde{\weight}^\star_k + \gamma \pmat \val^k_{\thv^\star_k} - \phim\thv^\star_k  = c\mbf{1}$.
Moreover, we can write
\begin{align*}
\val_{\thv^\star_k}^k(s)=&-\frac{1}{\alpha}\log\left(\sum_a \pi_{\dv_{k-1}}(a|s)e^{-\alpha (\thv^\star_k)^\trans\phiv(s,a)}\right) \\ = & -\frac{1}{\alpha}\log\left(\sum_a \pi_{\dv_{k-1}}(a|s)e^{-\alpha (\phim\widetilde{\weight}^\star_k + \gamma \pmat \val^k_{\thv^\star_k})(s,a) + \alpha c}\right) \\
= & -\frac{1}{\alpha}\log\left(\sum_a \pi_{\dv_{k-1}}(a|s)e^{-\alpha (\phim\widetilde{\weight}^\star_k + \gamma \pmat \val^k_{\thv^\star_k})(s,a)}\right) + c
\end{align*}
We set $(\mathcal{T}\val^k_{\thv^\star_k})(s) \triangleq -\frac{1}{\alpha}\log\left(\sum_a \pi_{\dv_{k-1}}(a|s)e^{-\alpha (\phim\widetilde{\weight}^\star_k+ \gamma \pmat \val^k_{\thv^\star_k})(s,a)}\right)$. Note that $\mathcal{T}$ is the soft-Bellman operator \cite{Neu:2017, Geist:2019} that is a $\gamma$-contraction with respect to $\norm{\cdot}_\infty$-norm. It follows that
\begin{align*}
\norm{\val_{\thv^\star_k}^k}_{\mathrm{sp}}= \norm{\mathcal{T}\val_{\thv^\star_k}^k+ c}_{\mathrm{sp}} = \norm{\mathcal{T}\val_{\thv^\star_k}^k}_{\mathrm{sp}} \leq \norm{\mathcal{T}\val_{\thv^\star_k}^k - \mathcal{T}\mbf{0}}_{\mathrm{sp}} + \norm{\mathcal{T}\mbf{0}}_{\mathrm{sp}} \leq \gamma \norm{\val_{\thv^\star_k}^k}_{\mathrm{sp}} + \norm{\phim \widetilde{\weight}^\star_k}_{\mathrm{sp}}.
\end{align*}
Therefore, $\norm{\val_{\thv^\star_k}^k}_{\mathrm{sp}} \leq \frac{\norm{\phim \widetilde{\weight}^\star_k}_{\mathrm{sp}}}{1-\gamma} \leq \frac{1 + \log \frac{1}{\beta}}{1 - \gamma}$.
Moreover, using the relation $\widetilde{\weight}^\star_k + \gamma \mmat \val^k_{\thv^\star_k} - \thv^\star_k  = c\mbf{1}$, we have that
\begin{align*}
    \norm{\thv^\star_k}_{\mathrm{sp}} &\leq \norm{\widetilde{\weight}^\star_k + \gamma \mmat \val^k_{\thv^\star_k} - c\mbf{1}}_{\mathrm{sp}}\\
    &= \norm{\widetilde{\weight}^\star_k + \gamma \mmat \val^k_{\thv^\star_k}}_{\mathrm{sp}} \\ 
    &\leq \norm{\widetilde{\weight}^\star_k}_{\mathrm{sp}} + \gamma \norm{\mmat \val^k_{\thv^\star_k}}_{\mathrm{sp}}\\
    &\leq 1 + \log \br{\frac{1}{\beta}} + \gamma\frac{1 + \log \br{\frac{1}{\beta}}}{1 - \gamma} \\
    &= \frac{1 + \log \br{\frac{1}{\beta}}}{1 - \gamma}.
\end{align*}
This proves that for every maximizer the span norm is bounded.
Finally, for showing that there exists a maximizer with bounded infinity norm, we want to prove that the negative logistic Bellman error is shift invariant in $\thv$. That is, $\mathcal{G}_k(\weight, \thv + c \mathbf{1}) = \mathcal{G}_k(\weight, \thv)$. Towards this goal, we start proving that $\val^k_{\thv + c \mathbf{1}} = \val^k_{\thv} + c$ for any constant $c\in\ar$. Indeed,
\begin{align*}
    \val_{\thv+c \mathbf{1}}^k(s)=&-\frac{1}{\alpha}\log\left(\sum_a \pi_{\dv_{k-1}}(a|s)e^{-\alpha \thv^\trans\phiv(s,a) + -\alpha c \mathbf{1}^\trans\phiv(s,a)}\right) \\=&-\frac{1}{\alpha}\log\left(\sum_a \pi_{\dv_{k-1}}(a|s)e^{-\alpha \thv^\trans\phiv(s,a) -\alpha c}\right)\\=&-\frac{1}{\alpha}\log\left(\sum_a \pi_{\dv_{k-1}}(a|s)e^{-\alpha \thv^\trans\phiv(s,a)}\right) -\frac{1}{\alpha}\log\left(e^{-\alpha c}\right)\\=& \val_{\thv}^k(s) + c
\end{align*}
At this point, we can show the shift invariance of $\mathcal{G}_k$.
\begin{align*}
    \mathcal{G}_k(\weight, \thv + c \mathbf{1}) &= -\frac{1}{\eta}\log\sum_{i=1}^m (\phim^\trans \dv_{k-1})(i)   e^{-\eta\br{\weight(i) + \gamma (\mmat \val^k_{\thv + c \mathbf{1}})(i) -\thv(i) - c}}\\&\phantom{=}+(1-\gamma)\innerprod{\initial}{\val_{\thv+ c \mathbf{1}}^k}- \innerprod{\EFEV{\expert}}{\weight} \\&= -\frac{1}{\eta}\log\sum_{i=1}^m (\phim^\trans \dv_{k-1})(i)   e^{-\eta\br{\weight(i) + \gamma (\mmat \val^k_{\thv})(i) + \gamma c -\thv(i) - c}}\\&\phantom{=}+(1-\gamma)\innerprod{\initial}{\val_{\thv}^k} + (1 - \gamma) c - \innerprod{\EFEV{\expert}}{\weight} \\&= -\frac{1}{\eta}\log\underbrace{\sum_{i=1}^m (\phim^\trans \dv_{k-1})(i)}_{=1}   e^{-\eta \gamma c+ \eta c} + (1 - \gamma) c  + \mathcal{G}_k(\weight, \thv) \\&= - (1 - \gamma)c + (1 - \gamma) c  + \mathcal{G}_k(\weight, \thv) \\&= \mathcal{G}_k(\weight, \thv)
\end{align*}
It follows that there exists a maximizer $\thv^\star_k$ for which $\norm{\thv^\star_k}_{\mathrm{sp}}=\norm{\thv^\star_k}_{\infty}$. To see this, we show that we can find a value of $c$ for which the span seminorm equals the $\ell_{\infty}$-norm, that is $\norm{\thv^\star_k + c\mathbf{1}}_{\infty} = \norm{\thv^\star_k}_{\mathrm{sp}}$. By definition of the span norm (and assuming that the infimum is attained), the equality is attained for $c = \argmin_c \norm{\thv^\star_k + c\mathbf{1}}_{\infty} = \frac{\max_{i\in[m]}\thv^\star_k(i) + \min_{i\in[m]}\thv^\star_k(i)}{2}$. Then, choosing the shift for which $\max_{i\in[m]}\thv^\star_k(i) = - \min_{i\in[m]}\thv^\star_k(i)$, gives the maximizer for which $\norm{\thv^\star_k}_{\mathrm{sp}}=\norm{\thv^\star_k}_{\infty}$.  This concludes the proof for the bound on the $\ell_{\infty}$-norm.
\end{proofof}
\section{Proof of Theorem~\ref{thm:error_propagation} }

    We will analyze the proximal point method applied to \textcolor{blue}{SPP}. 
    We use a similar error propagation analysis as in~\cite{Bas-Serrano:2021}. 
        
        
       
        By Proposition~\ref{prop:q-update}, the ideal updates $(\thv_k^\star,\weight_k^\star,\pi_k^\star,\lv_k^\star,\dv_k^\star)$ are given by
        \begin{align*}
        (\weight^\star_k, \thv^\star_k) &= \arg\max_{\weight, \thv}\mathcal{G}_k(\weight, \thv), &
        \lambda_k^\star(i) &= (\phim^\trans \dv^\star_{k-1})(i)\,e^{-\eta({\boldsymbol{\delta}}_{\thv_k^\star,\weight^\star_{k}}^k(i)+\tau^k_{\thv^\star_k,\weight^\star_k})},\\
        \dv_k^\star&=\mv_{\pi_{k}^\star}, & \pi_k^\star(a|s) &= \pi_{\dv^\star_{k-1}}(a|s)\,e^{-\alpha (Q_{\thv_k^\star}(s,a)-V_{\thv_k^\star}^k(s))}, 
        \end{align*}
        where $\tau^k_{\thv^\star_k,\weight^\star_k}$ is a normalization constant. By feasibility of the ideal updates we also have $\lv_k^\star = \phim^\trans\dv_k^\star$
        On the other hand, the realized updates $(\thv_k,\weight_k, \pi_k,\lv_k,\dv_k)$ are given by
        \begin{align*}
        (\weight_k, \thv_k) &= \arg\max_{\weight, \thv}\mathcal{G}_k^{\epsilon_k}(\weight, \thv), & \lambda_k(i)&=(\phim^\trans \dv_{k-1})(i)\,e^{-\eta({\boldsymbol{\delta}}_{\thv_k,\weight_{k}}^k(i)+\tau^k_{\thv_k,\weight_k})},\\
        \dv_k&=\mv_{\pi_{k}}, & \pi_k(a|s) &= \pi_{\dv_{k-1}}(a|s)\,e^{-\alpha (Q_{\thv_k}(s,a)-V_{\thv_k}^k(s))} ,
        \end{align*}
        where $\tau^k_{\thv_k,\weight_k}$ is a normalization constant, and the notation $(\weight_k, \thv_k)=\arg\max_{\weight, \thv}\mathcal{G}_k^{\epsilon_k}(\weight, \thv)$ means that $\mathcal{G}_k(\weight_k^\star, \thv_k^\star)-\mathcal{G}_k(\weight_k, \thv_k)=\epsilon_k$. 
         We start by introducing some auxiliary results

\begin{lemma}
\label{lemma:bound_c_class}
For any occupancy measures $\dv_1, \dv_2\in\mathfrak{F}$, and for any cost vectors $\cost, \cost^\prime \in\mathcal{C}$, we have:
\begin{equation*} \inner{\mv_{\pi_E} - \dv_1, \cost} - 
\min_{\cost^\prime\in\mathcal{C}} \inner{\mv_{\pi_E} - \dv_2, \cost^\prime} \geq d_{\mathcal{C}}(\pi_E, \pi_{\dv_2}) - d_{\mathcal{C}}(\pi_E, \pi_{\dv_1}).
\end{equation*}
\end{lemma}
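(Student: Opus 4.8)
This lemma is a bookkeeping step and carries no genuine technical difficulty; the only thing to watch is the orientation of the $\mathcal{C}$-distance in the two ``$d_{\mathcal{C}}$'' terms and in the ``$\min$'' term. The plan is to pass from policies to occupancy measures via Proposition~\ref{pror:state-action-polytope}, rewrite the three $\dv_2$-dependent quantities so that they telescope, and then invoke the trivial fact that a single feasible $\cost\in\mathcal{C}$ attains at least the infimum of a linear functional over $\mathcal{C}$.

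Concretely, I would first use Proposition~\ref{pror:state-action-polytope}: since $\dv_1,\dv_2\in\mathfrak{F}$, each $\dv_i$ is the occupancy measure of $\pi_{\dv_i}$, so $\rho_{\bar\cost}(\pi_{\dv_i})=\inner{\dv_i,\bar\cost}$ and $\rho_{\bar\cost}(\pi_E)=\inner{\mv_{\pi_E},\bar\cost}$ for every cost $\bar\cost$. Unfolding the definition of the $\mathcal{C}$-distance then gives $d_{\mathcal{C}}(\pi_E,\pi_{\dv_i})=\max_{\cost\in\mathcal{C}}\inner{\dv_i-\mv_{\pi_E},\cost}$ for $i\in\{1,2\}$, and consequently $-d_{\mathcal{C}}(\pi_E,\pi_{\dv_1})=\min_{\cost^\prime\in\mathcal{C}}\inner{\mv_{\pi_E}-\dv_1,\cost^\prime}$ as well as $-\min_{\cost^\prime\in\mathcal{C}}\inner{\mv_{\pi_E}-\dv_2,\cost^\prime}=\max_{\cost^\prime\in\mathcal{C}}\inner{\dv_2-\mv_{\pi_E},\cost^\prime}=d_{\mathcal{C}}(\pi_E,\pi_{\dv_2})$.

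Substituting these identities into the left-hand side, the claimed inequality becomes
\[
\inner{\mv_{\pi_E}-\dv_1,\cost}+d_{\mathcal{C}}(\pi_E,\pi_{\dv_2})\ \ge\ d_{\mathcal{C}}(\pi_E,\pi_{\dv_2})-d_{\mathcal{C}}(\pi_E,\pi_{\dv_1}),
\]
the $d_{\mathcal{C}}(\pi_E,\pi_{\dv_2})$ contributions cancel, and what is left is exactly $\inner{\mv_{\pi_E}-\dv_1,\cost}\ge-d_{\mathcal{C}}(\pi_E,\pi_{\dv_1})=\min_{\cost^\prime\in\mathcal{C}}\inner{\mv_{\pi_E}-\dv_1,\cost^\prime}$, which holds because $\cost\in\mathcal{C}$. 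This uses nothing about $\mathcal{C}$ beyond $\cost\in\mathcal{C}$, so the statement holds for an arbitrary cost class. The ``main obstacle'', such as it is, is purely notational: keeping the sign of each $\mathcal{C}$-distance term straight so that the $\dv_2$ terms cancel rather than add, after which the inequality is immediate.
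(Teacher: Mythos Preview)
Your proof is correct and is essentially the same argument as the paper's: both use that $-\min_{\cost'\in\mathcal{C}}\inner{\mv_{\pi_E}-\dv_2,\cost'}=d_{\mathcal{C}}(\pi_E,\pi_{\dv_2})$ exactly, and that $\inner{\mv_{\pi_E}-\dv_1,\cost}\ge\min_{\cost'\in\mathcal{C}}\inner{\mv_{\pi_E}-\dv_1,\cost'}=-d_{\mathcal{C}}(\pi_E,\pi_{\dv_1})$ because $\cost\in\mathcal{C}$. The only cosmetic difference is the order in which the two terms are handled.
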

\begin{proof}
We have that
\begin{align*} \inner{\mv_{\pi_E} - \dv_1, \cost} - 
\min_{\cost^\prime\in\mathcal{C}} \inner{\mv_{\pi_E} - \dv_2, \cost^\prime} &\geq
\min_{\cost\in\mathcal{C}} \inner{\mv_{\pi_E} - \dv_1, \cost} - 
\min_{\cost^\prime\in\mathcal{C}} \inner{\mv_{\pi_E} - \dv_2, \cost^\prime} \\ &=
\max_{\cost^\prime} \inner{\dv_2 -\mv_{\pi_E}, \cost^\prime} - \max_{\cost} \inner{ \dv_1 - \mv_{\pi_E} , \cost}\\ &= d_{\mathcal{C}}(\pi_E, \pi_{\dv_2}) - d_{\mathcal{C}}(\pi_E, \pi_{\dv_1}).
\end{align*}
\end{proof}

\begin{corollary}
\label{corollary:lower_bound_c_distance}
Let $\dv^\star = \mathrm{argmin}_{\dv\in{\mathfrak{F}}}\max_{\cost\in\mathcal{C}} \inner{\dv, \cost} - \inner{\mv_{\pi_E}, \cost}$. Setting $\cost = \phim \weight_k$, $\dv_1 = \dv^\star$, $\dv_2= \dv_k$, we ge that
$
\inner{\mv_{\pi_E} - \dv^\star, \phim \weight_k} -\min_{\cost\in\mathcal{C}} \inner{\mv_{\pi_E} - \dv_k, \cost} \geq  d_{\mathcal{C}}(\pi_E, \pi_{\dv_k}) - d_{\mathcal{C}}(\pi_E, \pi_{\dv^\star}).
$
\end{corollary}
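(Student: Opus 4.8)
The plan is to obtain this corollary as an immediate specialization of Lemma~\ref{lemma:bound_c_class}, whose general inequality
\[
\inner{\mv_{\pi_E} - \dv_1, \cost} - \min_{\cost'\in\mathcal{C}} \inner{\mv_{\pi_E} - \dv_2, \cost'} \geq d_{\mathcal{C}}(\pi_E, \pi_{\dv_2}) - d_{\mathcal{C}}(\pi_E, \pi_{\dv_1})
\]
holds for every pair of occupancy measures $\dv_1, \dv_2 \in \mathfrak{F}$ and every cost $\cost \in \mathcal{C}$. First I would set $\dv_1 = \dv^\star$, $\dv_2 = \dv_k$, and $\cost = \phim\weight_k$. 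With these choices the left-hand side becomes $\inner{\mv_{\pi_E} - \dv^\star, \phim\weight_k} - \min_{\cost'\in\mathcal{C}}\inner{\mv_{\pi_E} - \dv_k, \cost'}$ and the right-hand side becomes $d_{\mathcal{C}}(\pi_E, \pi_{\dv_k}) - d_{\mathcal{C}}(\pi_E, \pi_{\dv^\star})$, which is exactly the claimed inequality once the bound (dummy) variable in the minimum is relabeled from $\cost'$ to $\cost$.

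The only substantive part is checking that the three substituted quantities satisfy the hypotheses of the lemma. For the cost, I would recall that by definition $\cost_{\weight} = \sum_{i=1}^m w_i \phiv_i = \phim\weight$, so $\phim\weight_k = \cost_{\weight_k}$; since the realized iterate $\weight_k$ is produced by the projection $\Pi_{\mathcal{W}}$ in Algorithm~\ref{alg:PPIQL}, we have $\weight_k \in \mathcal{W}$ and hence $\phim\weight_k \in \mathcal{C}$. For the occupancy measures, $\dv^\star \in \mathfrak{F}$ holds by construction, as it is defined as the minimizer over $\mathfrak{F}$; and $\dv_k = \mv_{\pi_k} \in \mathfrak{F}$ since it is the occupancy measure of the realized policy $\pi_k$ (by Proposition~\ref{prop:q-update} under Assumption~\ref{ass:linear-MDP}, or simply because $\mv_\pi \in \mathfrak{F}$ for any stationary policy by Proposition~\ref{pror:state-action-polytope}).

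There is no real obstacle here: the statement is a pure instantiation of the preceding lemma, and all three membership conditions are immediate from the definitions and from the projection step in the algorithm. The only point worth stating explicitly, to avoid confusion, is that the first term on the left-hand side retains a \emph{fixed} cost $\phim\weight_k$ rather than a minimized one; this is precisely what will make the bound useful later, since it lower-bounds the per-iteration progress $d_{\mathcal{C}}(\pi_E, \pi_{\dv_k}) - d_{\mathcal{C}}(\pi_E, \pi_{\dv^\star})$ by an expression in which the realized weights $\weight_k$ enter linearly, allowing it to be tied to the saddle-point optimality conditions of the proximal update.
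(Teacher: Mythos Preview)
Your proposal is correct and follows exactly the paper's approach: the corollary is stated in the paper as a direct instantiation of Lemma~\ref{lemma:bound_c_class} with the indicated substitutions, with no additional argument given. Your explicit verification that $\phim\weight_k\in\mathcal{C}$, $\dv^\star\in\mathfrak{F}$, and $\dv_k\in\mathfrak{F}$ is actually more thorough than the paper, which simply writes ``Setting \ldots we get that \ldots''.
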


\begin{lemma}
\label{lem:H+D}
It holds that
$
    \frac{D(\lv^\star_k || \lv_k)}{\eta} + \frac{H(\dv_k^\star||\dv_k)}{\alpha} = \inner{\efevphi - \lv^\star_k, (\weight^\star_k - \weight_k)} + \epsilon_k.
$
\end{lemma}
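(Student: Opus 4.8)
The plan is to expand both Bregman terms using the closed-form softmax expressions for the ideal and realized updates from \Cref{prop:q-update}, and then to collapse the resulting identity using feasibility of the ideal pair $(\lv_k^\star,\dv_k^\star)$ together with the definition of $\mathcal{G}_k$. First I would expand the $\lv$-divergence. Both $\lv_k^\star$ and $\lv_k$ have the form $\lambda(i)\propto(\phim^\trans\dv_{k-1})(i)\,e^{-\eta\boldsymbol{\delta}^k_{\cdot,\cdot}(i)}$ with the \emph{same} reference $\phim^\trans\dv_{k-1}$, so that reference cancels in the log-ratio and, writing $\tau^k_{\weight,\thv}\triangleq\tfrac{1}{\eta}\log\sum_i(\phim^\trans\dv_{k-1})(i)e^{-\eta\boldsymbol{\delta}^k_{\weight,\thv}(i)}$ for the normalization constant appearing in the update formulas,
\[
\tfrac{1}{\eta}D(\lv_k^\star\|\lv_k)=\innerprod{\lv_k^\star}{\boldsymbol{\delta}^k_{\weight_k,\thv_k}-\boldsymbol{\delta}^k_{\weight_k^\star,\thv_k^\star}}+\tau^k_{\weight_k,\thv_k}-\tau^k_{\weight_k^\star,\thv_k^\star}.
\]
Similarly, since $\pi_{\dv_k^\star}=\pi_k^\star$ and $\pi_{\dv_k}=\pi_k$ by \Cref{pror:state-action-polytope}, writing $H(\dv_k^\star\|\dv_k)=\sum_{s,a}d_k^\star(s,a)\log\frac{\pi_k^\star(a|s)}{\pi_k(a|s)}$ and cancelling the common factor $\pi_{\dv_{k-1}}(a|s)$ in both softmax policies gives, with $d_k^\star(s)\triangleq\sum_a d_k^\star(s,a)$,
\[
\tfrac{1}{\alpha}H(\dv_k^\star\|\dv_k)=\innerprod{\dv_k^\star}{Q_{\thv_k}-Q_{\thv_k^\star}}+\sum_s d_k^\star(s)\bigl(V^k_{\thv_k^\star}(s)-V^k_{\thv_k}(s)\bigr).
\]

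Next I would use that under Assumption~\ref{ass:linear-MDP}, \Cref{prop:q-update} ensures $\dv_k^\star$ is a valid occupancy measure, hence the ideal pair is feasible: $\lv_k^\star=\phim^\trans\dv_k^\star$ and $\bmat^\trans\dv_k^\star=\gamma\mmat^\trans\lv_k^\star+(1-\gamma)\initial$. Then $Q_{\thv}=\phim\thv$ together with $\lv_k^\star=\phim^\trans\dv_k^\star$ gives $\innerprod{\dv_k^\star}{Q_{\thv_k}-Q_{\thv_k^\star}}=\innerprod{\lv_k^\star}{\thv_k-\thv_k^\star}$; and expanding $\boldsymbol{\delta}^k_{\weight,\thv}=\weight+\gamma\mmat\val^k_\thv-\thv$ in the $D$-term and invoking $\pmat=\phim\mmat$, the flow constraint, and $(\bmat^\trans\dv_k^\star)(s)=d_k^\star(s)$, the cross term rewrites as $\gamma\innerprod{\lv_k^\star}{\mmat(\val^k_{\thv_k}-\val^k_{\thv_k^\star})}=\sum_s d_k^\star(s)\bigl(V^k_{\thv_k}(s)-V^k_{\thv_k^\star}(s)\bigr)-(1-\gamma)\innerprod{\initial}{\val^k_{\thv_k}-\val^k_{\thv_k^\star}}$. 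Adding the two expansions, the terms $\pm\innerprod{\lv_k^\star}{\thv_k-\thv_k^\star}$ cancel and so do the two state-marginal value-difference sums, leaving
\[
\tfrac{1}{\eta}D(\lv_k^\star\|\lv_k)+\tfrac{1}{\alpha}H(\dv_k^\star\|\dv_k)=\innerprod{\lv_k^\star}{\weight_k-\weight_k^\star}-(1-\gamma)\innerprod{\initial}{\val^k_{\thv_k}-\val^k_{\thv_k^\star}}+\tau^k_{\weight_k,\thv_k}-\tau^k_{\weight_k^\star,\thv_k^\star}.
\]

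Finally I would substitute $\tau^k_{\weight,\thv}=(1-\gamma)\innerprod{\initial}{\val^k_\thv}-\innerprod{\efevphi}{\weight}-\mathcal{G}_k(\weight,\thv)$, which is just the rearranged definition~\eqref{eq:PEobjective} of $\mathcal{G}_k$. The $(1-\gamma)\innerprod{\initial}{\val^k_\thv}$ contributions cancel the explicit $-(1-\gamma)\innerprod{\initial}{\val^k_{\thv_k}-\val^k_{\thv_k^\star}}$ term, the $\innerprod{\efevphi}{\cdot}$ contributions combine into $\innerprod{\efevphi}{\weight_k^\star-\weight_k}$, and $\mathcal{G}_k(\weight_k^\star,\thv_k^\star)-\mathcal{G}_k(\weight_k,\thv_k)=\epsilon_k$ by the definition of the realized update, so the right-hand side collapses to $\innerprod{\lv_k^\star}{\weight_k-\weight_k^\star}+\innerprod{\efevphi}{\weight_k^\star-\weight_k}+\epsilon_k=\innerprod{\efevphi-\lv_k^\star}{\weight_k^\star-\weight_k}+\epsilon_k$, which is the claim. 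There is no real conceptual obstacle — the argument is an identity-chase — but the bookkeeping has to be done with care: the asymmetry between the starred and unstarred quantities is genuinely used, since feasibility (hence the Bellman-flow constraint and $\lv_k^\star=\phim^\trans\dv_k^\star$) is only available for the ideal pair $(\lv_k^\star,\dv_k^\star)$ and not for $(\lv_k,\dv_k)$, and one must track carefully when a sum is over state-action pairs versus over the state marginal obtained by applying $\bmat^\trans$.
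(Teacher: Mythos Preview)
Your proof is correct. The paper itself does not spell out the argument, deferring instead to Lemma~1 of \cite{Bas-Serrano:2021}; your explicit expansion of both divergences via the softmax formulas, followed by cancellation through primal feasibility of $(\lv_k^\star,\dv_k^\star)$ and the definition of $\mathcal{G}_k$, is exactly the standard route that reference takes (adapted to the extra $\weight$-variable), so there is no substantive difference in approach.
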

\begin{proof}
The proof is analogous to Lemma 1 in \cite{Bas-Serrano:2021}.
\end{proof}

\begin{lemma}[First order optimality conditions for $\mathcal{G}_k$]
\label{lem:first_order}
For all $k\in[K]$, it holds that
\begin{equation*}
    \inner{ \efevphi - \lv_k^\star, \weight_k^\star -  \weight_k} \leq 0.
\end{equation*}
\end{lemma}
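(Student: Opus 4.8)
The plan is to differentiate the logistic policy evaluation objective $\mathcal{G}_k$ in the cost variable $\weight$, invoke the first-order optimality condition at the maximizer $(\weight_k^\star,\thv_k^\star)$, and identify the resulting gradient with $\lv_k^\star-\efevphi$ using the closed form \Cref{eq:update1} of Proposition~\ref{prop:q-update}. Note that only the optimality in the $\weight$-block is needed; the $\thv$-block plays no role here.

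First I would record that, by \Cref{eq:PEobjective}, $\mathcal{G}_k$ is concave and differentiable in $\weight$: it is the sum of a negative log-sum-exp of the maps $\weight\mapsto-\eta\boldsymbol{\delta}^k_{\weight,\thv}(i)$ (affine in $\weight$, since $\boldsymbol{\delta}^k_{\weight,\thv}(i)=\weight(i)+\gamma(\mmat\val^k_\thv)(i)-\thv(i)$ and $\val^k_\thv$ does not depend on $\weight$), of the linear term $-\inner{\efevphi,\weight}$, and of a term not involving $\weight$. Hence $\weight_k^\star$ maximizes the concave map $\weight\mapsto\mathcal{G}_k(\weight,\thv_k^\star)$ over the convex set $\mathcal{W}$, so for every $\weight\in\mathcal{W}$ the first-order optimality condition gives $\inner{\nabla_\weight\mathcal{G}_k(\weight_k^\star,\thv_k^\star),\weight-\weight_k^\star}\le 0$; in particular this holds for $\weight=\weight_k\in\mathcal{W}$. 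Differentiating the log-sum-exp term (using $\partial\boldsymbol{\delta}^k_{\weight,\thv}(i)/\partial\weight(j)=\ones\{i=j\}$) yields, for each $j\in[m]$,
\[
\big(\nabla_\weight\mathcal{G}_k(\weight,\thv)\big)(j)=\frac{(\phim^\trans\dv_{k-1})(j)\,e^{-\eta\boldsymbol{\delta}^k_{\weight,\thv}(j)}}{\sum_{i=1}^m(\phim^\trans\dv_{k-1})(i)\,e^{-\eta\boldsymbol{\delta}^k_{\weight,\thv}(i)}}-\efevphi(j).
\]

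Evaluated at $(\weight_k^\star,\thv_k^\star)$, the first term is a probability vector on $\Delta_{[m]}$ proportional to $(\phim^\trans\dv_{k-1})(i)\,e^{-\eta\boldsymbol{\delta}^k_{\weight_k^\star,\thv_k^\star}(i)}$, so by \Cref{eq:update1} it equals $\lv_k^\star$; thus $\nabla_\weight\mathcal{G}_k(\weight_k^\star,\thv_k^\star)=\lv_k^\star-\efevphi$. Plugging this into the optimality inequality with $\weight=\weight_k$ gives $\inner{\lv_k^\star-\efevphi,\weight_k-\weight_k^\star}\le 0$, which is exactly the claimed $\inner{\efevphi-\lv_k^\star,\weight_k^\star-\weight_k}\le 0$ after negating both slots of the inner product. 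I do not foresee a real obstacle here; the only step requiring a little care is matching the softmax gradient with $\lv_k^\star$, which uses that $\lv_k^\star\in\Delta_{[m]}$ so that the proportionality in \Cref{eq:update1} becomes an equality once the common partition function (the denominator above) is divided out.
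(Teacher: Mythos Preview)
Your proof is correct and follows essentially the same approach as the paper: compute $\nabla_\weight\mathcal{G}_k(\weight_k^\star,\thv_k^\star)$, identify the softmax term with $\lv_k^\star$ via \Cref{eq:update1}, and apply the first-order optimality condition for maximizing a concave function over the convex set $\mathcal{W}$ at the feasible point $\weight_k$. If anything, you are slightly more explicit than the paper in justifying concavity and in noting that $\weight_k\in\mathcal{W}$ is needed to instantiate the optimality inequality.
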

\begin{proof}
We start by taking the gradient of $\mathcal{G}_k(\weight_k, \thv_k)$ with respect to $\weight$. In particular, the partial derivative with respect to the $i^{th}$ component is given by
\begin{align*}
    \frac{\partial \mathcal{G}_k(\weight^\star_k, \thv^\star_k)}{\partial \weight(i)} =& - (\efevphi)(i) +\frac{(\phim^\trans\dv_{k-1})(i) e^{-\eta {\boldsymbol{\delta}}_{\thv^\star_k,\weight^\star_{k}}^k(i) }}{\sum^m_{i=1} (\phim^\trans\dv_{k-1})(i) e^{-\eta {\boldsymbol{\delta}}_{\thv^\star_k,\weight^\star_{k}}^k(i) }}
    \\=& - (\efevphi)(i) +  \lv^\star_k(i).
\end{align*}
Therefore,
\begin{equation*}
    \nabla_\weight \mathcal{G}_k(\weight^\star_k, \thv^\star_k) = - \efevphi + \lv^\star_k.
\end{equation*}
Then, by using the first-order optimality conditions for a concave function, we have
\begin{equation*}
    \inner{\nabla_{\weight}\mathcal{G}_k(\weight^\star_k, \thv^\star_k), \weight_k - \weight^\star_k} \leq 0, \quad \forall k.
\end{equation*}
By replacing the expression for $\nabla_{\weight}\mathcal{G}_k(\weight^\star_k, \thv^\star_k)$, we obtain
\begin{align}
    \inner{- \efevphi + \lv^\star_k,  \weight_k - \weight^\star_k} \leq 0 \quad \forall k
    \iff \inner{\efevphi - \lv^\star_k, \weight^\star_k - \weight_k}, \leq 0 \quad \forall k.
\end{align}

\end{proof}
We also need the following auxiliary result.
\begin{lemma}
\label{eq:bound_min_w}
For all $k\in[K]$, it holds that
\begin{equation}
\inner{\efevphi - \phim^\trans\dv_k, \weight^\star_k} \leq \min_{\weight\in\wspace} \inner{\efevphi - \phim^\trans\dv_k, \weight} + 2 \norm{\dv_k - \dv^\star_k}_1.
\end{equation}
\end{lemma}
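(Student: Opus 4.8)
The plan is to derive the claim from the first-order optimality conditions for the concave map $\weight\mapsto\mathcal{G}_k(\weight,\thv^\star_k)$ (already half-computed in Lemma~\ref{lem:first_order}) together with primal feasibility of the ideal updates, and then to pay a Hölder-type price for swapping the ideal occupancy measure $\dv^\star_k$ for the realized one $\dv_k$. First I would recall that, as shown in the proof of Lemma~\ref{lem:first_order}, $\nabla_{\weight}\mathcal{G}_k(\weight^\star_k,\thv^\star_k)=\lv^\star_k-\efevphi$. Since $\mathcal{G}_k$ is concave and $\weight^\star_k$ maximizes it over the convex set $\wspace$, the first-order condition $\inner{\nabla_\weight\mathcal{G}_k(\weight^\star_k,\thv^\star_k),\weight-\weight^\star_k}\le 0$ holds for every $\weight\in\wspace$, which rearranges to
\[
\inner{\efevphi-\lv^\star_k,\weight^\star_k}\le\inner{\efevphi-\lv^\star_k,\weight},\qquad\forall\weight\in\wspace.
\]
By feasibility of the ideal updates we have $\lv^\star_k=\phim^\trans\dv^\star_k$, so this reads $\inner{\efevphi-\phim^\trans\dv^\star_k,\weight^\star_k}\le\inner{\efevphi-\phim^\trans\dv^\star_k,\weight}$ for all $\weight\in\wspace$.

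Next I would rewrite $\efevphi-\phim^\trans\dv^\star_k=(\efevphi-\phim^\trans\dv_k)+\phim^\trans(\dv_k-\dv^\star_k)$ on both sides of the last display, turning it into
\[
\inner{\efevphi-\phim^\trans\dv_k,\weight^\star_k}\le\inner{\efevphi-\phim^\trans\dv_k,\weight}+\inner{\phim^\trans(\dv_k-\dv^\star_k),\weight-\weight^\star_k},\qquad\forall\weight\in\wspace.
\]
It then remains to control the last inner product. Using $\inner{\phim^\trans\dv,\weight}=\inner{\dv,\phim\weight}=\inner{\dv,\cost_{\weight}}$, it equals $\inner{\dv_k-\dv^\star_k,\cost_{\weight}-\cost_{\weight^\star_k}}$, and by Hölder's inequality this is at most $\norm{\dv_k-\dv^\star_k}_1\,\norm{\cost_{\weight}-\cost_{\weight^\star_k}}_\infty\le 2\norm{\dv_k-\dv^\star_k}_1$, since each of $\cost_{\weight},\cost_{\weight^\star_k}$ lies in $[0,1]^{\abs{\sspace}\abs{\aspace}}$ (because $\phiv_i\ge\mbf{0}$, $\norm{\phiv_i}_1\le 1$, and $\weight,\weight^\star_k$ lie in the simplex $\wspace$). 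Substituting this bound and then taking the minimum over $\weight\in\wspace$ on the right-hand side yields exactly $\inner{\efevphi-\phim^\trans\dv_k,\weight^\star_k}\le\min_{\weight\in\wspace}\inner{\efevphi-\phim^\trans\dv_k,\weight}+2\norm{\dv_k-\dv^\star_k}_1$.

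I expect the only genuinely delicate point to be the normalization constant: one must verify that $\norm{\cost_{\weight}}_\infty\le 1$ on the constraint set $\wspace$ in play so that the penalty is precisely $2\norm{\dv_k-\dv^\star_k}_1$ rather than a dimension-dependent factor. For $\wspace=\Delta_{[m]}$ this is immediate from the feature normalization, as noted above; alternatively, under Assumption~\ref{ass:linear-MDP} one can bound $\inner{\phim^\trans(\dv_k-\dv^\star_k),\weight}\le\max_i\abs{\inner{\phiv_i,\dv_k-\dv^\star_k}}\le\norm{\dv_k-\dv^\star_k}_1$ directly for $\weight$ in the simplex. Everything else — identifying $\nabla_\weight\mathcal{G}_k$, invoking the feasibility identity $\lv^\star_k=\phim^\trans\dv^\star_k$, and the Hölder step — is routine and parallels computations already carried out in Lemma~\ref{lem:first_order} and the proof of Proposition~\ref{prop:q-update}.
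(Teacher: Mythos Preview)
Your proof is correct and takes a more streamlined route than the paper's. Both arguments share the same skeleton: split off $\phim^\trans(\dv_k-\dv^\star_k)$, control it by H\"older (using $\norm{\phim\weight}_\infty\le 1$ on $\wspace$), and reduce to showing $\max_{\weight\in\wspace}\inner{\efevphi-\lv^\star_k,\weight^\star_k-\weight}\le 0$. The difference lies in this last step. The paper establishes it by a saddle-point identity: it adds and subtracts the divergence terms $\tfrac1\eta D(\lv^\star_k\|\phim^\trans\dv_{k-1})+\tfrac1\alpha H(\dv^\star_k\|\dv_{k-1})$ and recognizes both resulting expressions as the common optimal value of the regularized min--max problem, concluding that the quantity is exactly zero. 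You instead observe that the inequality $\inner{\efevphi-\lv^\star_k,\weight^\star_k-\weight}\le 0$ for every $\weight\in\wspace$ is nothing but the first-order optimality condition for the concave map $\weight\mapsto\mathcal{G}_k(\weight,\thv^\star_k)$ at its maximizer $\weight^\star_k$ --- the same computation as in Lemma~\ref{lem:first_order}, but stated for arbitrary $\weight\in\wspace$ rather than only for the realized iterate $\weight_k$. Since only the inequality (not equality) is needed, your argument bypasses the strong-duality detour entirely and is shorter. The remaining points (feasibility $\lv^\star_k=\phim^\trans\dv^\star_k$ and the constant $2$ from $\norm{\cost_\weight-\cost_{\weight'}}_\infty\le 2$) are handled identically in both proofs.
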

\begin{proof}
By introducing $\bar{\weight}_k^\star = \argmin_{\weight\in\wspace} \inner{\efevphi - \phim^\trans\dv_k,\weight}$,
and applying triangular inequality, we obtain
\begin{align*}
\inner{\efevphi - \phim^\trans\dv_k,\weight_k^\star} &= \inner{\efevphi - \phim^\trans\dv_k,\bar{\weight}_k^\star} +  \inner{\efevphi - \phim^\trans\dv_k,\weight_k^\star-\bar{\weight}_k^\star} \\
&= \min_{\weight\in\wspace}\inner{\efevphi - \phim^\trans\dv_k,\weight} +  \inner{\efevphi - \phim^\trans\dv_k,\weight_k^\star-\bar{\weight}_k^\star}.
\end{align*}
Moreover, we have
\begin{align*}
    \inner{\efevphi - \phim^\trans\dv_k,\weight_k^\star-\bar{\weight}_k^\star} = &
    \max_{\weight\in\wspace} \inner{\efevphi - \phim^\trans\dv_k,\weight_k^\star - \weight} \\ =
    &\max_{\weight\in\wspace} \inner{\efevphi + \phim^\trans\dv^\star_k - \phim^\trans\dv^\star_k - \phim^\trans\dv_k,\weight_k^\star - \weight} \\ =
    &\max_{\weight\in\wspace} \inner{\efevphi - \phim^\trans\dv^\star_k,\weight_k^\star - \weight}
    + \inner{\dv^\star_k - \dv_k, \phim (\weight^\star_k - \bar{\weight}_k^\star)}  \\ \leq
    &\underbrace{\max_{\weight\in\wspace} \inner{\efevphi - \lv^\star_k,\weight_k^\star - \weight} }_{:= (A)}
    + \norm{\dv^\star_k - \dv_k}_1 \norm{\phim (\weight^\star_k - \bar{\weight}_k^\star)}_\infty  \\\leq
    & 2 \norm{\dv^\star_k - \dv_k}_1. 
\end{align*}
The first equality holds because the term in $\weight^\star_k$ is a constant wrt $\weight$, the variable of the max. In the last inequality follows from (A) being zero as we show next:
\begin{align*}
    \max_{\weight\in\wspace} \inner{\efevphi - \lv^\star_k,  \weight^\star_k - \weight} &=
    \max_{\weight\in\wspace} \inner{\efevphi - \lv^\star_k,  \weight^\star_k - \weight} + \frac{1}{\eta} D(\lv^\star_k||\phim^\trans\dv_{k-1}) \\&\phantom{{}=}  - \frac{1}{\eta} D(\lv^\star_k||\phim^\trans\dv_{k-1})  +\frac{1}{\alpha}H(\dv^\star_k||\dv_{k-1})
    -\frac{1}{\alpha}H(\dv^\star_k||\dv_{k-1}) \\& =
    \max_{\weight\in\wspace}  \bigg( \inner{\lv^\star_k - \efevphi, \weight} + \frac{1}{\eta} D(\lv^\star_k||\phim^\trans\dv_{k-1}) \\&\phantom{{}=}+\frac{1}{\alpha}H(\dv_k^\star||\dv_{k-1})\bigg)  - \inner{\lv^\star_k - \efevphi,  \weight^\star_k} \\
    &\phantom{{}=}-  \frac{1}{\eta} D(\lv^\star_k||\phim^\trans\dv_{k-1}) - \frac{1}{\alpha}H(\dv_k^\star||\dv_{k-1}) \\ &=
    \max_{\weight\in\wspace}   \bigg( \inner{\lv^\star_k - \efevphi, \weight} + \frac{1}{\eta} D(\lv^\star_k||\phim^\trans \dv_{k-1}) \\
    &\phantom{{}=}+ \frac{1}{\alpha}H(\dv_k^\star||\dv_{k-1}) \bigg) 
    - \max_{\weight\in\wspace}  \min_{\lv, \dv \in \mathfrak{M}_{\phim}} \bigg( \inner{ \lv - \efevphi,  \weight} \\ &\phantom{{}=} + \frac{1}{\eta} D(\lv||\phim^\trans \dv_{k-1}) + \frac{1}{\alpha}H(\dv||\dv_{k-1})\bigg) \\&= 0.
\end{align*}
\end{proof}
\begin{lemma}[Lower Bound on feature expectation vectors]\label{lem:lower_bound_fev}
Let Assumption~\ref{ass:eigenvalue} hold. We then have $(\phim^\trans\dv_k)(j) \geq \beta$ for all $j \in [m]$.
\end{lemma}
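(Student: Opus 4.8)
The plan is to unfold the definition of the feature expectation vector, apply Assumption~\ref{ass:eigenvalue} along a coordinate direction, and then exploit the fact that the rows of $\phim$ lie in the simplex. First I would observe that, by definition of $\phim$ and since $\dv_k$ is a (valid) occupancy measure, the $j$-th coordinate of $\phim^\trans\dv_k$ is exactly the mean of the $j$-th feature under $\dv_k$:
\begin{equation*}
(\phim^\trans\dv_k)(j) \;=\; \sum_{s,a} d_k(s,a)\,\phi_j(s,a) \;=\; \Exp_{(s,a)\sim\dv_k}\!\big[\phi_j(s,a)\big].
\end{equation*}

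Next I would invoke Assumption~\ref{ass:eigenvalue}: writing $\Sigma_k \triangleq \Exp_{(s,a)\sim\dv_k}[\phiv(s,a)\phiv(s,a)^{\mathsf{T}}]$, the bound $\lambda_{\min}(\Sigma_k)\ge\beta$ means $u^{\mathsf{T}}\Sigma_k u\ge\beta$ for every unit vector $u\in\ar^m$. Testing this with the standard basis vector $u=\mbf{e}_j$ yields
\begin{equation*}
\Exp_{(s,a)\sim\dv_k}\!\big[\phi_j(s,a)^2\big] \;=\; \mbf{e}_j^{\mathsf{T}}\Sigma_k\,\mbf{e}_j \;\ge\; \beta.
\end{equation*}

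Finally, I would use that $\phiv(s,a)\in\Delta_{[m]}$ under Assumption~\ref{ass:linear-MDP}, so $\phi_j(s,a)\in[0,1]$ and hence $\phi_j(s,a)^2\le\phi_j(s,a)$ pointwise. Taking expectations and chaining the two displays gives $(\phim^\trans\dv_k)(j)=\Exp[\phi_j]\ge\Exp[\phi_j^2]\ge\beta$, which is the claim. There is no real obstacle here; the only point requiring a little care is the setup — making sure $\dv_k\in\mathfrak{F}$ so that $(s,a)\sim\dv_k$ is meaningful and that the simplex constraint on the rows of $\phim$ is in force — after which the argument is a one-line combination of the coordinate-wise eigenvalue bound with the elementary inequality $t^2\le t$ on $[0,1]$.
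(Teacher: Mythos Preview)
Your proposal is correct and follows essentially the same argument as the paper: test the minimum-eigenvalue bound of Assumption~\ref{ass:eigenvalue} on the standard basis vector $\mbf{e}_j$ to get $\Exp_{(s,a)\sim\dv_k}[\phi_j(s,a)^2]\ge\beta$, then use $\phiv(s,a)\in\Delta_{[m]}$ (hence $\phi_j\in[0,1]$ and $\phi_j^2\le\phi_j$) to conclude $(\phim^\trans\dv_k)(j)=\Exp[\phi_j]\ge\Exp[\phi_j^2]\ge\beta$.
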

\begin{proof}
Let $\mbf{e}_j\in\mathbb{R}^m$ the vector with zeros everywhere but in position $j$ where it takes the value of $1$. Then, we observe that
\begin{align*}
    \beta \leq & \lambda_{\mathrm{min}}(\Ee{s,a\sim \dv_k}{\phiv(s,a)\phiv(s,a)^\trans}) \\ &=  \min_{\{\xv\in\mathbb{R}^m:\norm{\xv}_2 = 1\}} \xv^\trans \Ee{s,a\sim \dv_k}{\phiv(s,a)\phiv(s,a)^\trans} \xv \\ &\leq  \mbf{e}_j^\trans \Ee{s,a\sim \dv_k}{\phiv(s,a)\phiv(s,a)^\trans} \mbf{e}_j \\& =  \Ee{s,a\sim \dv_k}{\phiv_j^2(s,a)} \leq \Ee{s,a\sim \dv_k}{\phiv_j(s,a)} = (\phim^\trans\dv_k)(j).
\end{align*}
\end{proof}
\begin{theorem}[Error propagation with empirical expert feature expectation vector]
\label{thm:error_propagation_appendix}
Let $\dv^\star = \mathrm{argmin}_{\dv\in{\mathfrak{F}}}\max_{\cost\in\mathcal{C}} \inner{\dv, \cost} - \inner{\mv_{\pi_E}, \cost}$,  and let $\lv^\star$ be any feature vector such that $(\lv^\star, \dv^\star) \in \mathfrak{M}_{\phim}$. Moreover, let $C\triangleq \frac{1}{\beta\eta}\big(\sqrt{\frac{2 \alpha}{1 - \gamma}} + \sqrt{8 \eta}\big) + \sqrt{\frac{18 \alpha}{1 - \gamma}}$. Then, we have that
\begin{align*}
   & \frac{1}{K} \sum_k  \inner{\efevphi - \phim^\trans\dv^\star, \weight_k} -\min_{\weight\in\wspace} \inner{\efevphi - \phim^\trans\dv_k, \weight}\\
   &\leq \frac{D(\lv^\star||\phim^\trans\dv_0)}{K\eta} + \frac{H(\dv^\star||\dv_0)}{K\alpha}+ \frac{C}{K}\sum_k \sqrt{\epsilon_k} + \frac{\sum_k \epsilon_k}{K}.
\end{align*}
\end{theorem}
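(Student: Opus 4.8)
The plan is to carry out the Bregman proximal‑point error‑propagation scheme of \cite{Bas-Serrano:2021}, adapted to \ref{eq:SPP}, while tracking two sources of inexactness: the max‑oracle returns only an $\epsilon_k$‑suboptimal maximiser $(\weight_k,\thv_k)$ of $\mathcal{G}_k$, and the centre of the $k$‑th proximal step is the realised iterate $\dv_{k-1}$, not the ideal one. First I would quantify how far the realised update drifts from the ideal one: combining Lemma~\ref{lem:H+D} with the first‑order condition of Lemma~\ref{lem:first_order} gives $\tfrac1\eta D(\lv_k^\star\|\lv_k)+\tfrac1\alpha H(\dv_k^\star\|\dv_k)\le\epsilon_k$; Pinsker's inequality on the simplex then yields $\|\lv_k^\star-\lv_k\|_1\le\sqrt{2\eta\epsilon_k}$, and, because $\dv_k=\mv_{\pi_k}$ and $\dv_k^\star=\mv_{\pi_k^\star}$ are the occupancy measures of two policies obtained by soft‑min updates from the \emph{same} base policy $\pi_{k-1}$, the standard bound relating the $\ell_1$‑distance of occupancy measures to the conditional relative entropy gives $\|\dv_k^\star-\dv_k\|_1\le\sqrt{2\alpha\epsilon_k/(1-\gamma)}$ — the source of the $1/(1-\gamma)$ in $C$. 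Since the rows of $\phim$ are probability vectors, $\phim^\trans$ is $\ell_1$‑nonexpansive, so $\|\phim^\trans\dv_k^\star-\phim^\trans\dv_k\|_1$ obeys the same bound, and $\lv_k^\star=\phim^\trans\dv_k^\star$ by Proposition~\ref{prop:q-update}.

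Next I would derive the one‑step identity. By Proposition~\ref{prop:q-update}, under Assumption~\ref{ass:linear-MDP} the ideal update $(\lv_k^\star,\dv_k^\star)$ is the \emph{exact} Bregman proximal point, with centre $\dv_{k-1}$, of the convex function $f(\xv)=\max_{\yv\in\mathcal{Y}}\inner{\yv,\mbf{A}\xv+\widehat{\mbf{b}}}$ over $\mathcal{X}$, and $(\weight_k^\star,\thv_k^\star)$ is the corresponding inner maximiser. Evaluating the entropic proximal subproblem at the feasible comparator $\xv^\star=(\lv^\star,\dv^\star)$ and at $\xv_k^\star$ itself, using the exact Bregman three‑point identity for the linear inner objective together with the feasibility facts $\mbf{A}\xv^\star+\widehat{\mbf{b}}=(\lv^\star-\EFEV{\expert},\mbf{0},\mbf{0})$ and $\mbf{A}\xv_k^\star+\widehat{\mbf{b}}=(\lv_k^\star-\EFEV{\expert},\mbf{0},\mbf{0})$, I obtain
$$\inner{\weight_k^\star,\lv_k^\star-\lv^\star}=\tfrac1\eta\big(D(\lv^\star\|\phim^\trans\dv_{k-1})-D(\lv^\star\|\lv_k^\star)-D(\lv_k^\star\|\phim^\trans\dv_{k-1})\big)+\tfrac1\alpha\big(H(\dv^\star\|\dv_{k-1})-H(\dv^\star\|\dv_k^\star)-H(\dv_k^\star\|\dv_{k-1})\big).$$

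Then I would connect the $k$‑th summand $\inner{\EFEV{\expert}-\phim^\trans\dv^\star,\weight_k}-\min_{\weight}\inner{\EFEV{\expert}-\phim^\trans\dv_k,\weight}$ to $\inner{\weight_k^\star,\lv_k^\star-\lv^\star}$: Lemma~\ref{eq:bound_min_w} replaces $\min_{\weight}\inner{\EFEV{\expert}-\phim^\trans\dv_k,\weight}$ by $\inner{\EFEV{\expert}-\phim^\trans\dv_k,\weight_k^\star}$ up to $2\|\dv_k-\dv_k^\star\|_1$, substituting $\phim^\trans\dv_k$ by $\lv_k^\star$ costs a further $\|\dv_k-\dv_k^\star\|_1$ (both using $\mathcal{W}\subseteq\{\|\weight\|_\infty\le1\}$), and Lemmas~\ref{lem:H+D} and~\ref{lem:first_order} bound the residual cost‑mismatch term $\inner{\EFEV{\expert}-\lv_k^\star,\weight_k^\star-\weight_k}$ by $\epsilon_k$, so the $k$‑th summand is at most $\inner{\weight_k^\star,\lv_k^\star-\lv^\star}$ plus $\mathcal{O}(\sqrt{\epsilon_k}+\epsilon_k)$. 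Now I would sum over $k$ and telescope the right‑hand side of the one‑step identity: the terms $-\tfrac1\eta D(\lv_k^\star\|\phim^\trans\dv_{k-1})$ and $-\tfrac1\alpha H(\dv_k^\star\|\dv_{k-1})$ are nonnegative and dropped, while $\sum_k\big(D(\lv^\star\|\phim^\trans\dv_{k-1})-D(\lv^\star\|\lv_k^\star)\big)$ and $\sum_k\big(H(\dv^\star\|\dv_{k-1})-H(\dv^\star\|\dv_k^\star)\big)$ would telescope to $D(\lv^\star\|\phim^\trans\dv_0)$ and $H(\dv^\star\|\dv_0)$ if the realised and ideal iterates coincided; the mismatch is absorbed by writing $D(\lv^\star\|\lv_k^\star)=D(\lv^\star\|\phim^\trans\dv_k)+\big(D(\lv^\star\|\lv_k^\star)-D(\lv^\star\|\phim^\trans\dv_k)\big)$, bounding the correction by $\tfrac1\beta\|\dv_k^\star-\dv_k\|_1$ via Lemma~\ref{lem:lower_bound_fev} (feature excitation lower‑bounds $\phim^\trans\dv_k$, hence also $\lv_k^\star=\phim^\trans\dv_k^\star$ up to the $\mathcal{O}(\sqrt{\epsilon_k})$ gap, by $\beta$), and analogously for the $H$‑terms via the exact recursion $H(\dv^\star\|\dv_k^\star)-H(\dv^\star\|\dv_k)=\alpha\inner{\dv^\star,(Q_{\thv_k^\star}-Q_{\thv_k})-\bmat(V_{\thv_k^\star}^k-V_{\thv_k}^k)}$, whose right‑hand side is $\mathcal{O}(\sqrt{\epsilon_k})$. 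Collecting the $\sqrt{\epsilon_k}$‑errors with their $\tfrac1{\beta\eta}$ and $\sqrt{\alpha/(1-\gamma)}$ prefactors into $C$ and dividing by $K$ yields the claimed inequality.

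The main obstacle is precisely this error bookkeeping in the last step: because the proximal centres are the realised iterates and the max‑oracle is inexact, the telescoping is lossy, and one must control the relative‑entropy mismatches $|D(\lv^\star\|\lv_k^\star)-D(\lv^\star\|\phim^\trans\dv_k)|$ and the $H$‑analogue by $\mathcal{O}(\sqrt{\epsilon_k})$ — impossible without Assumption~\ref{ass:eigenvalue} and Lemma~\ref{lem:lower_bound_fev}, since otherwise the relative entropy is not Lipschitz in its second argument. A secondary technical point is that the conditional‑relative‑entropy Pinsker bound for occupancy measures unavoidably carries a $1/(1-\gamma)$ factor, which propagates into $C$.
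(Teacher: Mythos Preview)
Your high–level plan — prox step, three–point relation, telescope, absorb mismatches via the $1/\beta$ and $\sqrt{\alpha/(1-\gamma)}$ factors — is the right shape, and the auxiliary bounds you invoke (Lemmas~\ref{lem:H+D}, \ref{lem:first_order}, \ref{eq:bound_min_w}, \ref{lem:lower_bound_fev}, the Pinsker/$H$ bound $\|\dv_k^\star-\dv_k\|_1\le\sqrt{2\alpha\epsilon_k/(1-\gamma)}$) are exactly the ones the paper uses. The substantive problem is in your last step, the $H$–telescope correction.

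\textbf{The gap.} You write $H(\dv^\star\|\dv_k^\star)-H(\dv^\star\|\dv_k)=\alpha\inner{\dv^\star,(Q_{\thv_k^\star}-Q_{\thv_k})-\bmat(V_{\thv_k^\star}^k-V_{\thv_k}^k)}$ and assert the right–hand side is $\mathcal{O}(\sqrt{\epsilon_k})$. The identity is fine, but the bound is not: controlling this term requires controlling $\phim(\thv_k^\star-\thv_k)$ (and hence $\val_{\thv_k^\star}^k-\val_{\thv_k}^k$, which is $1$–Lipschitz in $Q_\thv$), and a function–value gap $\epsilon_k$ does \emph{not} give you a parameter gap here. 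Indeed, $\mathcal{G}_k$ is only concave and smooth, not strongly concave — it is even shift–invariant in $\thv$ (Proposition~\ref{prop:optimal_theta_bound}'s proof shows $\mathcal{G}_k(\weight,\thv+c\mbf{1})=\mathcal{G}_k(\weight,\thv)$), so $\thv_k$ and $\thv_k^\star$ can differ arbitrarily while $\epsilon_k$ is small. Your scheme builds the telescope around the \emph{ideal} updates and then tries to correct both $D$ and $H$ back to the realized centres; the $D$–correction is handleable via $1/\beta$, but the $H$–correction is not. (A minor additional issue: $H$ is the \emph{conditional} relative entropy, not a Bregman divergence on $\Delta_{\sspace\times\aspace}$, so invoking a generic Bregman three–point identity for the $H$–block is already shaky; the identity you need has to be checked by hand from the softmin form of $\pi_k^\star$.)

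\textbf{What the paper does instead.} The paper avoids the $H$–mismatch entirely by running the whole expansion through the \emph{realized} $\lv_k$ rather than the ideal $\lv_k^\star$. Concretely, it expands $D(\lv^\star\|\lv_k)$ directly from the exponential formula $\lambda_k(i)\propto(\phim^\trans\dv_{k-1})(i)\,e^{-\eta\delta^k_{\weight_k,\thv_k}(i)}$, uses feasibility of $(\lv^\star,\dv^\star)$, and replaces $\tau^k_{\weight_k,\thv_k}$ by $-\mathcal{G}_k(\weight_k,\thv_k)+(1-\gamma)\inner{\initial,\val_{\thv_k}^k}-\inner{\efevphi,\weight_k}$. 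This brings in the term $\inner{\dv^\star,\bmat\val_{\thv_k}^k-\phim\thv_k}$ with the \emph{realized} $\thv_k$, and that term is \emph{exactly} $\tfrac{1}{\alpha}\big(H(\dv^\star\|\dv_{k-1})-H(\dv^\star\|\dv_k)\big)$ — so the $H$–telescope is lossless. The only lossy telescope is the $D$–one, where the mismatch is $D(\lv^\star\|\phim^\trans\dv_k)-D(\lv^\star\|\lv_k)$; the paper handles it by passing through the Bregman projection $\tilde\lv_k$ of $\lv_k$ onto $\{\phim^\trans\dv:\dv\in\mathfrak{F}\}$ (so that $D(\lv^\star\|\lv_k)\ge D(\lv^\star\|\tilde\lv_k)+D(\tilde\lv_k\|\lv_k)$), and then bounding $D(\lv^\star\|\phim^\trans\dv_k)-D(\lv^\star\|\tilde\lv_k)\le\tfrac{1}{\beta}\|\tilde\lv_k-\phim^\trans\dv_k\|_1$ via Lemma~\ref{lem:lower_bound_fev} followed by triangle inequality and Pinsker. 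The only place $\epsilon_k$ enters is through $-\mathcal{G}_k(\weight_k,\thv_k)\le-\mathcal{G}_k(\weight_k^\star,\thv_k^\star)+\epsilon_k$ together with the dual formula for $\mathcal{G}_k(\weight_k^\star,\thv_k^\star)$ from Proposition~\ref{prop:q-update}; no bound on $\thv_k-\thv_k^\star$ is ever needed.
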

\begin{proof}
We have that
\begin{align*}
D(\lv^\star || \lv_k) = &D(\lv^\star || \phim^\trans\dv_{k-1}) + \eta \inner{\lv^\star, \weight_k + \gamma \mmat \val^k_{\thv_k} - \thv_k} + \eta \tau^k_{\thv_k,\weight_k} \\ =
&D(\lv^\star || \phim^\trans\dv_{k-1}) + \eta \inner{\lv^\star, \weight_k - \thv_k} + \eta \inner{\gamma \mmat^T \lv^\star, \val^k_{\thv_k}} + \eta \tau^k_{\thv_k,\weight_k} \\ =
&D(\lv^\star || \phim^\trans\dv_{k-1}) + \eta \inner{\lv^\star, \weight_k - \thv_k} + \eta \inner{\mbf{B}^\trans \dv^\star, \val^k_{\thv_k}} - \eta(1 - \gamma)\inner{\initial, \val^k_{\thv_k}} \\\phantom{{}=}&+ \eta \tau^k_{\thv_k,\weight_k} \\ = &D(\lv^\star || \phim^\trans\dv_{k-1}) + \eta \inner{\lv^\star, \weight_k - \thv_k} + \eta \inner{\mbf{B}^\trans \dv^\star, \val^k_{\thv_k}} - \eta (1 - \gamma)\inner{\initial, \val^k_{\thv_k}} \\\phantom{{}=}&+ \eta \tau^k_{\thv_k,\weight_k} \\
= &D(\lv^\star || \phim^\trans\dv_{k-1}) + \eta \inner{\lv^\star, \weight_k - \thv_k} + \eta \inner{\mbf{B}^\trans \dv^\star, \val^k_{\thv_k}} - \eta \mathcal{G}_k(\thv_k, \weight_k) \\\phantom{{}=}& - \eta \inner{\efevphi, \weight_k}\\
\leq &D(\lv^\star || \phim^\trans\dv_{k-1}) + \eta \inner{\lv^\star, \weight_k - \thv_k} + \eta \inner{\mbf{B}^\trans \dv^\star, \val^k_{\thv_k}} - \eta \mathcal{G}_k(\thv^\star_k, \weight^\star_k)\\\phantom{{}=}& + \eta \epsilon_k - \eta \inner{\efevphi, \weight_k}\\
\leq &D(\lv^\star || \phim^\trans\dv_{k-1}) + \eta \inner{\lv^\star, \weight_k - \thv_k} + \eta \inner{\mbf{B}^\trans \dv^\star, \val^k_{\thv_k}} + \eta \inner{\efevphi - \lv^\star_k, \weight_k^\star} \\\phantom{{}=}& - D(\lv_k^\star|| \phim^\trans\dv_{k-1}) - \eta \frac{H(\dv_k^\star|| \dv_{k-1})}{\alpha} + \eta \epsilon_k - \eta \inner{\efevphi, \weight_k} \\
\leq &D(\lv^\star || \phim^\trans\dv_{k-1}) + \eta \inner{\lv^\star, \weight_k}  + \eta \inner{\dv^\star, \mbf{B} \val^k_{\thv_k} - \phim \thv_k} + \eta \inner{\efevphi - \lv^\star_k, \weight^\star_k}\\\phantom{{}=}& + \eta \epsilon_k - \eta \inner{\efevphi, \weight_k} \\
\leq &D(\lv^\star || \phim^\trans\dv_{k-1}) + \eta \inner{\dv^\star, \phim \weight_k} + \eta \inner{\dv^\star, \mbf{B} \val^k_{\thv_k} - \phim \thv_k} + \eta \inner{\efevphi - \phim^\trans\dv_k,  \weight^\star_k}  \\\phantom{{}=}&+ \eta\inner{\dv_k - \dv^\star_k, \phim \weight^\star_k} + \eta \epsilon_k - \eta \inner{\efevphi, \weight_k} \\
\leq &D(\lv^\star || \phim^\trans\dv_{k-1}) + \eta \inner{\dv^\star, \phim \weight_k}  + \eta \inner{\dv^\star, \mbf{B} \val^k_{\thv_k} - \phim \thv_k} + \eta \inner{\efevphi - \phim^\trans\dv_k,  \weight^\star_k} \\\phantom{{}=}& + \eta \norm{\dv_k - \dv^\star_k}_1 + \eta \epsilon_k - \eta \inner{\efevphi, \weight_k} \quad \text{Using Lemma~\ref{eq:bound_min_w}}\\
\leq &D(\lv^\star || \phim^\trans\dv_{k-1}) + \eta \inner{\dv^\star, \phim \weight_k}  + \eta \inner{\dv^\star, \mbf{B} \val^k_{\thv_k} - \phim \thv_k}  \\\phantom{{}=}&  + \min_{\weight\in\wspace} \eta \inner{\efevphi -  \phim^\trans \dv_k,\weight} + 3\eta \norm{\dv_k - \dv^\star_k}_1 + \eta \epsilon_k - \eta \inner{\efevphi, \weight_k}.
\end{align*}

Therefore, it follows that
\begin{multline}
     \inner{\efevphi - \phim^\trans\dv^\star,  \weight_k} - \min_{\weight\in\wspace} \inner{\efevphi - \phim ^\trans\dv_k, \weight} \leq \frac{D(\lv^\star || \phim^\trans\dv_{k-1}) - D(\lv^\star || \lv_k)}{\eta} \\ + \inner{\dv^\star, \mbf{B} \val^k_{\thv_k} - \phim \thv_k} + 3\norm{\dv_k - \dv^\star_k}_1 + \epsilon_k.
\end{multline}
Then, by using $H(\dv^\star||\dv_k) = H(\dv^\star||\dv_{k-1}) - \alpha \inner{\dv^\star, \phim\thv_k - \mbf{B} \val_{\thv_k}^k}$, we obtain

\begin{align*}
    \inner{\efevphi - \phim ^\trans\dv^\star, \weight_k} - \min_{\weight\in\wspace} \inner{\efevphi -\phim ^\trans \dv_k, \weight} &\leq \frac{D(\lv^\star || \phim^\trans\dv_{k-1}) - D(\lv^\star || \lv_k)}{\eta} \\& \phantom{{}=}+ \frac{H(\dv^\star||\dv_{k-1}) - H(\dv^\star||\dv_{k})}{\alpha} \\& \phantom{{}=}+ 3\norm{\dv_k - \dv^\star_k}_1 + \epsilon_k.
\end{align*}
Summing over iteration indices $k$ and dividing by the total number of iterations $K$, we obtain
\begin{multline}
    \frac{1}{K} \sum_k \inner{\efevphi - \phim ^\trans\dv^\star, \weight_k}  - \min_{\weight\in\wspace} \inner{\efevphi - \phim^\trans \dv_k,  \weight}  \leq \frac{1}{K} \sum_k \Bigg( \frac{D(\lv^\star || \phim^\trans\dv_{k-1}) }{\eta} \\ - \frac{D(\lv^\star || \lv_k)}{\eta}+ \frac{H(\dv^\star||\dv_{k-1}) - H(\dv^\star||\dv_{k})}{\alpha} + 3 \norm{\dv_k - \dv^\star_k}_1 \Bigg) + \frac{\sum_k \epsilon_k}{K}.
    \label{eq:saddle_bound_1}
\end{multline}
Moreover, by a telescoping sum, we get
\begin{align*}
    \sum_k & \left( \frac{D(\lv^\star || \phim^\trans\dv_{k-1}) - D(\lv^\star || \lv_k)}{\eta} + \frac{H(\dv^\star||\dv_{k-1}) - H(\dv^\star||\dv_{k})}{\alpha} \right) \\&=
    \sum_k \Bigg( \frac{D(\lv^\star || \phim^\trans\dv_{k-1}) - D(\lv^\star || \phim^\trans\dv_k)}{\eta} + \frac{D(\lv^\star || \phim^\trans\dv_{k}) - D(\lv^\star || \lv_k)}{\eta} \\& \phantom{{}=} + \frac{H(\dv^\star||\dv_{k-1}) - H(\dv^\star||\dv_{k})}{\alpha} \Bigg) \\ &=
    \frac{D(\lv^\star||\phim^\trans\dv_0) - D(\lv^\star||\phim^\trans\dv_K)}{\eta} + \frac{H(\dv^\star||\dv_0) - H(\dv^\star||\dv_K)}{\alpha} \\& \phantom{{}=} + \sum_k \frac{D(\lv^\star || \phim^\trans \dv_k) - D(\lv^\star || \lv_k)}{\eta} \\  &\leq
    \frac{D(\lv^\star||\phim^\trans\dv_0)}{\eta} + \frac{H(\dv^\star||\dv_0)}{\alpha} + \sum_k \frac{D(\lv^\star || \phim^\trans \dv_k) - D(\lv^\star || \lv_k)}{\eta}
\end{align*}
Combining this derivation with \eqref{eq:saddle_bound_1}, we get
\begin{align}
    & \frac{1}{K}\sum_k \inner{\efevphi - \phim ^\trans\dv^\star, \weight_k}  - \min_{\weight\in\wspace} \inner{\efevphi - \phim ^\trans\dv_k, \weight} \leq \frac{D(\lv^\star||\lv_0)}{K\eta} + \frac{H(\dv^\star||\dv_0)}{K\alpha} \nonumber \\& \phantom{{}=} + \frac{1}{K}\sum_k \left( \frac{D(\lv^\star ||\phim^\trans \dv_k) - D(\lv^\star || \lv_k)}{\eta} + 3 \norm{\dv_k - \dv^\star_k}_1 \right) +  \frac{\sum_k \epsilon_k}{K}.
    \label{eq:saddle_bound_2}
\end{align}
In order to bound the term $D(\lv^\star||\phim^\trans\dv_k) - D(\lv^\star||\lv_k)$, we introduce the Bregman projection to the space of feature expectation vectors induced by valid occupancy measures $\tilde \lv_k = \arg\min_{\{\lambda = \phim \dv | \dv \in \mathfrak{F}\}} D(\lv || \lv_k)$. We then have
\begin{align*}
    D(\lv^\star||\phim^\trans\dv_k) - D(\lv^\star||\lv_k) =& D(\lv^\star||\phim^\trans\dv_k) - D(\lv^\star||\lv_k) + D(\lv^\star||\tilde \lv_k) - D(\lv^\star||\tilde \lv_k)
    \\\leq&  D(\lv^\star||\phim^\trans\dv_k) - D(\lv^\star||\tilde \lv_k) - D(\tilde \lv_k||\lv_k) 
    \\\leq& D(\lv^\star||\phim^\trans\dv_k) - D(\lv^\star||\tilde \lv_k),
\end{align*}
where in the second inequality, we used Lemma 11.3 in~\cite{Cesa-Bianchi:2006}. Furthermore,
\begin{align*}
    D(\lv^\star||\phim^\trans\dv_k) - D(\lv^\star||\tilde \lv_k) =& \sum^m_{i=1} \lv^\star(i) \log \frac{\tilde \lv_k(i)}{\phim^\trans\dv_k(i)} 
    \\\leq& \sum^m_{i=1} \lv^\star(i) \left(\frac{\tilde \lv_k(i)}{\phim^\trans\dv_k(i)} - 1\right)
    \\\leq& \sum^m_{i=1} \frac{\lv^\star(i)}{\phim^\trans\dv_k(i)} \abs{\tilde \lv_k(i) - \phim^\trans\dv_k(i)}
    \\\leq& \max_{i}\frac{\lv^\star(i)}{\phim^\trans\dv_k(i)} \norm{\tilde \lv_k - \phim^\trans\dv_k}_1
    \\\leq& \frac{1}{\beta} \norm{\tilde \lv_k - \phim^\trans\dv_k}_1
    \\\leq& \frac{1}{\beta} (\norm{\tilde \lv_k - \lv_k}_1 + \norm{\lv_k^\star - \lv_k}_1 + \norm{\lv^\star_k - \phim^\trans\dv_k}_1)
    \\\leq& \frac{1}{\beta} (\sqrt{2 D(\tilde \lv_k||\lv_k)} + \sqrt{2 D(\lv_k^\star||\lv_k)}  + \norm{\lv^\star_k - \phim^\trans\dv_k}_1)
    \\\leq& \frac{1}{\beta} (2\sqrt{2 D(\lv_k^\star||\lv_k)} + \norm{\phim^\trans \dv^\star_k - \phim^\trans \dv_k}_1)
    \\\leq& \frac{1}{\beta} \bigg(\sqrt{8 \eta (\epsilon_k + \inner{\efevphi - \phim^\trans\dv_k^\star, \weight^\star_k - \weight_k}) } \\& \phantom{{}=} + \norm{\phim}_{\infty}\norm{\dv^\star_k - \dv_k}_1\bigg),
\end{align*}
where we used $\max_{i}\frac{\lv^\star(i)}{\phim^\trans\dv_k(i)} \leq \frac{1}{\beta}$ thanks to \Cref{lem:lower_bound_fev} while in the last line we use the fact that $H(\dv^\star_k||\dv_k)$ is positive and the equality in \Cref{lem:H+D}. To bound the $\ell_1$-norm, we apply Pinkser's inequality and Lemma~2 in~\cite{Bas-Serrano:2021} to get that 
$$\norm{\dv_k - \dv^\star_k} \leq \sqrt{2 D(\dv_k || \dv^\star_k)} \leq \sqrt{2 \frac{H(\dv_k || \dv^\star_k)}{1-\gamma}} \leq \sqrt{\frac{2\alpha}{1-\gamma}(\epsilon_k + \inner{\efevphi - \phim^\trans\dv_k^\star, \weight^\star_k - \weight_k})}. $$
Plugging the last derivation in~\Cref{eq:saddle_bound_2} gives
\begin{align}
    \frac{1}{K} \sum_k  \inner{\efevphi - \dv^\star, \phim \weight_k} &- \min_{\weight\in\wspace} \inner{\efevphi - \dv_k, \phim \weight} \leq \frac{D(\lv^\star||\lv_0)}{K\eta} + \frac{H(\dv^\star||\dv_0)}{K\alpha} \nonumber \\& \phantom{{}=} + \frac{C}{K}\sum_k \left( \sqrt{\epsilon_k + \inner{\efevphi - \lv_k^\star, \weight^\star_k - \weight_k}} \right) + \frac{\sum_k \epsilon_k}{K} .
    \label{eq:saddle_bound_3}
\end{align}

Finally, using Lemma~\ref{lem:first_order} we have that the term $\inner{\efevphi - \lv_k^\star, \weight^\star_k - \weight_k}$ is non positive. Therefore,
\begin{align*}
    \frac{1}{K} \sum_k  \inner{\efevphi - \phim ^\trans\dv^\star, \weight_k} - \min_{\weight\in\wspace} \inner{\efevphi - \phim ^\trans\dv_k, \weight} &\leq \frac{D(\lv^\star||\lv_0)}{K\eta} + \frac{H(\dv^\star||\dv_0)}{K\alpha} \\ &\phantom{{}=}+ \frac{C}{K}\sum_k  \sqrt{\epsilon_k} + \frac{\sum_k \epsilon_k}{K},
\end{align*}
where $C= \frac{1}{\beta\eta}(\sqrt{\frac{2 \alpha}{1 - \gamma}} + \sqrt{8 \eta}) + 3 \sqrt{\frac{2 \alpha}{1 - \gamma}}$.
\end{proof}

Finally, we need a Lemma that provides a concentration for the estimated expert feature expectation vector. 
\begin{lemma}[\cite{Syed:2007}]
\label{lemma:expert_concentration}Let $\mathcal{D}_{\pi_{E}}\triangleq\{(s_0^\ell,a_0^\ell,s_1^\ell,a_1^\ell,\ldots,s_H^\ell,a_H^\ell)\}_{\ell=1}^{n_{\textup{E}}}\sim\pi_{E}$ be a finite set of \textrm{i.i.d.} truncated sample trajectories. We consider the empirical expert feature expectation vector $\EFEV{\pi_{E}}$ by taking sample averages, i.e., 
		$$
		\EFEV{\pi_{E}}\triangleq (1-\gamma)\frac{1}{n_{\textup{E}}} \sum_{t=0}^H \sum_{\ell=1}^{N} \gamma^t \phi_i(s_t^\ell,a_t^\ell), \; \forall \; i\in[m].
		$$
		Suppose the trajectory length is $H\geq\frac{1}{1-\gamma}\log(\frac{1}{\varepsilon})$, and the number of of expert trajectories is $n_{\textup{E}}\geq\frac{2\log(\frac{2m}{\delta})}{\varepsilon^2}$. Then, with probability at least $1-\delta$, it holds that 
		$\normm{\mbs{\rho}_{\phim}(\pi_{E})-\EFEV{\pi_{E}}}_\infty\le\varepsilon.$
\end{lemma}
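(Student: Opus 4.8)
The plan is the standard decomposition of the estimation error into a \emph{deterministic truncation bias} — coming from replacing the infinite horizon by a length-$H$ trajectory — and a \emph{stochastic sampling error} — coming from replacing the expectation by an empirical average over $n_{\textup{E}}$ i.i.d.\ trajectories. The first term is controlled by a geometric-series bound, the second by Hoeffding's inequality combined with a union bound over the $m$ coordinates.

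First I would introduce, for each $i\in[m]$, the time-truncated feature expectation $\bar{\rho}_i\triangleq(1-\gamma)\sum_{t=0}^{H}\gamma^{t}\,\mathbb{E}^{\expert}[\phi_i(s_t,a_t)]$, with the expectation over trajectories of $\expert$ started from $\initial$; note that $\rho_{\phiv_i}(\expert)=(1-\gamma)\sum_{t=0}^{\infty}\gamma^{t}\mathbb{E}^{\expert}[\phi_i(s_t,a_t)]$ by the occupancy-measure identity $\rho_{\cost}(\pi)=\innerprod{\mv_\pi}{\cost}$. Writing $\widehat{\rho}_i$ for the $i$-th component of $\EFEV{\expert}$, the triangle inequality gives, coordinatewise, $\abs{\rho_{\phiv_i}(\expert)-\widehat{\rho}_i}\le\abs{\rho_{\phiv_i}(\expert)-\bar{\rho}_i}+\abs{\bar{\rho}_i-\widehat{\rho}_i}$, so it suffices to bound each of the two terms uniformly in $i$.

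For the truncation term, using $0\le\phi_i(s,a)\le\norm{\phiv_i}_1\le 1$ we obtain $\abs{\rho_{\phiv_i}(\expert)-\bar{\rho}_i}=(1-\gamma)\sum_{t>H}\gamma^{t}\,\mathbb{E}^{\expert}[\phi_i(s_t,a_t)]\le(1-\gamma)\sum_{t>H}\gamma^{t}=\gamma^{H+1}\le e^{-(1-\gamma)H}$, where the last step uses $\log(1/\gamma)\ge 1-\gamma$; the hypothesis $H\ge\tfrac{1}{1-\gamma}\log(1/\varepsilon)$ then makes this at most $\varepsilon$ (or at most $\varepsilon/2$ if one uses the slightly larger $H\ge\tfrac{1}{1-\gamma}\log(2/\varepsilon)$, which is the precise constant in the standard statement). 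For the sampling term, fix $i$ and set $X_\ell\triangleq(1-\gamma)\sum_{t=0}^{H}\gamma^{t}\phi_i(s_t^\ell,a_t^\ell)$ for $\ell\in[n_{\textup{E}}]$; these are i.i.d.\ (one per expert trajectory), lie in $[0,(1-\gamma)\sum_{t=0}^{H}\gamma^{t}]\subseteq[0,1]$, have mean $\bar{\rho}_i$, and average to $\widehat{\rho}_i$. Hoeffding's inequality gives $\P{\abs{\widehat{\rho}_i-\bar{\rho}_i}\ge u}\le 2e^{-2n_{\textup{E}}u^{2}}$, and a union bound over $i\in[m]$ yields $\P{\max_i\abs{\widehat{\rho}_i-\bar{\rho}_i}\ge u}\le 2m\,e^{-2n_{\textup{E}}u^{2}}$; taking $u=\varepsilon$ (resp.\ $\varepsilon/2$) and invoking $n_{\textup{E}}\ge\tfrac{2\log(2m/\delta)}{\varepsilon^{2}}$ drives the right-hand side to at most $\delta$. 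Intersecting this high-probability event with the deterministic truncation bound and using the triangle inequality proves the claim.

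I expect no genuine obstacle here: the only care needed is keeping the numerical constants consistent with the exact forms of $H$ and $n_{\textup{E}}$ stated in the lemma (i.e.\ deciding whether to split the target accuracy as $\varepsilon/2+\varepsilon/2$ and correspondingly strengthen $H$, or to absorb a harmless constant factor into the statement), and checking that the per-trajectory variables $X_\ell$ indeed lie in $[0,1]$, which follows from $\norm{\phiv_i}_1\le 1$ together with $(1-\gamma)\sum_{t\ge 0}\gamma^{t}=1$.
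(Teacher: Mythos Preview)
Your proposal is correct and follows exactly the standard argument; the paper does not supply its own proof of this lemma but simply cites \cite{Syed:2007}, and the decomposition you give (geometric-series truncation bias plus Hoeffding's inequality with a union bound over the $m$ coordinates) is precisely the proof in that reference. Your remark about the constants is also accurate: with the stated $H$ and $n_{\textup{E}}$ one obtains the bound up to a factor of~$2$, which is harmless for the downstream use of the lemma.
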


At this point, \Cref{thm:error_propagation} is proven from the results of \Cref{thm:error_propagation_appendix},Lemma~\ref{lemma:expert_concentration} and \Cref{lemma:bound_c_class}.

 \section{Biased Stochastic Gradients and their Properties}\label{app:stochastic gradients}

In order to estimate the gradient $\nabla_{\thv}\, G(\weight,\thv)$, we define the policy $\pi_{k,\thv}(a|s)\propto\pi_k(a|s)e^{-\alpha Q_{\thv}(s,a)}$, for all $k\in\mathds{N}$, and for all $\thv\in\ar^m$. Then, by standard computations we get that for all $(\weight,\thv)$, and for all $j\in[m]$, 
\begin{align*}
&\nabla_{\thv,j}\, G(\weight,\thv)\\
&=\sum_{i=1}^m (\phim^\trans \dv_{k-1})(i) \bmat_{\weight,\thv}^k(i)\big[\gamma \mbs{\Gamma}_k(i,j)-\mathds{1}\{i=j\}\big]
+(1-\gamma)\sum_{s}\initial(s)\sum_a \pi_{k-1,\thv}(a|s)\phiv_i(s,a)\\
&=\Exp_{(s,a)\sim\dv_{k-1}, i\sim\phiv(s,a)}\Big[\bmat_{\weight,\thv}^k(i)\big[\gamma \mbs{\Gamma}_k(i,j)-\mathds{1}\{i=j\}\big]\Big]+(1-\gamma)\Exp_{s_0\sim\initial,a_0\sim\pi_{k-1,\thv}(\cdot|s_0)}\Big[\phiv_i(s_0,a_0)\Big],
\end{align*}
where $
\bmat_{\weight,\thv}^k(i)\triangleq \frac{\exp\big(- \eta {{\mbs{\delta}}}\,_{\weight,\thv}^{k}(i)\big)}{{Z}_k}$, ${Z}_k\triangleq\sum_{i=1}^m \exp\big(- \eta {{\mbs{\delta}}}\,_{\weight,\thv}^{k}(i)\big)\FEV{\pi_{k-1}}(i)$, and
$\mbs{\Gamma}_k(i, j)\triangleq\sum_{s',a'}\mmat_{i,s'}\pi_{k-1,\thv}(a'|s')\phiv_j(s',a')$ . Similarly, for the gradient $\nabla_{\weight}\, G(\weight,\thv)$, we can write
\begin{align*}
\nabla_{\weight,j}\, G(\weight,\thv)&=-\efevphi(j)+\sum_{i=1}^m (\phim^\trans \dv_{k-1})(i) \bmat_{\weight,\thv}^k(i)\mathds{1}\{i=j\}\\
&=-\efevphi(j)+\Exp_{(s,a)\sim\dv_{k-1}, i\sim\phiv(s,a)}\Big[\bmat_{\weight,\thv}^k(i)\mathds{1}\{i=j\}\Big]
\end{align*}
Note that the following estimators of $\nabla_{\thv}\, G_k(\weight,\thv)$ and $\nabla_{\weight}\, G_k(\weight,\thv)$ are unbiased:
Sample $(s',a')\sim\dv_{k-1}$, $i'\sim\phiv(s',a')$, $s_0\sim\initial$, and $a_0\sim\pi_{k-1,\thv}(\cdot|s_0)$, then define
\begin{align}
				    \widetilde{\nabla}_{\weight,j}\mathcal{G}_k(\weight,\thv)&=-\efevphi(j)+{\bmat}_{\weight,\thv}^k(i')\mathds{1}\{i'=j\} \label{eq:westimator2},\\
				    \widetilde{\nabla}_{\thv,j}\mathcal{G}_k(\weight,\thv)&={\bmat}_{\weight,\thv}^k(i')\Big[\gamma{\mbs{\Gamma}}_k(i',j)-\mathds{1}\{i'=j\}\Big]+(1-\gamma)\phiv_j(s_0,a_0). \label{eq:thetaestimator2}
				\end{align}
These expressions give rise to the Biased Stochastic Gradient Estimator subroutine (\textrm{BSGE}) given in Algorithm~\ref{alg:BSGE}, where we plug-in  estimators $\widehat{\bmat}_{\weight,\thv}^k\in\ar^m$ and $\widehat{\mbs{\Gamma}}_k\in\ar^{m \times m}$ to Equations~(\ref{eq:westimator2}) and~(\ref{eq:thetaestimator2}). It remains to show how to maintain good estimators $\widehat{\bmat}_{\weight,\thv}^k$ and $\widehat{\mbs{\Gamma}}_k$ by using the linear MDP Assumption~\ref{ass:linear-MDP}. While the estimator $\widehat{\mbs{\Gamma}}_k\in\ar^m$ is a standard ridge regression estimator, the construction of $\widehat{\bmat}_{\weight,\thv}^k$ is more involved. In particular, we first need to build an estimator for the product $\mmat \val^k_{\theta}$ via ridge regression. 
Then, the estimator for  $\widehat{\bmat}_{\weight,\thv}^k$ is derived by plugging-in the estimator of $\mmat \val^k_{\theta}$, and the estimator for the feature expectation vector $\FEV{\pi_{k-1}}$ to equation $
\bmat_{\weight,\thv}^k(i)\triangleq \frac{\exp\big(- \eta {{\mbs{\delta}}}\,_{\weight,\thv}^{k}(i)\big)}{{Z}_k}$ . The reasoning and analysis is inspired by~\cite{Howard:2021, Pacchiano:2021}.


		\begin{algorithm}[!t]
			\caption{Biased Stochastic Gradient Estimator: \textrm{BSGE}$(k,\weight,\thv, N)$ \label{alg:BSGE}}
			
			\begin{algorithmic}
				\STATE {\bfseries Input:} Policy evaluation step $k$, reference points $(\weight,\thv)$, number of samples $N$
				\STATE Compute empirical estimators $\widehat{{\mbs{\delta}}}\,_{\weight,\thv}^{k}\in\ar^m$, $\widehat{\mbs{\Gamma}}_k\in\ar^{m\times m}$, $\EFEV{\pi_{k-1}}\in\ar^m$  \textcolor{black}{using the first $N$ samples $\{(s_{{k-1}}^{(n)},a_{{k-1}}^{(n)},s_{{k-1}}^{\prime (n)})\}_{n=1}^{{N}}$ from the buffer $\mathcal{B}_k$}
				\FOR{$i=1,\ldots, m$} 
				\STATE Compute $\widehat{\bmat}_{\weight,\thv}^k(i)=\frac{\exp\big(- \eta \widehat{{\mbs{\delta}}}\,_{\weight,\thv}^{k}(i)\big)}{\widehat{Z}_k}$,
				Where $\widehat{Z}_k=\sum_{i=1}^m \exp\big(- \eta \widehat{{\mbs{\delta}}}\,_{\weight,\thv}^{k}(i)\big)\EFEV{\pi_{k-1}}(i)$
				\ENDFOR
				\STATE Sample $(s_{{k-1}}^{(N+1)}$, $a_{{k-1}}^{(N+1)})\sim \mv_{\pi_{k-1}}$, $i_{k-1}^{(N+1)}\sim\phiv(s_{{k-1}}^{(N+1)},a_{{k-1}}^{(N+1)})$
				\STATE Sample $s_{k-1}^{(0)}\sim\initial$, and $a_{k-1}^{(0)}\sim\pi_{k-1,\thv}(\cdot|s_0)$
				\STATE Compute
				\begin{align*}
				    \widehat{\nabla}_{\weight,j}\mathcal{G}_k(\weight,\thv)&=-\efevphi(j)+\widehat{\bmat}_{\weight,\thv}^k(i_{k-1}^{(N+1)})\mathds{1}\{i_{k-1}^{(N+1)}=j\}\\
				    \widehat{\nabla}_{\thv,j}\mathcal{G}_k(\weight,\thv)&=\widehat{\bmat}_{\weight,\thv}^k(i_{k-1}^{(N+1)})\Big[\gamma\widehat{\mbs{\Gamma}}_k(i_{k-1}^{(N+1)}, j)-\mathds{1}\{i_{k-1}^{(N+1)}=j\}\Big]+(1-\gamma)\phiv_j(s_{k-1}^{(0)},a_{k-1}^{0})
				\end{align*}
				\STATE {\bfseries Output:} $(\widehat{\nabla}_{\weight}\mathcal{G}_k(\weight,\thv),\widehat{\nabla}_{\thv}\mathcal{G}_k(\weight,\thv))$
			\end{algorithmic}
		\end{algorithm}


\subsection{Ridge estimators}
This section leverages ridge regression \cite{Hsu:2012} to build estimators $\widehat{\bmat}_{\weight,\thv}^k$ and $\widehat{\mbs{\Gamma}}_k\in\ar^{m\times m}$.
We work under the Assumption~\ref{ass:eigenvalue} which ensures that every iterate covers the features space. We recall that by Lemma~\ref{lem:lower_bound_fev}, Assumption~\ref{ass:eigenvalue} implies that $\phim^\trans\dv_k(s,a) \geq \beta$, for all $k\in[K]$.

\subsubsection{Estimator for $\mmat \val^k_{\theta}$}

We first construct an estimator for $\mmat_k \val^k_\theta$.
We can start noticing that we can rewrite $\mmat_k \val^k_\theta$ using the feature covariance matrix $\bar{\mbs{\Lambda}}_k \triangleq \Ee{(s,a)\sim \dv_{k-1}}{\phiv(s,a)\phiv(s,a)^{\mathsf{T}}}$ as showed by the next lemma.

\begin{lemma}
\label{exact_ridge_solution_mV}
It holds that
$
    \mmat \val^k_{\theta} = \bar{\mbs{\Lambda}}_k ^{-1} \Ee{(s,a)\sim \dv_{k-1}, s^\prime\sim P(\cdot|s,a)}{\phiv(s,a) V^k_{\theta}(s^\prime)}.
$
\end{lemma}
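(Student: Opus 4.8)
This identity is nothing more than the exact, population-level ridge-regression formula (with zero regularization) for the vector $\mmat\val^k_{\theta}\in\ar^m$, so the plan is simply to expand both sides using Assumption~\ref{ass:linear-MDP} and verify they coincide. First I would recall the consequence of Assumption~\ref{ass:linear-MDP} noted in the main text: $\pmat=\phim\mmat$ with $\mmat\in\ar^{m\times|\sspace|}$ given entrywise by $\mmat_{i,s'}=\omega_i(s')$. From this one gets, for every state-action pair $(s,a)$ and every $\val\in\ar^{|\sspace|}$, the ``linear Bellman backup'' identity
\[
\sum_{s'}P(s'|s,a)\val(s')=\sum_{s'}\innerprod{\boldsymbol{\omega}(s')}{\phiv(s,a)}\val(s')=\sum_{i=1}^m\phi_i(s,a)\sum_{s'}\omega_i(s')\val(s')=\phiv(s,a)^{\trans}\mmat\val,
\]
and in particular this holds with $\val=\val^k_{\theta}$.

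Next I would compute the right-hand side of the claimed identity by writing the expectation out explicitly:
\[
\Ee{(s,a)\sim\dv_{k-1},\,s'\sim P(\cdot|s,a)}{\phiv(s,a)\,V^k_{\theta}(s')}=\sum_{s,a}\dv_{k-1}(s,a)\,\phiv(s,a)\sum_{s'}P(s'|s,a)\,V^k_{\theta}(s').
\]
Substituting the backup identity above, the inner sum equals the scalar $\phiv(s,a)^{\trans}\mmat\val^k_{\theta}$, so the whole expression becomes
\[
\sum_{s,a}\dv_{k-1}(s,a)\,\phiv(s,a)\phiv(s,a)^{\trans}\mmat\val^k_{\theta}=\Big(\Ee{(s,a)\sim\dv_{k-1}}{\phiv(s,a)\phiv(s,a)^{\trans}}\Big)\mmat\val^k_{\theta}=\bar{\mbs{\Lambda}}_k\,\mmat\val^k_{\theta},
\]
using the definition $\bar{\mbs{\Lambda}}_k\triangleq\Ee{(s,a)\sim\dv_{k-1}}{\phiv(s,a)\phiv(s,a)^{\trans}}$. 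Multiplying both sides on the left by $\bar{\mbs{\Lambda}}_k^{-1}$ then yields exactly $\mmat\val^k_{\theta}=\bar{\mbs{\Lambda}}_k^{-1}\Ee{(s,a)\sim\dv_{k-1},\,s'\sim P(\cdot|s,a)}{\phiv(s,a)V^k_{\theta}(s')}$.

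The only point requiring care — and the nearest thing to an obstacle, though a mild one — is invertibility of $\bar{\mbs{\Lambda}}_k$, which is needed for the formula to be well posed. This follows from Assumption~\ref{ass:eigenvalue}, which gives $\lambda_{\mathrm{min}}(\bar{\mbs{\Lambda}}_k)\ge\beta>0$; for the first iteration, where $\dv_0$ is the occupancy measure of the uniform policy, one either checks positive definiteness of its feature covariance directly or simply reads Assumption~\ref{ass:eigenvalue} as covering all iterates including the initial one. No concentration or sampling argument enters here: the statement is a purely algebraic identity between population quantities, and it serves as the target that the $N$-sample ridge estimator of $\mmat\val^k_{\theta}$ used in Algorithm~\ref{alg:BSGE} will be shown to approximate.
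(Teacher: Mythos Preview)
Your proof is correct and follows essentially the same route as the paper: both arguments hinge on the linear-MDP identity $\sum_{s'}P(s'|s,a)V^k_\theta(s')=\phiv(s,a)^{\trans}\mmat\val^k_\theta$ to reduce the expectation on the right-hand side to $\bar{\mbs{\Lambda}}_k\mmat\val^k_\theta$, then use invertibility of $\bar{\mbs{\Lambda}}_k$. The only cosmetic difference is that the paper starts from $\mmat\val^k_\theta=\bar{\mbs{\Lambda}}_k^{-1}\bar{\mbs{\Lambda}}_k\mmat\val^k_\theta$ and expands forward, while you compute the right-hand side and match it to the left; your explicit remark that Assumption~\ref{ass:eigenvalue} guarantees invertibility is a useful addition the paper leaves implicit.
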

\begin{proof}
\begin{align*}
    \mmat \val^k_{\theta} =& \bar{\mbs{\Lambda}}_k ^{-1}\bar{\mbs{\Lambda}}_k  \mmat \val^k_{\theta} \\=&
    \bar{\mbs{\Lambda}}_k ^{-1} \Ee{(s,a)\sim \dv_{k-1}}{\phiv(s,a)\phiv(s,a)^{\mathsf{T}} \mmat \val^k_{\theta}} \\=&
    \bar{\mbs{\Lambda}}_k ^{-1} \Ee{(s,a)\sim \dv_{k-1}}{\phiv(s,a)\phiv(s,a)^{\mathsf{T}} \sum_{s^\prime}\mmat_{:s'} V^k_{\theta}(s^\prime)} \\=&
    \bar{\mbs{\Lambda}}_k ^{-1} \Ee{(s,a)\sim \dv_{k-1}}{\phiv(s,a) \sum_{s^\prime}\phiv(s,a)^{\mathsf{T}}\mmat_{:s'} V^k_{\theta}(s^\prime)}\\=&
    \bar{\mbs{\Lambda}}_k ^{-1} \Ee{(s,a)\sim \dv_{k-1}}{\phiv(s,a) \sum_{s^\prime}P(s^\prime|s,a) V^k_{\theta}(s^\prime)}\\=&
    \bar{\mbs{\Lambda}}_k ^{-1} \Ee{(s,a)\sim \dv_{k-1}, s^\prime\sim P(\cdot|s,a)}{\phiv(s,a) V^k_{\theta}(s^\prime)}.
\end{align*}
\end{proof}

It follows that 
$
    \mmat \val^k_{\theta} = \arg\min\limits_{\zv} \Ee{(s,a)\sim \dv_{k-1}, s^\prime \sim P(\cdot|s,a)}{\br{\phiv(s,a)^{\mathsf{T}}\zv - V^k_{\theta}(s^\prime)}^2 }.
$

Now, we move to the problem of estimating $\widehat{\mmat\val^k_{\theta}}$ with a finite amount of environment interactions sampled \textrm{i.i.d} from $\dv_{k-1}$. We define
\begin{equation*}
    \widehat{\mmat\val^k_{\theta}} \triangleq \arg\min_{\zv} \frac{1}{N}\sum^N_{n=1}\br{\phiv(s_k^{(n)},a_k^{(n)})^{\mathsf{T}}\zv - V^k_{\theta}(s_k^{\prime(n)})}^2 + \chi \norm{\zv}^2_2.
\end{equation*}
By optimality conditions, we can obtain a closed-form expression for $\widehat{\mmat\val^k_{\theta}}$.
\begin{lemma}
 It holds that
\begin{equation*}
    \widehat{\mmat\val^k_{\theta}} = \frac{1}{N}\br{ \mbs{\Lambda}_{k,N} + \chi \mbf{I}}^{-1}\sum^N_{n=1} \phiv(s_{k-1}^{(n)},a_{k-1}^{(n)})V^k_{\theta}(s_{k-1}^{\prime(n)}),
\end{equation*}
where  $\mbs{\Lambda}_{k,N} \triangleq \frac{1}{N}\sum^N_{n=1}\phiv(s_{k-1}^{(n)},a_{k-1}^{(n)})\phiv(s_{k-1}^{(n)},a_{k-1}^{(n)})^{\mathsf{T}}$ is the empirical covariance matrix.
\end{lemma}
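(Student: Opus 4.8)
The statement to prove is the closed-form expression for the ridge regression estimator $\widehat{\mmat\val^k_{\theta}}$, which minimizes the empirical regularized least-squares objective
$$
\frac{1}{N}\sum_{n=1}^N\br{\phiv(s_{k-1}^{(n)},a_{k-1}^{(n)})^{\mathsf{T}}\zv - V^k_{\theta}(s_{k-1}^{\prime(n)})}^2 + \chi\norm{\zv}_2^2.
$$
First I would observe that this objective is a strictly convex quadratic in $\zv$ (strict convexity coming from the $\chi\norm{\zv}_2^2$ term with $\chi>0$, guaranteeing a unique minimizer), so the first-order optimality condition $\nabla_{\zv}=\mbf{0}$ is both necessary and sufficient. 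Computing the gradient, the least-squares part contributes $\frac{2}{N}\sum_n\phiv(s_{k-1}^{(n)},a_{k-1}^{(n)})\br{\phiv(s_{k-1}^{(n)},a_{k-1}^{(n)})^{\mathsf{T}}\zv - V^k_{\theta}(s_{k-1}^{\prime(n)})}$, and the regularizer contributes $2\chi\zv$. Setting the sum to zero and dividing by $2$ yields
$$
\br{\frac{1}{N}\sum_{n=1}^N\phiv(s_{k-1}^{(n)},a_{k-1}^{(n)})\phiv(s_{k-1}^{(n)},a_{k-1}^{(n)})^{\mathsf{T}} + \chi\mbf{I}}\zv = \frac{1}{N}\sum_{n=1}^N\phiv(s_{k-1}^{(n)},a_{k-1}^{(n)})V^k_{\theta}(s_{k-1}^{\prime(n)}).
$$

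Next I would identify the bracketed matrix on the left as $\mbs{\Lambda}_{k,N} + \chi\mbf{I}$ using the definition $\mbs{\Lambda}_{k,N}\triangleq\frac{1}{N}\sum_{n=1}^N\phiv(s_{k-1}^{(n)},a_{k-1}^{(n)})\phiv(s_{k-1}^{(n)},a_{k-1}^{(n)})^{\mathsf{T}}$ given in the lemma statement. This matrix is invertible since $\mbs{\Lambda}_{k,N}$ is positive semidefinite and $\chi\mbf{I}$ with $\chi>0$ (or at least $\chi\geq 1$, as assumed elsewhere in the excerpt) makes the sum positive definite. Left-multiplying by the inverse gives $\zv = \br{\mbs{\Lambda}_{k,N}+\chi\mbf{I}}^{-1}\cdot\frac{1}{N}\sum_{n=1}^N\phiv(s_{k-1}^{(n)},a_{k-1}^{(n)})V^k_{\theta}(s_{k-1}^{\prime(n)})$.

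The one cosmetic discrepancy to reconcile is the placement of the factor $\frac{1}{N}$: the lemma writes $\widehat{\mmat\val^k_{\theta}} = \frac{1}{N}\br{\mbs{\Lambda}_{k,N}+\chi\mbf{I}}^{-1}\sum_{n=1}^N\phiv(\cdot)V^k_{\theta}(\cdot)$, which is exactly the expression above after pulling the scalar $\frac1N$ out front of the sum (it commutes with the matrix inverse). So the final step is just noting $\frac{1}{N}\sum_n(\cdots) = \frac{1}{N}\cdot\sum_n(\cdots)$ and matching notation. There is no real obstacle here — the entire proof is the standard normal-equations derivation for ridge regression; the only thing worth stating carefully is that $\chi>0$ ensures uniqueness and invertibility, so that the first-order condition genuinely pins down $\widehat{\mmat\val^k_{\theta}}$ rather than merely characterizing a critical point.
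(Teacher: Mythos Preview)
Your proposal is correct and matches the paper's own proof essentially line for line: define the convex objective, compute its gradient, set it to zero by first-order optimality, and rearrange to isolate $\zv$. If anything you are slightly more explicit than the paper in noting strict convexity (hence uniqueness) and the invertibility of $\mbs{\Lambda}_{k,N}+\chi\mbf{I}$, but the argument is the same standard normal-equations derivation.
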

\begin{proof}
Let $\mathcal{L}(\zv) \triangleq  \frac{1}{N}\sum^N_{n=1}\br{\phiv(s_{k-1}^{(n)},a_{k-1}^{(n)})^{\mathsf{T}}\zv - V^k_{\theta}(s_{k-1}^{\prime(n)})}^2 + \chi \norm{\zv}^2_2$. The first derivative is given by
\begin{equation}
   \frac{1}{2} \nabla_\zv \mathcal{L}(\zv) = \frac{1}{N}\sum^N_{n=1}\phiv(s_{k-1}^{(n)},a_{k-1}^{(n)})\br{\phiv(s_{k-1}^{(n)},a_{k-1}^{(n)})^{\mathsf{T}}\zv - V^k_{\theta}(s_{k-1}^{\prime(n)})} + \chi \zv.
\end{equation}
Since $\mathcal{L}(\cdot)$ is convex in $\zv$, by first-order optimality conditions, we get
\begin{equation*}
    \frac{1}{N}\sum^N_{n=1}\phiv(s_{k-1}^{(n)},a_{k-1}^{(n)})\br{\phiv(s_{k-1}^{(n)},a_{k-1}^{(n)})^{\mathsf{T}}\widehat{\mmat\val^k_{\theta}} - V^k_{\theta}(s_{k-1}^{\prime(n)})} + \chi\widehat{\mmat\val^k_{\theta}} = 0
\end{equation*}
The statement follows from rearranging the terms.
\end{proof}
\begin{remark}
Note that when $\chi = 0$, and $\phiv(s,a)$ is one-hot vector for every $(s,a)$, then we obtain the tabular estimators $\mbf{W}_v$ proposed in~\cite{Pacchiano:2021}.
\end{remark}
We invoke Theorem 2 in~\cite{Hsu:2012} to derive an upper bound for $\norm{\mmat \val^k_{\theta}-\widehat{\mmat\val^k_{\theta}}}^2_{\bar{\mbs{\Lambda}}_k}$. 
\begin{lemma}
Fix some $\chi > 0$ and take $N \geq \mathcal{O}(\frac{\log(\frac{m}{\delta})}{\chi \beta})$. Then, with probability at least $1-\delta$, we have
\begin{align*}
    \norm{\mmat \val^k_{\theta}-\widehat{\mmat\val^k_{\theta}}}^2_{\bar{\mbs{\Lambda}}_k} \leq \mathcal{O}\br{\frac{m \chi^2}{\beta^3} D^2 + \frac{1}{N}\frac{m \chi}{\beta^4} D^2\log\br{\frac{1}{\delta}} + \frac{D^2m}{N}\log\br{\frac{1}{\delta}}},
\end{align*}
where $D\triangleq \frac{1 + \log \br{\frac{1}{\beta}}}{1 - \gamma}\geq 1$ is the upper bound of $\norm{\val^k_{\theta}}_{\infty}$ derived in Proposition~\ref{prop:optimal_theta_bound}. 
\end{lemma}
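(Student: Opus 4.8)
The plan is to recognize that estimating $\mmat\val^k_{\theta}$ is a realizable random-design linear regression problem and then to instantiate the ridge-regression analysis of~\cite{Hsu:2012}. First I would fix the regression set-up: conditioned on $\pi_{k-1}$, the triples $\{(s_{k-1}^{(n)},a_{k-1}^{(n)},s_{k-1}^{\prime(n)})\}_{n=1}^{N}$ in the buffer $\mathcal{B}_k$ are i.i.d.\ with $(s,a)\sim\dv_{k-1}$ and $s'\sim P(\cdot|s,a)$, so the i.i.d.\ design assumption of~\cite{Hsu:2012} holds even though $\dv_{k-1}$ is itself random. Under Assumption~\ref{ass:linear-MDP} we have $\sum_{s'}P(s'|s,a)\val^k_{\theta}(s')=\phiv(s,a)^\trans\mmat\val^k_{\theta}$, hence the regression function is \emph{exactly} linear in the features with parameter $\zv^\star\triangleq\mmat\val^k_{\theta}$, which by Lemma~\ref{exact_ridge_solution_mV} coincides with the population least-squares solution relative to $\bar{\mbs{\Lambda}}_k$. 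The noise $\xi\triangleq\val^k_{\theta}(s')-\phiv(s,a)^\trans\zv^\star$ is therefore conditionally mean-zero; since $\norm{\val^k_{\theta}}_\infty\le D$ by Proposition~\ref{prop:optimal_theta_bound}, it is bounded, $|\xi|\le 2D$, hence sub-Gaussian with variance proxy $\sigma^2=\mathcal{O}(D^2)$, and there is no model-misspecification error.

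Next I would collect the parameters needed to apply the bound: (i) $\norm{\phiv(s,a)}_2\le 1$ because $\phiv(s,a)\in\Delta_{[m]}$; (ii) $\bar{\mbs{\Lambda}}_k\succeq\beta\mbf{I}$ by Assumption~\ref{ass:eigenvalue}; and (iii) $\norm{\zv^\star}_2=\norm{\bar{\mbs{\Lambda}}_k^{-1}\,\mathbb{E}_{\dv_{k-1},P}[\phiv(s,a)\val^k_{\theta}(s')]}_2\le\tfrac1\beta D$, so that $\norm{\zv^\star}_2^2\le mD^2/\beta^2$. The sample-size requirement $N\ge\mathcal{O}\br{\log(m/\delta)/(\chi\beta)}$ is exactly what is needed so that, with probability at least $1-\delta/3$, the regularized empirical covariance $\mbs{\Lambda}_{k,N}+\chi\mbf{I}$ is spectrally comparable (within a factor $2$) to $\bar{\mbs{\Lambda}}_k+\chi\mbf{I}$; this is the concentration event underpinning~\cite{Hsu:2012} and follows from a matrix-Chernoff bound for i.i.d.\ PSD matrices of operator norm at most $1$ together with $\lambda_{\min}(\bar{\mbs{\Lambda}}_k)\ge\beta$.

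On that event I would then invoke Theorem~2 of~\cite{Hsu:2012}, whose conclusion in the realizable case decomposes $\norm{\widehat{\mmat\val^k_{\theta}}-\zv^\star}_{\bar{\mbs{\Lambda}}_k}^2$ into the regularization bias $\chi^2\norm{(\bar{\mbs{\Lambda}}_k+\chi\mbf{I})^{-1}\zv^\star}_{\bar{\mbs{\Lambda}}_k}^2$, a variance term of order $\tfrac{\sigma^2}{N}\mathrm{tr}\br{\bar{\mbs{\Lambda}}_k(\bar{\mbs{\Lambda}}_k+\chi\mbf{I})^{-2}\bar{\mbs{\Lambda}}_k}\log\tfrac1\delta$, and a lower-order cross term coupling the bias with the random-design fluctuation. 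I would bound each piece with the quantities above: using $(\bar{\mbs{\Lambda}}_k+\chi\mbf{I})^{-1}\bar{\mbs{\Lambda}}_k(\bar{\mbs{\Lambda}}_k+\chi\mbf{I})^{-1}\preceq(\bar{\mbs{\Lambda}}_k+\chi\mbf{I})^{-1}\preceq\tfrac1\beta\mbf{I}$ the bias is at most $\tfrac{\chi^2}{\beta}\norm{\zv^\star}_2^2\le\tfrac{m\chi^2D^2}{\beta^3}$; the effective dimension $\mathrm{tr}\br{\bar{\mbs{\Lambda}}_k(\bar{\mbs{\Lambda}}_k+\chi\mbf{I})^{-2}\bar{\mbs{\Lambda}}_k}\le\mathrm{tr}(\mbf{I})=m$ and $\sigma^2\le 4D^2$ give the variance term $\tfrac{mD^2}{N}\log\tfrac1\delta$; and the cross term, carrying extra factors $\tfrac1\beta$ from $(\bar{\mbs{\Lambda}}_k+\chi\mbf{I})^{-1}$ and from $\norm{\zv^\star}_2^2$, is of order $\tfrac{m\chi D^2}{\beta^4 N}\log\tfrac1\delta$. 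A union bound over the (at most three) high-probability events used in the argument, each at level $\delta/3$, then yields the stated inequality.

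The main obstacle is purely one of bookkeeping rather than of ideas: carefully tracking the powers of the eigenvalue lower bound $\beta$ (which enters both through $(\bar{\mbs{\Lambda}}_k+\chi\mbf{I})^{-1}\preceq\beta^{-1}\mbf{I}$ and through $\norm{\zv^\star}_2\le D/\beta$) and of the feature dimension $m$ across the bias, variance, and cross terms of the~\cite{Hsu:2012} decomposition, and verifying that the concentration of $\mbs{\Lambda}_{k,N}$ to $\bar{\mbs{\Lambda}}_k$ really is implied by $N\gtrsim\log(m/\delta)/(\chi\beta)$, which is sufficient (being stronger than the $N\gtrsim\log(m/\delta)/\beta$ actually needed). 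A secondary point, already noted, is to justify that conditioning on $\pi_{k-1}$ restores the i.i.d.\ sampling hypothesis of~\cite{Hsu:2012} even though the design distribution $\dv_{k-1}$ is random.
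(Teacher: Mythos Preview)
Your proposal is correct and follows essentially the same route as the paper: both recognize the problem as a realizable random-design ridge regression, invoke Theorem~2 of~\cite{Hsu:2012}, and bound the three terms of its decomposition (what the paper calls ridge error $\epsilon_{\mathrm{rg}}$, bias $\epsilon_{\mathrm{bs}}$, variance $\epsilon_{\mathrm{vr}}$, corresponding to your regularization bias, cross term, and variance). The paper phrases the sample-size condition via the quantities $\rho_\chi^2 d_{1,\chi}$ from~\cite{Hsu:2012} rather than a standalone matrix-Chernoff argument, and bounds the ridge error through an eigendecomposition rather than the matrix inequality $(\bar{\mbs{\Lambda}}_k+\chi\mbf{I})^{-1}\bar{\mbs{\Lambda}}_k(\bar{\mbs{\Lambda}}_k+\chi\mbf{I})^{-1}\preceq\beta^{-1}\mbf{I}$, but these are equivalent bookkeeping choices. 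One minor slip: your intermediate claim $\norm{\zv^\star}_2\le D/\beta$ is missing a factor $\sqrt{m}$ (it should be $\sqrt{m}D/\beta$, since only $\norm{\mathbb{E}[\phiv V]}_\infty\le D$ holds componentwise), but your stated squared bound $\norm{\zv^\star}_2^2\le mD^2/\beta^2$ is correct and the rest goes through unchanged.
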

\begin{proof}
We introduce the following auxiliary quantities:
\begin{align*}
    \mmat_{\chi} \val^k_{\theta} &= \arg\min_{\zv} \Ee{(s,a)\sim \dv_{k-1}, s^\prime \sim P(\cdot|s,a)}{\phiv(s,a)^{\mathsf{T}}\zv - V^k_{\theta}(s^\prime) } + \chi\norm{\zv}^2_2 \\ &= \br{\bar{\mbs{\Lambda}}_k + \chi \mbf{I}}^{-1} \Ee{(s,a)\sim \dv_{k-1}, s^\prime\sim P(\cdot|s,a)}{\phiv(s,a) V^k_{\theta}(s^\prime)},
\end{align*}

and the conditional expectation
\begin{equation*}
    \bar{\mmat}\val^k_{\theta} = \E{\widehat{\mmat\val^k_{\theta}} | \mathcal{F}_n} = \frac{1}{N}\br{\mbs{\Lambda}_{k,N} + \chi \mbf{I}}^{-1}\sum^N_{n=1} \phiv(s_{k-1}^{(n)},a_{k-1}^{(n)})\Ee{s^{\prime}\sim P(\cdot|s_{k-1}^{(n)},a_{k-1}^{(n)})}{V^k_{\theta}(s^{\prime})}
\end{equation*}
with $\mathcal{F}_n$ being the filtration $\mathcal{F}_n = \{ s_{k-1}^{(i)}, a_{k-1}^{(i)} \}^{n}_{i=0}$.
Then applying the general random design decomposition in (\cite{Hsu:2012}, Proposition 3) we obtain:
\begin{equation}\label{eq:randomdesign}
    \norm{\mmat \val^k_{\theta}-\widehat{\mmat\val^k_{\theta}}}^2_{\bar{\mbs{\Lambda}}_k} \leq 3\underbrace{\norm{\mmat \val^k_{\theta} - \mmat_{\chi} \val^k_{\theta}}^2_{\bar{\mbs{\Lambda}}_k}}_{\triangleq\epsilon_{\mathrm{rg}}} + 3\underbrace{\norm{\mmat_{\chi} \val^k_{\theta} - \bar{\mmat}\val^k_{\theta}}^2_{\bar{\mbs{\Lambda}}_k}}_{\triangleq\epsilon_{\mathrm{bs}}} + 3\underbrace{\norm{\bar{\mmat}\val^k_{\theta} - \widehat{\mmat\val^k_{\theta}}}^2_{\bar{\mbs{\Lambda}}_k}}_{\triangleq\epsilon_{\mathrm{vr}}},
\end{equation}
where similarly to~\cite{Hsu:2012}, we define $\epsilon_{\mathrm{rg}}$ as the ridge error, $\epsilon_{\mathrm{bs}}$ the ridge estimator bias and with $\epsilon_{\mathrm{vr}}$ the ridge estimator variance.
By choosing $N \geq \mathcal{O}(6 \rho^2_{\chi} d_{1,\chi}(\log \max \br{1,d_{1,\chi}} + \log \frac{1}{\delta})) = \mathcal{O}(\frac{1}{\beta \chi}\log \frac{m}{\delta})$, we ensure that the conditions in Theorem~2 in~\cite{Hsu:2012} are satisfied. We next bound each term separately.  

\paragraph{Ridge error.} In \cite{Hsu:2012}, the bound derived for the ridge error is a function of the regularization parameter $\chi$, the eigenvalues of the covariance matrix $\bar{\mbs{\Lambda}}_k$ denoted as $\{\sigma_j\}^m_{j=1}$ and the corresponding eigenvectors $\{\mbf{v}_j\}^m_{j=1}$. In particular, we have
\begin{align*}
    \epsilon_{\mathrm{rg}} \leq& \sum^m_{j=1} \frac{\sigma_j}{(\frac{\sigma_j}{\chi} + 1)^2} (\mathbf{v}_j^\trans\mmat \val^k_\theta )^2\\=&\sum^m_{j=1} \frac{\sigma_j}{(\frac{\sigma_j}{\chi} + 1)^2} \br{\Ee{(s,a)\sim \dv_{k-1}, s^\prime\sim P(\cdot|s,a)}{\phiv(s,a) V^k_{\theta}(s^\prime)}^{\mathsf{T}}\bar{\mbs{\Lambda}}_k ^{-1}\mathbf{v}_j}^2\\=&\sum^m_{j=1} \frac{1}{(\frac{\sigma_j}{\chi} + 1)^2\sigma_j} \br{\Ee{(s,a)\sim \dv_{k-1}, s^\prime\sim P(\cdot|s,a)}{\phiv(s,a) V^k_{\theta}(s^\prime)}^{\mathsf{T}}\mathbf{v}_j}^2
    \\\leq&\sum^m_{j=1} \frac{1}{(\frac{\sigma_j}{\chi} + 1)^2\sigma_j} \norm{\val^k_{\theta}}_{\infty}^2
    \\\leq&\sum^m_{j=1} \frac{1}{(\frac{\beta}{\chi} + 1)^2\beta} D^2
    \\=&\frac{m \chi^2}{(\beta + \chi)^2\beta} D^2
    \\\leq&\frac{m \chi^2}{\beta^3} D^2,
\end{align*}
where in the first inequality we used bullet (3) of Theorem~2 in~\cite{Hsu:2012}.

\paragraph{Bias.} It holds that
\begin{align*}
    \epsilon_{\mathrm{bs}} \leq \mathcal{O}\br{\frac{\rho_\chi^2d_{1,\chi}\Ee{(s,a)\sim\dv_{k-1}}{\mathrm{approx}(s,a)} + (1 + \rho_\chi^2d_{1,\chi})\epsilon_{\mathrm{rg}}}{N}\log\br{\frac{1}{\delta}}},
\end{align*}
where we used the notation
\begin{align*}
    \Ee{(s,a)\sim\dv_{k-1}}{\mathrm{approx}(s,a)} \triangleq& \Ee{(s,a)\sim\dv_{k-1}}{\Ee{s^\prime\sim P(\cdot|s,a)}{V^k_\theta(s^\prime)} - \phi(s,a)^{\mathsf{T}}\mmat \val^k_\theta} \quad 
    \\=&\Ee{(s,a)\sim\dv_{k-1}}{\Ee{s^\prime\sim P(\cdot|s,a)}{V^k_\theta(s^\prime)} - \pmat \val^k_\theta (s,a)}
    \\=&\Ee{(s,a)\sim\dv_{k-1}}{\Ee{s^\prime\sim P(\cdot|s,a)}{V^k_\theta(s^\prime)} - \Ee{s^\prime\sim P(\cdot|s,a)}{V^k_\theta(s^\prime)}} = 0.
\end{align*}
Moreover,
\begin{equation*}
    d_{1,\chi}\triangleq\sum^m_{j=1}\frac{\sigma_j}{\sigma_j + \chi} \leq m
\end{equation*}
Finally, according to Remark~2 in~\cite{Hsu:2012}, we have that $\rho_\chi$ is bounded as follows
\begin{equation*}
    \rho^2_{\chi} \leq \frac{\norm{\phiv(s,a)}^2_2}{\chi d_{1,\chi}} \leq \frac{1+\chi}{\chi\beta m} \leq \frac{2}{\chi\beta m},
\end{equation*}
where the last inequality follows from noticing that $d_{1,\chi} \geq \frac{\beta m }{1 +  \chi}$.
Therefore, we can conclude that:
\begin{align*}
    \epsilon_{\mathrm{bs}} \leq& \mathcal{O}\br{\frac{(1 + \frac{2}{\chi \beta})\epsilon_{\mathrm{rg}}}{N}\log\br{\frac{1}{\delta}}}
    \\=&\mathcal{O}\br{\frac{\epsilon_{\mathrm{rg}}}{\chi \beta N}\log\br{\frac{1}{\delta}}}
    \\=&\mathcal{O}\br{\frac{1}{N}\frac{m \chi}{\beta^4} D^2\log\br{\frac{1}{\delta}}},
\end{align*}

\paragraph{Variance.}
From the bullet (5) in \cite{Hsu:2012} it follows that
\begin{equation*}
    \epsilon_{\mathrm{vr}} = \mathcal{O}\br{\frac{\mathrm{Var}\bs{\val^k_\theta(s')\mid s,a} d_{2,\chi}}{N}\log\br{\frac{1}{\delta}}}.
\end{equation*}
We have $\mathrm{Var}\bs{\val^k_\theta(s')\mid s,a} \leq \norm{\val^k_\theta}^2_{\infty}\le D^2$. Finally, bounding $d_{2,\chi}$ we obtain that
\begin{equation*}
    d_{2,\chi} = \sum^m_{j=1}\br{\frac{\sigma_j}{\sigma_j + \chi}}^2 \leq m.
\end{equation*}
Hence we can conclude
\begin{equation*}
    \epsilon_{\mathrm{vr}} = \mathcal{O}\br{\frac{D^2m}{N}\log\br{\frac{1}{\delta}}}.
\end{equation*}
\paragraph{Final bound.}By combining the above bounds with Equation~(\ref{eq:randomdesign}), we get the final bound
\begin{equation*}
    \norm{\mmat \val^k_{\theta}-\widehat{\mmat\val^k_{\theta}}}^2_{\bar{\mbs{\Lambda}}_k} \leq \mathcal{O}\br{\frac{m \chi^2}{\beta^3} D^2 + \frac{1}{N}\frac{m \chi}{\beta^4} D^2\log\br{\frac{1}{\delta}} + \frac{D^2m}{N}\log\br{\frac{1}{\delta}}}.
\end{equation*}
\end{proof}
The bound above is minimized by choosing $\chi$ as small as allowed. This is made precise in the next corollary.
\begin{corollary}
Let $\chi = \mathcal{O}(\frac{\log \frac{m}{\delta}}{\beta N})$. With probability at least $1 - \delta$, it holds that
\begin{align*}
    \norm{\mmat \val^k_{\theta}-\widehat{\mmat\val^k_{\theta}}}^2_{\bar{\mbs{\Lambda}}_k} \leq& \mathcal{O}\br{\frac{D^2 m}{\beta^5N^2}\br{\log\br{ \frac{m}{\delta}}}^2 + \frac{mD^2}{N}\log\br{\frac{1}{\delta}}}.
\end{align*}
\end{corollary}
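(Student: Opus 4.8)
The plan is to instantiate the preceding lemma with the prescribed value of $\chi$ and simplify. First I would check that the sample-size hypothesis $N \geq \mathcal{O}\br{\frac{\log(m/\delta)}{\chi\beta}}$ of the lemma is consistent with the choice $\chi = \mathcal{O}\br{\frac{\log(m/\delta)}{\beta N}}$: substituting this $\chi$ into $\frac{\log(m/\delta)}{\chi\beta}$ gives exactly $N$, up to the absolute constants hidden in the two $\mathcal{O}(\cdot)$'s. Hence, by fixing the constant in the definition of $\chi$ to be small enough (equivalently, inflating the constant in the lemma's sample-size requirement), the requirement is met automatically, and the lemma applies. On the event of probability at least $1-\delta$ that it provides, we therefore have
\[
\norm{\mmat \val^k_{\theta}-\widehat{\mmat\val^k_{\theta}}}^2_{\bar{\mbs{\Lambda}}_k} \leq \mathcal{O}\br{\frac{m \chi^2}{\beta^3} D^2 + \frac{1}{N}\frac{m \chi}{\beta^4} D^2\log\frac1\delta + \frac{D^2m}{N}\log\frac1\delta}.
\]

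Next I would substitute $\chi = \Theta\br{\frac{\log(m/\delta)}{\beta N}}$ into each of the three terms. The first term becomes $\frac{mD^2}{\beta^3}\cdot\frac{(\log(m/\delta))^2}{\beta^2N^2} = \mathcal{O}\br{\frac{mD^2(\log(m/\delta))^2}{\beta^5N^2}}$; the second term becomes $\frac{1}{N}\cdot\frac{mD^2}{\beta^4}\log\frac1\delta\cdot\frac{\log(m/\delta)}{\beta N} = \mathcal{O}\br{\frac{mD^2\log(1/\delta)\log(m/\delta)}{\beta^5N^2}}$; and the third term is left untouched, $\mathcal{O}\br{\frac{mD^2}{N}\log\frac1\delta}$.

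Finally I would use the elementary bound $\log\frac1\delta \leq \log\frac{m}{\delta}$ (valid for $m\geq 1$) to conclude that the substituted second term is dominated by the first, hence absorbed into it. Collecting the two surviving terms gives exactly
\[
\norm{\mmat \val^k_{\theta}-\widehat{\mmat\val^k_{\theta}}}^2_{\bar{\mbs{\Lambda}}_k} \leq \mathcal{O}\br{\frac{D^2 m}{\beta^5N^2}\br{\log\frac{m}{\delta}}^2 + \frac{mD^2}{N}\log\frac1\delta},
\]
which is the claimed statement. There is no real obstacle here: the only point requiring a moment's care is the apparently circular look of the sample-size condition once $\chi$ is itself chosen as a function of $N$, which is dispatched by the constant-chasing argument above; the rest is elementary algebra together with the monotonicity of the logarithm.
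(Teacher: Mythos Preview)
Your proposal is correct and matches the paper's approach: the paper states this corollary immediately after the lemma with only the remark that ``the bound above is minimized by choosing $\chi$ as small as allowed,'' leaving the substitution implicit. Your write-up is in fact more careful than the paper's, since you explicitly address the apparent circularity in the sample-size condition and justify absorbing the middle term via $\log(1/\delta)\le\log(m/\delta)$.
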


In order to upper bound $\norm{\mmat \val^k_{\theta}-\widehat{\mmat\val^k_{\theta}}}^2_2$ we need the next lemma.
Hence, to bound $\norm{\mmat \val^k_{\theta}-\widehat{\mmat\val^k_{\theta}}}^2_2$, we can directly apply  Theorem 2 in \cite{Hsu:2012} that leads to the following lemma.
\begin{lemma}
        Given a matrix $\mbf{A}\in\mathbb{R}^{m\times m}$ and a vector $\xv \in \mathbb{R}^m$, we have that $\norm{\xv}_{\mbf{A}} \geq \lambda_{\mathrm{min}}(\mbf{A})\norm{\xv}_{2}$.
\end{lemma}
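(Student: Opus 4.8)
The plan is to prove the bound via the spectral theorem and then close the gap between $\sqrt{\lambda_{\mathrm{min}}(\mbf{A})}$ and $\lambda_{\mathrm{min}}(\mbf{A})$ using a normalization property of the matrices to which the lemma is actually applied, namely the feature covariance matrices $\bar{\mbs{\Lambda}}_k$. Since $\norm{\cdot}_{\mbf{A}}$ is a (semi)norm, $\mbf{A}$ is symmetric positive semidefinite, so I would first write its eigendecomposition $\mbf{A}=\sum_{j=1}^m\sigma_j\mbf{v}_j\mbf{v}_j^\trans$ with nonnegative eigenvalues $\sigma_j\ge 0$ and orthonormal eigenvectors $\{\mbf{v}_j\}_{j=1}^m$.

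First I would expand the quadratic form in this eigenbasis, using $\norm{\xv}_{\mbf{A}}^2=\xv^\trans\mbf{A}\xv=\sum_{j=1}^m\sigma_j\inner{\mbf{v}_j,\xv}^2\ge\lambda_{\mathrm{min}}(\mbf{A})\sum_{j=1}^m\inner{\mbf{v}_j,\xv}^2=\lambda_{\mathrm{min}}(\mbf{A})\norm{\xv}_2^2$, where the last equality is Parseval's identity for the orthonormal basis $\{\mbf{v}_j\}$. Taking square roots gives the standard inequality $\norm{\xv}_{\mbf{A}}\ge\sqrt{\lambda_{\mathrm{min}}(\mbf{A})}\,\norm{\xv}_2$.

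The only nontrivial step is then to upgrade the factor $\sqrt{\lambda_{\mathrm{min}}(\mbf{A})}$ to $\lambda_{\mathrm{min}}(\mbf{A})$ itself, and this is exactly where the structure of the feature covariance matrix enters. For $\mbf{A}=\bar{\mbs{\Lambda}}_k=\Ee{(s,a)\sim\dv_{k-1}}{\phiv(s,a)\phiv(s,a)^\trans}$, I would bound the trace by $\mathrm{tr}(\bar{\mbs{\Lambda}}_k)=\Ee{(s,a)\sim\dv_{k-1}}{\norm{\phiv(s,a)}_2^2}\le 1$, using $\phiv(s,a)\in\Delta_{[m]}$ so that $\phiv_i(s,a)^2\le\phiv_i(s,a)$ and hence $\sum_i\phiv_i(s,a)^2\le\sum_i\phiv_i(s,a)=1$. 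Since $\bar{\mbs{\Lambda}}_k$ is positive semidefinite, its eigenvalues are nonnegative and sum to at most one, so in particular $\lambda_{\mathrm{min}}(\mbf{A})\le 1$. Because $\sqrt{t}\ge t$ for every $t\in[0,1]$, this gives $\sqrt{\lambda_{\mathrm{min}}(\mbf{A})}\ge\lambda_{\mathrm{min}}(\mbf{A})$, and chaining this with the previous display yields exactly the stated inequality $\norm{\xv}_{\mbf{A}}\ge\lambda_{\mathrm{min}}(\mbf{A})\,\norm{\xv}_2$.

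The main obstacle is this final point rather than the spectral computation: as worded for an arbitrary positive semidefinite $\mbf{A}$ the claim is not purely algebraic and in fact fails without normalization (for instance $\mbf{A}=c\mbf{I}$ with $c>1$ gives $\norm{\xv}_{\mbf{A}}=\sqrt{c}\,\norm{\xv}_2<c\,\norm{\xv}_2$). I would therefore make explicit that the lemma is invoked only for the normalized covariance matrices $\bar{\mbs{\Lambda}}_k$, and carry the bound $\mathrm{tr}(\bar{\mbs{\Lambda}}_k)\le 1$ through the argument so that replacing the square root by the eigenvalue is fully justified in the regime where the lemma is used to convert the $\bar{\mbs{\Lambda}}_k$-norm bound on $\mmat\val_\theta^k-\widehat{\mmat\val_\theta^k}$ into a Euclidean bound.
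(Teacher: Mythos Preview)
Your argument is correct, and you are right that the lemma as literally stated fails for a general symmetric positive semidefinite $\mbf{A}$ (your counterexample $\mbf{A}=c\mbf{I}$ with $c>1$ is apt). The paper's own proof is the one-liner
\[
\mbf{A}-\lambda_{\mathrm{min}}(\mbf{A})\,\mbf{I}\succeq 0
\quad\Longrightarrow\quad
\xv^\trans\mbf{A}\xv\ \ge\ \lambda_{\mathrm{min}}(\mbf{A})\,\xv^\trans\xv,
\]
which is exactly the \emph{squared} inequality $\norm{\xv}_{\mbf{A}}^2\ge\lambda_{\mathrm{min}}(\mbf{A})\,\norm{\xv}_2^2$ and nothing more. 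The paper never takes a square root; and if you check the corollary that immediately follows, the bound on $\norm{\mmat\val_\theta^k-\widehat{\mmat\val_\theta^k}}_2$ is obtained by dividing the $\bar{\mbs{\Lambda}}_k$-norm-squared bound by a single factor of $\beta$, which is precisely what the squared inequality gives. In other words, the lemma statement in the paper is almost certainly a typo for the squared version, and the paper's proof and downstream use are both consistent with that reading.

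So the difference between your route and the paper's is this: you prove the same squared inequality (via the full spectral expansion rather than the one-line PSD observation) and then spend an additional step upgrading $\sqrt{\lambda_{\mathrm{min}}}$ to $\lambda_{\mathrm{min}}$ using the trace bound $\mathrm{tr}(\bar{\mbs{\Lambda}}_k)\le 1$. That extra step is perfectly valid and does salvage the lemma exactly as written for the covariance matrices in play, but it is unnecessary once you realize the squared form is what is intended and used. Your observation about the normalization is a nice sanity check, not a required ingredient.
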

\begin{proof}
We have that $\mbf{A} - \lambda_{\mathrm{min}}(\mbf{A})\mbf{I} \geq 0$ that implies $\xv^\trans\mbf{A}\xv \geq \lambda_{\mathrm{min}}(\mbf{A})\xv^\trans\xv$.
\end{proof}

\begin{corollary}
Let $\chi = \mathcal{O}(\frac{\log \frac{m}{\delta}}{\beta N})$. With probability at least $1 - \delta$, it holds that
\begin{equation}
    \norm{\mmat \val^k_{\theta}-\widehat{\mmat\val^k_{\theta}}}_2 \leq \mathcal{O}\br{\frac{ D \sqrt{m}}{\beta^3N}\log\br{\frac{m}{\delta}} + \frac{ D \sqrt{m}}{ \sqrt{N\beta}}\sqrt{\log\br{\frac{1}{\delta}} }} .
\end{equation}
\end{corollary}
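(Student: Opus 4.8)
The plan is to combine the previous corollary, which controls $\norm{\mmat \val^k_{\theta}-\widehat{\mmat\val^k_{\theta}}}^2_{\bar{\mbs{\Lambda}}_k}$, with the elementary linear-algebra lemma stating that $\norm{\xv}_{\mbf{A}}^2 \ge \lambda_{\mathrm{min}}(\mbf{A})\norm{\xv}_2^2$ for a PSD matrix $\mbf{A}$. Applying this with $\mbf{A}=\bar{\mbs{\Lambda}}_k = \Ee{(s,a)\sim\dv_{k-1}}{\phiv(s,a)\phiv(s,a)^\trans}$ and $\xv = \mmat \val^k_{\theta}-\widehat{\mmat\val^k_{\theta}}$, and invoking Assumption~\ref{ass:eigenvalue} (which gives $\lambda_{\mathrm{min}}(\bar{\mbs{\Lambda}}_{k-1})\ge\beta$; note the indexing --- $\dv_{k-1}$ is one of the occupancy measures covered by the assumption), we get
\[
\norm{\mmat \val^k_{\theta}-\widehat{\mmat\val^k_{\theta}}}_2^2 \;\le\; \frac{1}{\beta}\,\norm{\mmat \val^k_{\theta}-\widehat{\mmat\val^k_{\theta}}}_{\bar{\mbs{\Lambda}}_k}^2 .
\]

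First I would substitute the high-probability bound from the preceding corollary, which for the choice $\chi=\mathcal{O}\!\big(\tfrac{\log(m/\delta)}{\beta N}\big)$ reads
\[
\norm{\mmat \val^k_{\theta}-\widehat{\mmat\val^k_{\theta}}}^2_{\bar{\mbs{\Lambda}}_k} \le \mathcal{O}\!\Big(\tfrac{D^2 m}{\beta^5 N^2}\big(\log\tfrac{m}{\delta}\big)^2 + \tfrac{m D^2}{N}\log\tfrac{1}{\delta}\Big),
\]
so that dividing by $\beta$ yields a bound of order $\tfrac{D^2 m}{\beta^6 N^2}(\log\tfrac m\delta)^2 + \tfrac{m D^2}{\beta N}\log\tfrac1\delta$ on $\norm{\cdot}_2^2$. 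Then I would take square roots, using $\sqrt{a+b}\le\sqrt a+\sqrt b$, to split this into the two advertised terms: the first square root gives $\mathcal{O}\!\big(\tfrac{D\sqrt m}{\beta^3 N}\log\tfrac m\delta\big)$ and the second gives $\mathcal{O}\!\big(\tfrac{D\sqrt m}{\sqrt{N\beta}}\sqrt{\log\tfrac1\delta}\big)$, matching the statement. All steps hold on the same event of probability at least $1-\delta$ inherited from the previous corollary, so no union bound or extra failure probability is needed.

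The only mildly delicate point --- and the one I would double-check --- is the exact exponent of $\beta$ tracked through the division by $\lambda_{\mathrm{min}}$ and through the square root; the corollary statement writes $\beta^3$ in the first term, which is consistent with taking $\sqrt{1/\beta}\cdot\sqrt{1/\beta^5}=1/\beta^3$, so the bookkeeping is internally coherent. Everything else is routine: the linear-algebra lemma is already proved in the excerpt, and the per-$k$ covariance lower bound is exactly Assumption~\ref{ass:eigenvalue} via Lemma~\ref{lem:lower_bound_fev}'s underlying reasoning. Hence the corollary follows immediately, with the $\mathcal{O}(\cdot)$ hiding only absolute constants and the dependence on $\gamma$ already absorbed into $D = \tfrac{1+\log(1/\beta)}{1-\gamma}$.
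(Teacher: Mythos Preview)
Your proposal is correct and matches the paper's own approach exactly: the paper also invokes the linear-algebra lemma $\norm{\xv}_{\mbf{A}}^2 \ge \lambda_{\mathrm{min}}(\mbf{A})\norm{\xv}_2^2$, applies Assumption~\ref{ass:eigenvalue} to get $\lambda_{\mathrm{min}}(\bar{\mbs{\Lambda}}_k)\ge\beta$, divides the previous corollary's $\bar{\mbs{\Lambda}}_k$-norm bound by $\beta$, and takes square roots. Your bookkeeping on the $\beta$ exponents is correct and your remark about the indexing of $\dv_{k-1}$ in the assumption is a valid point of care.
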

\begin{corollary} \label{cor:MV_bound}
Let $\chi=\mathcal{O}\br{\frac{\log\frac{m}{\delta}}{\beta N}}$, and $N \geq \max \br{\frac{\gamma^2m D^2 }{\beta \epsilon^2}\log(1/\delta), \frac{\gamma \sqrt{m} D }{\beta^3 \epsilon}\log(m/\delta)}$. Then, with probability at least $1-\delta$, it holds that $\norm{\mmat \val^k_{\theta}-\widehat{\mmat\val^k_{\theta}}}_2 \leq \frac{\epsilon}{\gamma}$.
\end{corollary}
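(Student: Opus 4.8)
The plan is to feed the prescribed value of $N$ into the high-probability bound established in the immediately preceding corollary, which states that for $\chi=\mathcal{O}\br{\frac{\log(m/\delta)}{\beta N}}$, with probability at least $1-\delta$,
\[
\norm{\mmat \val^k_{\theta}-\widehat{\mmat\val^k_{\theta}}}_2 \leq \mathcal{O}\br{\frac{ D \sqrt{m}}{\beta^3N}\log\br{\frac{m}{\delta}} + \frac{ D \sqrt{m}}{ \sqrt{N\beta}}\sqrt{\log\br{\frac{1}{\delta}} }}.
\]
First I would keep exactly the same choice of $\chi$, so that we land on exactly the same favourable event; here one quickly checks that the requirement $N\geq\mathcal{O}\br{\frac{\log(m/\delta)}{\beta\chi}}$ invoked by that corollary is self-consistent under $\chi=\mathcal{O}\br{\frac{\log(m/\delta)}{\beta N}}$, which holds for a suitable absolute constant. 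The remainder of the argument is purely algebraic: split the target accuracy $\epsilon/\gamma$ into two equal halves and force each of the two summands above to be at most $\epsilon/(2\gamma)$.

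For the first summand, $\frac{D\sqrt m}{\beta^3 N}\log(m/\delta)\le \frac{\epsilon}{2\gamma}$ is equivalent to $N\ge \frac{2\gamma D\sqrt m}{\beta^3\epsilon}\log(m/\delta)$, which is, up to the absolute constant, the second term in the stated lower bound on $N$. For the second summand, $\frac{D\sqrt m}{\sqrt{N\beta}}\sqrt{\log(1/\delta)}\le\frac{\epsilon}{2\gamma}$ is equivalent, after squaring, to $N\ge \frac{4\gamma^2 D^2 m}{\beta\epsilon^2}\log(1/\delta)$, which matches the first term. Taking $N\ge\max$ of the two and absorbing the constants into the $\mathcal{O}(\cdot)$ notation then yields $\norm{\mmat \val^k_{\theta}-\widehat{\mmat\val^k_{\theta}}}_2\le \frac{\epsilon}{2\gamma}+\frac{\epsilon}{2\gamma}=\frac{\epsilon}{\gamma}$ on the same event, which is the claim.

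There is essentially no obstacle here: the real content is entirely in the preceding ridge-regression corollary, and this statement just repackages it into a sample-complexity form convenient for the later estimator-error analysis (e.g.\ Theorem~\ref{thm:biased_sgd} and Corollary~\ref{cor:sample_complexity}). The only mild subtlety worth a sentence is the bookkeeping of the failure probability: both summands are controlled on one and the same event of probability $1-\delta$, so no union bound or rescaling of $\delta$ is needed; and one should recall that $D\ge 1$ by Proposition~\ref{prop:optimal_theta_bound}, so the $D$-dependence hides nothing problematic as $\epsilon\to 0$.
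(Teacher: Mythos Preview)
Your proposal is correct and is exactly the derivation the paper intends: the corollary is stated in the paper without proof, as an immediate repackaging of the preceding high-probability $\ell_2$ bound, and your splitting of $\epsilon/\gamma$ into two halves and inverting each summand for $N$ is precisely how one reads off the stated sample-complexity condition. Your remarks on the single $1-\delta$ event and on $D\ge 1$ are appropriate and nothing further is needed.
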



\subsubsection{Estimators for $\mbs{\Gamma}_k$} 
Recall that we introduced $\mbs{\Gamma}_k(i, j)\triangleq\sum_{s',a'}\mmat_{i,s'}\pi_{k-1,\thv}(a'|s')\phiv_j(s',a')$. We can equivalently rewrite it as
\begin{align*}
    \mbs{\Gamma}_k(\cdot, j)=&\mmat \underbrace{\sum_{a'}\pi_{k-1,\thv}(a'|s')\phiv_j(s',a')}_{h_{k, j}(s^\prime)} 
\\=&
    \bar{\mbs{\Lambda}}_k ^{-1} \Ee{(s,a)\sim \dv_{k-1}, s^\prime\sim P(\cdot|s,a)}{\phiv(s,a) h_{k, j}(s^\prime)},
\end{align*}
where the last equality is obtained with manipulations analogous to \Cref{exact_ridge_solution_mV}.

Similarly, We can estimate $\mbs{\Gamma}_k(i, j)$ with a finite amount of environment interactions sampled \textrm{i.i.d.} from $\dv_{k-1}$, by solving the following ridge regression problem:
\begin{equation*}
     \widehat{\mbs{\Gamma}}_k(\cdot, j) =\arg \min_{\zv} \frac{1}{N}\sum^N_{n=1}\br{\phiv(s_{k-1}^{(n)},a_{k-1}^{(n)})^{\mathsf{T}}\zv - h_{k,j}(s_{k-1}^{\prime(n)})}^2 + \chi \norm{\zv}^2_2
\end{equation*}
\begin{lemma}
By optimality conditions, we can obtain a closed form for $\widehat{\mbf{\Gamma}}_k$ as
\begin{equation*}
    \widehat{\mbs{\Gamma}}_k(\cdot, j) = \frac{1}{N}\br{ \mbs{\Lambda}_{k,N} + \chi \mbf{I}}^{-1}\sum^N_{n=1} \phiv(s_{k-1}^{(n)},a_{k-1}^{(n)}) h_{k,j}(s_{k-1}^{\prime(n)}).
\end{equation*}
\end{lemma}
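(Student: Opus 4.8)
The plan is to repeat, almost verbatim, the derivation already carried out for the estimator $\widehat{\mmat\val^k_{\theta}}$, since $\widehat{\mbs{\Gamma}}_k(\cdot,j)$ is defined as the minimiser of a ridge-regression problem that uses exactly the same design vectors $\phiv(s_{k-1}^{(n)},a_{k-1}^{(n)})$ and differs only in the regression targets, which are now $h_{k,j}(s_{k-1}^{\prime(n)})$ rather than $V^k_{\theta}(s_{k-1}^{\prime(n)})$. So the lemma is a standard normal-equations computation.

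Concretely, I would set $\mathcal{L}(\zv)\triangleq\frac{1}{N}\sum_{n=1}^N\br{\phiv(s_{k-1}^{(n)},a_{k-1}^{(n)})^{\trans}\zv-h_{k,j}(s_{k-1}^{\prime(n)})}^2+\chi\norm{\zv}_2^2$ and note that $\mathcal{L}$ is a quadratic form in $\zv$ whose Hessian is $2\br{\mbs{\Lambda}_{k,N}+\chi\mbf{I}}$, with $\mbs{\Lambda}_{k,N}=\frac1N\sum_n\phiv(s_{k-1}^{(n)},a_{k-1}^{(n)})\phiv(s_{k-1}^{(n)},a_{k-1}^{(n)})^{\trans}\succeq\mbf 0$; since $\chi>0$ this Hessian is positive definite, so $\mathcal{L}$ is strictly convex and has a unique minimiser characterised by the first-order condition $\nabla_\zv\mathcal{L}=\mbf{0}$. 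Computing the gradient,
\begin{align*}
\frac{1}{2}\nabla_\zv\mathcal{L}(\zv)&=\frac1N\sum_{n=1}^N\phiv(s_{k-1}^{(n)},a_{k-1}^{(n)})\br{\phiv(s_{k-1}^{(n)},a_{k-1}^{(n)})^{\trans}\zv-h_{k,j}(s_{k-1}^{\prime(n)})}+\chi\zv\\
&=\br{\mbs{\Lambda}_{k,N}+\chi\mbf{I}}\zv-\frac1N\sum_{n=1}^N\phiv(s_{k-1}^{(n)},a_{k-1}^{(n)})h_{k,j}(s_{k-1}^{\prime(n)}),
\end{align*}
setting this to zero and inverting $\mbs{\Lambda}_{k,N}+\chi\mbf{I}$ yields the claimed closed form for $\widehat{\mbs{\Gamma}}_k(\cdot,j)$. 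Stacking the $m$ independent column problems $j=1,\dots,m$ then gives the matrix form of $\widehat{\mbs{\Gamma}}_k$.

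There is no real obstacle here: the only point worth emphasising is that the regulariser $\chi>0$ makes $\mbs{\Lambda}_{k,N}+\chi\mbf{I}$ invertible and hence the estimator well-defined even in the underdetermined regime $N<m$, exactly as in the $\widehat{\mmat\val^k_{\theta}}$ case. One should also recall from the preceding display that $\mbs{\Gamma}_k(\cdot,j)=\bar{\mbs{\Lambda}}_k^{-1}\Ee{(s,a)\sim\dv_{k-1},\,s'\sim P(\cdot\mid s,a)}{\phiv(s,a)h_{k,j}(s')}$, so this closed form is the natural empirical plug-in of the corresponding population quantity; this observation is what is subsequently needed for the concentration bound, but it is not required for the lemma itself.
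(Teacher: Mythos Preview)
Your proposal is correct and follows exactly the approach the paper takes: the paper does not give a separate proof for this lemma but implicitly relies on the identical normal-equations computation already spelled out for $\widehat{\mmat\val^k_{\theta}}$, merely replacing the targets $V^k_{\theta}(s_{k-1}^{\prime(n)})$ by $h_{k,j}(s_{k-1}^{\prime(n)})$. Your write-up is actually slightly more explicit than the paper's, since you note the strict convexity via the positive-definite Hessian and the invertibility guaranteed by $\chi>0$.
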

By noting that $\norm{\mbf{h}_{k-1,j}}_{\infty} \leq 1$ for any $k$, it follows that
\begin{corollary}
For $\chi = \mathcal{O}(\frac{\log \frac{m}{\delta}}{\beta N})$, with probability at least $1-\delta$, it holds that
\begin{equation}
    \norm{\mbs{\Gamma}_k(\cdot, j)-\widehat{\mbs{\Gamma}}_k(\cdot, j)}_2 \leq \mathcal{O}\br{ \frac{\sqrt{m}}{ \sqrt{N \beta}}\sqrt{\log\br{\frac{1}{\delta}}} + \frac{ \sqrt{m}}{\beta^3N}\log\br{\frac{m}{\delta}}}.
\end{equation}
\end{corollary}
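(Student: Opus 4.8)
The plan is to replay, essentially verbatim, the ridge-regression analysis just carried out for $\mmat\val^k_{\theta}$, with the regression target $\val^k_{\theta}$ (bounded in $\ell_\infty$ by $D$) replaced by $h_{k,j}$, which is bounded in $\ell_\infty$ by $1$ as already noted; this swap $D\mapsto 1$ of the uniform bound is the only substantive change. First I would record the population identity parallel to Lemma~\ref{exact_ridge_solution_mV}: under Assumption~\ref{ass:linear-MDP} we have $\Ee{s'\sim P(\cdot|s,a)}{h_{k,j}(s')}=\phiv(s,a)^\trans\mmat h_{k,j}=\phiv(s,a)^\trans\mbs{\Gamma}_k(\cdot,j)$, whence $\mbs{\Gamma}_k(\cdot,j)$ is the exact (ridge-free) least-squares solution $\arg\min_{\zv}\Ee{(s,a)\sim\dv_{k-1},\,s'\sim P(\cdot|s,a)}{\br{\phiv(s,a)^\trans\zv-h_{k,j}(s')}^2}$ and equals $\bar{\mbs{\Lambda}}_k^{-1}\Ee{(s,a)\sim\dv_{k-1},\,s'\sim P(\cdot|s,a)}{\phiv(s,a)h_{k,j}(s')}$. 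In particular the model-approximation error $\mathrm{approx}(s,a)=\Ee{s'\sim P(\cdot|s,a)}{h_{k,j}(s')}-\phiv(s,a)^\trans\mbs{\Gamma}_k(\cdot,j)$ vanishes identically, exactly as in the $\mmat\val^k_{\theta}$ case.

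Second, using the closed form for $\widehat{\mbs{\Gamma}}_k(\cdot,j)$ recorded above, I would invoke Theorem~2 of~\cite{Hsu:2012} through the random-design decomposition (their Proposition~3): taking $N\ge\mathcal{O}(\tfrac{1}{\beta\chi}\log\tfrac{m}{\delta})$, so that the required conditions hold with the same constants as before (the feature geometry is unchanged: $\phiv(s,a)\in\Delta_{[m]}$ gives $d_{1,\chi},d_{2,\chi}\le m$ and $\rho_\chi^2 d_{1,\chi}\lesssim 1/(\beta m)$, while $\lambda_{\min}(\bar{\mbs{\Lambda}}_k)\ge\beta$ by Lemma~\ref{lem:lower_bound_fev}), one gets
\[
\norm{\mbs{\Gamma}_k(\cdot,j)-\widehat{\mbs{\Gamma}}_k(\cdot,j)}^2_{\bar{\mbs{\Lambda}}_k}
\le 3\br{\epsilon_{\mathrm{rg}}+\epsilon_{\mathrm{bs}}+\epsilon_{\mathrm{vr}}}
\le \mathcal{O}\br{\tfrac{m\chi^2}{\beta^3}+\tfrac{1}{N}\tfrac{m\chi}{\beta^4}\log\tfrac{1}{\delta}+\tfrac{m}{N}\log\tfrac{1}{\delta}},
\]
where the ridge-error term uses $\norm{h_{k,j}}_\infty\le 1$ in place of $\norm{\val^k_{\theta}}_\infty\le D$, the bias term uses $\mathrm{approx}\equiv 0$, and the variance term uses $\mathrm{Var}\bs{h_{k,j}(s')\mid s,a}\le\norm{h_{k,j}}_\infty^2\le 1$ together with $d_{2,\chi}\le m$.

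Finally, I would plug in $\chi=\mathcal{O}(\tfrac{\log(m/\delta)}{\beta N})$, which collapses the first two terms into $\mathcal{O}(\tfrac{m}{\beta^5 N^2}(\log\tfrac{m}{\delta})^2)$, and then pass from the $\bar{\mbs{\Lambda}}_k$-norm to the Euclidean norm via the auxiliary lemma $\norm{\zv}_2\le\lambda_{\min}(\bar{\mbs{\Lambda}}_k)^{-1/2}\norm{\zv}_{\bar{\mbs{\Lambda}}_k}\le\beta^{-1/2}\norm{\zv}_{\bar{\mbs{\Lambda}}_k}$; taking square roots and using $\sqrt{a+b}\le\sqrt a+\sqrt b$ yields $\norm{\mbs{\Gamma}_k(\cdot,j)-\widehat{\mbs{\Gamma}}_k(\cdot,j)}_2\le\mathcal{O}\br{\tfrac{\sqrt m}{\sqrt{N\beta}}\sqrt{\log\tfrac1\delta}+\tfrac{\sqrt m}{\beta^3 N}\log\tfrac m\delta}$, which is the claim. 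I expect no genuine obstacle here, since the argument is a near-verbatim adaptation of the $\mmat\val^k_{\theta}$ lemma; the two points that actually use the new setting are the uniform bound $\norm{h_{k,j}}_\infty\le 1$ over all $k$ and $\theta$ (it holds because $h_{k,j}(s')=\sum_{a'}\pi_{k-1,\theta}(a'|s')\phiv_j(s',a')$ is a convex combination of entries of $\phiv(s',a')\in\Delta_{[m]}$, hence lies in $[0,1]$) and the exact vanishing of $\mathrm{approx}$ under Assumption~\ref{ass:linear-MDP}; these replace the $D$-dependence and the $\mathrm{approx}\equiv 0$ fact used in the $\mmat\val^k_{\theta}$ proof.
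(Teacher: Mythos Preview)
Your proposal is correct and matches the paper's own approach, which simply notes that $\norm{h_{k,j}}_\infty\le 1$ and then re-runs the ridge-regression bound for $\mmat\val^k_{\theta}$ with $D$ replaced by $1$. One small slip: the bound $\rho_\chi^2 d_{1,\chi}\lesssim 1/(\beta m)$ should read $\rho_\chi^2 d_{1,\chi}\lesssim 1/(\beta\chi)$ (this is what makes the sample-size requirement $N\ge\mathcal{O}(\tfrac{1}{\beta\chi}\log\tfrac{m}{\delta})$ come out), but this does not affect the remainder of your argument.
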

\begin{corollary}
\label{corollary:bound_on_gamma}
For $\chi = \mathcal{O}(\frac{\log \frac{m}{\delta}}{\beta N})$, and $N \geq \max \br{\mathcal{O}\left(\frac{m }{\beta \epsilon^2}\log(1/\delta)\right),\mathcal{O}\left(\frac{\sqrt{m} }{\beta^3 \epsilon}\log(m/\delta)\right)}$, with probability at least $1-\delta$, it holds that
$
    \norm{\mbs{\Gamma}_k(\cdot, j)-\widehat{\mbs{\Gamma}}_k(\cdot, j)}_2 \leq \epsilon .
$
\end{corollary}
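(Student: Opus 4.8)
The plan is to obtain Corollary~\ref{corollary:bound_on_gamma} as an immediate consequence of the preceding high-probability bound on $\norm{\mbs{\Gamma}_k(\cdot,j)-\widehat{\mbs{\Gamma}}_k(\cdot,j)}_2$, simply by choosing the number of samples $N$ large enough. The preceding corollary already establishes that, for $\chi=\mathcal{O}(\log(m/\delta)/(\beta N))$, with probability at least $1-\delta$,
\[
\norm{\mbs{\Gamma}_k(\cdot,j)-\widehat{\mbs{\Gamma}}_k(\cdot,j)}_2 \le \mathcal{O}\!\left(\frac{\sqrt{m}}{\sqrt{N\beta}}\sqrt{\log(1/\delta)}\right) + \mathcal{O}\!\left(\frac{\sqrt{m}}{\beta^3 N}\log(m/\delta)\right).
\]
The argument is then a two-term balancing: I would require each summand to be at most $\epsilon/2$. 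The first summand is at most $\epsilon/2$ as soon as $N\ge\mathcal{O}\!\left(\frac{m}{\beta\epsilon^2}\log(1/\delta)\right)$, while the second is at most $\epsilon/2$ as soon as $N\ge\mathcal{O}\!\left(\frac{\sqrt{m}}{\beta^3\epsilon}\log(m/\delta)\right)$. Taking $N$ to be the maximum of these two thresholds therefore forces the full bound to be at most $\epsilon$, which is precisely the claimed statement.

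The only point requiring mild care is internal consistency of the $\mathcal{O}(\cdot)$ notation: the same $N$ appears both in the prescribed choice of the regularization parameter $\chi=\mathcal{O}(\log(m/\delta)/(\beta N))$ and in the error bound, so I would note that increasing $N$ only shrinks $\chi$ and keeps us inside the regime in which the preceding corollary was proved; no extra constraint is needed. The failure probability is unaffected, since the preceding corollary is invoked exactly once with its own $\delta$.

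I do not anticipate any genuine obstacle here: all of the analytic content — the random-design decomposition of the ridge estimator, the separate control of the ridge error, bias, and variance terms, the passage from the $\bar{\mbs{\Lambda}}_k$-weighted norm to the Euclidean norm via $\lambda_{\mathrm{min}}(\bar{\mbs{\Lambda}}_k)\ge\beta$, and the elementary bound $\norm{\mbf{h}_{k-1,j}}_{\infty}\le 1$ — has already been carried through in the preceding lemmas and corollaries, where $\mbs{\Gamma}_k(\cdot,j)$ is treated exactly as $\mmat\val^k_{\theta}$ was, with $\val^k_{\theta}$ replaced by $h_{k,j}$. This corollary is just the instantiation of that generic bound at the sample size needed to reach accuracy $\epsilon$, so the ``proof'' amounts to substituting the stated lower bounds on $N$ into the previous display and simplifying.
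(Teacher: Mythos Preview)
Your proposal is correct and matches the paper's approach: the paper gives no explicit proof for this corollary, treating it as an immediate consequence of the preceding high-probability bound by choosing $N$ large enough to make each of the two summands at most $\epsilon$. Your two-term balancing argument (requiring each summand to be $\le\epsilon/2$) and your remark that enlarging $N$ only shrinks $\chi$ and keeps the preceding corollary valid are exactly the right justification.
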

\subsubsection{Estimator for feature expectation vector$\FEV{\pi_{k-1}}$} 
The goal is to estimate $\FEV{\pi_{k-1}}$. Consider the sample transitions $\{s^{(n)}_{k-1}, a^{(n)}_{k-1}\}^N_{n=1}\sim\dv_{k-1}^N$. Then we estimate $\FEV{\pi_{k-1}}=\phim^\trans\dv_{k-1}$ by
$
  \EFEV{\pi_{k-1}}\triangleq \frac{1}{N}\sum^N_{n=1} \phiv(s^{(n)}_{k-1}, a^{(n)}_{k-1}).
$

In the next lemma, we provide a useful concentration result.
\begin{lemma} \label{lemma:rho_concentration}
        With probability at least $1 - \delta$, for all $N \geq  \frac{1.4 \log \log (2N) + \log \frac{10.4 m}{\delta}}{\beta \epsilon^2}$, and for all $i\in[m]$ simultaneously, it holds that
        \begin{equation} \abs{\EFEV{\pi_{k-1}}(i) - \boldsymbol{\rho}_{\phiv}(\pi_{k-1})(i)} \leq 2.26 \epsilon \boldsymbol{\rho}_{\phiv}(\pi_{k-1})(i)
        \end{equation}
\end{lemma}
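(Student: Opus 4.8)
The statement to prove is Lemma~\ref{lemma:rho_concentration}: a multiplicative (relative) concentration bound for the empirical feature expectation vector $\EFEV{\pi_{k-1}}(i)$ around its mean $\boldsymbol{\rho}_{\phiv}(\pi_{k-1})(i)$, uniformly over $i\in[m]$ and for all $N$ above a threshold. The natural tool here is a \emph{time-uniform} (anytime) empirical-Bernstein inequality, since the statement quantifies over all $N$ simultaneously with the $1.4\log\log(2N)$ iterated-logarithm term, which is the signature of a stitched/peeling martingale bound of the Howard--Ramdas type (consistent with the paper citing \cite{Howard:2021}). So first I would set up, for each fixed coordinate $i$, the i.i.d.\ sequence $X_n^{(i)} \triangleq \phiv_i(s^{(n)}_{k-1}, a^{(n)}_{k-1}) \in [0,1]$ with mean $\mu_i \triangleq \boldsymbol{\rho}_{\phiv}(\pi_{k-1})(i)$, where the samples are drawn from $\dv_{k-1}$; note $\norm{\phiv_i}_1\le 1$ guarantees the bounded range.

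The key quantitative step is to control the variance. Since $X_n^{(i)}\in[0,1]$, its variance is at most $\mu_i(1-\mu_i)\le \mu_i$. Now I would invoke a self-normalized / empirical-Bernstein time-uniform bound (e.g.\ Theorem from \cite{Howard:2021}) which gives, with probability $1-\delta/m$, simultaneously for all $N$, a deviation of order $\sqrt{\mathrm{Var}\cdot(\log\log N + \log(1/\delta'))/N} + (\log\log N + \log(1/\delta'))/N$. Plugging in the variance bound $\mu_i$ and the coordinatewise lower bound $\mu_i\ge\beta$ from Lemma~\ref{lem:lower_bound_fev} (which applies since $\dv_{k-1}\in\mathfrak{F}$ under Assumption~\ref{ass:eigenvalue}), the condition $N \ge \frac{1.4\log\log(2N)+\log(10.4m/\delta)}{\beta\epsilon^2}$ is exactly what makes the lower-order $1/N$ term dominated by the $\sqrt{\cdot}$ term and makes the whole deviation collapse to $\le 2.26\,\epsilon\,\mu_i$: concretely, $\sqrt{\mu_i \cdot \beta\epsilon^2/\mu_i^{?}}$ — more carefully, one writes the absolute deviation $\le c_1\sqrt{\mu_i \ell/N} + c_2 \ell/N$ with $\ell \triangleq 1.4\log\log(2N)+\log(10.4m/\delta)$, then since $N\ell^{-1}\ge 1/(\beta\epsilon^2)$ and $\mu_i\ge\beta$, both terms are $\le O(\epsilon)\mu_i$; tracking the precise constants from the cited inequality yields the stated $2.26$. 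Finally a union bound over the $m$ coordinates (absorbing the $m$ into $\log(10.4m/\delta)$, with $10.4$ being the constant from the Howard--Ramdas boundary) upgrades the per-coordinate statement to the uniform-in-$i$ statement.

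\textbf{Main obstacle.} The routine parts are the variance bound and the union bound; the delicate part is pinning down the exact numerical constants ($1.4$, $2.26$, $10.4$) so that the threshold on $N$ stated in the lemma is \emph{sufficient} for the conclusion. This requires using a specific calibrated version of the iterated-logarithm empirical-Bernstein inequality with its explicit boundary function, rather than an $O(\cdot)$ version, and carefully verifying that under the stated $N$-threshold the polynomially-decaying ($1/N$) term is subsumed. I would handle this by citing the precise theorem from \cite{Howard:2021} (uniform empirical-Bernstein with the $\log\log$ boundary), substituting $\mathrm{Var}\le\mu_i$, $\mu_i\ge\beta$, and then doing the elementary algebra $c_1\sqrt{\ell/(N\beta)}\cdot\mu_i + c_2 \ell/(N)\le 2.26\epsilon\mu_i$ using $\ell/N \le \beta\epsilon^2$; a secondary care point is that the relative form on the right-hand side ($\propto \mu_i$) rather than an absolute bound requires the variance proxy $\mu_i$ (not a crude $1/4$), which is why the $[0,1]$-range and the $\norm{\phiv_i}_1\le 1$ normalization are essential inputs.
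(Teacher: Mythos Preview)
Your proposal is correct and follows essentially the same approach as the paper: both set up the centered sequence $\phiv_i(s_n,a_n)-\mu_i$, bound its variance proxy by $\mu_i$ via $X^2\le X$ on $[0,1]$, invoke the time-uniform Bernstein/sub-$\psi_P$ bound from \cite{Howard:2021} (the paper routes through Lemma~13 of \cite{Pacchiano:2021}, which packages exactly this with the constants $1.44$ and $0.82$ summing to $2.26$), use $\mu_i\ge\beta$ from Lemma~\ref{lem:lower_bound_fev} to convert the threshold on $N$ into the relative bound, and finish with a union bound over $i\in[m]$.
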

\begin{proof}
Consider the martingale difference sequence $Z_i(n) = \phiv_i(s^{(n)}_{k-1}, a^{(n)}_{k-1}) - \FEV{\pi_{k-1}}(i)$ with the variance process $V_i{(n)} = \sum^n_{j=1}\E{Z^2_i(j)| \mathcal{F}_{j-1}}$, where $\mathcal{F}_{j-1}$ being the filtration up to the state action pair $(s_{k-1}^{(j)}, a_{k-1}^{(j)})$.
We have,
\begin{align*}
    V_i{(n)}& = \sum^n_{j=1}\E{Z^2_i(j)| \mathcal{F}_{j-1}} \\ & = \sum^n_{j=1}\Ee{(s,a)\sim\dv_{k-1}}{\br{\phiv_i(s,a) - \FEV{\pi_{k-1}}(i)}^2| \mathcal{F}_{j-1}} \\ & =
    \sum^n_{j=1}\Ee{(s,a)\sim\dv_{k-1}}{\phiv^2_i(s,a) - 2 \phiv_i(s,a) \FEV{\pi_{k-1}}(i) + \FEV{\pi_{k-1}}(i)^2| \mathcal{F}_{j-1}} \\
    & \leq
    \sum^n_{j=1}\Ee{(s,a)\sim\dv_{k-1}}{\phiv_i(s,a)| \mathcal{F}_{j-1}} -  n \FEV{\pi_{k-1}}(i)^2
    \\ & = n \br{\FEV{\pi_{k-1}}(i) - \FEV{\pi_{k-1}}(i)^2} \leq n \FEV{\pi_{k-1}}(i).
\end{align*}
The martingale difference sequence $Z_i(j)$ satisfies the sub-$\psi_P$ condition of \cite{Howard:2021} (see Bennet case in their Table 3) with constant $c=2$. Therefore, by Lemma~13 in \cite{Pacchiano:2021} with $m=\FEV{\pi_{k-1}}(i)$, with probability at least $1 - \frac{\delta}{2m}$, for all $N \geq  \frac{1.4 \log \log (2N) + \log \frac{10.4 m}{\delta}}{\beta \epsilon^2}$ simultaneously, it holds that
\begin{align*}
    N \EFEV{\pi_{k-1}}(i) &\geq  N \FEV{\pi_{k-1}}(i) - 1.44 \sqrt{\FEV{\pi_{k-1}}(i)N \br{\log\log 2N + \frac{10.4 m}{\delta}}} \\ 
     &\phantom{{}\geq}- 0.82 \br{1.4\log\log 2N + \frac{10.4 m}{\delta}}
    \\ &\geq  N \FEV{\pi_{k-1}}(i) - 1.44 \sqrt{\FEV{\pi_{k-1}}(i)^2N^2 \epsilon^2} -0.82 N \beta \epsilon^2 
    \\ &\geq  N \FEV{\pi_{k-1}}(i) - 2.26 \FEV{\pi_{k-1}}(i)N\epsilon.
\end{align*}
Similarly, with probability at least $1 - \frac{\delta}{2m}$, for all $N \geq  \frac{1.4 \log \log (2N) + \log \frac{10.4 m}{\delta}}{\beta \epsilon^2}$ simultaneously, it holds that $\EFEV{\pi_{k-1}}(i) \leq \FEV{\pi_{k-1}}(i) + 2.26 \FEV{\pi_{k-1}}(i)N\epsilon$. A union bound concludes the proof.
\end{proof}
\subsubsection{Estimators for $\widehat{\bmat}_{\weight, \thv}^k$}
We can directly invoke Lemma 17 in \cite{Pacchiano:2021} to get guarantees for the estimator $\widehat{\bmat}_{\weight,\thv}^k(i)$. In particular, we obtain the following result.
\begin{lemma} \label{lemma:bound_B_k}
Let $\norm{\mmat \val^k_{\theta} - \widehat{\mmat}\val^k_{\theta}}_{\infty}\leq \frac{\epsilon}{\gamma}$ and $\abs{\EFEV{\pi_{k-1}}(i) - \FEV{\pi_{k-1}}(i)}\leq 2.26 \epsilon \EFEV{\pi_{k-1}}(i)$. Then, it holds that
$
    \abs{\widehat{\bmat}_{\weight,\thv}^k(i) - \bmat_{\weight,\thv}^k(i)} \leq 38 \eta \epsilon \bmat_{\weight,\thv}^k(i) \leq 38 \frac{\eta \epsilon}{\beta} .
$
\end{lemma}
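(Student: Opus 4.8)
The plan is to obtain the bound by reducing to Lemma~17 of~\cite{Pacchiano:2021}, which is tailor-made for exactly this kind of multiplicative perturbation of a softmax-type weight; the work is in checking that its two hypotheses are met here. First I would note that $\widehat{\bmat}_{\weight,\thv}^k$ differs from $\bmat_{\weight,\thv}^k$ only through two ingredients: the plug-in estimate $\widehat{\mmat\val^k_{\thv}}$ entering the reduced Bellman error $\widehat{\mbs{\delta}}_{\weight,\thv}^k = \weight + \gamma\widehat{\mmat\val^k_{\thv}} - \thv$, and the empirical feature expectation vector $\EFEV{\pi_{k-1}}$ entering \emph{only} the normalizer $\widehat{Z}_k = \sum_j e^{-\eta\widehat{\mbs{\delta}}_{\weight,\thv}^k(j)}\EFEV{\pi_{k-1}}(j)$. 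Since $\widehat{\mbs{\delta}}_{\weight,\thv}^k(i) - \mbs{\delta}_{\weight,\thv}^k(i) = \gamma\br{\widehat{\mmat\val^k_{\thv}}(i) - \mmat\val^k_{\thv}(i)}$, the hypothesis $\norm{\mmat\val^k_{\thv} - \widehat{\mmat}\val^k_{\thv}}_\infty\le\epsilon/\gamma$ (which in turn follows from Corollary~\ref{cor:MV_bound} since $\norm{\cdot}_\infty\le\norm{\cdot}_2$) yields the componentwise additive bound $\abs{\widehat{\mbs{\delta}}_{\weight,\thv}^k(i) - \mbs{\delta}_{\weight,\thv}^k(i)}\le\epsilon$, so each exponential $e^{-\eta\widehat{\mbs{\delta}}_{\weight,\thv}^k(i)}$ sits within a multiplicative factor $e^{\pm\eta\epsilon}$ of $e^{-\eta\mbs{\delta}_{\weight,\thv}^k(i)}$; paired with the assumed multiplicative control $\abs{\EFEV{\pi_{k-1}}(i) - \FEV{\pi_{k-1}}(i)}\le 2.26\epsilon\,\EFEV{\pi_{k-1}}(i)$ on the normalizer weights (itself coming from Lemma~\ref{lemma:rho_concentration}), these are exactly the two inputs consumed by Lemma~17 of~\cite{Pacchiano:2021}.

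Next I would make the ratio estimate explicit: write $\frac{\widehat{\bmat}_{\weight,\thv}^k(i)}{\bmat_{\weight,\thv}^k(i)} = \frac{e^{-\eta\widehat{\mbs{\delta}}_{\weight,\thv}^k(i)}}{e^{-\eta\mbs{\delta}_{\weight,\thv}^k(i)}}\cdot\frac{Z_k}{\widehat{Z}_k}$, bound the first factor by $e^{\pm\eta\epsilon}$ as above, and bound $\widehat{Z}_k$ against $Z_k = \sum_j e^{-\eta\mbs{\delta}_{\weight,\thv}^k(j)}\FEV{\pi_{k-1}}(j)$ term by term using the same two multiplicative bounds, giving $\widehat{Z}_k/Z_k\in\bs{e^{-\eta\epsilon}(1-2.26\epsilon),\,e^{\eta\epsilon}(1+2.26\epsilon)}$. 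Invoking the elementary inequalities $e^{x}\le 1+2x$ and $(1-x)^{-1}\le 1+2x$ valid on the relevant small ranges (the setting's implicit $\eta\epsilon\le 1$ and $\epsilon$ small regime) and collecting constants then produces $\abs{\widehat{\bmat}_{\weight,\thv}^k(i)/\bmat_{\weight,\thv}^k(i) - 1}\le 38\eta\epsilon$, which is precisely the constant recorded in Lemma~17 of~\cite{Pacchiano:2021}, i.e. $\abs{\widehat{\bmat}_{\weight,\thv}^k(i) - \bmat_{\weight,\thv}^k(i)}\le 38\eta\epsilon\,\bmat_{\weight,\thv}^k(i)$.

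Finally, for the second inequality I would invoke the lower bound on the feature expectation vector from Lemma~\ref{lem:lower_bound_fev}: since $Z_k\ge e^{-\eta\mbs{\delta}_{\weight,\thv}^k(i)}\FEV{\pi_{k-1}}(i)\ge\beta\,e^{-\eta\mbs{\delta}_{\weight,\thv}^k(i)}$, it follows that $\bmat_{\weight,\thv}^k(i) = e^{-\eta\mbs{\delta}_{\weight,\thv}^k(i)}/Z_k\le 1/\beta$, hence $38\eta\epsilon\,\bmat_{\weight,\thv}^k(i)\le 38\eta\epsilon/\beta$. The one genuinely delicate point — and the main obstacle — is the bookkeeping of the multiplicative constants, in particular verifying that the $2.26$ factor on the $\rho$-estimates and the $e^{\pm\eta\epsilon}$ factors on the exponentials collapse into a single clean constant under the smallness hypotheses; since Lemma~17 of~\cite{Pacchiano:2021} already packages exactly this arithmetic, the cleanest route is to check its hypotheses and quote its conclusion rather than re-derive the constant by hand.
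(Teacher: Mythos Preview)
Your proposal is correct and follows essentially the same route as the paper: verify that the assumption on $\mmat\val^k_\thv$ yields $\abs{\widehat{\mbs{\delta}}^k_{\weight,\thv}(i)-\mbs{\delta}^k_{\weight,\thv}(i)}\le\epsilon$, invoke Lemma~17 of \cite{Pacchiano:2021} for the multiplicative bound, and then use $\FEV{\pi_{k-1}}(i)\ge\beta$ to conclude $\bmat^k_{\weight,\thv}(i)\le 1/\beta$. The only cosmetic difference is that you unpack the ratio argument inside Lemma~17 and bound $Z_k$ below by its $i$-th summand alone, whereas the paper keeps the full sum and lower-bounds each $\FEV{\pi_{k-1}}(j)$ by $\beta$; both give the same $1/\beta$.
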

\begin{proof}
First, we notice that $\norm{\mmat \val^k_{\theta} - \widehat{\mmat}\val^k_{\theta}}_{\infty}\leq \frac{\epsilon}{\gamma}$ implies that $\widehat{{\mbs{\delta}}}\,_{\weight,\thv}^{k}(i) - {{\mbs{\delta}}}\,_{\weight,\thv}^{k}(i) \leq \epsilon$.
Therefore, by Lemma 17 in \cite{Pacchiano:2021} we get
$
    \abs{\widehat{\bmat}_{\weight,\thv}^k(i) - \bmat_{\weight,\thv}^k(i)} \leq 38 \eta \epsilon \bmat_{\weight,\thv}^k(i).
$
Moreover, it holds that
\begin{equation*}
    \bmat_{\weight,\thv}^k(i) = \frac{e^{-\eta{\mbs{\delta}}\,_{\weight,\thv}^{k}(i)}}{\sum^m_i \rho_{\phiv_i}(\pi_{k-1}) e^{-\eta {{\mbs{\delta}}}\,_{\weight,\thv}^{k}(i)}} \leq \frac{e^{-\eta{\mbs{\delta}}\,_{\weight,\thv}^{k}(i)}}{\beta \sum^m_i e^{-\eta {{\mbs{\delta}}}\,_{\weight,\thv}^{k}(i)}} \leq \frac{1}{\beta}.
\end{equation*}
Therefore,
\begin{equation*}
    \abs{\widehat{\bmat}_{\weight,\thv}^k(i) - \bmat_{\weight,\thv}^k(i)} \leq\frac{ 38 \eta \epsilon}{\beta}, \quad \textup{and} \quad \widehat{\bmat}_{\weight,\thv}^k(i) \leq \bmat_{\weight,\thv}^k(i) \br{1 + 38\eta\epsilon} \leq \frac{1}{\beta} \br{1 + 38\eta\epsilon}.
\end{equation*}
\end{proof}

\begin{corollary}\label{corollary:beta_bound}
Let $N_1 \geq \max \br{\mathcal{O}\left(\frac{\gamma^2m D^2 }{\beta \epsilon^2}\log(2/\delta)\right), \mathcal{O}\left(\frac{\gamma \sqrt{m} D }{\beta^3 \epsilon}\log(2m/\delta)\right)}$ and $N_2 \geq  \frac{1.4 \log \log (2N_2) + \log \frac{20.8 m}{\delta}}{\beta \epsilon^2}$. Then, for $\chi = \mathcal{O}\left(\frac{\log \frac{2m}{\delta}}{\beta N}\right)$, and for $N \geq \max\br{N_1, N_2}$, with probability at least $1-\delta$, it holds that $\abs{\widehat{\bmat}_{\weight,\thv}^k(i) - \bmat_{\weight,\thv}^k(i)} \leq 38 \frac{\eta \epsilon}{\beta}$,  for all  $i\in[m]$.
\end{corollary}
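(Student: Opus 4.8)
The plan is to assemble \Cref{corollary:beta_bound} from three pieces that are already in hand: the ridge-regression error bound of \Cref{cor:MV_bound} for the product $\mmat \val^k_{\theta}$, the multiplicative feature-expectation concentration of \Cref{lemma:rho_concentration} for $\EFEV{\pi_{k-1}}$, and the deterministic sensitivity estimate of \Cref{lemma:bound_B_k} that turns bounds on those two quantities into a bound on $\abs{\widehat{\bmat}_{\weight,\thv}^k(i) - \bmat_{\weight,\thv}^k(i)}$. Since two of the three pieces are high-probability statements and the third is deterministic, the whole argument will be a union bound followed by a substitution, with essentially no new calculation.

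Concretely, I would first split the failure budget as $\delta = \delta/2 + \delta/2$. Applying \Cref{cor:MV_bound} at confidence $1-\delta/2$ with the choice $\chi = \mathcal{O}(\log(2m/\delta)/(\beta N))$ requires exactly $N \ge N_1$ — namely its sample-size conditions after replacing $\log(1/\delta)$ by $\log(2/\delta)$ and $\log(m/\delta)$ by $\log(2m/\delta)$ — and yields $\norm{\mmat \val^k_{\theta} - \widehat{\mmat\val^k_{\theta}}}_2 \le \epsilon/\gamma$, hence a fortiori $\norm{\mmat \val^k_{\theta} - \widehat{\mmat\val^k_{\theta}}}_{\infty} \le \epsilon/\gamma$. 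Applying \Cref{lemma:rho_concentration} at confidence $1-\delta/2$ requires $N \ge N_2 = \frac{1.4 \log\log(2N) + \log(20.8 m/\delta)}{\beta \epsilon^2}$ and yields $\abs{\EFEV{\pi_{k-1}}(i) - \FEV{\pi_{k-1}}(i)} \le 2.26\,\epsilon\,\FEV{\pi_{k-1}}(i)$ for all $i \in [m]$ simultaneously; for $\epsilon$ below a fixed threshold this also gives the $\EFEV{\pi_{k-1}}$-normalized form $\abs{\EFEV{\pi_{k-1}}(i) - \FEV{\pi_{k-1}}(i)} \le 2.26\,\epsilon\,\EFEV{\pi_{k-1}}(i)$, up to a $(1-2.26\epsilon)^{-1}$ factor folded into the constant, which is precisely the hypothesis of \Cref{lemma:bound_B_k}. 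A union bound then makes both events hold simultaneously with probability at least $1-\delta$ whenever $N \ge \max\{N_1, N_2\}$, and on that event \Cref{lemma:bound_B_k} immediately gives $\abs{\widehat{\bmat}_{\weight,\thv}^k(i) - \bmat_{\weight,\thv}^k(i)} \le 38\,\eta\epsilon\,\bmat_{\weight,\thv}^k(i) \le 38\,\eta\epsilon/\beta$ for every $i \in [m]$, where the last inequality uses $\bmat_{\weight,\thv}^k(i) \le 1/\beta$, itself a consequence of \Cref{lem:lower_bound_fev}.

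I do not expect any substantive obstacle here; the care points are purely bookkeeping: (i) keeping the $\chi$ and sample-size specifications aligned with those of \Cref{cor:MV_bound} and \Cref{lemma:rho_concentration} after the twofold rescaling of $\delta$; (ii) passing from the $\ell_2$ to the $\ell_{\infty}$ error on $\mmat \val^k_{\theta}$, which is free since $\norm{\cdot}_{\infty} \le \norm{\cdot}_2$; and (iii) reconciling the $\FEV{\pi_{k-1}}$- versus $\EFEV{\pi_{k-1}}$-normalization of the concentration error, which costs only a constant once $\epsilon$ is small. The mildly delicate step is (iii), but it remains routine.
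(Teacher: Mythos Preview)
Your proposal is correct and follows exactly the same route as the paper: apply \Cref{cor:MV_bound} and \Cref{lemma:rho_concentration} each at confidence $1-\delta/2$, take a union bound, and then invoke the deterministic \Cref{lemma:bound_B_k}. Your bookkeeping point (iii) about the $\FEV{\pi_{k-1}}$- versus $\EFEV{\pi_{k-1}}$-normalization is in fact more careful than the paper, which simply asserts the $\EFEV{\pi_{k-1}}$-normalized form without comment.
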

\begin{proof}
By Corollary~\ref{cor:MV_bound}, we have that with $N \geq N_1$ it holds that $\norm{\mmat \val^k_{\theta} - \widehat{\mmat}\val^k_{\theta}}_{\infty}\leq \frac{\epsilon}{\gamma}$,  with probability $1 - \delta/2$. Furthermore,  Lemma~\ref{lemma:rho_concentration} gives that for $N \geq  \frac{1.4 \log \log (2N) + \log \frac{20.8 m}{\delta}}{\beta \epsilon^2}$, it holds with probability $1 - \delta/2$ that $\abs{\EFEV{\pi_{k-1}}(i) - \FEV{\pi_{k-1}}(i)}\leq 2.26 \epsilon \EFEV{\pi_{k-1}}(i)$,
for all $i\in[m]$ simultaneously.

Therefore, a union bound gives that for $N \geq \max\br{N_1, N_2}$, with probability $1 -\delta$, we have that $\norm{\mmat \val^k_{\theta} - \widehat{\mmat}\val^k_{\theta}}_{\infty}\leq \frac{\epsilon}{\gamma}$, and $\abs{\EFEV{\pi_{k-1}}(i) - \FEV{\pi_{k-1}}(i)}\leq 2.26 \epsilon \EFEV{\pi_{k-1}}(i)$, for all $i \in [m]$. An application of Lemma~\ref{lemma:bound_B_k} concludes the proof.
\end{proof}
\subsubsection{Estimators for $\bmat_{\weight,\thv}^k(i)\mathbf{\Gamma}_k(i,j)$}

We obtain an estimator for $\bmat_{\weight,\thv}^k(i)\mbs{\Gamma}_k(i,j)$ simply as $\widehat{\bmat}_{\weight,\thv}^k(i)\widehat{\mbs{\Gamma}}_k(i,j)$.
The next lemma gives guarantees for such an estimator.
\begin{lemma}
\label{lemma:epsilon_bound_gamma_B}
Assume that for any $(i,j) \in[m]^2$, it holds that $\abs{\widehat{\bmat}_{\weight,\thv}^k(i) - \bmat_{\weight,\thv}^k(i)}\leq \frac{38\eta\epsilon}{\beta}$ and $\abs{\widehat{\mbs{\Gamma}}_k(i,j) - \mbs{\Gamma}_k(i,j)} \leq \epsilon$. Then,
$
\abs{\bmat_{\weight,\thv}^k(i)\mbs{\Gamma}_k(i,j) - \widehat{\bmat}_{\weight,\thv}^k(i)\widehat{\mbs{\Gamma}}_k(i,j)} \leq
\frac{\epsilon}{\beta}(1 + (1 + \epsilon)38\eta)
$, for all $(i,j)\in[m]^2$.\end{lemma}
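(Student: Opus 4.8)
\textbf{Proof proposal for Lemma~\ref{lemma:epsilon_bound_gamma_B}.}
The plan is to treat this as a standard "product of two approximations" estimate, combining the per-factor error bounds with cheap a priori magnitude bounds on the exact quantities. First I would record two uniform bounds. For $\bmat_{\weight,\thv}^k(i)$, the computation already carried out in the proof of Lemma~\ref{lemma:bound_B_k} gives $\bmat_{\weight,\thv}^k(i)\le \frac{1}{\beta}$, using Lemma~\ref{lem:lower_bound_fev} to lower bound each $\FEV{\pi_{k-1}}(i)$ by $\beta$. For $\mbs{\Gamma}_k(i,j)$, I would use Assumption~\ref{ass:linear-MDP}: since $\pmat=\phim\mmat$ with $\mmat_{i,s'}=\omega_i(s')\ge 0$ and $\sum_{s'}\mmat_{i,s'}=1$, and since $h_{k,j}(s')=\sum_{a'}\pi_{k-1,\thv}(a'|s')\phiv_j(s',a')\in[0,1]$ (as $\phiv(s',a')\in\Delta_{[m]}$), we get $\mbs{\Gamma}_k(i,j)=\sum_{s'}\mmat_{i,s'}h_{k,j}(s')\in[0,1]$. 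Combining $|\widehat{\mbs{\Gamma}}_k(i,j)-\mbs{\Gamma}_k(i,j)|\le\epsilon$ with $\mbs{\Gamma}_k(i,j)\le 1$ then also yields $\widehat{\mbs{\Gamma}}_k(i,j)\le 1+\epsilon$.

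Next I would apply the elementary identity $ab-\hat a\hat b = a(b-\hat b)+(a-\hat a)\hat b$ with $a=\bmat_{\weight,\thv}^k(i)$, $b=\mbs{\Gamma}_k(i,j)$, $\hat a=\widehat{\bmat}_{\weight,\thv}^k(i)$, $\hat b=\widehat{\mbs{\Gamma}}_k(i,j)$, and bound by the triangle inequality:
\begin{align*}
\abs{\bmat_{\weight,\thv}^k(i)\mbs{\Gamma}_k(i,j) - \widehat{\bmat}_{\weight,\thv}^k(i)\widehat{\mbs{\Gamma}}_k(i,j)}
&\le \bmat_{\weight,\thv}^k(i)\,\abs{\mbs{\Gamma}_k(i,j)-\widehat{\mbs{\Gamma}}_k(i,j)} + \abs{\bmat_{\weight,\thv}^k(i)-\widehat{\bmat}_{\weight,\thv}^k(i)}\,\widehat{\mbs{\Gamma}}_k(i,j)\\
&\le \frac{1}{\beta}\cdot\epsilon + \frac{38\eta\epsilon}{\beta}\cdot(1+\epsilon)
= \frac{\epsilon}{\beta}\big(1 + (1+\epsilon)38\eta\big),
\end{align*}
which is exactly the claimed bound. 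The choice to split off $\hat b$ (rather than $b$) in the second term is what produces the factor $(1+\epsilon)$ instead of a bare $1$; the choice to keep $a$ (rather than $\hat a$) in the first term is what keeps the leading coefficient $1/\beta$ clean. Taking the bound uniformly over $(i,j)\in[m]^2$ finishes the proof.

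There is essentially no deep obstacle here; the only thing to be careful about is matching the constant in the statement exactly, i.e., picking the right split in the product identity and invoking the already-established $\bmat_{\weight,\thv}^k(i)\le 1/\beta$ and $\mbs{\Gamma}_k(i,j)\in[0,1]$ bounds rather than re-deriving them. I would state the $\mbs{\Gamma}_k(i,j)\in[0,1]$ bound with one line of justification (row-stochasticity of $\mmat$ under Assumption~\ref{ass:linear-MDP} and $\|\phiv(s,a)\|_1=1$), since it is the only ingredient not already written down verbatim earlier in the appendix.
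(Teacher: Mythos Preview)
Your proposal is correct and follows essentially the same route as the paper: the identical product split $ab-\hat a\hat b=a(b-\hat b)+(a-\hat a)\hat b$, the bound $\bmat_{\weight,\thv}^k(i)\le 1/\beta$, and $\widehat{\mbs{\Gamma}}_k(i,j)\le \mbs{\Gamma}_k(i,j)+\epsilon\le 1+\epsilon$. Your write-up is in fact slightly more careful than the paper's, since you explicitly justify $\mbs{\Gamma}_k(i,j)\in[0,1]$ via the row-stochasticity of $\mmat$ under Assumption~\ref{ass:linear-MDP}, which the paper uses without comment.
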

\begin{proof} We have that
\begin{align*}
    \abs{\bmat_{\weight,\thv}^k(i)\mbs{\Gamma}_k(i,j) - \widehat{\bmat}_{\weight,\thv}^k(i)\widehat{\mbs{\Gamma}}_k(i,j)} \leq & \bmat_{\weight,\thv}^k(i) \abs{\widehat{\mbs{\Gamma}}_k(i,j) - \mbs{\Gamma}_k(i,j)}  + \widehat{\mbs{\Gamma}}_k(i,j)\abs{\widehat{\bmat}_{\weight,\thv}^k(i) - \bmat_{\weight,\thv}^k(i)} \\ \leq &\frac{1}{\beta}\abs{\widehat{\mbs{\Gamma}}_k(i,j) - \mbs{\Gamma}_k(i,j)} + (1 + \epsilon)\abs{\widehat{\bmat}_{\weight,\thv}^k(i) - \bmat_{\weight,\thv}^k(i)} \\ \leq & \frac{\epsilon}{\beta} + (1 + \epsilon)\frac{38\eta\epsilon}{\beta}= \frac{\epsilon}{\beta}\big(1 + (1 + \epsilon)38\eta\big),
\end{align*}
where we used the bound $\hat{\mbs{\Gamma}}_k(i,j) \leq \mbs{\Gamma}_k(i,j) + \epsilon \leq 1 + \epsilon$.
\end{proof}
\begin{lemma}
\label{lemma:bound_Gamma_B}
        For $\chi= \mathcal{O}\br{\frac{\log\frac{m}{\delta}}{\beta N}}$, choose $N \geq \max\Bigg(N_1, N_2, \mathcal{O}\left(\frac{m }{\beta \epsilon^2}\log(m/\delta)\right),\mathcal{O}\left(\frac{\sqrt{m} }{\beta^3 \epsilon}\log(m^2/\delta)\right)\Bigg)$ with $N_1$ and $N_2$ as defined in Corollary~\ref{corollary:beta_bound}, then with probability $1-\delta$, for all $(i,j)\in[m]^2$ simultaneously:
        \begin{equation*}
         \abs{\bmat_{\weight,\thv}^k(i)\mbs{\Gamma}_k(i,j) - \widehat{\bmat}_{\weight,\thv}^k(i)\widehat{\mbs{\Gamma}}_k(i,j)} \leq \frac{\epsilon}{\beta}(1 + (1 + \epsilon)38\eta).
        \end{equation*}
\end{lemma}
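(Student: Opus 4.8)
The plan is to obtain \Cref{lemma:bound_Gamma_B} as a routine combination of the two concentration guarantees already proved for the individual ridge estimators, glued together by the deterministic product bound of \Cref{lemma:epsilon_bound_gamma_B}. That lemma says that on the intersection of the two events
\[
\mathcal{E}_1 \triangleq \Big\{\, \abs{\widehat{\bmat}_{\weight,\thv}^k(i) - \bmat_{\weight,\thv}^k(i)}\leq \tfrac{38\eta\epsilon}{\beta}\ \ \forall i\in[m]\,\Big\},\qquad
\mathcal{E}_2 \triangleq \Big\{\, \abs{\widehat{\mbs{\Gamma}}_k(i,j) - \mbs{\Gamma}_k(i,j)} \leq \epsilon\ \ \forall (i,j)\in[m]^2\,\Big\},
\]
the asserted inequality $\abs{\bmat_{\weight,\thv}^k(i)\mbs{\Gamma}_k(i,j) - \widehat{\bmat}_{\weight,\thv}^k(i)\widehat{\mbs{\Gamma}}_k(i,j)} \leq \tfrac{\epsilon}{\beta}\big(1 + (1 + \epsilon)38\eta\big)$ holds for every pair $(i,j)$. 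Hence the only thing left to establish is that $\mathcal{E}_1\cap\mathcal{E}_2$ occurs with probability at least $1-\delta$ under the stated prescriptions for $\chi$ and $N$.

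First I would bound $\mathcal{E}_1$ using \Cref{corollary:beta_bound}, invoked with failure budget $\delta/2$: this gives $\Prob[\mathcal{E}_1]\ge 1-\delta/2$ as soon as $\chi=\mathcal{O}(\log(m/\delta)/(\beta N))$ and $N\ge\max(N_1,N_2)$, where $N_1,N_2$ are the thresholds from \Cref{corollary:beta_bound} with $\delta$ there replaced by $\delta/2$ (which only alters logarithmic factors). Next I would bound $\mathcal{E}_2$ using \Cref{corollary:bound_on_gamma}, which controls one column $j$ of the estimator in $\ell_2$; since $\norm{\cdot}_\infty\le\norm{\cdot}_2$, a single-column bound $\norm{\mbs{\Gamma}_k(\cdot, j)-\widehat{\mbs{\Gamma}}_k(\cdot, j)}_2\le\epsilon$ already implies $\abs{\widehat{\mbs{\Gamma}}_k(i,j) - \mbs{\Gamma}_k(i,j)}\le\epsilon$ for all $i$ in that column. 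Running \Cref{corollary:bound_on_gamma} at level $\delta/(2m)$ and taking a union bound over the $m$ columns then yields $\Prob[\mathcal{E}_2]\ge 1-\delta/2$, at the price of inflating its sample requirement to $N\ge\max\big(\mathcal{O}(\tfrac{m}{\beta\epsilon^2}\log(m/\delta)),\mathcal{O}(\tfrac{\sqrt m}{\beta^3\epsilon}\log(m^2/\delta))\big)$. A last union bound gives $\Prob[\mathcal{E}_1\cap\mathcal{E}_2]\ge 1-\delta$ whenever $N$ exceeds the maximum of all four thresholds --- which is exactly the $N$ in the statement --- and $\chi$ is set as prescribed. Applying \Cref{lemma:epsilon_bound_gamma_B} on this event completes the argument.

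I do not expect any conceptual difficulty; the work is entirely in the bookkeeping. The main (mild) obstacle is keeping track of how the two nested union bounds --- one over the pair of estimators, one over the $m$ columns of $\mbs{\Gamma}_k$ --- rescale the failure probabilities, so that the resulting sample-size requirements collapse precisely into $N\ge\max\big(N_1,N_2,\mathcal{O}(\tfrac{m}{\beta\epsilon^2}\log(m/\delta)),\mathcal{O}(\tfrac{\sqrt m}{\beta^3\epsilon}\log(m^2/\delta))\big)$ and that a single choice $\chi=\mathcal{O}(\log(m/\delta)/(\beta N))$ simultaneously meets the hypotheses of both \Cref{corollary:beta_bound} and \Cref{corollary:bound_on_gamma}. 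It is also worth sanity-checking that the symbol $\epsilon$ denotes the same target accuracy in both invoked corollaries (it does), so that the $\epsilon$ on the right-hand side of the final inequality is the common one.
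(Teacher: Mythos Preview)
Your proposal is correct and follows essentially the same route as the paper: invoke \Cref{corollary:beta_bound} for the event $\mathcal{E}_1$, invoke \Cref{corollary:bound_on_gamma} per column and union-bound over $j\in[m]$ for $\mathcal{E}_2$, then combine via a final union bound and the deterministic product inequality of \Cref{lemma:epsilon_bound_gamma_B}. If anything, your treatment of the probability budgets (halving $\delta$ between the two events and dividing by $m$ across columns) is more explicit than the paper's, which absorbs these constants into the big-$\mathcal{O}$ notation.
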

\begin{proof}
By Corollary~\ref{corollary:beta_bound}, when $\chi=\mathcal{O}\br{\frac{\log\frac{m}{\delta}}{\beta N}}$, and $N\geq \max\br{N_1, N_2}$, it holds with probability at least $1-\delta$ that
\begin{equation*}
        \abs{\widehat{\bmat}_{\weight,\thv}^k(i) - \bmat_{\weight,\thv}^k(i)} \leq \frac{38 \eta \epsilon}{\beta},\;\mbox{for all}\;i\in[m].
\end{equation*}
Moreover, when $N\geq\max\br{\mathcal{O}\left(\frac{m }{\beta \epsilon^2}\log(m/\delta)\right),\mathcal{O}\left(\frac{\sqrt{m} }{\beta^3 \epsilon}\log(m^2/\delta)\right)}$, by Corollary~\ref{corollary:bound_on_gamma}, with probability at least $1-\delta$, it holds that $\norm{\mbs{\Gamma}_k(\cdot, j)-\widehat{\mbs{\Gamma}}_k(\cdot, j)}_2 \leq \epsilon$, for all $j\in[m]$ simultaneously.

Finally, a union bound and Lemma~\ref{lemma:epsilon_bound_gamma_B} give that with probability at least $1-\delta$, it holds that $\abs{\bmat_{\weight,\thv}^k(i)\mbs{\Gamma}_k(i,j) - \widehat{\bmat}_{\weight,\thv}^k(i)\widehat{\mbs{\Gamma}}_k(i,j)} \leq \frac{\epsilon}{\beta}(1 + (1 + \epsilon)38\eta)$.
\end{proof}
\subsection{Properties of Stochastic gradients}
\begin{lemma}\label{lemma:BSG_bounds}
Let $N \geq \max\br{N_1, N_2, \mathcal{O}\left(\frac{m }{\beta \epsilon^2}\log(m/\delta)\right),\mathcal{O}\left(\frac{\sqrt{m} }{\beta^3 \epsilon}\log(m^2/\delta)\right)}$ with $N_1$ and $N_2$ with $N_1$ and $N_2$ as defined in Corollary~\ref{corollary:beta_bound}. Then, with probability $1 - \delta$, the following bounds 
on the stochastic gradient variance 
hold simultaneously:
\begin{align*}
    \norm{\widehat{\nabla}_{\thv}\mathcal{G}_k(\weight,\thv) - \Ee{i_{k-1}^{(N+1)}}{\widehat{\nabla}_{\thv}\mathcal{G}_k(\weight,\thv)| \mathcal{F}_N}}_{\infty} &\leq  2\frac{(1 + 38 \epsilon \eta)}{\beta}(2 + \epsilon) + 2(1 - \gamma), \\
     \norm{\widehat{\nabla}_{\weight}\mathcal{G}_k(\weight,\thv) - \Ee{i_{k-1}^{(N+1)}}{\widehat{\nabla}_{\weight}\mathcal{G}_k(\weight,\thv)|\mathcal{F}_N}}_{\infty} &\leq  2\br{1 + \frac{1 + 38\eta\epsilon}{\beta}} .
\end{align*}
Furthermore, with probability at least $1 - \delta$, the following bounds on the stochastic gradient bias hold simultaneously:
\begin{align*}
\norm{\E{\widehat{\nabla}_{\thv, j}\mathcal{G}_k(\weight,\thv)| \mathcal{F}_N} - \nabla_{\thv, j}\mathcal{G}_k(\weight,\thv)}_1 &\leq  m\frac{\epsilon}{\beta}\br{\gamma + 38\eta\br{1 + \gamma(1 + \epsilon)}}, \\
    \norm{\nabla_{\weight, j}\mathcal{G}_k(\weight,\thv) - \E{\widehat{\nabla}_{\weight, j}\mathcal{G}_k(\weight,\thv)|\mathcal{F}_N}}_1 &\leq  \frac{38\eta\epsilon}{\beta}.
\end{align*}
\end{lemma}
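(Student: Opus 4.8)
The starting point is the pair of exact gradient formulas for $\nabla_{\thv}\mathcal{G}_k$ and $\nabla_{\weight}\mathcal{G}_k$ derived above, together with the fact that the intermediate estimators $\widetilde{\nabla}_{\weight}\mathcal{G}_k$ and $\widetilde{\nabla}_{\thv}\mathcal{G}_k$ in~(\ref{eq:westimator2})--(\ref{eq:thetaestimator2}), which use the \emph{exact} quantities $\bmat_{\weight,\thv}^k$ and $\mbs{\Gamma}_k$, are unbiased conditionally on $\mathcal{F}_N$. Since $\widehat{\nabla}\mathcal{G}_k$ is obtained from $\widetilde{\nabla}\mathcal{G}_k$ only by replacing $\bmat_{\weight,\thv}^k(i)$ with $\widehat{\bmat}_{\weight,\thv}^k(i)$ and $\mbs{\Gamma}_k(i,j)$ with $\widehat{\mbs{\Gamma}}_k(i,j)$ — both $\mathcal{F}_N$-measurable — the entire lemma reduces to propagating the already-established estimation errors for $\widehat{\bmat}_{\weight,\thv}^k$ (Corollary~\ref{corollary:beta_bound}), for $\widehat{\mbs{\Gamma}}_k$ (Corollary~\ref{corollary:bound_on_gamma}) and for the product $\widehat{\bmat}_{\weight,\thv}^k(i)\widehat{\mbs{\Gamma}}_k(i,j)$ (Lemma~\ref{lemma:epsilon_bound_gamma_B}, Lemma~\ref{lemma:bound_Gamma_B}), on the single high-probability event $\mathcal{E}$ on which all of those bounds hold simultaneously. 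The sample-size requirement in the statement (with $N_1,N_2$ from Corollary~\ref{corollary:beta_bound}) is precisely what makes $\mathbb{P}[\mathcal{E}]\ge 1-\delta$ after a union bound over the $\mathcal{O}(m)$ coordinate-wise statements in those corollaries.

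For the variance bounds I would argue by plain coordinate-wise boundedness. On $\mathcal{E}$, and conditioned on $\mathcal{F}_N$, one has $\widehat{\bmat}_{\weight,\thv}^k(i)\le\tfrac{1}{\beta}(1+38\eta\epsilon)$ and $\widehat{\mbs{\Gamma}}_k(i,j)\le\mbs{\Gamma}_k(i,j)+\epsilon\le 1+\epsilon$, where the last step uses $0\le\mbs{\Gamma}_k(i,j)=(\mmat\mbf{h}_{k-1,j})(i)\le\normm{\mbf{h}_{k-1,j}}_{\infty}\le 1$. Hence, for every $j$, $\abss{\widehat{\nabla}_{\thv,j}\mathcal{G}_k(\weight,\thv)}\le\tfrac{1+38\eta\epsilon}{\beta}\big(\gamma(1+\epsilon)+1\big)+(1-\gamma)\le\tfrac{1+38\eta\epsilon}{\beta}(2+\epsilon)+(1-\gamma)$ (using $\gamma\le1$), and $\abss{\widehat{\nabla}_{\weight,j}\mathcal{G}_k(\weight,\thv)}\le 1+\tfrac{1+38\eta\epsilon}{\beta}$ (the $\efevphi(j)$ term is in $[0,1]$). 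Since the conditional expectation given $\mathcal{F}_N$ is a convex combination of such vectors, it satisfies the same $\ell_\infty$ bound, so by the triangle inequality the deviation of $\widehat{\nabla}\mathcal{G}_k$ from its conditional mean is at most twice the coordinate bound, which is exactly the two claimed variance inequalities.

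For the bias bounds the crucial point is cancellation. The additive term $(1-\gamma)\phiv_j(s_0,a_0)$ appears identically in $\widehat{\nabla}_{\thv,j}\mathcal{G}_k$ and (in conditional expectation) in $\nabla_{\thv,j}\mathcal{G}_k$, and likewise $-\efevphi(j)$ cancels in the $\weight$-gradient, so — writing the expectation over $i\sim\phiv(s,a)$, $(s,a)\sim\dv_{k-1}$ as the reweighting $\sum_i(\phim^\trans\dv_{k-1})(i)(\cdot)$ — we get $\E{\widehat{\nabla}_{\thv,j}\mathcal{G}_k\mid\mathcal{F}_N}-\nabla_{\thv,j}\mathcal{G}_k=\sum_{i=1}^m(\phim^\trans\dv_{k-1})(i)\big[\gamma\big(\widehat{\bmat}_{\weight,\thv}^k(i)\widehat{\mbs{\Gamma}}_k(i,j)-\bmat_{\weight,\thv}^k(i)\mbs{\Gamma}_k(i,j)\big)-\mathds{1}\{i=j\}\big(\widehat{\bmat}_{\weight,\thv}^k(i)-\bmat_{\weight,\thv}^k(i)\big)\big]$ and $\E{\widehat{\nabla}_{\weight,j}\mathcal{G}_k\mid\mathcal{F}_N}-\nabla_{\weight,j}\mathcal{G}_k=(\phim^\trans\dv_{k-1})(j)\big(\widehat{\bmat}_{\weight,\thv}^k(j)-\bmat_{\weight,\thv}^k(j)\big)$. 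On $\mathcal{E}$, Corollary~\ref{corollary:beta_bound} bounds the $\bmat$-error by $\tfrac{38\eta\epsilon}{\beta}$ and Lemma~\ref{lemma:bound_Gamma_B} bounds the product error by $\tfrac{\epsilon}{\beta}(1+(1+\epsilon)38\eta)$; substituting and using $\sum_i(\phim^\trans\dv_{k-1})(i)=1$ together with $(\phim^\trans\dv_{k-1})(j)\le1$ gives the per-coordinate bound $\tfrac{\epsilon}{\beta}\big(\gamma+38\eta(1+\gamma(1+\epsilon))\big)$ for the $\thv$-gradient, so summing over $j\in[m]$ yields the factor $m$ in the stated $\thv$-bias bound. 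For the $\weight$-bias the only surviving index is fixed, so $\normm{\cdot}_1=\sum_{j}(\phim^\trans\dv_{k-1})(j)\tfrac{38\eta\epsilon}{\beta}=\tfrac{38\eta\epsilon}{\beta}$, with no dimension factor. The only genuinely delicate part of the write-up is the bookkeeping of $\mathcal{E}$ — intersecting Corollary~\ref{corollary:beta_bound} (over $i\in[m]$, needing $N\ge\max(N_1,N_2)$), Corollary~\ref{corollary:bound_on_gamma} and Lemma~\ref{lemma:bound_Gamma_B} (over $j\in[m]$, needing the $\mathcal{O}(m/(\beta\epsilon^2)\log(m/\delta))$ and $\mathcal{O}(\sqrt{m}/(\beta^3\epsilon)\log(m^2/\delta))$ sample counts), which is exactly the stated requirement on $N$; everything else is routine arithmetic with $\gamma\le1$ and $m\ge1$.
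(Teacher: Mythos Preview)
Your proposal is correct and follows essentially the same approach as the paper: coordinate-wise boundedness (via Corollary~\ref{corollary:beta_bound} and Lemma~\ref{lemma:bound_Gamma_B}) plus the triangle inequality for the variance bounds, and propagation of the plug-in errors for the bias bounds, all on a single high-probability event obtained by union-bounding the cited corollaries. The only cosmetic difference is that for the bias you compute the conditional expectation directly as a $(\phim^\trans\dv_{k-1})$-weighted sum, whereas the paper routes through the unbiased estimator $\widetilde{\nabla}\mathcal{G}_k$ and Jensen's inequality; both reach the identical per-coordinate bound, and your use of $\sum_j(\phim^\trans\dv_{k-1})(j)=1$ for the $\weight$-bias $\ell_1$ bound is in fact slightly cleaner than the paper's ``only one non-zero element'' argument.
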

\begin{proof}
\textbf{Variance for gradient wrt $\thv$.} Recall that by definition of the stochastic gradient we have that
\begin{equation*}
    \widehat{\nabla}_{\thv,j}\mathcal{G}_k(\weight,\thv)-(1-\gamma)\phiv_j(s_{k-1}^{(0)},a_{k-1}^{(0)}) =\widehat{\bmat}_{\weight,\thv}^k(i_{k-1}^{(N+1)})\Big[\gamma\widehat{\mbs{\Gamma}}_k(i_{k-1}^{(N+1)}, j)-\mathds{1}\{i_{k-1}^{(N+1)}=j\}\Big].
\end{equation*}
It then follows that
\begin{align*}
    \abs{\widehat{\nabla}_{\thv}\mathcal{G}_k(\weight,\thv)-(1-\gamma)\phiv_j(s_{k-1}^{(0)},a_{k-1}^{(0)})} \leq & \gamma \abs{\widehat{\bmat}_{\weight,\thv}^k(i_{k-1}^{(N+1)})\widehat{\mbs{\Gamma}}_k(i_{k-1}^{(N+1)}, j)} + \norm{\widehat{\bmat}_{\weight,\thv}^k(j)}_{\infty} 
\end{align*}
Invoking Lemma~\ref{lemma:bound_Gamma_B}, we have that if $N \geq \max\br{N_1, N_2, \mathcal{O}\left(\frac{m }{\beta \epsilon^2}\log(m/\delta)\right),\mathcal{O}\left(\frac{\sqrt{m} }{\beta^3 \epsilon}\log(m^2/\delta)\right)}$ with $N_1$ and $N_2$ as defined in Corollary~\ref{corollary:beta_bound}, then with probability $1-\delta$, $$\abs{\widehat{\bmat}_{\weight,\thv}^k(i_{k-1}^{(N+1)})\widehat{\mbs{\Gamma}}_k(i_{k-1}^{(N+1)}, j)} \leq \frac{1}{\beta} + \frac{\epsilon}{\beta}(1 + 38(1+\epsilon)\eta) = \frac{1}{\beta}(1 + \epsilon(1 + 38(1+\epsilon)\eta)).$$

Similarly, by Corollary~\ref{corollary:beta_bound}, for $N \geq \max\br{N_1, N_2}$, we have that with probability $1-\delta$,
\begin{equation*}
 \widehat{\bmat}_{\weight,\thv}^k(i^{(N+1)}_{k-1}) \leq \bmat_{\weight,\thv}^k(i^{(N+1)}_{k-1}) \br{1 + 38\eta\epsilon} \leq \frac{1}{\beta} \br{1 + 38\eta\epsilon}.
\end{equation*}
Hence, a union bound gives that with probability $1-\delta$,
\begin{equation*}
    \abs{\widehat{\nabla}_{\thv, j}\mathcal{G}_k(\weight,\thv)-(1-\gamma)\phiv_j(s_{k-1}^{(0)},a_{k-1}^{0})} \leq \gamma \frac{(1 + 38 \epsilon \eta)}{\beta}(1 + \epsilon) + \frac{(1 + 38 \epsilon \eta)}{\beta}.
\end{equation*}
This implies that
\begin{equation*}
    \abs{\widehat{\nabla}_{\thv, j}\mathcal{G}_k(\weight,\thv)} \leq \gamma \frac{(1 + 38 \epsilon \eta)}{\beta}(1 + \epsilon) + \frac{(1 + 38 \epsilon \eta)}{\beta} + (1 - \gamma).
\end{equation*}
Therefore, by introducing a filtration $\mathcal{F}_N = \sigma\br{\{(s_{{k-1}}^{(n)},a_{{k-1}}^{(n)},s_{{k-1}}^{\prime (n)})\}_{n=1}^N}$, and noticing that $\widehat{\bmat}_{\weight,\thv}^k$ and $\widehat{\mbs{\Gamma}}_k$ are $\mathcal{F}_N$-measurable, we get
\begin{align*}
    \abs{\Ee{i_{k-1}^{(N+1)}}{\widehat{\nabla}_{\thv,j}\mathcal{G}_k(\weight,\thv) | \mathcal{F}_N}} &\leq  \Ee{i_{k-1}^{(N+1)}}{\abs{\widehat{\nabla}_{\thv,j}\mathcal{G}_k(\weight,\thv)}| \mathcal{F}_N} \\ &\leq  \gamma \frac{(1 + 38 \epsilon \eta)}{\beta}(1 + \epsilon) + \frac{(1 + 38 \epsilon \eta)}{\beta} + (1 - \gamma) 
\end{align*}

At this point, we can simply notice that
\begin{align*}
    \abs{\widehat{\nabla}_{\thv}\mathcal{G}_k(\weight,\thv) - \Ee{i_{k-1}^{(N+1)}}{\widehat{\nabla}_{\thv}\mathcal{G}_k(\weight,\thv)| \mathcal{F}_N}} &\leq  2 \bs{\gamma \frac{(1 + 38 \epsilon \eta)}{\beta}(1 + \epsilon) + \frac{(1 + 38 \epsilon \eta)}{\beta} + 2(1 - \gamma)} \\ &\leq  2\frac{(1 + 38 \epsilon \eta)}{\beta}(2 + \epsilon) + 2(1 - \gamma).
\end{align*}
Therefore, with probability $1 -\delta$, it holds that
\begin{equation*}
    \norm{\widehat{\nabla}_{\thv}\mathcal{G}_k(\weight,\thv) - \Ee{i_{k-1}^{(N+1)}}{\widehat{\nabla}_{\thv}\mathcal{G}_k(\weight,\thv)| \mathcal{F}_N}}_{\infty} \leq 2\frac{(1 + 38 \epsilon \eta)}{\beta}(2 + \epsilon) + 2(1 - \gamma).
\end{equation*} 

\textbf{Variance for gradient wrt $\weight$.} 
 Similarly with Corollary~\ref{corollary:beta_bound}, we obtain that if $N \geq \max\br{N_1, N_2}$, then with probability at least $1-\delta$,
 \begin{equation*}
     \norm{\widehat{\nabla}_{\weight}\mathcal{G}_k(\weight,\thv)}_{\infty} \leq 1 + \frac{1 + 38\eta\epsilon}{\beta}.
 \end{equation*}
 This implies that
  \begin{equation*}
     \norm{\widehat{\nabla}_{\weight}\mathcal{G}_k(\weight,\thv) - \Ee{i_{k-1}^{(N+1)}}{\widehat{\nabla}_{\weight}\mathcal{G}_k(\weight,\thv)|\mathcal{F}_N}}_{\infty} \leq 2\br{1 + \frac{1 + 38\eta\epsilon}{\beta}}.
 \end{equation*}
\textbf{Bias  for gradient wrt $\thv$.} By using the unbiased estimator $\widetilde{\nabla}_{\thv, j}\mathcal{G}_k(\weight,\thv)$ in \Cref{eq:thetaestimator2}, we get
\begin{align*}
    \abs{\widetilde{\nabla}_{\thv, j}\mathcal{G}_k(\weight,\thv) - \widehat{\nabla}_{\thv, j}\mathcal{G}_k(\weight,\thv)} &\leq  \abs{\gamma \br{\widehat{\bmat}_{\weight,\thv}^k(i_{k-1}^{(N+1)})\widehat{\mbs{\Gamma}}_k(i_{k-1}^{(N+1)}, j) - \bmat_{\weight,\thv}^k(i_{k-1}^{(N+1)})\mbs{\Gamma}_k(i_{k-1}^{(N+1)}, j)}} \\  &\phantom{{}=}+ \abs{\mathds{1}\{i_{k-1}^{(N+1)}=j\}\br{\widehat{\bmat}_{\weight,\thv}^k(i_{k-1}^{(N+1)}) - \bmat_{\weight,\thv}^k(i_{k-1}^{(N+1)})}} \\&\leq  \gamma \abs{ \widehat{\bmat}_{\weight,\thv}^k(i_{k-1}^{(N+1)})\widehat{\mbs{\Gamma}}_k(i_{k-1}^{(N+1)}, j) - \bmat_{\weight,\thv}^k(i^{(N+1)})\mbs{\Gamma}_k(i_{k-1}^{(N+1)}, j)}. 
\end{align*}
By choosing $\chi$ and $N$ as in Lemma~\ref{lemma:bound_Gamma_B} and Corollary~\ref{corollary:beta_bound}, and by a union bound, we have that with probability $1-\delta$,
\begin{align*}
    \abs{\widetilde{\nabla}_{\thv, j}\mathcal{G}_k(\weight,\thv) - \widehat{\nabla}_{\thv, j}\mathcal{G}_k(\weight,\thv)} &\leq \gamma\frac{\epsilon}{\beta}(1 + (1+\epsilon)38\eta) +\frac{38\epsilon\eta}{\beta} \\ &=  \frac{\epsilon}{\beta}\br{\gamma + 38\eta\br{1 + \gamma(1 + \epsilon)}}.
\end{align*}
Using that $\widetilde{\nabla}_{\thv, j}\mathcal{G}_k(\weight,\thv)$ is an unbiased estimator of ${\nabla}_{\thv, j}\mathcal{G}_k(\weight,\thv)$, we get
\begin{align*}
\abs{\E{\widehat{\nabla}_{\thv, j}\mathcal{G}_k(\weight,\thv)| \mathcal{F}_N} - \nabla_{\thv, j}\mathcal{G}_k(\weight,\thv)} &=  \abs{\E{\widehat{\nabla}_{\thv, j}\mathcal{G}_k(\weight,\thv) - \widetilde{\nabla}_{\thv, j}\mathcal{G}_k(\weight,\thv)}} \\ &\leq   \E{\abs{\widehat{\nabla}_{\thv, j}\mathcal{G}_k(\weight,\thv) - \widetilde{\nabla}_{\thv, j}\mathcal{G}_k(\weight,\thv)}} \\ &\leq  \frac{\epsilon}{\beta}\br{\gamma + 38\eta\br{1 + \gamma(1 + \epsilon)}}.
\end{align*}
Hence, we have that $\norm{\E{\widehat{\nabla}_{\thv, j}\mathcal{G}_k(\weight,\thv)| \mathcal{F}_N} - \nabla_{\thv, j}\mathcal{G}_k(\weight,\thv)}_1 \leq m\frac{\epsilon}{\beta}\br{\gamma + 38\eta\br{1 + \gamma(1 + \epsilon)}}$.

\paragraph{Bias bound for the gradient wrt $\weight$.} 
Similarly, we can notice that with probability at least $1-\delta$, it holds that
\begin{align*}
    \abs{\widetilde{\nabla}_{\weight, j}\mathcal{G}_k(\weight,\thv) - \widehat{\nabla}_{\weight, j}\mathcal{G}_k(\weight,\thv)}& = \abs{\mathds{1}\{i_{k-1}^{(N+1)}=j\}\br{\widehat{\bmat}_{\weight,\thv}^k(i_{k-1}^{(N+1)}) - \bmat_{\weight,\thv}^k(i_{k-1}^{(N+1)})}} \\ &\leq  \frac{38\eta\epsilon}{\beta}.
\end{align*}
Since we have only one non-zero element, and by the unbiasedness of $\widetilde{\nabla}_{\weight, j}\mathcal{G}_k(\weight,\thv)$, we get
\begin{equation*}
    \norm{\nabla_{\weight, j}\mathcal{G}_k(\weight,\thv) - \E{\widehat{\nabla}_{\weight, j}\mathcal{G}_k(\weight,\thv)|\mathcal{F}_N}}_1 \leq \frac{38\eta\epsilon}{\beta}.
\end{equation*}
\end{proof}

\section{Proof of Theorem~\ref{thm:biased_sgd}}
We first prove a generalization of the Azuma-Hoeffding inequality~(Theorem 3.14 in~\cite{McDiarmid:1998}) that holds when the martingale difference sequence is bounded with high probability but not almost surely.
\begin{lemma}[Modified Azuma-Hoeffding] 
Let $\{ Y_i \}^n_i$ be a martingale difference sequence adapted to $\mathcal{F}_i$, such that for each $i$, $\abs{Y_i} \leq c_i$ with probability at least $1-\delta_2$. Then, it holds that
\begin{equation}
    \P{\sum^n_{i=1} Y_i \geq \epsilon} \leq \exp\br{-\frac{2 \epsilon^2}{\sum^n_{i=1} c_i^2}} + n \delta_2.
\end{equation}
\label{lemma:modified_azuma}
\end{lemma}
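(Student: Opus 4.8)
The plan is to adapt the standard proof of the Azuma--Hoeffding inequality (as in Theorem 3.14 of~\cite{McDiarmid:1998}) to the setting where the boundedness of each martingale difference $Y_i$ holds only on a high-probability event rather than almost surely. First I would introduce, for each $i\in[n]$, the ``good'' event $G_i \triangleq \{\abss{Y_i}\le c_i\}$ on which $Y_i$ is appropriately bounded, with $\P{G_i}\ge 1-\delta_2$, and set $G\triangleq\bigcap_{i=1}^n G_i$, so that by a union bound $\P{G^c}\le n\delta_2$. The idea is then to define a truncated sequence $\tilde Y_i \triangleq Y_i\,\mathds{1}_{G_i}$ (or, more carefully, a version of $Y_i$ clipped to $[-c_i,c_i]$), which agrees with $Y_i$ on $G$ and is bounded by $c_i$ everywhere.

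The key subtlety, and the step I expect to be the main obstacle, is that truncating $Y_i$ naively destroys the martingale difference property: $\E{\tilde Y_i\mid\mathcal{F}_{i-1}}$ need not be zero. One clean way around this is to avoid truncation altogether and instead run the standard exponential-moment argument directly, but control the conditional moment generating function on the good event. Concretely, I would use the decomposition
\begin{equation*}
\P{\sum_{i=1}^n Y_i \ge \epsilon} \le \P{\Big\{\sum_{i=1}^n Y_i \ge \epsilon\Big\}\cap G} + \P{G^c} \le \P{\Big\{\sum_{i=1}^n Y_i \ge \epsilon\Big\}\cap G} + n\delta_2,
\end{equation*}
and then bound the first term. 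On $G$ each $Y_i$ obeys $\abss{Y_i}\le c_i$, so I can apply the Chernoff bound: for any $s>0$,
\begin{equation*}
\P{\Big\{\sum_{i=1}^n Y_i \ge \epsilon\Big\}\cap G} \le e^{-s\epsilon}\,\Ee{}{e^{s\sum_{i=1}^n Y_i}\mathds{1}_G}.
\end{equation*}
Peeling off the last term and conditioning on $\mathcal{F}_{n-1}$, I would bound $\Ee{}{e^{sY_n}\mathds{1}_{G_n}\mid\mathcal{F}_{n-1}}$ using Hoeffding's lemma applied to the bounded random variable $Y_n\mathds{1}_{G_n} - \E{Y_n\mathds{1}_{G_n}\mid\mathcal{F}_{n-1}}$; since $\E{Y_n\mid\mathcal{F}_{n-1}}=0$, the correction $\E{Y_n\mathds{1}_{G_n}\mid\mathcal{F}_{n-1}} = -\E{Y_n\mathds{1}_{G_n^c}\mid\mathcal{F}_{n-1}}$ is small and of the right sign (or can be absorbed), yielding the factor $e^{s^2 c_n^2/2}$ up to the $n\delta_2$ slack that has already been accounted for.

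Iterating this peeling over $i=n,n-1,\dots,1$ gives $\Ee{}{e^{s\sum_i Y_i}\mathds{1}_G}\le e^{s^2\sum_i c_i^2/2}$, so that
\begin{equation*}
\P{\Big\{\sum_{i=1}^n Y_i \ge \epsilon\Big\}\cap G} \le \exp\Big(-s\epsilon + \tfrac{s^2}{2}\sum_{i=1}^n c_i^2\Big),
\end{equation*}
and optimizing over $s$ (taking $s = \epsilon/\sum_i c_i^2$, though note the statement's exponent has a factor $2$, which follows from the symmetric variant $\abss{Y_i}\le c_i$ giving range $2c_i$ in Hoeffding's lemma) produces $\exp(-2\epsilon^2/\sum_i c_i^2)$. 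Adding back the $n\delta_2$ term gives the claimed bound. The delicate bookkeeping is exactly in making the truncation/conditioning argument rigorous so that the ``bad event'' probability only contributes additively as $n\delta_2$ and does not corrupt the exponential factor; I would be careful to phrase everything in terms of the clipped variable $Y_i^{\mathrm{clip}}\triangleq\max\{-c_i,\min\{c_i,Y_i\}\}$, note $Y_i^{\mathrm{clip}}=Y_i$ on $G_i$, and control $\abss{\E{Y_i^{\mathrm{clip}}\mid\mathcal{F}_{i-1}}}$ separately if the cleanest route requires it.
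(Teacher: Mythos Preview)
Your high-level approach is the same as the paper's: split the tail event according to the good set $G=\bigcap_i G_i$, use a union bound to get $\P{G^c}\le n\delta_2$, and bound the piece on $G$ by Azuma--Hoeffding. The paper is much terser than your proposal at the second step: it simply writes $\P{\{\sum_i Y_i\ge\epsilon\}\cap G}\le \P{\sum_i Y_i\ge\epsilon\mid G}\,\P{G}$ and asserts that, conditional on $G$, the differences are bounded almost surely so the standard Azuma--Hoeffding inequality applies directly---no Chernoff peeling, no truncation, no clipped variables.

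Your concern that conditioning on $G$ (or truncating/clipping the $Y_i$) can spoil the martingale-difference property is a genuine technical point, and the paper's proof glosses over it entirely. Your more careful route is the right instinct for a fully rigorous argument, though note that even the clipped-variable approach you sketch needs extra control on $\E{Y_i^{\mathrm{clip}}\mid\mathcal{F}_{i-1}}$, which the lemma's hypotheses alone do not supply. For the purpose of matching the paper, the one-line conditional application is all that is done.
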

\begin{proof}
Define the events $E_i =  \bc{Y_i \leq c_i} $ and the intersection $E = \cap^n_{i=1}\{ E_i \}$, and notice that $\P{E^c} = \P{\cup^n_{i=1}\{ E^c_i \}} \leq \sum^n_{i=1} \P{E^c_i} = n \delta_2 $.
We then have the following decomposition:
\begin{align*}
    \P{\sum^n_{i=1} Y_i \geq \epsilon} &= \P{\{\sum^n_{i=1} Y_i \geq \epsilon\}\cap E} + \P{\{\sum^n_{i=1} Y_i \geq \epsilon\}\cap E^c}
    \\ &\leq \P{\{\sum^n_{i=1} Y_i \geq \epsilon\}\cap E} + \P{E^c}
    \\ &\leq \P{\sum^n_{i=1} Y_i \geq \epsilon \big | E} \underbrace{\P{E}}_{\leq 1}+ n\delta_2 \leq \exp\br{-\frac{2 \epsilon^2}{\sum^n_{i=1} c_i^2}} + n \delta_2,
\end{align*}
where in the last step we noticed that under the event $E$, the martingale difference sequence is bounded almost surely, therefore we can apply the standard Azuma-Hoeffding inequality. 
\end{proof}
\begin{corollary}\label{cor:Hoeffding}
Let $\{ Y_i \}^n_i$ be a martingale difference sequence adapted to $\mathcal{F}_i$, such that for each $i$, $\abs{Y_i} \leq c_i$ with probability at least $1-\delta_2$. Then, with probability $1-\delta_1$ (with $\delta_1 > n\delta_2$), it holds that
\begin{equation*}
    \sum^n_{i=1} Y_i \geq \sqrt{\frac{\br{\sum^n_{i=1}c^2_i}\log\br{1/\br{\delta_1 - n\delta_2}}}{2}}.
\end{equation*}
\end{corollary}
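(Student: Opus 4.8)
The plan is to obtain the claimed bound by directly inverting the tail inequality of Lemma~\ref{lemma:modified_azuma}. That lemma states that for every $\epsilon>0$,
\[
\P{\textstyle\sum_{i=1}^n Y_i \geq \epsilon} \;\leq\; \exp\!\br{-\frac{2\epsilon^2}{\sum_{i=1}^n c_i^2}} + n\delta_2 .
\]
First I would fix the target failure probability $\delta_1$, noting that the hypothesis $\delta_1 > n\delta_2$ guarantees $\delta_1 - n\delta_2 > 0$, so that $\log\br{1/(\delta_1-n\delta_2)}$ is well defined and positive. Then I would solve $\exp\br{-2\epsilon^2/\sum_i c_i^2} \leq \delta_1 - n\delta_2$ for $\epsilon$: taking logarithms and rearranging shows this is equivalent to $\epsilon \geq \sqrt{\br{\sum_{i=1}^n c_i^2}\log\br{1/(\delta_1-n\delta_2)}/2}$.

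Next I would plug the threshold value $\epsilon^\star \triangleq \sqrt{\br{\sum_{i=1}^n c_i^2}\log\br{1/(\delta_1-n\delta_2)}/2}$ back into the tail bound, which yields $\P{\sum_{i=1}^n Y_i \geq \epsilon^\star} \leq (\delta_1-n\delta_2) + n\delta_2 = \delta_1$. Taking complements gives that with probability at least $1-\delta_1$ one has $\sum_{i=1}^n Y_i < \epsilon^\star$; this is the intended conclusion of the corollary (the displayed inequality should be read with ``$\leq$'' rather than ``$\geq$'', the direction of the sign being a typo, since a mean-zero martingale admits no nontrivial high-probability \emph{lower} bound).

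There is essentially no obstacle in this argument; the only points requiring care are the positivity of $\delta_1-n\delta_2$ (handled by the hypothesis) and the fact that $\epsilon$ may be chosen after $\delta_1$ is fixed. If a two-sided statement is later needed, I would run the identical argument on the martingale difference sequence $\{-Y_i\}_{i=1}^n$ and combine the two resulting bounds by a union bound, replacing $\delta_1$ by $\delta_1/2$; but only the one-sided version is invoked in the sequel (e.g.\ in the proof of Theorem~\ref{thm:biased_sgd}).
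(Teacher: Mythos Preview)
Your proposal is correct and follows the same approach as the paper's (commented-out) proof: invert the tail bound of Lemma~\ref{lemma:modified_azuma} by choosing $\epsilon$ so that the exponential term equals $\delta_1-n\delta_2$. Your version is in fact cleaner than the paper's sketch, and your observation that the displayed ``$\geq$'' is a typo for ``$\leq$'' is correct---the corollary is indeed applied as an upper bound in the proof of Theorem~\ref{thm:biased_sgd}.
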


\begin{proofof}{Proof of Theorem~\ref{thm:biased_sgd}}

We fix a policy evaluation step $k\in[K]$, i.e., we study the $k$-th iteration of the outer loop of Algorithm~\ref{alg:PPIQL}. Similarly to the proof of Lemma~19 in~\cite{Pacchiano:2021}, the biased SGD subroutine can be seen as an inexact gradient ascent scheme with updates
\begin{align}
\weight_{t+1}^k&=\Pi_{\mathcal{W}}\Big(\weight_{t}^k+\beta_{t}(\nabla_{\weight}f(\weight_t^k,\thv_t^k)+b_{\weight,t}^k+\epsilon_{\weight,t}^k\Big),\\
\thv_{t+1}^k&=\Pi_{{\Theta}}\Big(\thv_{t}^k+\beta_{t}(\nabla_{\thv}f(\weight_t^k,\thv_t^k)+b_{\thv,t}^k+\epsilon_{\thv,t}^k\Big),
\end{align}
with
\begin{align}
    \epsilon_{\thv,t}^k&\triangleq\widehat{\nabla}_{\thv}\mathcal{G}_k(\weight_t^k,\thv_t^k)-\Exp\Big[\widehat{\nabla}_{\thv}\mathcal{G}_k(\weight_t^k,\thv_t^k)\mid\mathcal{F}_{t-1}\Big],\\
    \epsilon_{\weight,t}^k&\triangleq\widehat{\nabla}_{\weight}\mathcal{G}_k(\weight_t^k,\thv_t^k)-\Exp\Big[\widehat{\nabla}_{\weight}\mathcal{G}_k(\weight_t^k,\thv_t^k)\mid\mathcal{F}_{t-1}\Big],\\
    b_{\thv,t}^k&\triangleq\Exp\Big[\widehat{\nabla}_{\thv}\mathcal{G}_k(\weight_t^k,\thv_t^k)\mid\mathcal{F}_{t-1}\Big]-{\nabla}_{\thv}\mathcal{G}_k(\weight_t^k,\thv_t^k),\\
     b_{\weight,t}^k&\triangleq\Exp\Big[\widehat{\nabla}_{\weight}\mathcal{G}_k(\weight_t^k,\thv_t^k)\mid\mathcal{F}_{t-1}\Big]-{\nabla}_{\weight}\mathcal{G}_k(\weight_t^k,\thv_t^k).
   \end{align}
   By Lemma~\ref{lemma:BSG_bounds}, and a union bound, we get that for $n(t)\geq\max\left\{\mathcal{O}\left(\frac{\gamma^2 m D^2}{\beta\xi_t^2}\log(\frac{Tm}{\delta}), \right), \mathcal{O}\left(\frac{ m }{\beta\xi_t^2}\log(\frac{Tm}{\delta}), \right)\right\}$, with probability at least $1-\delta/2$, for all $t=1,\ldots,T$ simultaneously, it holds that
   \begin{align}
    \normm{\epsilon_{\thv,t}^k}_{1} &\leq  2m\frac{(1 + 38 \xi_t \eta)}{\beta}(2 + \xi_t) + 2(1 - \gamma)\le\frac{6m}{\beta}(1+38\eta)+2,\label{eq:firstone} \\
      \normm{\epsilon_{\weight,t}^k}_{1} &\leq  2m\br{1 + \frac{1 + 38\eta\xi_t}{\beta}}\le 2m(1+\frac{1+38\eta}{\beta}), \\
 \normm{b_{\thv,t}^k}_{1} &\leq  m\frac{\xi_t}{\beta}\br{\gamma + 38\eta\br{1 + \gamma(1 + \xi_t)}}\le\frac{m}{\beta}(1+114\beta), \\
     \normm{b_{\weight,t}^k}_{1} &\leq  \frac{38\eta\xi_t}{\beta}\le \frac{38\eta}{\beta},
\end{align}
where we used that $\{\xi_t\}_{t=1}^{T}\cup\{\gamma\}\subset(0,1)$.

Moreover, by H\"{o}lder's inequality, we get
\begin{align}
    |\innerprod{\epsilon_{\thv,t}^k}{\thv_t^k-\thv^\star_k}|&\le \normm{\epsilon_{\thv,t}^k}_{1}\normm{\thv_t^k-\thv_k^\star}_\infty\le \frac{12 D m}{\beta}(1+38\eta)+2\triangleq M_1,\\
    |\innerprod{\epsilon_{\weight,t}^k}{\weight_t^k-\weight_k^\star}|&\le \normm{\epsilon_{\weight,t}^k}_{1}\normm{\weight_t^k-\weight_k^\star}_\infty\le 2m(1+\frac{1+38\eta}{\beta})\triangleq M_2,
\end{align}
where we used that by the triangle inequality and Proposition~\ref{prop:optimal_theta_bound}, it holds that $\normm{\thv_t^k-\thv_k^\star}_\infty\le 2\frac{1 + \abs{\log\beta}}{1 - \gamma}\triangleq 2D$. We recall that $D\triangleq \frac{1 + \log \br{\frac{1}{\beta}}}{1 - \gamma}\geq 1$.

Since $\Big\{X_{\thv,t}^k\triangleq\innerprod{\epsilon_{\thv,t}^k}{\thv_t^k-\thv_k^\star}\Big\}_{t=1}^\infty$ and $\Big\{X_{\weight,t}^k\triangleq\innerprod{\epsilon_{\weight,t}^k}{\weight_t^k-\weight_k^\star}\Big\}_{t=1}^\infty$ are martingale differences, by using Corollary~\ref{cor:Hoeffding} and a simple union bound, we get that with probability at least $1-\delta/2$,
\begin{align}
   -\sum_{t=1}^T \innerprod{\epsilon_{\thv,t}^k}{\thv_t^k-\thv_k^\star}&\le 2 M_1\sqrt{T\log(\frac{16T}{\delta})},\\
   -\sum_{t=1}^T \innerprod{\epsilon_{\weight,t}^k}{\weight_t^k-\weight_k^\star}&\le 2 M_2\sqrt{T\log(\frac{16T}{\delta})}.
\end{align}
Furthermore, note that $\mathcal{G}_k$ is $\eta+\alpha$-smooth with respect to the $\normm{\cdot}_\infty$-norm, and so by Lemma~12 in~\cite{Pacchiano:2021}, we can bound the $\normm{\cdot}_1$-norm of its gradients. In particular, we have
\begin{equation}
    \normm{\nabla_{\thv}\mathcal{G}(\weight_t^k,\thv_t^k)}_1+ \normm{\nabla_{\weight}\mathcal{G}(\weight_t^k,\thv_t^k)}_1\le 2(\eta+\alpha)(D+1).
\end{equation}
This in turn implies that
\begin{equation}
    \normm{\nabla_{\thv}\mathcal{G}(\weight_t^k,\thv_t^k)}_2^2+ \normm{\nabla_{\weight}\mathcal{G}(\weight_t^k,\thv_t^k)}_2^2\le 4(\eta+\alpha)^2(D+1)^2.\label{eq:secondone}
\end{equation}
By smoothness and concavity of the objective $\mathcal{G}_k$, we can apply  Lemma~9 in~\cite{Pacchiano:2021}. In particular, by Equations~(\ref{eq:firstone})--(\ref{eq:secondone}), a union bound, and by summing over $t$ in the bound of Lemma~9 in~\cite{Pacchiano:2021}, we have the following guarantee for our inexact gradient scheme: 

If $n(t)\geq\max\left\{\mathcal{O}\left(\frac{\gamma^2 m D^2}{\beta\xi_t^2}\log(\frac{Tm}{\delta}), \right), \mathcal{O}\left(\frac{ m }{\beta\xi_t^2}\log(\frac{Tm}{\delta}), \right)\right\}$, and $\beta_t\le\frac{2}{\alpha+\eta}$, then with probability at least $1-\delta$, it holds that
\begin{align}
&\sum_{t=1}^T\left(\mathcal{G}_k(\weight_k^\star,\thv_k^\star)-\mathcal{G}_k(\weight_t^k,\thv_t^k)\right)\\
&\le\sum_{t=1}^T\frac{\normm{\weight_t^k-\weight_k^\star}_2^2+\normm{\weight_{t+1}^k-\weight_k^\star}_2^2}{2\beta_t}+\sum_{t=1}^T\frac{\normm{\thv_t^k-\thv_k^\star}_2^2+\normm{\thv_{t+1}^k-\thv_k^\star}_2^2}{2\beta_t}\\
&\phantom{{}\le}+ 2\sum_{t=1}^T\beta_t\left(\normm{\nabla_{\thv}\mathcal{G}(\weight_t^k,\thv_t^k)}_2^2+ \normm{\nabla_{\weight}\mathcal{G}(\weight_t^k,\thv_t^k)}_2^2\right)\\
&\phantom{{}\le}+5\sum_{t=1}^T\beta_t\left(\normm{b_{\weight,t}^k}_2^2+\normm{b_{\thv,t}^k}_2^2+\normm{\epsilon_{\weight,t}^k}_2^2+\normm{\epsilon_{\thv,t}^k}_2^2\right)\\
&\phantom{{}\le}+\sum_{t=1}^T\left(\normm{b_{\weight,t}^k}_1+\normm{b_{\thv,t}^k}_1\right)\max\{\normm{\weight_t^k-\weight_k^\star}_\infty,\normm{\thv_t^k-\thv_k^\star}_\infty\}\\
&\phantom{{}\le}-\sum_{t=1}^T \innerprod{\epsilon_{\thv,t}^k}{\thv_t^k-\thv_k^\star}-\sum_{t=1}^T \innerprod{\epsilon_{\weight,t}^k}{\weight_t^k-\weight_k^\star}\\
&\le\sum_{t=1}^T\frac{\normm{\weight_t^k-\weight_k^\star}_2^2+\normm{\weight_{t+1}^k-\weight_k^\star}_2^2}{2\beta_t}+\sum_{t=1}^T\frac{\normm{\thv_t^k-\thv_k^\star}_2^2+\normm{\thv_{t+1}^k-\thv_k^\star}_2^2}{2\beta_t}\\
&\phantom{{}\le}+\sum_{t=1}^T\Big(\beta_t L_1 +2 D L_2\xi_t \Big)+2 (M_1+M_2)\sqrt{T\log(\frac{16T}{\delta})}, \label{eq:661}
\end{align}
where
\begin{align}
    L_1&=\mathcal{O}\Big((\eta+\alpha)^2D^2+\frac{\max\{\eta,1\}^2m^2}{\beta^2}\Big),\label{eq:771}\\
    L_2&=\mathcal{O}\Big(\frac{\eta+m}{\beta}\Big),\\
    M_1&=\mathcal{O}\Big(\frac{\max\{\eta,1\} m}{\beta}\Big),\\
    M_2&=\mathcal{O}\Big(\frac{\max\{\eta,1\} D m}{\beta}\Big),\label{eq:772}\\
\end{align}
We choose $\beta_t=\frac{L}{\sqrt{t}}$, for some constant $L$. Then a telescoping sum gives 
\begin{align}\label{eq:662}
    \sum_{t=1}^T\Big(\frac{\normm{\weight_t^k-\weight_k^\star}_2^2+\normm{\weight_{t+1}^k-\weight_k^\star}_2^2}{2\beta_t}+\frac{\normm{\thv_t^k-\thv_k^\star}_2^2+\normm{\thv_{t+1}^k-\thv_k^\star}_2^2}{2\beta_t}\Big)
   \le\frac{1}{2L}(D^2+1)\sqrt{T}.
\end{align}
Moreover, $\sum_{t=1}^T\beta_t L_1\le2L_1 L\sqrt{T}$. By combining this inequality with Equations~(\ref{eq:661}) and~(\ref{eq:662}), we get that
\begin{align}
    \sum_{t=1}^T\left(\mathcal{G}_k(\weight_k^\star,\thv_k^\star)-\mathcal{G}_k(\weight_t^k,\thv_t^k)\right)&\le \frac{1}{2L}(D^2+1)\sqrt{T}+2L_1 L\sqrt{T}+2DL_2\sum_{t=1}^T\xi_t\\
    &\phantom{{}\le}+2 (M_1+M_2)\sqrt{T\log(\frac{16T}{\delta})}
\end{align}
The optimal choice for $L$ is $L=\frac{\sqrt{1+D^2}}{2\sqrt{L_1}}$. In addition, by setting $\xi_t=\sqrt{\frac{L_1}{t}}$, we conclude that
\begin{align}\label{eq:773}
    \sum_{t=1}^T\left(\mathcal{G}_k(\weight_k^\star,\thv_k^\star)-\mathcal{G}_k(\weight_t^k,\thv_t^k)\right)&\le 4\max\Big\{\sqrt{1+D^2},\,2DL_2\Big\}\sqrt{L_1}\sqrt{T}\\
    &\phantom{{}\le}+2 (M_1+M_2)\sqrt{T\log(\frac{16T}{\delta})}.
\end{align}
Therefore, by combining Equations~(\ref{eq:771})--(\ref{eq:772}) and Equation~(\ref{eq:773}), and by Jensen's inequality, we get that if $n(t) \geq \max\br{\mathcal{O}\br{\frac{\gamma^2 m D t }{(\eta+\alpha)^2\beta}\log\frac{Tm}{\delta}}, \mathcal{O}\br{\frac{m t}{\beta}\log\frac{Tm}{\delta}}}$, and $\beta_t=\mathcal{O}(\frac{1}{\sqrt{t}})$, then, with probability at least $1-\delta$, it holds that 
$
    \mathcal{G}_k(\weight_k^\star,\thv_k^\star)-\mathcal{G}_k(\weight_k,\thv_k)\le\mathcal{O}\Big(\frac{\max\{\eta,1\} m D}{\beta \sqrt{T}}\Big).
$

\end{proofof}

\subsection{Proof of Corollary~\ref{cor:sample_complexity}}
\begin{proofof}{Proof of Corollary~\ref{cor:sample_complexity}}
We plug the upper bound for $\epsilon_k$ given by \Cref{thm:biased_sgd} in the error propagation analysis of \Cref{thm:error_propagation}.
In particular, from \Cref{thm:error_propagation}, with probability at least $1-\delta_1$, it holds that 
\begin{equation*}
    d_{\mathcal{C}}(\widehat{\pi}, \expert) 
    \leq \frac{1}{K}\Big( \frac{D(\lv^*||\phim^{\mathsf{T}}\dv_0)}{\eta} + \frac{H(\dv^*||\dv_0)}{\alpha}
    + C(\eta,\alpha)\sum_k \sqrt{\epsilon_k} + \sum_k \epsilon_k\Big)+\varepsilon.
\end{equation*}
where we replaced we made explicit the fact the constant (wrt to $K$ and $T$) $C$ depends on $\alpha$ and $\eta$ (See \Cref{thm:error_propagation} for the exact expression).
By plugging in the bound for $\epsilon_k$ given by \Cref{thm:biased_sgd}, and a union bound, we get that and if we use $n(t) \geq \max\br{\mathcal{O}\br{\frac{\gamma^2 m D t }{\beta}\log\frac{Tm}{\delta_2}}, \mathcal{O}\br{\frac{m t}{\beta}\log\frac{Tm}{\delta_2}}}$ samples per iteration, then with probability at least $1 - \delta_1 - \delta_2$, it holds that
\begin{equation*}
    d_{\mathcal{C}}(\widehat{\pi}, \expert) 
    \leq \frac{1}{K}\br{ \frac{D(\lv^*||\phim^{\mathsf{T}}\dv_0)}{\eta} + \frac{H(\dv^*||\dv_0)}{\alpha}
    + C(\eta,\alpha)\sum_k \mathcal{O}\br{\sqrt{\frac{\eta m D}{\beta \sqrt{T}}}} + \sum_k \mathcal{O}\br{\frac{\eta m D}{\beta \sqrt{T}}}}+\varepsilon.
\end{equation*}
Setting $\eta = \alpha = 1$, letting $C_1 \triangleq C(1,1)$ and keeping only the dominant terms, we obtain
\begin{align*}
    d_{\mathcal{C}}(\widehat{\pi}, \expert) 
    \leq \frac{D(\lv^*||\phim^{\mathsf{T}}\dv_0)+H(\dv^*||\dv_0)}{K} + \mathcal{O}\br{C_1\frac{m D}{\beta \sqrt[4]{T}}}+\varepsilon 
\end{align*}
Then, choosing $K=\frac{D(\lv^*||\phim^{\mathsf{T}}\dv_0)+H(\dv^*||\dv_0)}{\epsilon}$ and $T=\Omega\br{\frac{m^4 D^4}{\beta^4 C_1^4 \epsilon^{4}}}$, we can ensure that $d_{\mathcal{C}}(\widehat{\pi}, \expert) 
    \leq \epsilon +\varepsilon $.
The overall sample complexity is $K n(T) = \Omega\br{KT}=\Omega\br{\epsilon^{-5}}$.
Notice that the corollary improves upon the sample complexity bound of $\Omega\br{\epsilon^{-8}}$ derived in \cite{Pacchiano:2021}.
\end{proofof}
\if 0
When not reusing previous samples in the gradient computation, the sample complexity is $$\Omega\br{K \sum^{T}_{t=1} n(t)} = \Omega\br{K \sum^T_{t=1} n(t)} = \Omega\br{K \sum^T_{t=1}t} = \Omega\br{K T^2} =\Omega\br{K T^2}=\Omega\br{\epsilon^{-9}}.$$ 
On the other hand, when not reusing previous samples in the gradient computation of  the REPS variant proposed by Pacchiano~\cite{Pacchiano:2021},  their algorithm requires $\Omega\br{\epsilon^{-16}}$ samples for only one policy iteration step.
\fi
\section{Offline imitation learning version}\label{app:offline}
Inspecting  \Cref{eq:PEobjective}, one can notice that estimating the empirical logistic Bellman evaluation objective $\mathcal{G}_k$ or its gradients requires sampling from $\dv_{k-1}$. Hence, the algorithm needs interactions with the environment at every iteration $k$.
It is possible to alleviate this requirement, changing the center point for the relative entropy. This is akin to smoothing \cite{Nesterov:2005} choosing a convenient center point. In particular, we replace \Cref{eq:q-update} with the following update:
\begin{equation}
    (\lv_{1},\dv_{1})=\argmin_{\lv\in\Delta_{[m]},\dv \in\Delta_{\sspace\times\aspace}}\innerprod{\yv^\star}{ \mbf{A}\left[ {\begin{array}{ccc}
    \lv \\ \dv
  \end{array} } \right]+  \widehat{\mbf{b}}} + \frac{1}{\eta}D(\lv||\phim^\trans\mv_{\expert}) + \frac{1}{\alpha}H(\dv||\dv_0).
\end{equation}

Note that we have removed the iteration index $k$, since the offline version does not require to iteratively collect new samples from the environment.
Changing the reference distribution from $\phim^\trans\dv_k$ to $\phim^\trans\mv_{\expert}$ gives \Cref{algo:offline}. In this case, the logistic Bellman evaluation objective takes the form
\begin{equation}
\mathcal{G}(\weight,\thv)\triangleq-\frac{1}{\eta}\log\sum_{i=1}^m (\phim^\trans \mv_{\pi_E})(i)   e^{-\eta\delta_{\weight,\thv}(i)}+(1-\gamma)\innerprod{\initial}{\val_{\thv}}- \innerprod{\EFEV{\expert}}{\weight},
\end{equation}
The difference with the online variant is that in the first term we have the expert occupancy measure instead of the occupancy measure induced by the current policy. We describe the corresponding empirical estimate in Algorithm~\ref{algo:offline}. Furthermore, we suppress the index $k$, since the offline algorithm does not require multiple iterations.
\begin{algorithm}[!t]
			\caption{Offline Proximal Point Imitation Learning (\texttt{OP$^2$IL})
		}\label{algo:offline}
			\begin{algorithmic}
				\STATE {\bfseries Input:} Feature matrix $\mbs{\Phi}$, number of iterations $K$, step sizes $\eta$, $\alpha$, and $\beta$
				\STATE {\bfseries Input:} Expert demonstrations $\mathcal{D}_{\textup{E}}^{n_{\textup{E}},H}$
				\STATE $\circ$ Initialize $\pi_0$ as uniform distribution over $\aspace$, and set $\weight_0=\frac{1}{m}\mbf{1}$
				\STATE $\circ$ Compute the empirical FEV $\widehat{\mbs{\rho}_{\phim}}({\expert})$ using expert demonstrations
				\STATE $\circ$ Sample $\{(s^{(n)},a^{(n)},s^{\prime (n)})\}_{n=1}^N$ with $s^{(n)},a^{(n)}$ sampled i.i.d. from $\mv_{\pi_E}$ and $s^{\prime (n)}\sim P(\cdot|s^{(n)},a^{(n)})$ and compute the  empirical offline logistic Bellman error by
				
				\[
				\widehat{\mathcal{G}}^{}(\weight,\thv) = - \innerprod{\efevphi}{\weight}- \frac{1}{\eta}\log \left(\frac{1}{N}\sum^N_{n=1} e^{- \eta \widehat{{\boldsymbol{\delta}}}_{\weight,\thv}(s^{(n)},a^{(n)},s^{\prime (n)})}\right)
+ (1 - \gamma) \innerprod{\initial}{V_{\thv}}
				\]
				
			    \STATE \texttt{// policy evaluation \& cost update}
				
				\STATE $\circ$ Find an approximate maximizer of the negative empirical logistic Bellman error
				\[
				(\weight_1,\thv_1)\approx\mathrm{argmax}_{\weight,\thv}\widehat{\mathcal{G}}^{}(\weight,\thv)
				\]
				\STATE \texttt{// policy improvement}
				\STATE Policy update:
				$$
				\pi_{\dv_1}(a|s)\propto\pi_{\dv_{0}}(a|s)\,e^{-\alpha Q_{\thv_1}(s,a)}
				$$
				
				\STATE {\bfseries Output:} Policy $\pi_{\dv_1}$
				\label{alg:offline}
			\end{algorithmic}
		\end{algorithm}
\subsection{Theoretical guarantees for the offline case}
With minor modifications of the error propagation analysis given in \Cref{thm:error_propagation}, one can prove the following result.
\begin{theorem}
\label{thm:offline_error_propagation}
Under the same assumptions as in \Cref{thm:error_propagation}, and by choosing $\alpha = \br{\frac{2H(\dv^\star||\dv_0)}{3w_{\mathrm{max}}}\sqrt{\frac{1-\gamma}{2\epsilon}}}^{2/3}$, we obtain 
\begin{equation} 
d_{\mathcal{C}}(\expert, \pi_{\dv_1}) - d_{\mathcal{C}}(\expert, \pi_{\dv^\star}) \leq \frac{D(\lv^\star || \phim^\trans\mv_{\expert})}{\eta} + \br{\frac{243 H(\dv^\star||\dv_0) w^2_{\mathrm{max}}}{2(1 - \gamma)}}^{1/3} \epsilon_1^{1/3} + \epsilon_1 + \varepsilon.
\end{equation}
where $\epsilon_1$ is the error in the maximization of the logistic Bellman error, i.e. $\epsilon=\max_{\weight\in\wspace,\thv}\mathcal{G}(\weight, \thv) - \mathcal{G}(\weight_1, \thv_1)$ and $\varepsilon$ is the error in estimating the expert feature expectation vector as in Lemma~\ref{lemma:expert_concentration}.
\end{theorem}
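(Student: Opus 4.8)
\textbf{Proof proposal for Theorem~\ref{thm:offline_error_propagation}.}

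The plan is to mirror the error-propagation argument of Theorem~\ref{thm:error_propagation} (that is, Theorem~\ref{thm:error_propagation_appendix}), but specialized to a \emph{single} proximal-point step with the expert occupancy measure as center point, rather than $K$ steps telescoping. First I would recall the one-step update $(\lv_1,\dv_1)$ with divergences $\frac{1}{\eta}D(\lv\|\phim^\trans\mv_{\expert})$ and $\frac{1}{\alpha}H(\dv\|\dv_0)$, and repeat the derivation that produced the bound in Theorem~\ref{thm:error_propagation_appendix}: start from the identity for $D(\lv^\star\|\lv_1)$ obtained by unrolling the first-order optimality conditions for $\lv_1$ (as in the long chain of inequalities in the proof of Theorem~\ref{thm:error_propagation_appendix}), substitute the definition of $\mathcal{G}$, use the $\epsilon_1$-suboptimality $\mathcal{G}(\weight_1,\thv_1)\geq\mathcal{G}(\weight_1^\star,\thv_1^\star)-\epsilon_1$, invoke Lemma~\ref{lem:H+D}, Lemma~\ref{lem:first_order}, and Lemma~\ref{eq:bound_min_w}, and collect the $H(\dv^\star\|\dv_0)-H(\dv^\star\|\dv_1)$ term via the same manipulation $H(\dv^\star\|\dv_1)=H(\dv^\star\|\dv_0)-\alpha\innerprod{\dv^\star}{\phim\thv_1-\bmat\val_{\thv_1}}$. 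Since there is no telescoping here, the residual Bregman term $D(\lv^\star\|\phim^\trans\dv_1)-D(\lv^\star\|\lv_1)$ is still bounded using the Bregman-projection argument, Pinsker's inequality, and Lemma~\ref{lem:lower_bound_fev}; the only structural change is that $\frac{1}{\eta}D(\lv^\star\|\phim^\trans\mv_{\expert})$ replaces $\frac{1}{K\eta}D(\lv^\star\|\phim^\trans\dv_0)$ and $\frac{1}{\alpha}H(\dv^\star\|\dv_0)$ replaces $\frac{1}{K\alpha}H(\dv^\star\|\dv_0)$.

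The key departure is the handling of the $\alpha$-dependent terms. In the online case these got absorbed into a clean $\mathcal{O}(1/K)$ rate; here, with a single step, I would keep $\frac{H(\dv^\star\|\dv_0)}{\alpha}$ explicit and separately bound the $\ell_1$-deviation terms $\norm{\dv_1-\dv_1^\star}_1$ by $\sqrt{\tfrac{2\alpha}{1-\gamma}(\epsilon_1+\cdots)}$ exactly as in Theorem~\ref{thm:error_propagation_appendix} (via Lemma~2 of \cite{Bas-Serrano:2021} and Pinsker). Then, after using Lemma~\ref{lem:first_order} to discard the nonpositive inner-product terms, I obtain a bound of the schematic form
\begin{equation*}
d_{\mathcal{C}}(\expert,\pi_{\dv_1})-d_{\mathcal{C}}(\expert,\pi_{\dv^\star})\leq \frac{D(\lv^\star\|\phim^\trans\mv_{\expert})}{\eta}+\frac{H(\dv^\star\|\dv_0)}{\alpha}+c\sqrt{\tfrac{\alpha}{1-\gamma}}\,w_{\max}\sqrt{\epsilon_1}+\epsilon_1+\varepsilon,
\end{equation*}
for an absolute constant $c$ (the $w_{\max}$ entering because the relevant inner products are bounded by $\norm{\weight}_\infty\leq w_{\max}$). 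I then optimize over $\alpha$: the two $\alpha$-terms $\frac{H(\dv^\star\|\dv_0)}{\alpha}+c\sqrt{\tfrac{\alpha}{1-\gamma}}w_{\max}\sqrt{\epsilon_1}$ are minimized at $\alpha\propto\big(\tfrac{H(\dv^\star\|\dv_0)}{w_{\max}}\sqrt{\tfrac{1-\gamma}{\epsilon_1}}\big)^{2/3}$, which is precisely the choice stated in the theorem; plugging back in turns $\frac{H(\dv^\star\|\dv_0)}{\alpha}+\dots$ into a term of order $\big(\tfrac{H(\dv^\star\|\dv_0)w_{\max}^2}{1-\gamma}\big)^{1/3}\epsilon_1^{1/3}$, reproducing the constant $\big(\tfrac{243}{2}\big)^{1/3}$ once the absolute constants from Pinsker and Lemma~2 of \cite{Bas-Serrano:2021} are tracked carefully. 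Finally, the $\varepsilon$ contribution from replacing $\fevphi$ by $\EFEV{\expert}$ is added exactly as in the online case, using Lemma~\ref{lemma:expert_concentration} together with Lemma~\ref{lemma:bound_c_class}.

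The main obstacle I anticipate is bookkeeping the absolute constants precisely enough to land on the stated $\big(\tfrac{243\,H(\dv^\star\|\dv_0)w_{\max}^2}{2(1-\gamma)}\big)^{1/3}$, since this requires being careful about the factor of $3$ appearing in front of the $\ell_1$ terms (as in Theorem~\ref{thm:error_propagation_appendix}, where the constant $C$ carried a $3\sqrt{2\alpha/(1-\gamma)}$), the factor $1/2$ inside Pinsker, and the $1/(1-\gamma)$ conversion between $H$ and $D$; a secondary point to verify is that reinterpreting one smoothing step as one PPM step is legitimate here, i.e.\ that the closed-form $\lv_1,\dv_1$ updates and the dual identity of Proposition~\ref{prop:q-update} go through verbatim with $\phim^\trans\mv_{\expert}$ in place of $\phim^\trans\dv_0$ as center point (they do, since that derivation never used feasibility of the center point, only that it is a valid element of the simplex / that $\phim^\trans\mv_{\expert}\in\Delta_{[m]}$). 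Everything else is a direct transcription of the online analysis with $K=1$.
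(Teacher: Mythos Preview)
Your proposal is correct and follows essentially the same route as the paper: specialize the error-propagation argument of Theorem~\ref{thm:error_propagation_appendix} to a single step with center $\phim^\trans\mv_{\expert}$, obtain the bound $\frac{D(\lv^\star\|\phim^\trans\mv_{\expert})}{\eta}+\frac{H(\dv^\star\|\dv_0)}{\alpha}+3w_{\max}\sqrt{\tfrac{2\alpha\epsilon_1}{1-\gamma}}+\epsilon_1+\varepsilon$, and optimize over $\alpha$. One simplification: with $K=1$ there is no next iteration to re-center for, so the residual term $D(\lv^\star\|\phim^\trans\dv_1)-D(\lv^\star\|\lv_1)$ and the Bregman-projection argument are unnecessary---the paper simply drops the nonpositive $-\tfrac{1}{\eta}D(\lv^\star\|\lv_1)$ and $-\tfrac{1}{\alpha}H(\dv^\star\|\dv_1)$ directly, which is what makes the constant $(243/2)^{1/3}$ fall out cleanly from the $3w_{\max}$ coefficient alone.
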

\begin{proof}
Following exactly the same steps in the proof of \Cref{thm:error_propagation} for the special case of $K=1$, we get
\begin{equation}
    d_{\mathcal{C}}(\pi_E,\pi_{\dv_1}) - d_{\mathcal{C}}(\pi_E,\pi_{\dv^\star}) \leq \frac{D(\lv^\star || \phim^\trans \mv_{\pi_E})}{\eta} + \frac{H(\dv^\star||\dv_0)}{\alpha} + 3 w_{\max}\norm{\dv_1 - \dv_1^{\star}}_1 + \epsilon_1 + \varepsilon.
\end{equation}
By using the bound $\norm{\dv_1 - \dv_1^{\star}}_1 \leq \sqrt{\frac{2  \alpha \epsilon_1}{1 - \gamma}}$, we have
\begin{equation}
    d_{\mathcal{C}}(\pi_E,\pi_{\dv_1}) - d_{\mathcal{C}}(\pi_E,\pi_{\dv^\star}) \leq \frac{D(\lv^\star || \phim^\trans \mv_{\pi_E})}{\eta} + \frac{H(\dv^\star||\dv_0)}{\alpha} + 3 w_{\max}\sqrt{\frac{2  \alpha \epsilon_1}{1 - \gamma}} + \epsilon_1 + \varepsilon.
\end{equation}
Therefore, by choosing $\alpha$ as stated in the theorem we conclude the proof.
\end{proof}
Notice that if the expert is nearly optimal, the step size $\eta$ can be taken small, ensuring low bias in the gradients.
This allows to use the original empirical logistic Bellman error analysis, proposed in \cite{Bas-Serrano:2021}, where one can control the bias by choosing $\eta$ appropriately small. To this end, we need to relate the logistic bellman error in the feature space to the one in the state-action space. As we will show, this introduces an additional bias of order $\mathcal{O}(\eta)$.
The statement is made precise in \Cref{thm:Delta_sa}.
Thanks to this result and Theorem~2 in~\cite{Bas-Serrano:2021}, we have that $\epsilon \leq (8+e)\eta B^2 + 56\sqrt{\frac{m\log{(1+4BN)}{\delta}}{N}}$ where $N$ is the number of expert transitions in the dataset.
We have the following result.
\begin{corollary}
Let $C_1 = \br{\frac{243 H(\dv^\star||\dv_0) w^2_{\mathrm{max}}}{2(1 - \gamma)}}^{1/3}$,  $\eta = \mathcal{O}\br{\frac{D(\lv^\star || \phim^\trans\mv_{\expert})^{3/4}}{(C_1 B)^{1/4}}}$ and $N=\tilde{\mathcal{O}}(m\epsilon^{-6}\log\br{1/\delta})$. Then, with probability $1-\delta$, it holds that
\begin{equation} 
d_{\mathcal{C}}(\expert, \pi_{\dv_1}) - d_{\mathcal{C}}(\expert, \pi_{\dv^\star}) \leq \mathcal{O}\br{C^{1/4}_1 B^{1/4} D(\lv^\star || \phim^\trans\mv_{\expert})^{1/4}} + \mathcal{O}(\epsilon).
\end{equation}
\end{corollary}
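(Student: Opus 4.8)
The plan is to combine the offline error-propagation bound of \Cref{thm:offline_error_propagation} with an explicit bound on the optimization error $\epsilon_1$, and then optimize over the step size $\eta$. Recall that with the choice of $\alpha$ stated in \Cref{thm:offline_error_propagation}, that theorem already gives
\begin{equation*}
d_{\mathcal{C}}(\expert,\pi_{\dv_1})-d_{\mathcal{C}}(\expert,\pi_{\dv^\star})\le \frac{D(\lv^\star||\phim^\trans\mexp)}{\eta}+C_1\,\epsilon_1^{1/3}+\epsilon_1+\varepsilon,
\end{equation*}
where $\varepsilon$ is the expert-FEV estimation error controlled by Lemma~\ref{lemma:expert_concentration}, and $\epsilon_1=\max_{\weight,\thv}\mathcal{G}(\weight,\thv)-\mathcal{G}(\weight_1,\thv_1)$ is the suboptimality of the maximization of the population logistic Bellman objective. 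Thus the task reduces to bounding $\epsilon_1$ in terms of $\eta$ and the number of expert transitions $N$, and then tuning $\eta$ and $N$.

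Second, I would bound $\epsilon_1$ by splitting it into three parts. Since \Cref{algo:offline} actually maximizes the empirical state-action objective $\widehat{\mathcal{G}}$, there is (i) a bias incurred by passing from the feature-space reduced Bellman error to the state-action Bellman error, which by \Cref{thm:Delta_sa} is of order $\mathcal{O}(\eta B^2)$ — this is precisely where one needs $\eta B\le 1$ so that the tight inequality $1+x+x^{2+p}\ge e^x$ applies on the relevant range; (ii) the statistical fluctuation of $\widehat{\mathcal{G}}$ around its expectation, which by Theorem~2 of \cite{Bas-Serrano:2021} is $\mathcal{O}\big(\sqrt{m\log((1+4BN)/\delta)/N}\big)$, uniformly over $(\weight,\thv)$; and (iii) the residual maximization error of the smooth concave empirical objective, which can be driven to $0$ (or absorbed into (ii)). Collecting terms, $\epsilon_1\le (8+e)\eta B^2+56\sqrt{m\log((1+4BN)/\delta)/N}$.

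Third, I would substitute this into the displayed bound. The $\eta$-dependent terms are the decreasing $D(\lv^\star||\phim^\trans\mexp)/\eta$ and the increasing $C_1(\eta B^2)^{1/3}+\eta B^2$; for small $\eta$ the cube-root term dominates the linear one, so balancing $D(\lv^\star||\phim^\trans\mexp)/\eta\asymp C_1(\eta B^2)^{1/3}$ yields $\eta=\Theta\big(D(\lv^\star||\phim^\trans\mexp)^{3/4}/(C_1B)^{1/4}\big)$ as in the statement and makes both of these terms $\mathcal{O}\big(C_1^{1/4}B^{1/4}D(\lv^\star||\phim^\trans\mexp)^{1/4}\big)$; one then checks that this $\eta$ indeed lies in the regime $\eta B\le 1$. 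For the statistical piece, since it enters through $C_1(\cdot)^{1/3}$, it suffices that $\sqrt{m\log((1+4BN)/\delta)/N}\lesssim(\epsilon/C_1)^3$, i.e.\ $N=\tilde{\mathcal{O}}(m\epsilon^{-6}\log(1/\delta))$; and choosing $n_{\textup{E}}$ large enough in Lemma~\ref{lemma:expert_concentration} makes $\varepsilon\le\epsilon$. A union bound over the concentration events for $\widehat{\mathcal{G}}$ and for the empirical FEV then gives, with probability $1-\delta$, the claimed $\mathcal{O}\big(C_1^{1/4}B^{1/4}D(\lv^\star||\phim^\trans\mexp)^{1/4}\big)+\mathcal{O}(\epsilon)$.

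The step I expect to be the main obstacle is part (i): proving the $\mathcal{O}(\eta)$ feature-space bias (\Cref{thm:Delta_sa}) and verifying that all hypotheses of Theorem~2 of \cite{Bas-Serrano:2021} transfer to the offline objective $\mathcal{G}$, whose reference distribution is the fixed $\phim^\trans\mexp$ rather than the iterate-dependent $\phim^\trans\dv_k$. Concretely, one must produce a clean uniform bound $B$ on $\norm{\boldsymbol{\delta}_{\weight,\thv}}_\infty$ over the relevant domains — using the span-norm bound on $\thv$ from \Cref{prop:optimal_theta_bound} — and confirm that the chosen $\eta$ keeps the algorithm in the regime $\eta B\le 1$ where the bias estimate is valid; the remaining combination of terms is routine bookkeeping.
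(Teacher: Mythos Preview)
Your proposal is correct and follows essentially the same route as the paper: the paper states (without a detailed derivation) that combining \Cref{thm:Delta_sa} with Theorem~2 of \cite{Bas-Serrano:2021} gives $\epsilon_1\le (8+e)\eta B^2 + 56\sqrt{m\log((1+4BN)/\delta)/N}$, and the corollary is then obtained exactly as you describe---substituting into \Cref{thm:offline_error_propagation}, balancing $D(\lv^\star||\phim^\trans\mexp)/\eta$ against $C_1(\eta B^2)^{1/3}$, and sizing $N$ so that the statistical piece contributes $\mathcal{O}(\epsilon)$ through the cube root. One small inaccuracy: the inequality used inside the proof of \Cref{thm:Delta_sa} is the standard $e^x\le 1+x+x^2$ for $|x|\le 1$, not the sharper $1+x+x^{2+p}\ge e^x$ you mention; the latter appears only in a commented-out lemma and is not invoked, though the hypothesis $\eta B\le 1$ is indeed what is needed.
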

\begin{remark} 
We notice that the optimal choice of $\eta$ is smaller as the expert is closely optimal, i.e. $D(\lv^\star || \phim^\trans\mv_{\expert})$ is small. In this condition, we can use the empirical objective estimator proposed in\cite{Bas-Serrano:2021} ensuring small bias. This means that estimating the objective from sample is feasible in the offline setting. It is still an open question if this is viable for the online setting improving the error propagation analysis.
\end{remark}
Next, we present an important result showing that it is possible to replace the minimization of $\mathcal{G}$, with its counterpart in the state-action space defined as
\begin{equation*}
    \mathcal{G}^{\sspace, \aspace}(\thv, \weight)  = -\frac{1}{\eta}\log\sum_{s,a} \mv_{\pi_E}(s,a)   e^{-\eta\delta^{\sspace\aspace}_{\weight,\thv}(s,a)}+(1-\gamma)\innerprod{\initial}{\val_{\thv}}- \innerprod{\EFEV{\expert}}{\weight},
\end{equation*}
where we introduced ${\boldsymbol{\delta}}^{\sspace,\aspace}_{\weight,\thv} = \phim {\boldsymbol{\delta}}_{\weight,\thv}$.
\begin{theorem}
\label{thm:Delta_sa}
Let $B\triangleq 1 + 2 \frac{1 + \abs{\log \beta}}{1-\gamma} $. Suppose $\eta$ is chosen such that $\eta B \leq 1$. Then, it holds that
\begin{equation*}
    \abs{\mathcal{G}^{}(\thv,\weight) - \mathcal{G}^{\sspace, \aspace}(\thv, \weight)} \leq e \eta B^2.
\end{equation*}
\end{theorem}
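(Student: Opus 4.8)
Since $\mathcal{G}$ and $\mathcal{G}^{\sspace,\aspace}$ agree in the terms $(1-\gamma)\innerprod{\initial}{\val_{\thv}}$ and $-\innerprod{\EFEV{\expert}}{\weight}$, it suffices to control the gap between their log-sum-exp terms. I would introduce $W \triangleq \tfrac{1}{\eta}\log\Ee{(s,a)\sim\mv_{\expert}}{e^{-\eta\mbs{\delta}^{\sspace,\aspace}_{\weight,\thv}(s,a)}}$ and $W^m \triangleq \tfrac{1}{\eta}\log\sum_{i=1}^m (\phim^\trans\mv_{\expert})(i)\,e^{-\eta\mbs{\delta}_{\weight,\thv}(i)}$, so that $\mathcal{G}^{\sspace,\aspace}(\thv,\weight) - \mathcal{G}(\thv,\weight) = W^m - W$; the goal then reduces to proving $0 \le W^m - W \le e\eta B^2$, with the lower bound coming from Jensen's inequality and the upper bound from a second-order Taylor estimate of the exponential.

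The first preparatory step is a uniform bound on the reduced Bellman error. Restricting $\thv$ to the domain $\Theta$ (without loss of optimality, by Proposition~\ref{prop:optimal_theta_bound}) gives $\norm{\thv}_\infty \le D \triangleq \tfrac{1+\abss{\log\beta}}{1-\gamma}$. Under Assumption~\ref{ass:linear-MDP} each row $\phiv(s,a)$ of $\phim$ lies in $\Delta_{[m]}$ and each row of $\mmat$ is a probability measure on $\sspace$; since $Q_\thv(s,a) = \phiv(s,a)^\trans\thv$ and $\val_{\thv}$ is a soft-min of $Q_\thv(s,\cdot)$, this yields $\norm{\val_{\thv}}_\infty \le \norm{\thv}_\infty \le D$ and hence $\norm{\mmat\val_{\thv}}_\infty \le D$. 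Combined with $\norm{\weight}_\infty \le 1$ (true for both admissible choices of $\mathcal{W}$), I obtain $\norm{\mbs{\delta}_{\weight,\thv}}_\infty \le 1 + \gamma D + D \le 1 + 2D = B$, and the same bound for $\mbs{\delta}^{\sspace,\aspace}_{\weight,\thv} = \phim\mbs{\delta}_{\weight,\thv}$ because its entries are convex combinations of those of $\mbs{\delta}_{\weight,\thv}$. I would also record the identities $\mbs{\delta}^{\sspace,\aspace}_{\weight,\thv}(s,a) = \Ee{i\sim\phiv(s,a)}{\mbs{\delta}_{\weight,\thv}(i)}$ and $(\phim^\trans\mv_{\expert})(i) = \sum_{s,a}\phiv_i(s,a)\mv_{\expert}(s,a)$, which let me rewrite $W^m = \tfrac{1}{\eta}\log\Ee{(s,a)\sim\mv_{\expert}}{\Ee{i\sim\phiv(s,a)}{e^{-\eta\mbs{\delta}_{\weight,\thv}(i)}}}$.

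With this in hand, $W^m \ge W$ follows immediately from Jensen applied to the convex map $t\mapsto e^{-\eta t}$ in the inner expectation over $i\sim\phiv(s,a)$ --- the one place where the assumption that the rows of $\phim$ are probability vectors enters. For the upper bound, since $\abss{\eta\mbs{\delta}_{\weight,\thv}(i)}\le\eta B\le 1$, the elementary bound $e^{-x}\le 1-x+x^2$ valid on $[-1,1]$ gives, after taking the inner expectation and using $1-t\le e^{-t}$, that $\Ee{i\sim\phiv(s,a)}{e^{-\eta\mbs{\delta}_{\weight,\thv}(i)}} \le 1-\eta\mbs{\delta}^{\sspace,\aspace}_{\weight,\thv}(s,a)+\eta^2 B^2 \le e^{-\eta\mbs{\delta}^{\sspace,\aspace}_{\weight,\thv}(s,a)}+\eta^2 B^2$. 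Averaging over $(s,a)\sim\mv_{\expert}$, applying $\tfrac{1}{\eta}\log(\cdot)$, subtracting $W$, and using $\log(1+t)\le t$ bounds $W^m-W$ by $\eta B^2 / \Ee{(s,a)\sim\mv_{\expert}}{e^{-\eta\mbs{\delta}^{\sspace,\aspace}_{\weight,\thv}(s,a)}}$, and bounding the denominator below by $e^{-\eta B}\ge e^{-1}$ completes the argument. I do not expect a real obstacle: this is a slight variant of the bias estimate used for the online objective. The only care needed is in deriving the uniform bound $B$ (which hinges on the row-stochastic structure of $\mmat$ and the soft-min form of $\val_{\thv}$) and in the scalar inequality $e^{-x}\le 1-x+x^2$ on $[-1,1]$, a routine calculus check.
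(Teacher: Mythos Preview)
Your proposal is correct and follows essentially the same approach as the paper: both reduce the question to bounding $W^m - W$ (the paper swaps the names but the argument is identical), establish the uniform bound $\norm{\mbs{\delta}_{\weight,\thv}}_\infty \le B$ via Proposition~\ref{prop:optimal_theta_bound}, use Jensen's inequality on the inner expectation over $i\sim\phiv(s,a)$ for the lower bound, and for the upper bound apply the quadratic estimate $e^{-x}\le 1-x+x^2$ on the unit interval followed by $\log(1+t)\le t$ and the denominator bound $e^{-\eta B}\ge e^{-1}$. The only cosmetic difference is that the paper phrases the scalar inequality as $e^x\le 1+x+x^2$ for $x\le 1$; your restriction to $[-1,1]$ is equivalent for the application at hand.
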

\begin{proof}
From Proposition~\ref{prop:optimal_theta_bound}, we have that $\norm{\thv}_{\infty} \leq \frac{1 + \abs{\log \beta}}{1-\gamma}$ and $\norm{\val_{\thv}}_{\infty} \leq \frac{1 + \abs{\log \beta}}{1-\gamma}$, for all $\theta\in\ar^m$. It follows that for any $(\weight,\thv)\in\wspace\times\ar^m$, it holds that $\norm{\boldsymbol{\delta}_{\thv, \weight}}_{\infty} = \norm{\weight + \gamma \mmat \val_{\thv} - \thv}_{\infty} \leq 1 + 2 \frac{1 + \abs{\log \beta}}{1-\gamma} = B$.
Hence, it holds that $\eta \norm{\boldsymbol{\delta}_{\thv, \weight}}_{\infty} \leq \eta B \leq 1$.
First, we recall the assumption that the rows of $\phim$ are probability distributions, i.e., $\phiv(s,a)\in\Delta_{[m]}$, for all $(s,a)$. We then have
\begin{equation}
    {\boldsymbol{\delta}}^{\sspace,\aspace}_{\weight,\thv}(s,a) = (\phim {\boldsymbol{\delta}}_{\weight,\thv})(s,a) = \sum_{i=1}^m \phiv_i(s,a) {\boldsymbol{\delta}}_{\weight,\thv}(i) = \Ee{i \sim \phiv(s,a)}{{\boldsymbol{\delta}}_{\weight,\thv}(i)}.
    \label{eq:delta_sa_vs_melta_m}
\end{equation}
Moreover, we have 
\begin{align*}
\mathcal{G}(\weight,\thv) - \mathcal{G}^{\sspace,\aspace}(\weight,\thv) = - \underbrace{\frac{1}{\eta} \log \left( \sum_{i=1}^m (\phim^\trans \mv_{\pi_E})(i) e^{-\eta {\boldsymbol{\delta}}_{\weight,\thv}(i)} \right)}_{\triangleq W} + \underbrace{ \frac{1}{\eta} \log \left( \sum_{s,a}\mv_{\pi_E}(s,a) e^{-\eta {\boldsymbol{\delta}}^{\sspace\aspace}_{\weight,\thv}(s,a)} \right)}_{\triangleq W^{\sspace,\aspace}}
\end{align*}
We can then lower bound $W$ as
\begin{align*}
W & =
\frac{1}{\eta} \log \left( \sum_{i=1}^m \sum_{s,a} \phiv_i(s,a) \mv_{\pi_E}(s,a) e^{-\eta {\boldsymbol{\delta}}_{\weight,\thv}(i)} \right) \\ &=
\frac{1}{\eta} \log \left(\Ee{(s,a) \sim \mv_{\pi_E}}{\Ee{i \sim \phiv(s,a)}{e^{-\eta {\boldsymbol{\delta}}_{\weight,\thv}(i)}}} \right) \\ &\geq \frac{1}{\eta} \log \left(\Ee{(s,a) \sim \mv_{\pi_E}}{e^{-\eta \Ee{i \sim \phiv(s,a)}{{\boldsymbol{\delta}}_{\weight,\thv}(i)}}} \right), \\
&= W^{\sspace,\aspace},
\end{align*}
where the inequality follows by Jensen's inequality for expectations.

We will now upper bound the term $W^{}$.
Thanks to the choice of $\eta$ such that $\eta B \leq 1$, we have that $\eta \leq \frac{1}{B} \leq \frac{1}{\abs{{\boldsymbol{\delta}}_{\weight, \thv}(i)}}$, for all $i$. Therefore, we can apply the inequality $e^{x} \leq 1 + x + x^2$ for $x=-\eta{\boldsymbol{\delta}}^{\sspace\aspace}_{\weight,\thv}(i)\leq 1$ and obtain
\begin{align*}
    \Ee{i \sim \phiv(s,a)}{e^{-\eta{\boldsymbol{\delta}}_{\weight, \thv}(i)}} &\leq \Ee{i \sim \phiv(s,a)}{1 -\eta{\boldsymbol{\delta}}_{\weight, \thv}(i) +(\eta{\boldsymbol{\delta}}_{\weight, \thv}(i))^{2}} \\ 
    &\leq 1 -\Ee{i \sim \phiv(s,a)}{\eta{\boldsymbol{\delta}}_{\weight,\thv}(i)} +(\eta B)^{2} \\
    &= 1 - \eta{\boldsymbol{\delta}}^{\sspace\aspace}_{\weight,\thv}(s,a) +(\eta B)^{2} \\
    &\leq e^{-\eta{\boldsymbol{\delta}}^{\sspace\aspace}_{\weight,\thv}(s,a)} +(\eta B)^{2},
\end{align*}
where in the third line we used \Cref{eq:delta_sa_vs_melta_m}, and in the last line we used the inequality $1 - x \leq e^{-x}$ for $x={\boldsymbol{\delta}}^{\sspace\aspace}_{\weight,\thv}(s,a)$.
By taking expectations with respect to $\mv_{\pi_E}$ and logarithms on both sides, we get
\begin{align*}
    W^{\sspace\aspace} \leq W \leq \frac{1}{\eta}\log \Ee{(s,a) \sim \mv_{\pi_E}}{e^{-\eta{\boldsymbol{\delta}}^{\sspace\aspace}_{\weight,\thv}(s,a)} +(\eta B)^{2}}.
\end{align*}
Subtracting $W^{}$ yields
\begin{align*}
0 \leq W - W^{\sspace\aspace} &\leq \frac{1}{\eta} \log \Ee{(s,a) \sim \mv_{\pi_E}}{e^{-\eta{\boldsymbol{\delta}}^{\sspace\aspace}_{\weight,\thv}(s,a)} +(\eta B)^{2}} - W^{\sspace\aspace} \\ 
&= \frac{1}{\eta}\log \left(1 + \frac{(\eta B)^{2}}{\Ee{(s,a) \sim \mv_{\pi_E}}{e^{-\eta{\boldsymbol{\delta}}^{\sspace\aspace}_{\weight,\thv}(s,a)}}} \right)
\\
&\leq \frac{\eta B^2}{\Ee{(s,a) \sim \mv_{\pi_E}}{e^{-\eta{\boldsymbol{\delta}}^{\sspace\aspace}_{\weight,\thv}(s,a)}}}\\
&\leq \frac{\eta B^{2}}{\Ee{(s,a) \sim \mv_{\pi_E}}{e^{-\eta B}}}\\
&\leq e \eta B^{2},
\end{align*}
where in the third line we used the inequality $\log(1 + x) \leq x$ for $x=\frac{(\eta B)^{2}}{\Ee{(s,a)\sim \mv_{\pi_E}}{e^{-\eta{\boldsymbol{\delta}}^{\sspace\aspace}_{\weight,\thv}(s,a)}}}$, while in the last line we used that $\eta B \leq 1$.
This concludes the proof.
\end{proof}
 After having established with \Cref{thm:Delta_sa} that $\mathcal{G}^{\mathcal{S},\mathcal{A}}$ can be used as biased estimate of $\mathcal{G}$, we can proceed as in \cite{Bas-Serrano:2021}. In particular, we maximize the empirical objective $\widehat{\mathcal{G}}$ (see \Cref{algo:offline}) that is a biased estimate of $\mathcal{G}^{\mathcal{S},\mathcal{A}}$ (\cite[Theorem 2]{Bas-Serrano:2021}). Then, we compute unbiased gradients of $\widehat{\mathcal{G}}$, recurring to the Donsker-Varadhan formula \cite[Corollary 4.15]{Boucheron:2013} that implies the following result.
\begin{theorem}\label{thm:Donsker-Varadhan}
Given a batch of expert data $\{ \widetilde{S}_n, \widetilde{A}_n, \widetilde{S}^\prime_n \}^N_{n=1} \sim \mv_{\expert}\times \pmat$, the following is true:
\begin{equation}
    \max_{\theta} \max_{w} \widehat {\mathcal{G}}(\theta, w) = \max_{\theta} \max_{w} \min_z \mathcal{S}(\theta, w, z)
\end{equation}
with:
\begin{align}
    \mathcal{S}(\theta, w,z) =& -\frac{1}{N} \sum^N_{n=1} \mv_{\expert}(\tilde S_{n},\tilde A_{n})\sum^m_{i=1} \weight_i\phiv_i(\tilde S_{n},\tilde A_{n}) \\&+
    \frac{1}{N} \sum^N_{n=1} z(n) \left( \widehat{\delta}_{\weight,\thv}(\widetilde{S}_{n},\widetilde{A}_{n}, \widetilde{S}^\prime_{n}) + \frac{1}{\eta} \log (N z(n))\right) \\&+
    (1 - \gamma) \inner{\initial, \val_{\thv}}
\end{align}
and the minimum attained at $z^\star \propto \frac{1}{N}e^{- \eta \widehat{\delta}_{\weight,\thv}(\widetilde{S}_{n},\widetilde{A}_{n}, \widetilde{S}^\prime_{n})}$
\end{theorem}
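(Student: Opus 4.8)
The final statement is Theorem~\ref{thm:Donsker-Varadhan}, which rewrites the empirical offline objective $\widehat{\mathcal{G}}(\theta,w)$ as a $\min$ over an auxiliary variable $z$ of a new objective $\mathcal{S}(\theta,w,z)$. The plan is to recognize that the only nonlinear piece of $\widehat{\mathcal{G}}$ is the term $-\tfrac{1}{\eta}\log\bigl(\tfrac{1}{N}\sum_{n=1}^N e^{-\eta\widehat{\boldsymbol{\delta}}_{\weight,\thv}(s^{(n)},a^{(n)},s^{\prime(n)})}\bigr)$, a log-sum-exp, and that log-sum-exp admits a variational (Donsker--Varadhan / Gibbs) representation as a supremum over probability distributions. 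Concretely, for any vector $(\ell_1,\dots,\ell_N)$ one has the identity $\log\bigl(\tfrac1N\sum_n e^{\ell_n}\bigr) = \max_{z\in\Delta_{[N]}} \bigl(\sum_n z(n)\ell_n - \sum_n z(n)\log(N z(n))\bigr)$, with the maximizer $z(n)\propto \tfrac1N e^{\ell_n}$; this is just the convex-conjugate relation between log-sum-exp and negative entropy, or equivalently the statement that KL divergence is nonnegative and zero only at equality.

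First I would fix $(\theta,w)$ and abbreviate $\ell_n \triangleq -\eta\,\widehat{\boldsymbol{\delta}}_{\weight,\thv}(\widetilde S_n,\widetilde A_n,\widetilde S'_n)$. Then I would state and verify the Gibbs variational identity above: define $p(n) = \tfrac{1}{N}e^{\ell_n}/Z$ with $Z = \tfrac1N\sum_m e^{\ell_m}$, and for arbitrary $z\in\Delta_{[N]}$ compute $\sum_n z(n)\ell_n - \sum_n z(n)\log(N z(n)) = \log Z - \mathrm{KL}(z\,\|\,p) \le \log Z$, with equality iff $z=p$. Rearranging $-\tfrac1\eta\log Z = \min_{z\in\Delta_{[N]}} \bigl(-\tfrac1\eta\sum_n z(n)\ell_n + \tfrac1\eta\sum_n z(n)\log(N z(n))\bigr)$. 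Substituting $\ell_n$ back and noting $-\tfrac1\eta \ell_n = \widehat{\boldsymbol{\delta}}_{\weight,\thv}(\widetilde S_n,\widetilde A_n,\widetilde S'_n)$, the bracketed expression becomes exactly $\tfrac1N\sum_n z(n)\bigl(\widehat{\boldsymbol\delta}_{\weight,\thv}(\widetilde S_n,\widetilde A_n,\widetilde S'_n) + \tfrac1\eta\log(N z(n))\bigr)$ once we absorb the $\tfrac1N$ into $z$ (i.e. reparametrize so $z(n)$ plays the role of $N$ times a simplex weight — I would be careful here about whether $z$ is normalized to sum to $1$ or to $N$; the statement's $z^\star\propto\tfrac1N e^{-\eta\widehat\delta}$ and the $\log(Nz(n))$ term indicate $z$ ranges over $\{z\ge 0:\sum_n z(n)=1\}$, so the $\tfrac1N\sum_n$ prefactor and the $Nz(n)$ inside the log are consistent).

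Next I would add the two linear-in-$(\theta,w)$ terms $-\tfrac1N\sum_n \mv_\expert(\widetilde S_n,\widetilde A_n)\sum_i w_i\phi_i(\widetilde S_n,\widetilde A_n)$ and $(1-\gamma)\langle\initial,\val_\thv\rangle$, which do not involve $z$, to both sides; this identifies the right-hand bracket plus these terms as precisely $\min_z \mathcal{S}(\theta,w,z)$, and the left-hand side as $\widehat{\mathcal{G}}(\theta,w)$ — modulo matching the empirical-expectation prefactors against the definition of $\widehat{\mathcal{G}}$ in Algorithm~\ref{algo:offline}. Finally, taking $\max_\theta\max_w$ of the resulting pointwise identity $\widehat{\mathcal{G}}(\theta,w) = \min_z \mathcal{S}(\theta,w,z)$ gives the claim, and the minimizer $z^\star \propto \tfrac1N e^{-\eta\widehat\delta_{\weight,\thv}}$ is read off from the Gibbs identity.

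I expect the only real obstacle to be bookkeeping: making sure the normalization conventions for $z$ (simplex vs. scaled simplex), the $\tfrac1N$ sample-average prefactors, and the sign of $\eta$ versus $\widehat{\boldsymbol\delta}$ all line up with the precise forms of $\widehat{\mathcal G}$ and $\mathcal S$ as written, and confirming that the inner $\min_z$ over the (compact) simplex is attained so that $\min$ rather than $\inf$ is justified. There is no hard analysis — the whole content is the elementary Gibbs/Donsker--Varadhan duality for finite measures, applied termwise — so the proof is essentially a one-line citation of \cite[Corollary 4.15]{Boucheron:2013} followed by careful substitution.
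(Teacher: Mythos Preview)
Your proposal is correct and is essentially the paper's approach: the paper gives no explicit proof but simply cites the Donsker--Varadhan formula \cite[Corollary 4.15]{Boucheron:2013} and states the result, so your plan of applying the Gibbs variational identity to the log-sum-exp term and then adding back the two $z$-independent pieces is exactly what is intended. Your caution about the normalization of $z$ and the stray $\tfrac{1}{N}$ prefactor on the middle sum of $\mathcal{S}$ is well placed---with $z\in\Delta_{[N]}$ the standard identity gives $\sum_n z(n)\bigl(\widehat{\delta}_n+\tfrac{1}{\eta}\log(Nz(n))\bigr)$ without the extra $\tfrac{1}{N}$, so this appears to be a typo in the statement rather than a gap in your argument.
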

Hence, in the deep learning implementation we update the cost and the value networks backpropagating through $\mathcal{S}(\thv, \weight,z^\star)$.

\subsection{Practical implementation}
We test a practical relaxation of \Cref{algo:offline} that uses two separate neural networks for cost and value function approximation. We use a two layers neural network with $128$ units per layer with \texttt{ReLu} activation for the \texttt{CartPole-v1} environment.
Whereas, for \texttt{Acrobot-v1} and \texttt{LunarLander-v2} we used a 3 layers architecture with $64$ units per layer.

\section{Mirror Descent versus Proximal Point}\label{sec:mirror-descent}
To highlight an important message of our work, in this section, we briefly discuss a mirror descent scheme with alternating updates, and we compare it to our proximal point algorithm in~\Cref{fig:ours_comparison}. Note that in contrast to the classical RL setting, where proximal point and mirror descent coincide because of the linear objective, in imitation learning this is not the case.

The updates for the mirror descent scheme involve alternation between updating the occupancy measure $\dv_k$ and the feature expectation vector $\lv_k$ in one stage and the cost weights in a second stage. That is,
\begin{align}
	(\lv_{k},\dv_{k})&=\argmin_{(\lv,\dv)\in\mathcal{M}_{\phim}}\langle \mv,\cost_{\weight_{k}}\rangle +\tfrac{1}{\eta}D(\lv||\phim^\trans\dv_{k-1})+\tfrac{1}{\alpha}H(\dv||\dv_{k-1}), \label{eq:RL_update}\\
		\weight_{k+1}&=\argmin_{\weight\in\Delta_{[m]}}\innerprod{{\mv_{\expert}-\dv_{k}}}{\cost_\weight}+\tfrac{1}{\beta}D(\weight||\weight_{k}).
\end{align}
One can notice that the update in \Cref{eq:RL_update} corresponds to one update of Logistic $Q$-Learning \cite{Bas-Serrano:2021}. Therefore, it can be implemented by maximizing the negative logistic Bellman error that is now a function only of the variable $\thv$ and not of both $(\thv,\weight)$ as in PPM.
The next proposition is the counterpart of Proposition~\ref{eq:q-update} for the mirror descent scheme.

\begin{proposition}\label{prop:KKT-conditions}
For a parameter $\thv\in\ar^m$, we define the state-action logistic value function $\qval_{\thv}\in\ar^{|\sspace||\aspace|}$ by $\qval_{\thv}\triangleq\phim\thv$, and the $k$-step state logistic value function $\val_{\thv}^k\in\ar^{|\sspace|}$ by
\[
V_{\thv}^k(s)\triangleq-\frac{1}{\alpha}\log\left(\sum_a \pi_{\dv_{k-1}}(a|s)e^{-\alpha Q_{\thv}(s,a)}\right).
\]
Moreover, for a fixed cost $\cost=\cost_\weight$, we define the $k$-step Bellman error function $\boldsymbol{\delta}_{\thv,\weight}^k$ by
$
\boldsymbol{\delta}_{\thv,\weight}^k\triangleq\weight+\gamma\mmat\val_{\thv}^k-\thv.
$
Then, the unique solution of the aforementioned problem is given by 
\begin{align}
\lambda_k(i) &\propto (\phim^\trans\dv_{k-1})(i)\,e^{-\eta\delta_{\thv_k,\weight_{k}}^k(i)},\\
\pi_{\dv_k}(a|s)&\propto\pi_{\dv_{k-1}}(a|s)\,e^{-\alpha Q_{\thv_k}(s,a)},\\
w_{k+1,i}&\propto w_{k,i}\,e^{-\beta\langle \phiv_i\,,\,\mv_\expert-\dv_{k}\rangle},
\end{align}
where $\thv_k$ is the maximizer of the negative $k$-step logistic Bellman error function 
\[
\mathcal{G}_k(\thv)\triangleq-\frac{1}{\eta}\log\sum^m_{i=1}(\phim^\trans\dv_{k-1})(i)e^{-\eta\delta^k_{\thv,\weight_{k}}(i)}+(1-\gamma)\innerprod{\initial}{\val_{\thv}^k}.
\]
\end{proposition}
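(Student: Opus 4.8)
The plan is to mirror the proof of Proposition~\ref{prop:q-update} almost verbatim, exploiting the fact that the mirror-descent first stage in~\eqref{eq:RL_update} is exactly the same Bregman-regularized minimization as in~\eqref{eq:q-update}, except that the cost weights $\weight$ are now \emph{fixed} at $\weight_k$ rather than being part of the inner maximization. First I would write out the Lagrangian of the problem in~\eqref{eq:RL_update}, introducing multipliers $\val$ for the constraint $\mbf{B}^\trans\dv=\gamma\mmat^\trans\lv+(1-\gamma)\initial$, $\thv$ for $\lv=\phim^\trans\dv$, and $\tau$ for the simplex constraint $\sum_i\lambda(i)=1$. Taking derivatives with respect to $\lv$ and $\dv$ and setting them to zero gives the same two first-order optimality conditions as in the proof of Proposition~\ref{prop:q-update}: one yielding $\lambda(i)\propto(\phim^\trans\dv_{k-1})(i)e^{-\eta\boldsymbol{\delta}_{\thv,\weight_k}^k(i)}$ after solving for the normalizing constant $\tau$, and one yielding $\pi_{\dv}(a|s)\propto\pi_{\dv_{k-1}}(a|s)e^{-\alpha(Q_{\thv}(s,a)-V(s))}$, where the Lagrange constraint $\sum_a\pi_{\dv}(a|s)=1$ forces $V$ to equal the $k$-step logistic value function $\val^k_{\thv}$. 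The only bookkeeping difference is that $\boldsymbol{\delta}^k_{\thv,\weight_k}$ now has $\weight_k$ frozen in place, so the resulting inner objective depends on $\thv$ alone.

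Next I would carry out the Sion minimax exchange: since the objective in~\eqref{eq:RL_update} is convex in $(\lv,\dv)$ and linear (hence concave) in the multipliers $(\val,\thv)$, with $\mathcal{M}_{\phim}$ compact and the dual domain convex, Sion's theorem lets me swap the inner min over $(\lv,\dv)$ with a max over $(\val,\thv)$. Substituting the analytical forms of $\lv,\dv,\val$ back into the Lagrangian collapses the primal terms (exactly as in the displayed computation in Appendix~\ref{app:proof-of-upadates-proposition}, where all the $\innerprod{\lv}{\cdot}$ and $\innerprod{\dv}{\cdot}$ cross-terms cancel against the Bregman-divergence contributions) and leaves precisely $\mathcal{G}_k(\thv)=-\frac1\eta\log\sum_i(\phim^\trans\dv_{k-1})(i)e^{-\eta\boldsymbol{\delta}^k_{\thv,\weight_k}(i)}+(1-\gamma)\innerprod{\initial}{\val^k_{\thv}}$, with no $-\innerprod{\EFEV{\expert}}{\weight}$ term since $\weight$ is not a variable here. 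Strict convexity of the relative entropy $D(\cdot\|\phim^\trans\dv_{k-1})$ and strict convexity of the conditional entropy $H(\cdot\|\dv_{k-1})$ on their respective simplices give uniqueness of the primal minimizer $(\lv_k,\dv_k)$, so once $\thv_k$ is an (a) maximizer of $\mathcal{G}_k$, the updates for $\lambda_k$ and $\pi_{\dv_k}$ are determined.

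Finally, for the third update I would note that the second stage $\weight_{k+1}=\argmin_{\weight\in\Delta_{[m]}}\innerprod{\mv_\expert-\dv_k}{\cost_\weight}+\frac1\beta D(\weight\|\weight_k)$ is a plain entropic mirror-descent step on the simplex with linear loss $\weight\mapsto\innerprod{\mv_\expert-\dv_k}{\cost_\weight}=\innerprod{\phim^\trans(\mv_\expert-\dv_k)}{\weight}$, whose $i$-th coordinate of the gradient is $\innerprod{\phiv_i}{\mv_\expert-\dv_k}$; the standard closed-form solution is $w_{k+1,i}\propto w_{k,i}e^{-\beta\innerprod{\phiv_i}{\mv_\expert-\dv_k}}$, matching the claimed formula. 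The main obstacle is bookkeeping rather than conceptual: carefully tracking how the frozen $\weight_k$ threads through $\boldsymbol{\delta}^k_{\thv,\weight_k}$ in the collapse of the Lagrangian (so that the cancellations still go through term-by-term and the $-\innerprod{\EFEV{\expert}}{\weight}$ term genuinely disappears), and verifying that exchanging min and max is legitimate on the possibly unbounded $\thv$-domain — here I would either invoke the span-norm/shift-invariance argument of Proposition~\ref{prop:optimal_theta_bound} to restrict to a compact $\Theta$ without changing the optimum, or argue directly that the soft-Bellman fixed-point structure guarantees the maximum of $\mathcal{G}_k$ is attained.
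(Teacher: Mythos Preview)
Your proposal is correct and is exactly the approach the paper intends: the paper does not give a separate proof of Proposition~\ref{prop:KKT-conditions} but explicitly presents it as the ``counterpart of Proposition~\ref{prop:q-update} for the mirror descent scheme,'' noting that~\eqref{eq:RL_update} is just a Logistic $Q$-Learning step with $\weight_k$ frozen. Your plan---rerunning the Lagrangian/analytical-oracle/Sion argument of Appendix~\ref{app:proof-of-upadates-proposition} with $\weight$ fixed (so the $-\innerprod{\EFEV{\expert}}{\weight}$ term drops out and $\mathcal{G}_k$ depends on $\thv$ alone), then reading off the $\weight$-update as a standard entropic mirror-descent step on the simplex---is precisely that adaptation, and your remarks on uniqueness via strict convexity and on handling the unbounded $\thv$-domain via the span-norm argument of Proposition~\ref{prop:optimal_theta_bound} are the right way to close the remaining technical gaps.
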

Proposition~\ref{prop:KKT-conditions} leads to an actor critic scheme that has three separate and alternating updates: (i) policy update stage, (ii) policy evaluation update, and (iii) cost weights update. Similar actor critic-schemes for different MDP models, and different policy evaluation objectives (e.g., minimizing the squared Bellman error) have been also proposed in~\cite{Zhang:2020,Liu:2022,Shani:2021}. Contrary to these schemes, in our proximal imitation learning algorithm, the policy evaluation step involves optimization of a single objective over both cost and $Q$-functions. In this way, we avoid instability or poor convergence in optimization due to nested policy evaluation and cost update steps. In section~\ref{sec:comparison}, we verify numerically that PPM outperforms Mirror Descent in simple tabular environments (see \Cref{fig:ours_comparison}).

\section{Experimental Details}\label{app:experiments}
\begin{figure*}[t] 
\centering
\begin{tabular}{cccc}
\subfloat[WideTree]{%
       \includegraphics[width=0.2\linewidth]{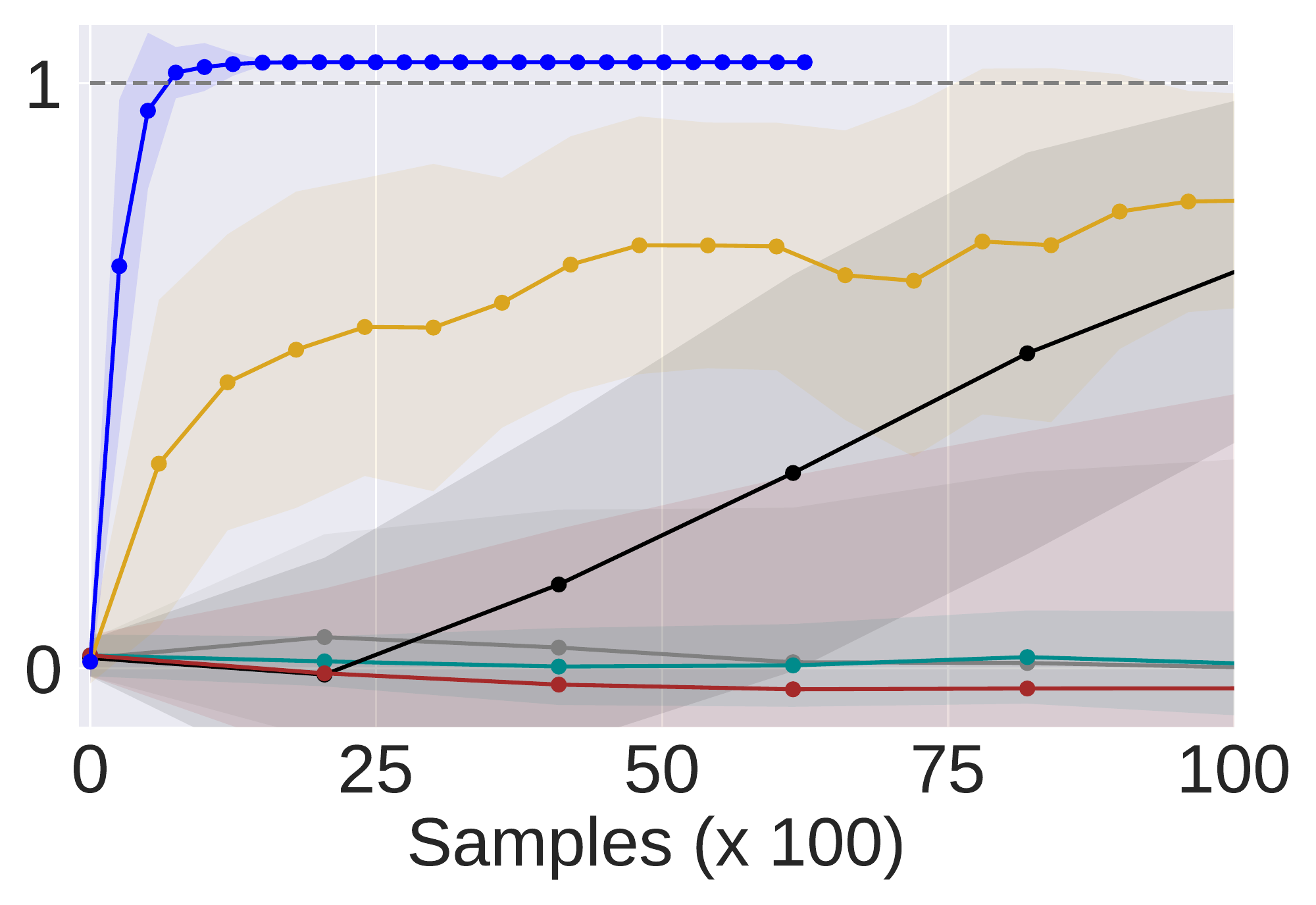}
     } &
\subfloat[RiverSwim]{%
       \includegraphics[width=0.2\linewidth]{plot/only_proximal_point/RiverSwim-v0_normalized.pdf}
     } &
\subfloat[SingleChain]{%
       \includegraphics[width=0.2\linewidth]{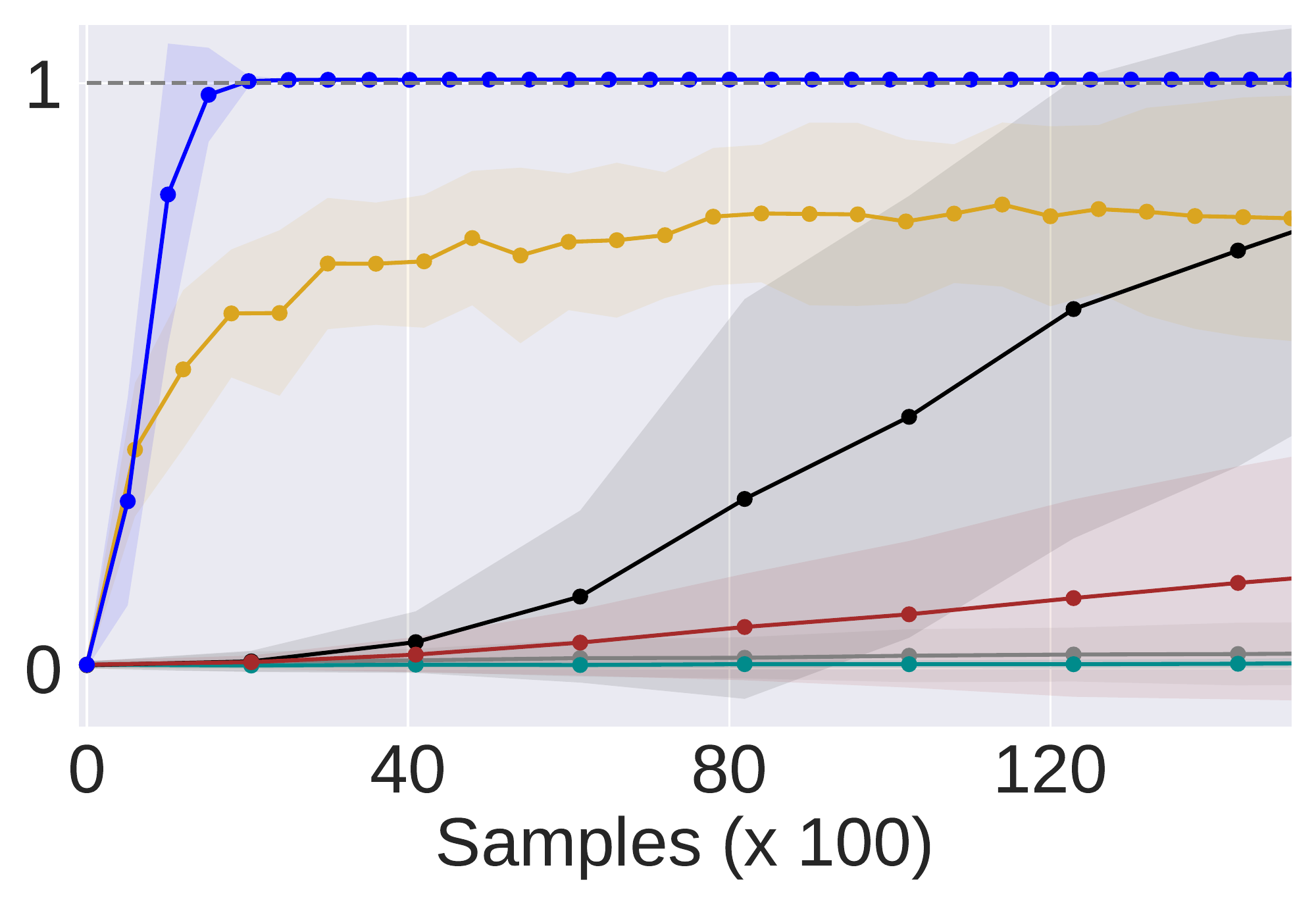}
     } & 
     \subfloat[CartPole]{%
       \includegraphics[width=0.2\linewidth]{plot/only_proximal_point/CartPole-v1_normalized.pdf}
     } \\
\subfloat[DoubleChain]{%
       \includegraphics[width=0.2\linewidth]{plot/only_proximal_point/DoubleChainProblem-v0_normalized.pdf}
     } &
\subfloat[TwoStateStochastic]{%
       \includegraphics[width=0.2\linewidth]{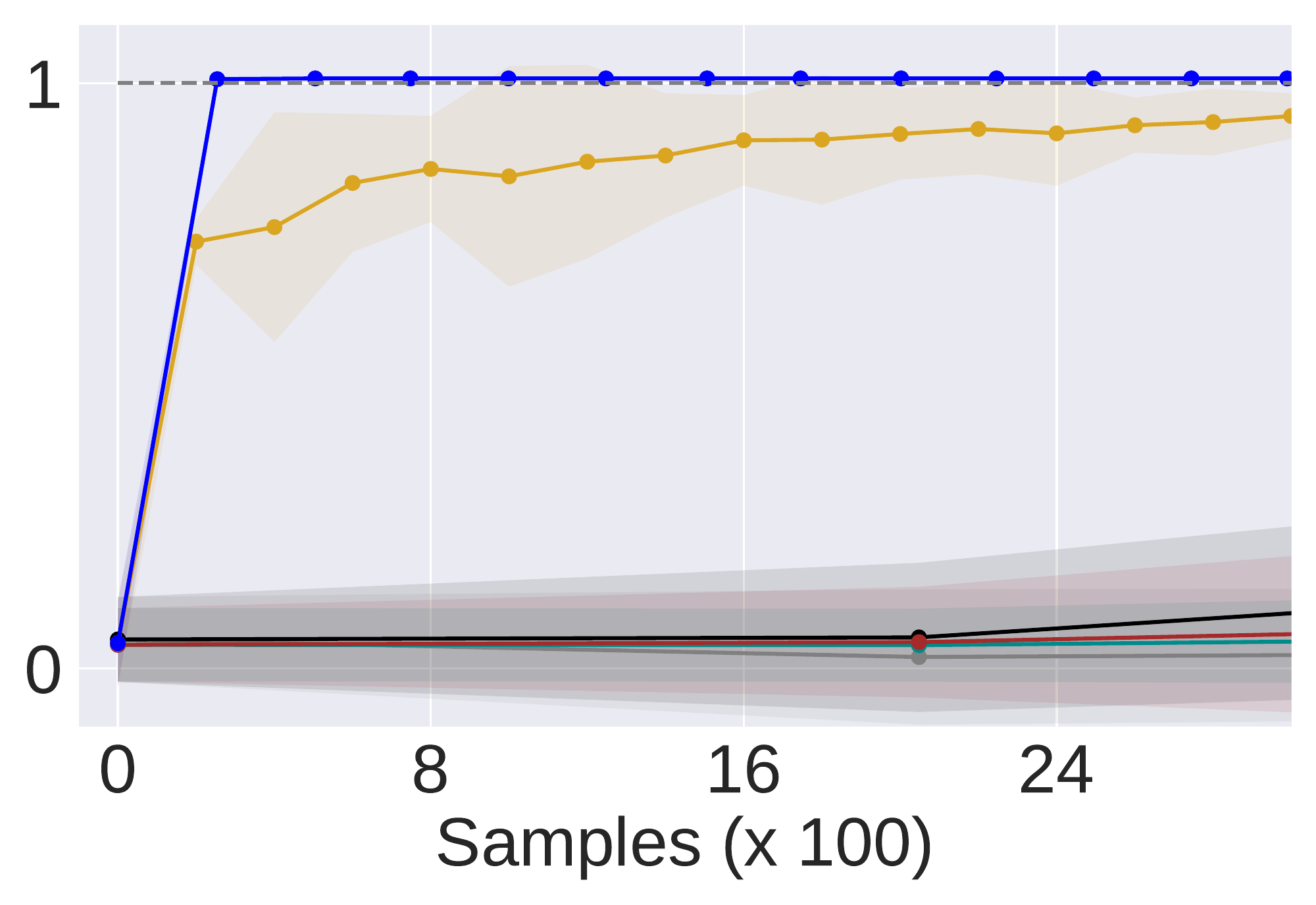}
     } &
\subfloat[Gridworld]{%
       \includegraphics[width=0.2\linewidth]{plot/only_proximal_point/WindyGrid-v0_normalized.pdf}
     } &
     \subfloat[Acrobot]{%
       \includegraphics[width=0.2\linewidth]{plot/only_proximal_point/Acrobot-v1_normalized.pdf}
     } \\ 
    \multicolumn{4}{c}{
       \includegraphics[scale=0.5]{plot/final_paper_legend_horizontal.pdf}
     }
\end{tabular}
\caption{\textbf{Extended Online IL Experiments}. We show the total returns vs the number of env steps. We report the results of some environments omitted in the main text.}
\label{fig:appendix_simple_env_results}
\end{figure*}
\subsection{Refereences for environments description}
In the tabular case we used the environments
 (\texttt{DoubleChain} \cite{Furmston:2010}, \texttt{SingleChain} \cite{Furmston:2010}, \texttt{RiverSwim} \cite{Strehl:2008}, \texttt{WideTree} \cite{Ayoub:2020}, \texttt{Two States Deterministic} \cite{Bagnell:2003}, \texttt{Two States Stochastic} \cite{Bas-Serrano:2021} and \texttt{WindyGrid} \cite{Sutton:2018}). While for the offline setting, we used the environments \texttt{CartPole} \cite{Barto:1983}, \texttt{Acrobot} \cite{Geramifard:2015} and \texttt{LunarLander} \cite{Brockman:2016}. 
The curves are averaged over 50 seeds. For the environments $\texttt{Cartpole}$ and $\texttt{Acrobot}$, we used a three layer neural network to approximate the value function. In these cases we averaged 5 seeds.
\subsection{Hyperparameters}
We report the hyperparameters for the tabular online experiments in \Cref{tab:online_experiments} and for the offline experiments in \Cref{tab:offline_experiments}
\begin{table}[h]
    \centering
    \begin{tabular}{c|c c c c c c}
    \textbf{Environment}& \texttt{n-trajs} & \texttt{lr $\weight$} & \texttt{lr $\thv$} & $\eta$ & $\alpha$ & \texttt{optimizer}
    \\
    \hline \\
    \textbf{TwoStateStochastic-v0} & 25 & 0.5 & 0.5 & 10 & 1 & \texttt{FoRB} \\
    \textbf{TwoStateStochastic-v0} & 25 & 0.5 & 0.5 & 10 & 1 & \texttt{Adam} \\
    \textbf{WideTree-v0} & 25 & 0.5 & 0.5 & 10 & 1 & \texttt{FoRB} \\
    \textbf{RiverSwim-v0} & 50 & 0.2 & 0.2 & 10 & 1 & \texttt{FoRB} \\
    \textbf{WindyGrid-v0} & 50 & 0.5 & 0.01 & 10 & 1 & \texttt{FoRB}
    \\
    \textbf{SingleChainProblem-v0} & 50 & 0.3 & 0.005 & 10 & 1 & \texttt{Adam}
    \\
    \textbf{DoubleChainProblem-v0} & 50 & 0.5 & 0.005 & 10 & 1 & \texttt{Adam} \\
    \\\hline
    \end{tabular}
    \caption{Hyperparameters for proximal point imitation learning in tabular experiments. \texttt{FoRB} stands for Forward Reflected Backward \cite{malitsky2020forward}.} 
    \label{tab:online_experiments}
\end{table}

\begin{table}[h]
    \centering
    \begin{tabular}{c|c c c c c}
    \textbf{Environment}&\texttt{lr $\weight$} & \texttt{lr $\thv$} & $\eta$ & $\alpha$ & \texttt{optimizer}
    \\
    \hline \\
    \textbf{CartPole-v1} & $5e-3$ & $5e-3$ & 10 & 1 & \texttt{Adam} \\
    \textbf{Acrobot-v1} & $5e-3$ & $5e-3$ & 10 & 1 & \texttt{Adam} \\
    \textbf{LunarLander-v2} & $1e-4$ & $1e-4$ & 10 & 0.01 & \texttt{Adam} \\
    \\\hline
    \end{tabular}
    \caption{Hyperparameters for offline experiments}
    \label{tab:offline_experiments}
\end{table}

\begin{table}[h]
    \centering
    \begin{tabular}{c|c c c c c }
    \textbf{Environment}& \texttt{n-trajs} & \texttt{lr $\weight$} & \texttt{lr $\thv$} & $\eta$ & $\alpha$ \\ 
    \hline \\
    \textbf{TwoStateStochastic-v0} & 25 & 0.5 & 0.5 & 10 & 1 \\ 
    \textbf{TwoStateProblem-v0} & 25 & 0.5 & 0.5 & 10 & 1 \\ 
    \textbf{WideTree-v0} & 25 & 0.5 & 0.5 & 10 & 1 \\ 
    \textbf{RiverSwim-v0} & 25 & 0.5 & 0.01 & 10 & 1 \\ 
    \textbf{WindyGrid-v0} & 50 & 0.5 & 0.0006 & 10 & 1 \\ 
    \textbf{SingleChainProblem-v0} & 50 & 0.03 & 0.05 & 10 & 1 \\ 
    \textbf{DoubleChainProblem-v0} & 50 & 0.03 & 0.025 & 10 & 1 \\ 
    \\\hline
    \end{tabular}
    \caption{Hyperparameters for primal dual mirror descent imitation learning in tabular experiments. As optimizer, we used OGD in all cases.}
    \label{tab:mirror_mescent_experiments}
\end{table}

\subsection{On the data sampling}
In all the experiments, we perform a relaxation of our theoretical scheme. In particular, to increase the sample efficiency we sample state action pairs from the Markovian stream of experience.
Analyzing this setting is an open problem.

\subsection{Offline experiments setting}
We consider a training environment and a test environment with different random seeds. We train both IQLearn and Proximal Point for $2e5$ environment steps and we evaluate the policy running $10$ episodes on the evaluation environment every $1e3$ steps. We report the maximum evaluation result achieved at the end of training.
We average the seeds from $0$ to $10$ for the results shown in \Cref{fig:offline_experiments}. We use two separate instances of the same architecture as function approximation for the $Q$-values and cost respectively.
Finally, since the algorithm operates offline it has no access to the distribution $\initial$. In order to approximate the term $\innerprod{\initial}{\val}$, we use the Bellman flow constraints and the fact that the expert occupancy measure is feasible, i.e. $(1 - \gamma)\innerprod{\initial}{\val} = \innerprod{\mv_{\pi_E}}{-\gamma\pmat\val + \bmat \val}$ where the last term can be estimated from the expert samples.

\subsection{Comparison with mirror descent}\label{sec:comparison}
We designed also a mirror descent scheme with alternating updates for imitation learning, briefly described in Appendix~\ref{sec:mirror-descent}. The best hyperparameters are given in \Cref{tab:mirror_mescent_experiments}. Furthermore, we show a comparison with our proximal point scheme in \Cref{fig:ours_comparison}. It is interesting to notice that mirror descent and proximal point have been used interchangeably in the RL literature. Indeed, in that case the objective is linear therefore the two algorithms coincide. However, when considering the max-form objective in imitation learning the equivalence between mirror descent and proximal point does not longer hold true. We verify numerically that PPM outperforms mirror descent in simple tabular environments (see \Cref{fig:ours_comparison}).
\begin{figure*}[t] 
\centering
\begin{tabular}{cccc}
\subfloat[Two State Deterministic]{%
       \includegraphics[width=0.2\linewidth]{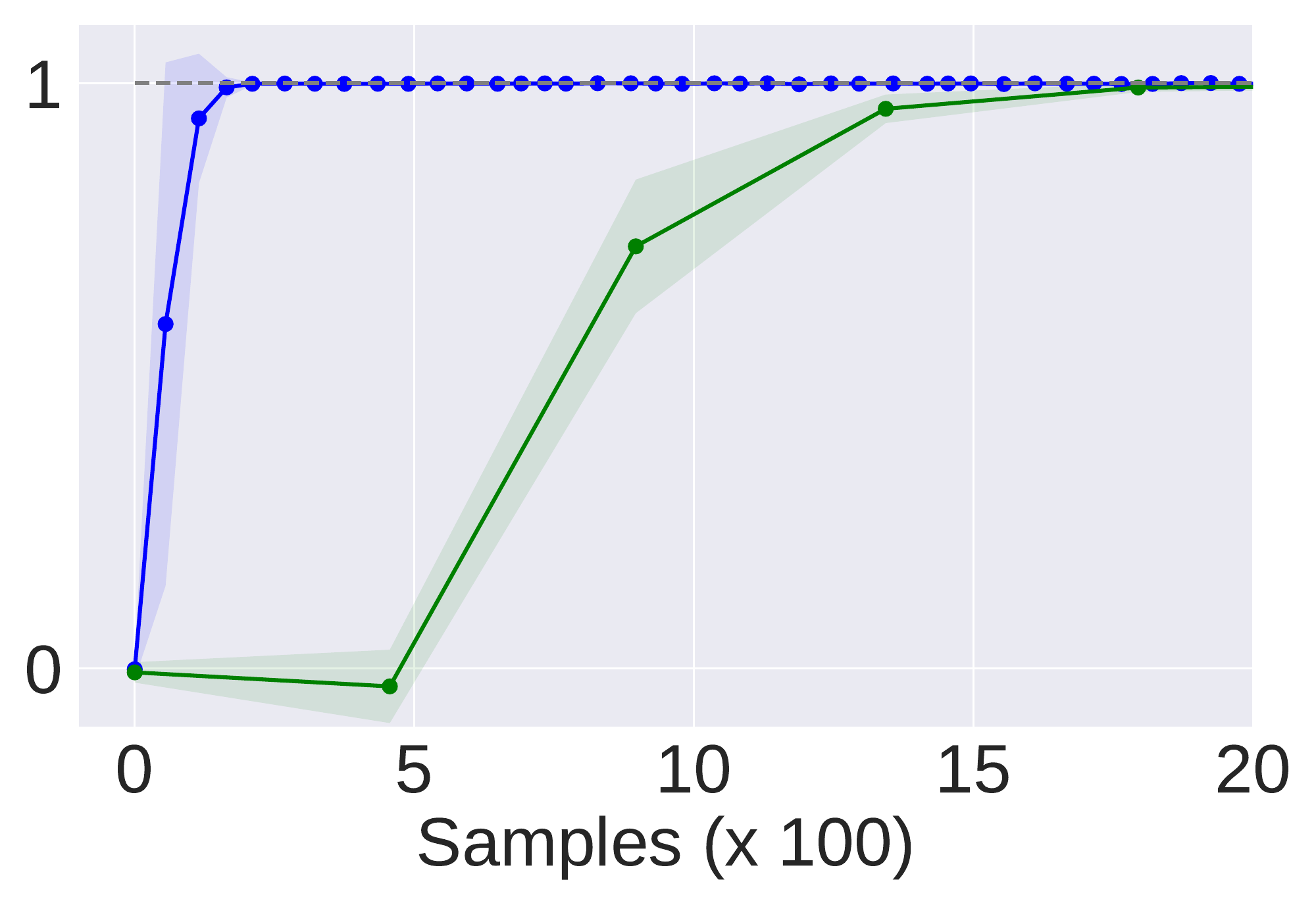}
     } &
\subfloat[WideTree]{%
       \includegraphics[width=0.2\linewidth]{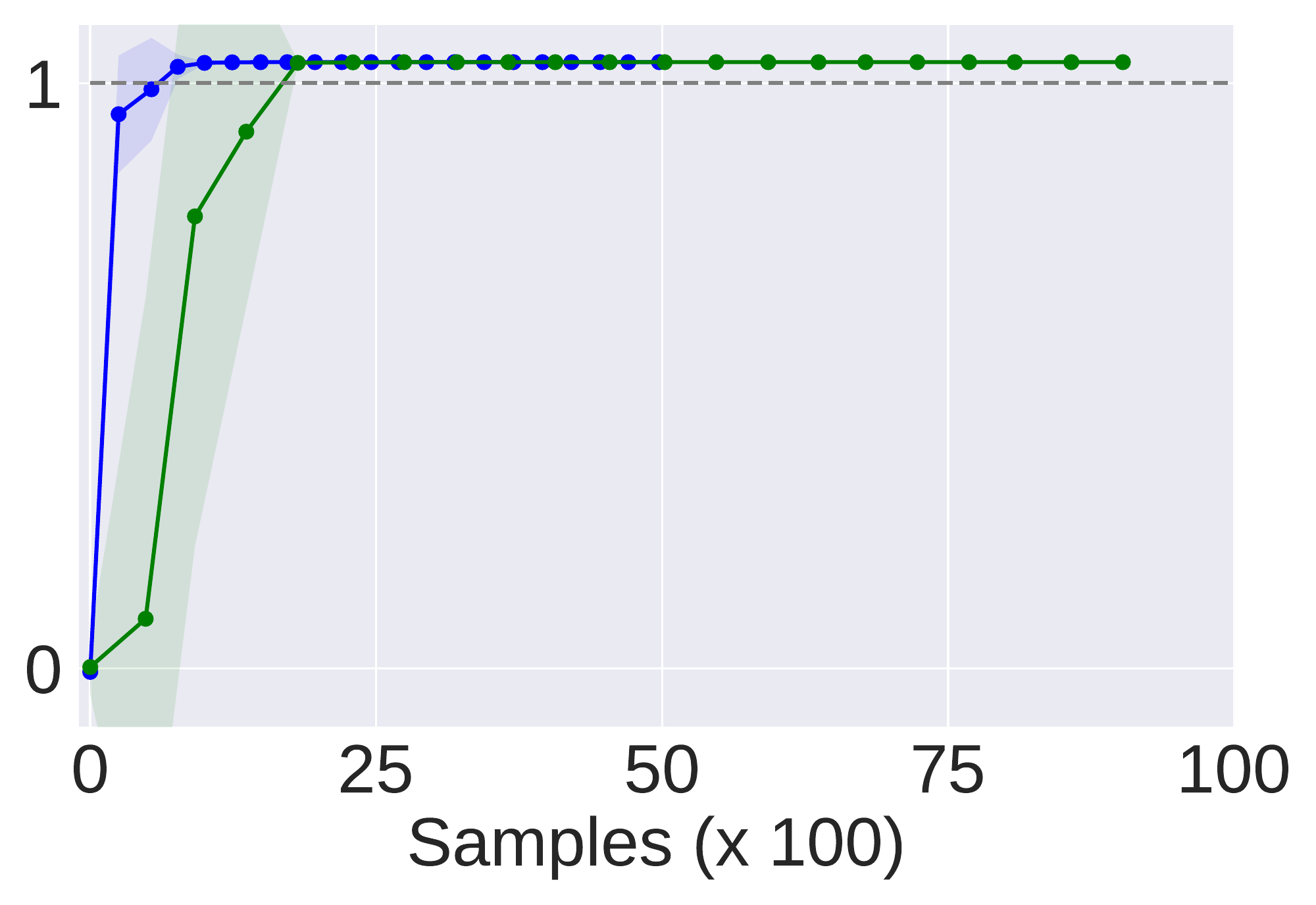}
     } &
\subfloat[RiverSwim]{%
       \includegraphics[width=0.2\linewidth]{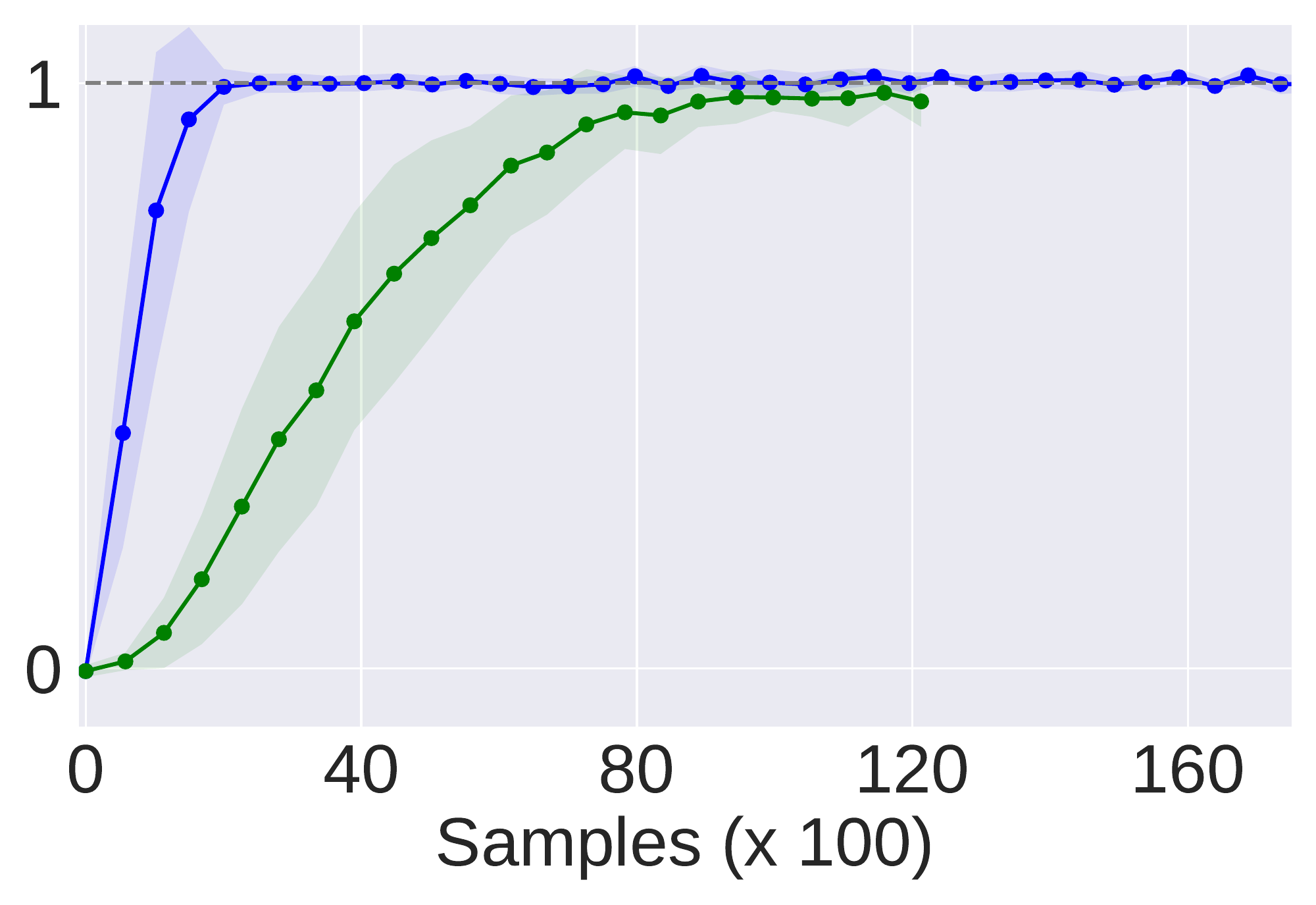}
     } &
\subfloat[SingleChain]{%
       \includegraphics[width=0.2\linewidth]{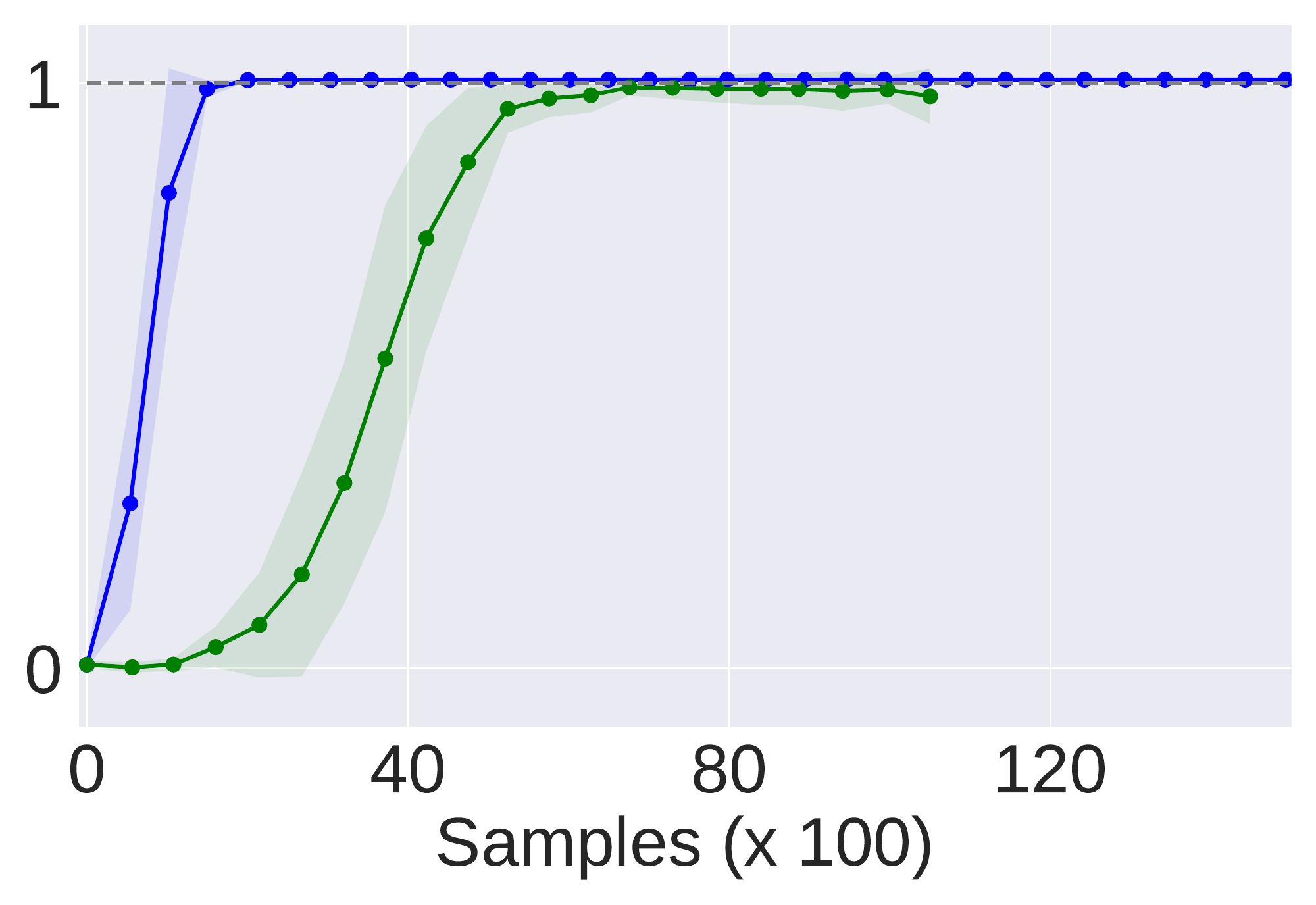}
     } \\
\subfloat[DoubleChain]{%
       \includegraphics[width=0.2\linewidth]{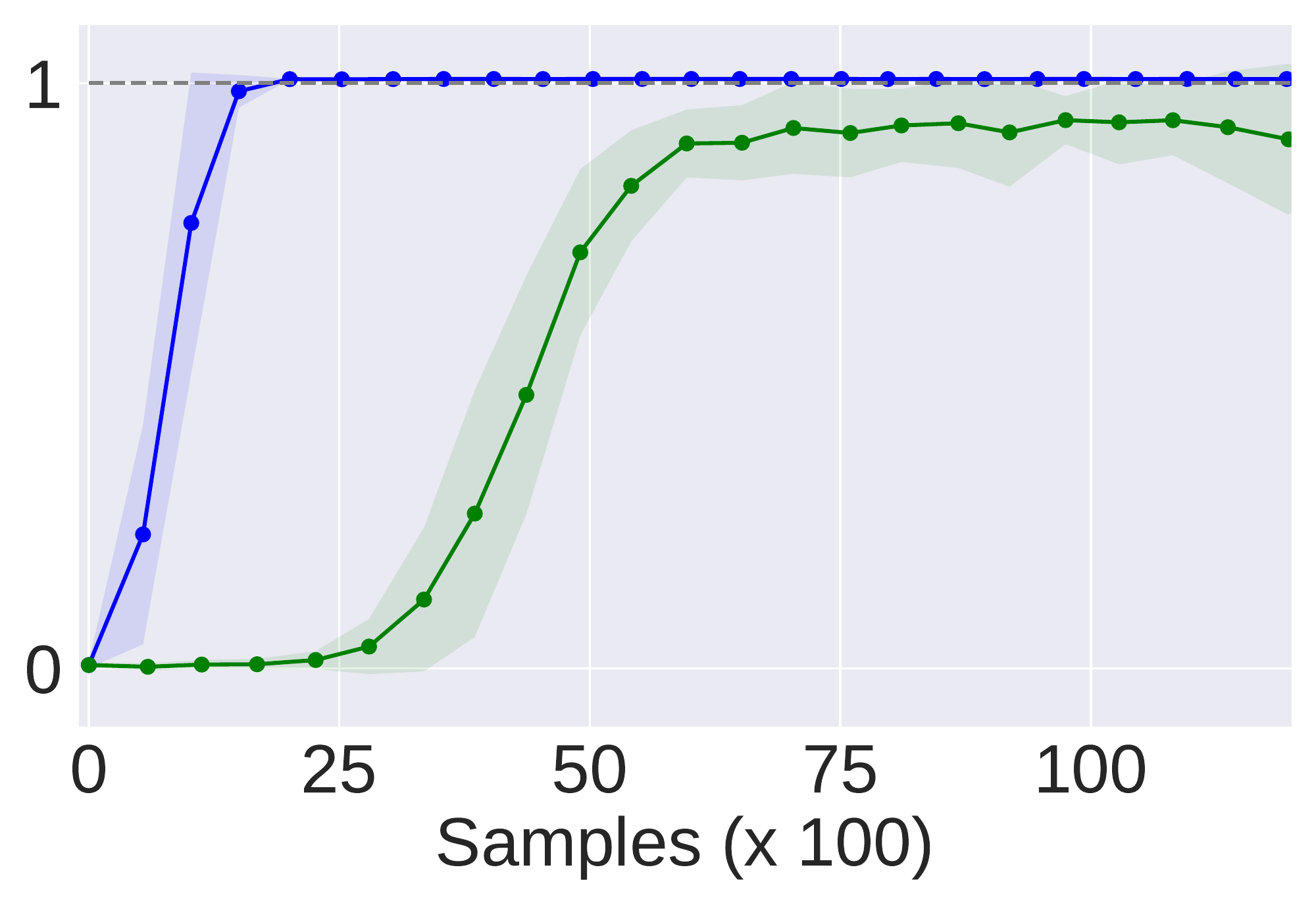}
     } &
\subfloat[Two State Stochastic]{%
       \includegraphics[width=0.2\linewidth]{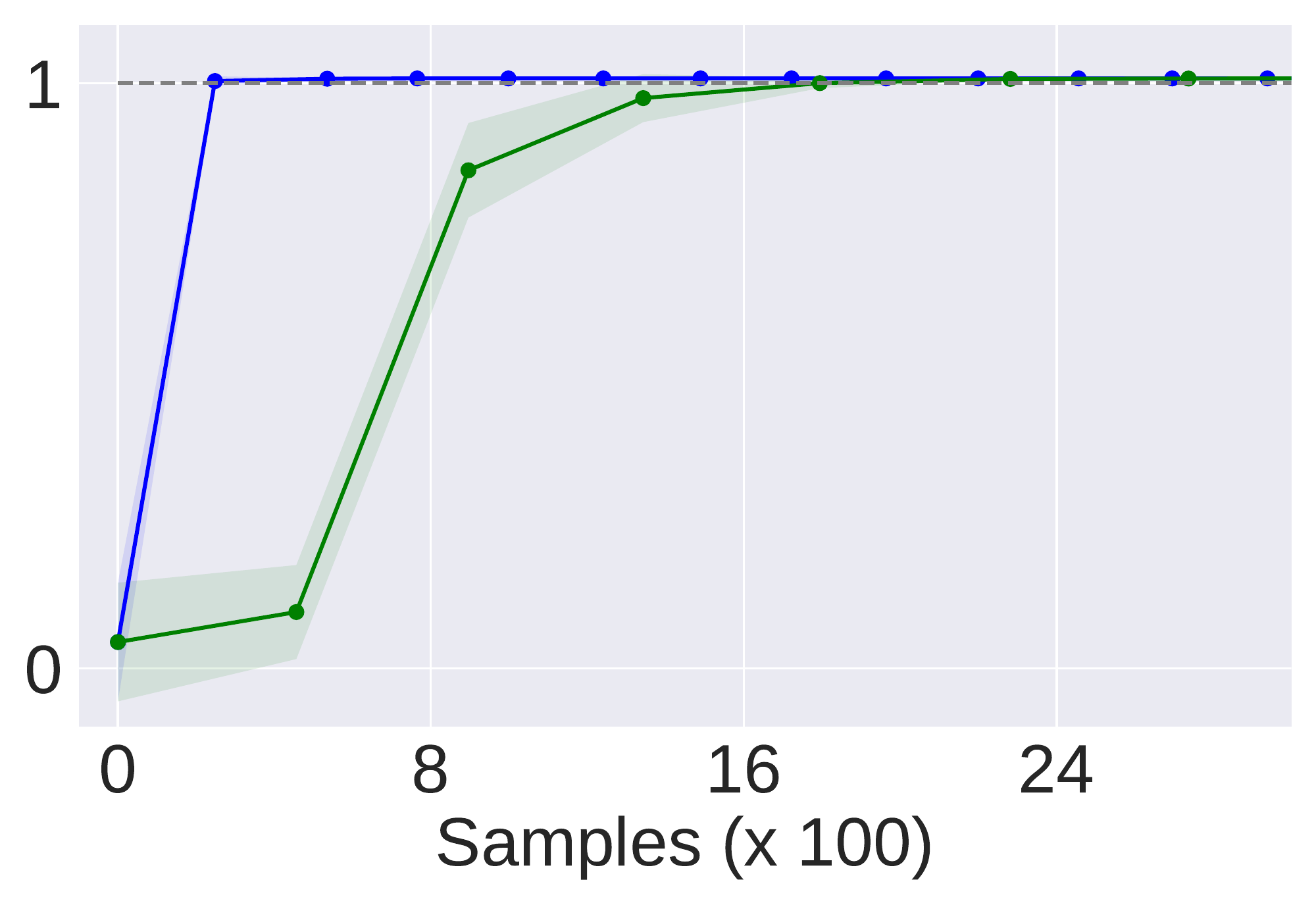}
     } &
\subfloat[Gridworld]{%
       \includegraphics[width=0.2\linewidth]{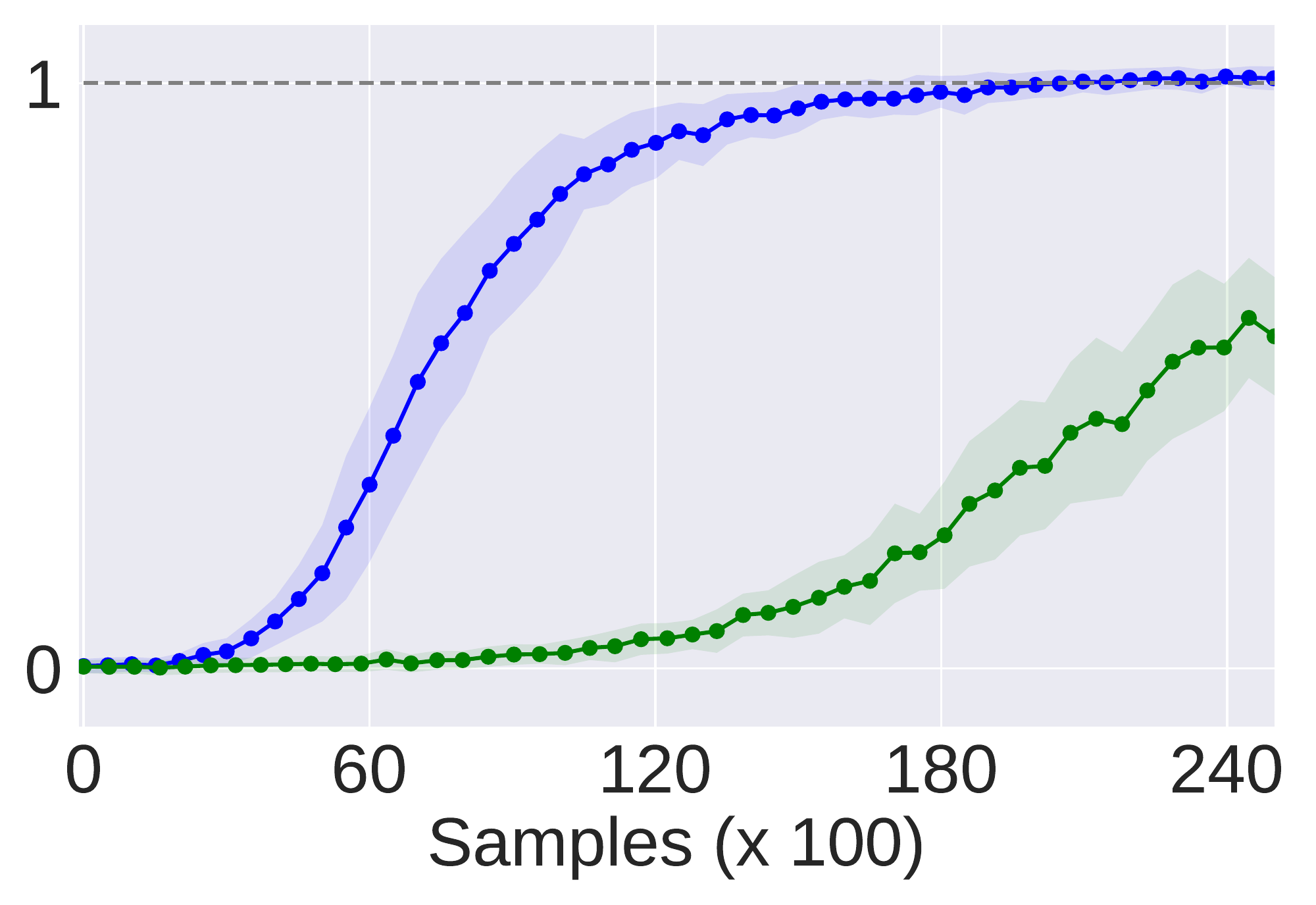}
     } &
\subfloat[Legend]{%
       \includegraphics[scale=0.5]{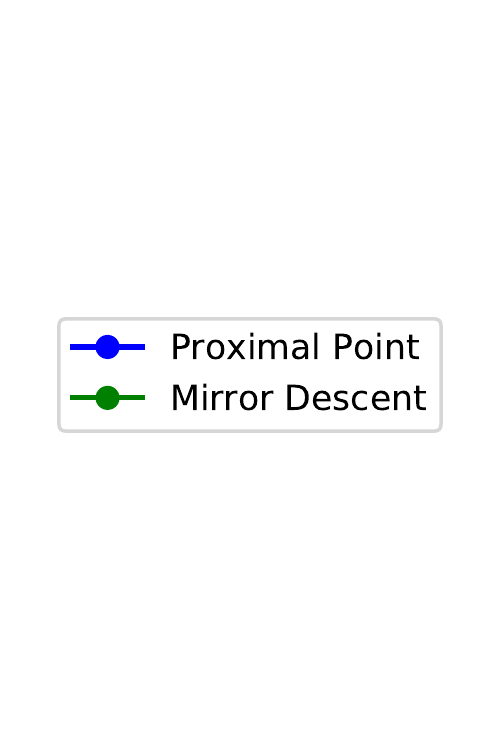}
     } \\
\end{tabular}
\caption{\textbf{Proximal Point vs Mirror Descent.} Comparison of proximal point and mirror descent in tabular domains. Averages of 10 seeds.}
\label{fig:ours_comparison}
\end{figure*}

%

\subsection{Hyperparameters for Pong (Atari)}
\label{sec:atari_hyper}
We use a convolutional neural network to learn the $Q$ values instead of the linear function approximation class we considered in the theoretical analysis. We set the parameter $\alpha$ to $1e-3$ and $\eta$ to $8e-2$, we used expert samples to approximate expectation with respect to the initial distribution. For optimizing the network we used Adam \cite{Kingma:2015} with learning rate $1e-4$ and defaults value for $\beta_1, \beta_2$ Instead of hard constraints on the euclidean norm of the elements of $\wspace$ we consider a $\ell_2$ penalty to the loss function. As expert trajectories we used the dataset released by \cite{Garg:2021}. This is the only hyperparameters configuration we tried using a single seed (using the seed $0$) on our method because of the high computation requirements of this environment.

\subsection{Hyperparameters for MuJoCo (continuous control)}
\label{sec:mujoco_hyper}
 The policy network outputs a distribution over continuous action and is parametrized by independent gaussian distributions for every component of the continuous action vector. We use a three layer neural network to estimate their means and variances. We used as center point in the divergence $D$ the expert feature expectation vector.
With further modifications our method can extend also to continuous control tasks in MuJoCo \cite{Todorov:2012}. The main challenge is that the policy improvement step can not be computed in closed form. We therefore approximate it with a SAC architecture as proposed in \cite{Garg:2021}. We set $\alpha$ to $1e-3$, $\eta$ to $8e-2$, the SAC actor learning rate to $3e-5$ using Adam as optimizer using default values of $\beta_1, \beta_2$,for the critic we used  again Adam with learning rate $3e-4$ and default values for $\beta_1, \beta_2$. The actor training of SAC is performed using a transition buffer containing expert and learner data in equal proportion. We used samples from the expert policy to estimate expectations wrt the initial distribution.
We avoid using target networks. We tested our algorithm on both the environment \texttt{Ant} and \texttt{HalfCheetah} using either the data provided in \cite{Garg:2021} or fresh expert data that we generated training experts with PPO \cite{Schulman:2017}. The results are averaged across 5 seeds.
For \texttt{Hopper}, we used a larger SAC actor learning rate equal to $2e-4$ and $\alpha=1e-2$. In addition, we notice that for this environment having a large $\beta_1$ in Adam was harmful. Hence, we used $\beta_1 = 0$.

For \texttt{Walker}, we set the actor learning to $1e-4$.
\subsection{Acknowledging existing assets and license.}
We built on the code and expert data provided in \cite{Garg:2021}. They are open sourced for academic scope according to their GitHub page \url{https://github.com/Div99/IQ-Learn/blob/main/LICENSE.md}.
\subsection{On the importance of the dataset} 
We observed that the performance of our imitation learning algorithm and IQ-Learn can be affected by the choice of the expert data. In particular, in \Cref{fig:mujoco_iq}, we show that IQ-Learn works better with the expert data provided in \cite{Garg:2021}.
\begin{figure}[t]
\label{fig:mujoco_iq}
    \centering
\begin{tabular}{cc}
\subfloat[HalfCheetah-v2]{%
       \includegraphics[width=0.4\linewidth]{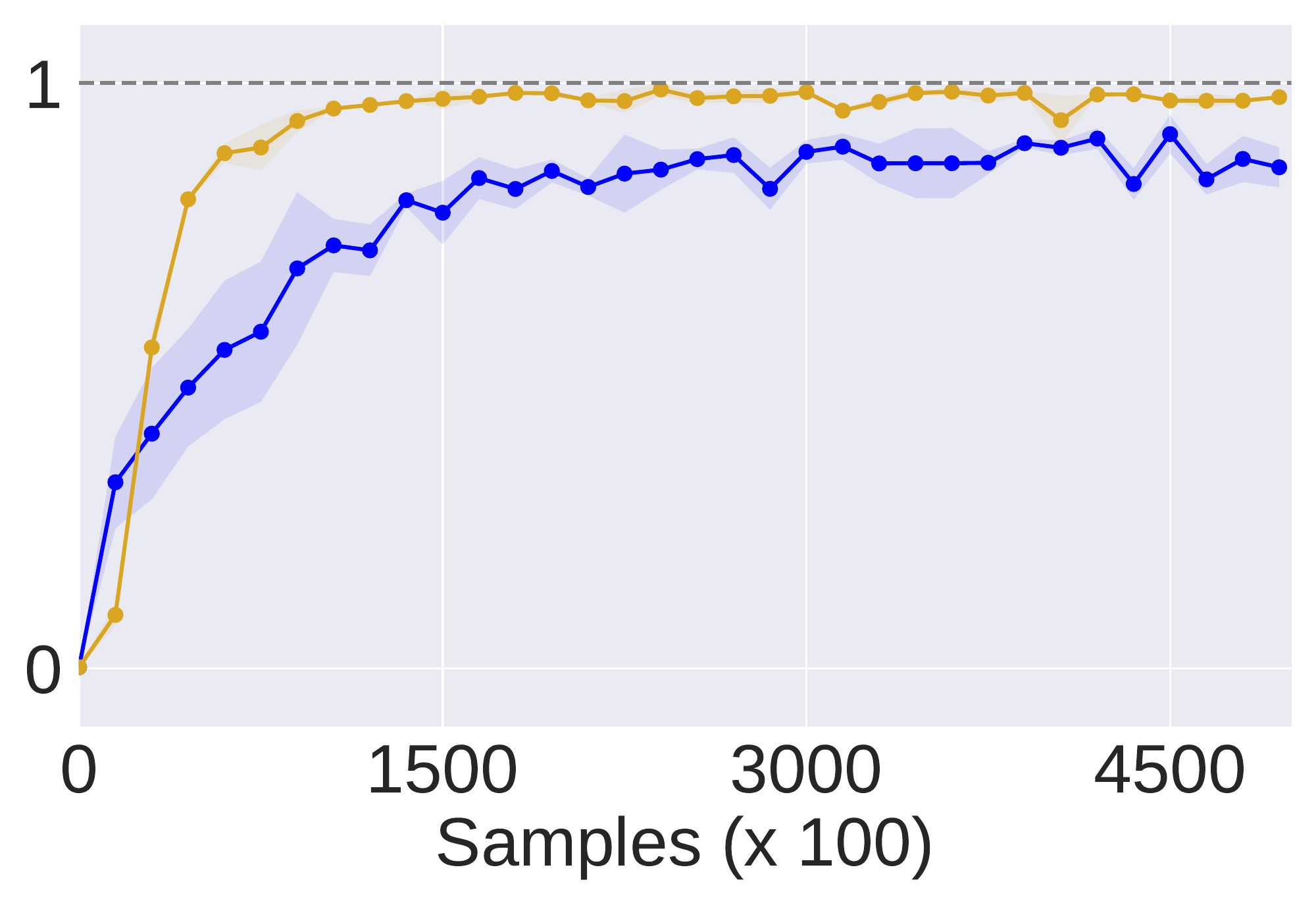}
     } &
     \subfloat[Ant-v2]{%
       \includegraphics[width=0.4\linewidth]{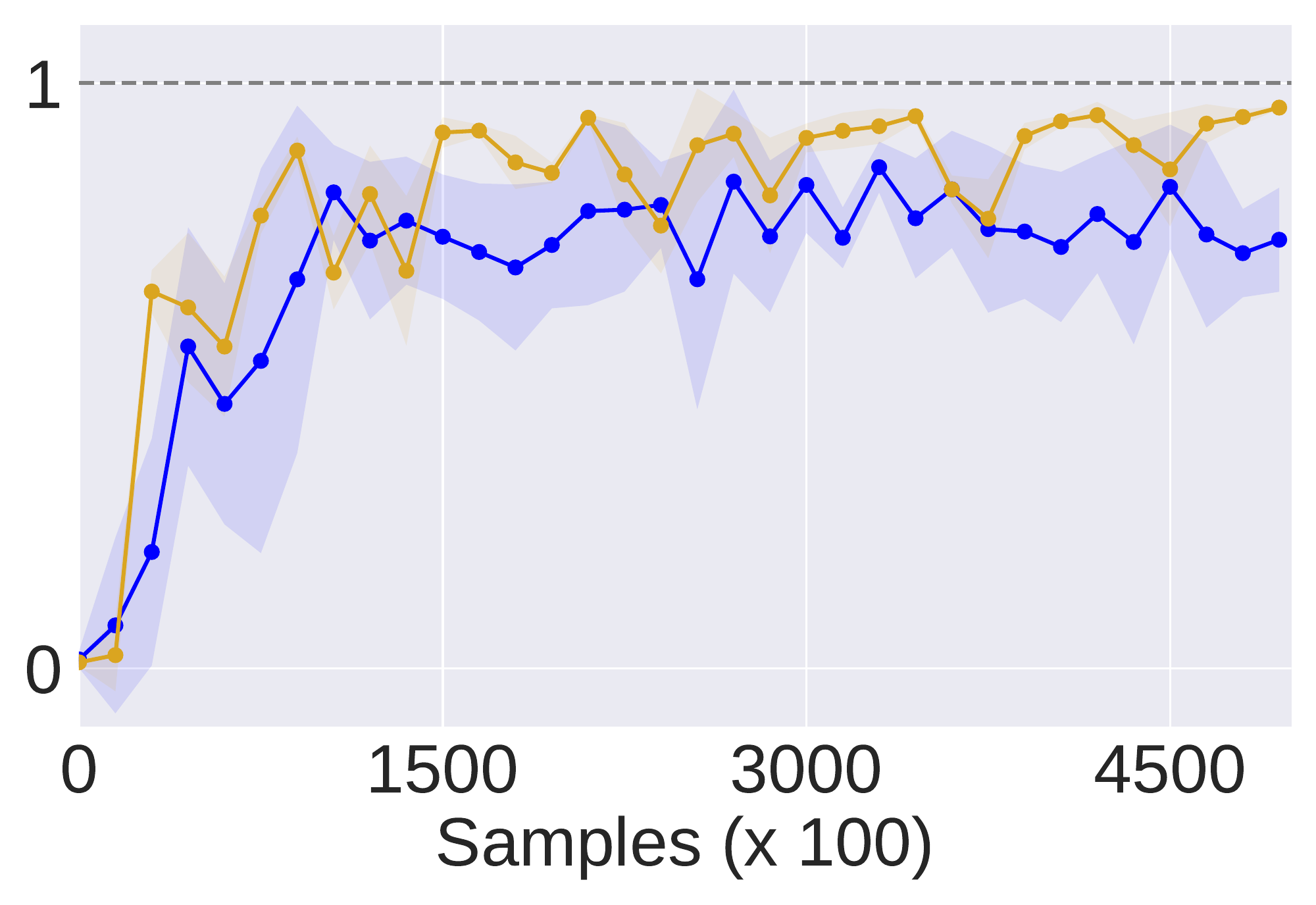}
     }
\end{tabular}
\caption{Experiments in the MuJoCo environments with the expert data provided by~\cite{Garg:2021}. The blue line is proximal point while the yellow line is IQLearn.}
\end{figure}

\subsection{Hardware} We ran the experiments on our internal cluster.
\section{Recovered Costs}\label{app:recovered-rewards}


A unique algorithmic feature of the proposed methodology is that we can explicitly recover a cost along with the $Q$-function without requiring adversarial training. In Figures~\ref{fig:cost} and~\ref{fig:gridworld_cost}, we visualize our recovered costs in several simple tabular environments (\texttt{River Swim}, \texttt{Single Chain}, \texttt{Double Chain}, and \texttt{Gridworld}, respectively). Most importantly, we verify that the recovered costs induce nearly optimal policies w.r.t. the unknown true cost function. Compared to IQ-Learn, the  we do not require knowledge or further interaction with the environment. Therefore, the recovered cost functions show promising transfer capability to new dynamics.

We experimented with a transfer reward setting on a \texttt{Gridworld} (Figure~\ref{fig:transfer_cost}).
We consider two different Gridworld MDP environments, say $M$ and $\widetilde{M}$, with opposite action effects. This means that action \texttt{Down} in $\widetilde{M}$ corresponds to action \texttt{Left} in $M$ and vice versa. Similarly, the effects of \texttt{Up} and \texttt{Right} are swapped between $\widetilde{M}$ and $M$. We denote by $\val^\pi_{\widetilde{M},\cost_{\mathrm{true}}}$ (resp. $\val^\star_{\widetilde{M},\cost_{\mathrm{true}}})$ the value function of policy $\pi$ (resp. optimal value function) in the MDP environment $\widetilde{M}$ with cost function $\cost_{\mathrm{true}}$. Moreover, we denote by $\pi^\star_{{M},\cost}$ the optimal policy in the MDP environment $M$ under cost function $\cost$.  We notice that the recovered cost induces an optimal policy for the new dynamics while the imitating policy fails.  Albeit, cost transfer is successful in this experiment we do not expect this fact to be true in general because we do not tackle the issue of cost shaping \cite{Ng:2000}.

\begin{figure}[t]
    \centering
\begin{tabular}{cccc}
\multicolumn{4}{c}{River Swim}\\
\subfloat[$- \mbf{c}_{\mathrm{true}}$]{%
       \includegraphics[height=4cm]{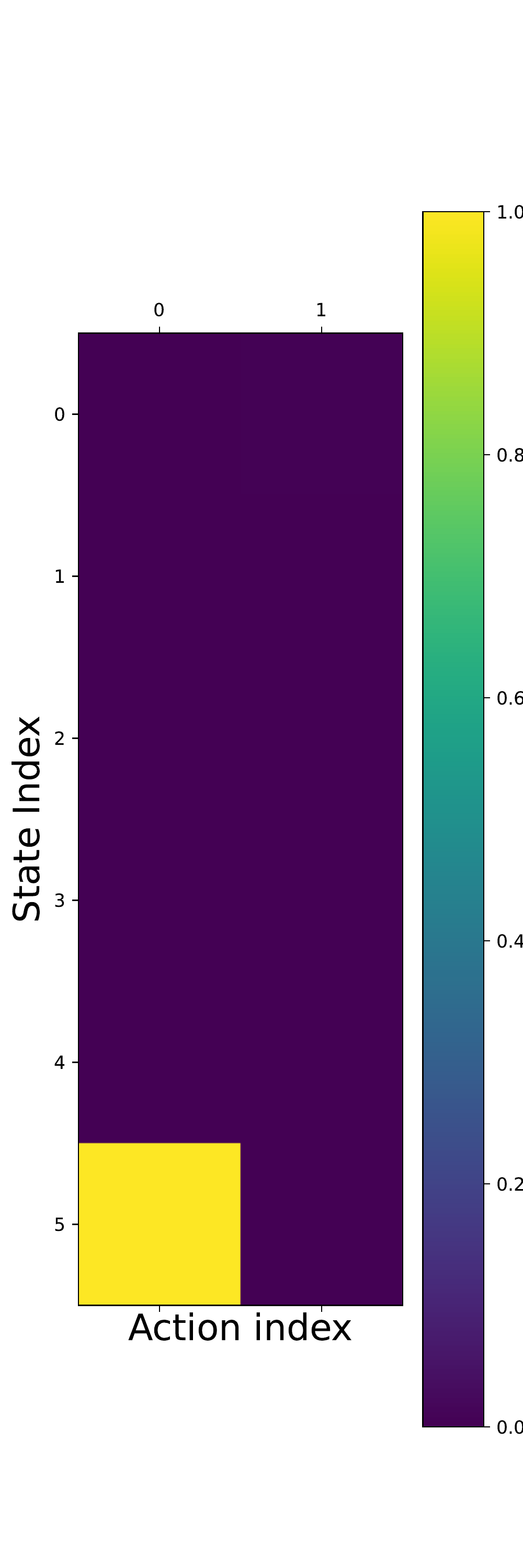}
     } &
     \subfloat[$- \mbf{c}_{K}$]{%
       \includegraphics[height=4cm]{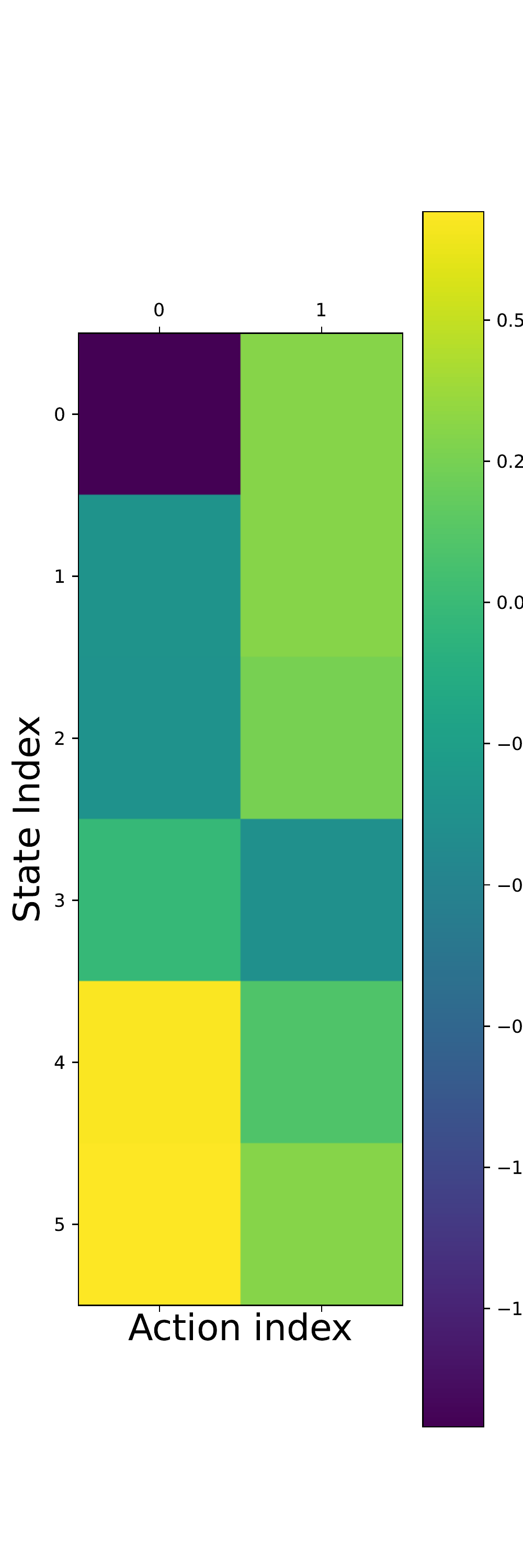}
     } &
     \subfloat[- $V^\star_{\mbf{c}_{\mathrm{true}}}$]{%
       \includegraphics[height=4cm]{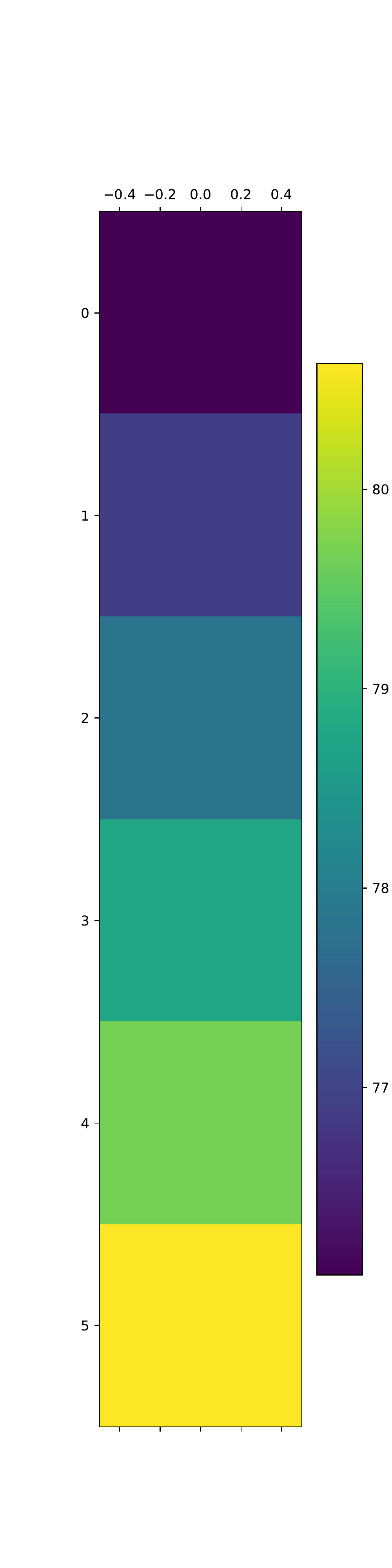}
     } &
     \subfloat[-$V^\star_{\mbf{c}_{K}}$]{%
       \includegraphics[height=4cm]{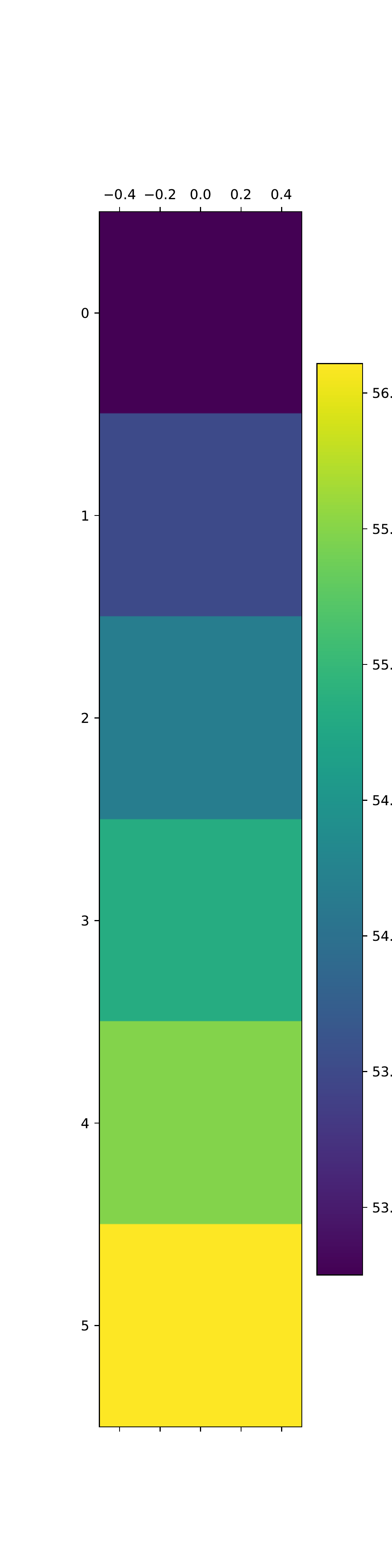}
     } \\ \\
     \multicolumn{4}{c}{Single Chain}\\
\subfloat[$- \mbf{c}_{\mathrm{true}}$]{%
       \includegraphics[height=4cm]{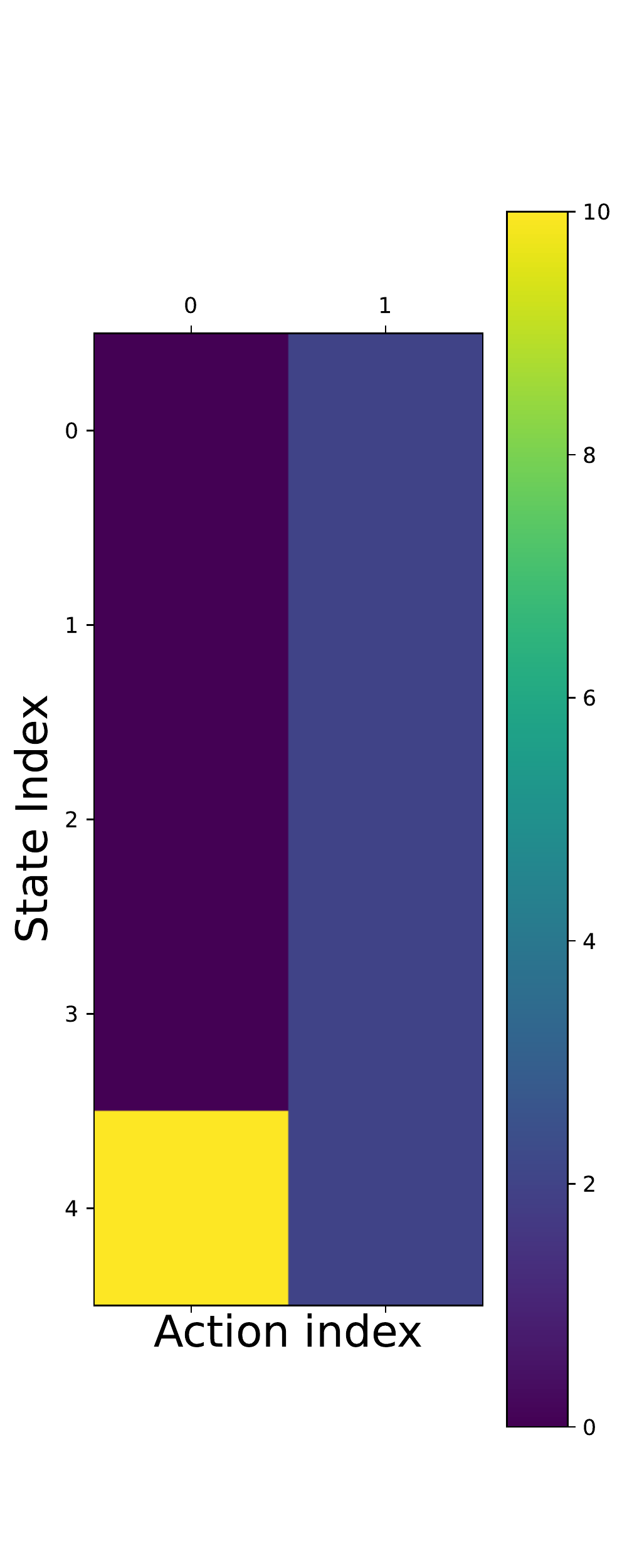}
     } &
     \subfloat[$- \mbf{c}_{K}$]{%
       \includegraphics[height=4cm]{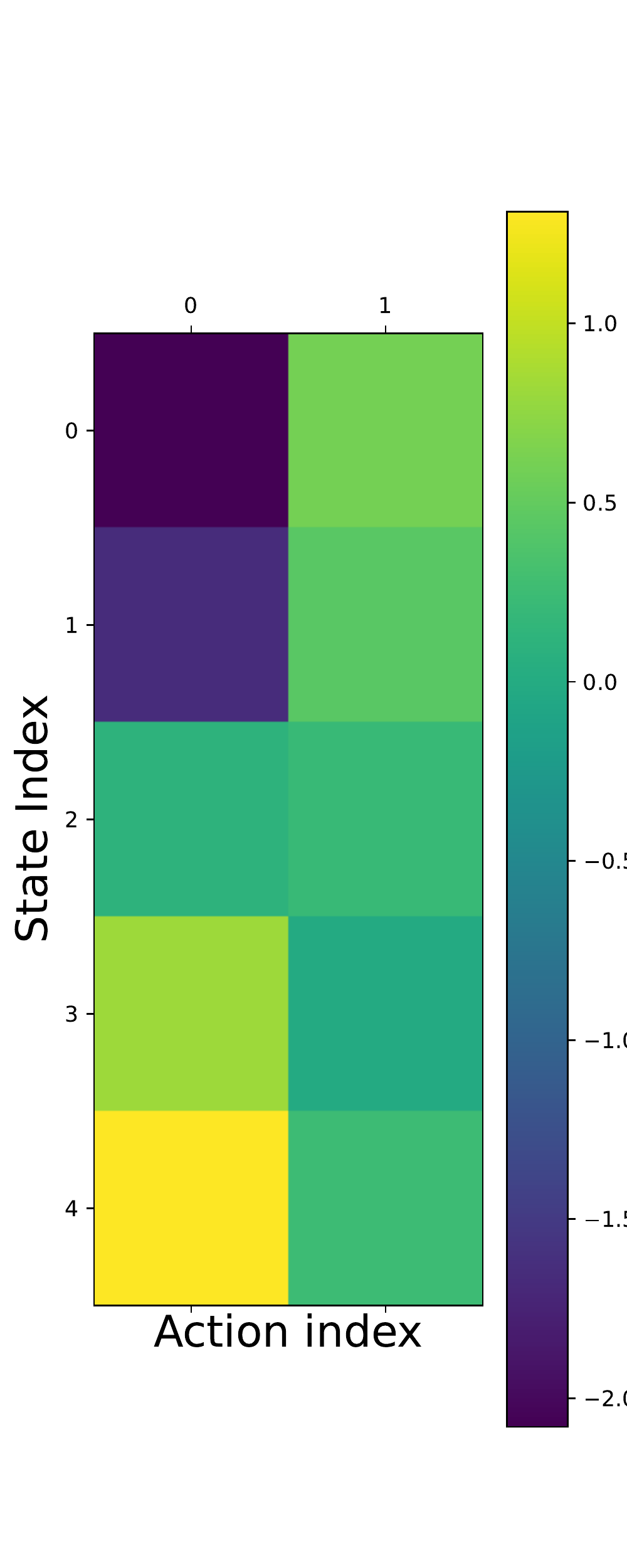}} &
\subfloat[- $V^\star_{\mbf{c}_{\mathrm{true}}}$]{%
       \includegraphics[height=4cm]{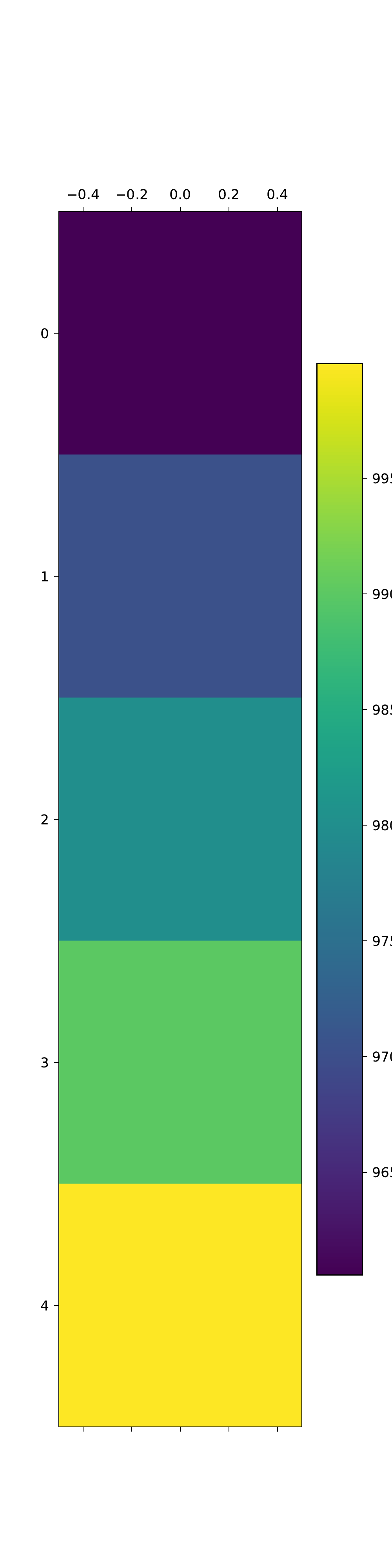}
     } &
     \subfloat[-$V^\star_{\mbf{c}_{K}}$]{%
       \includegraphics[height=4cm]{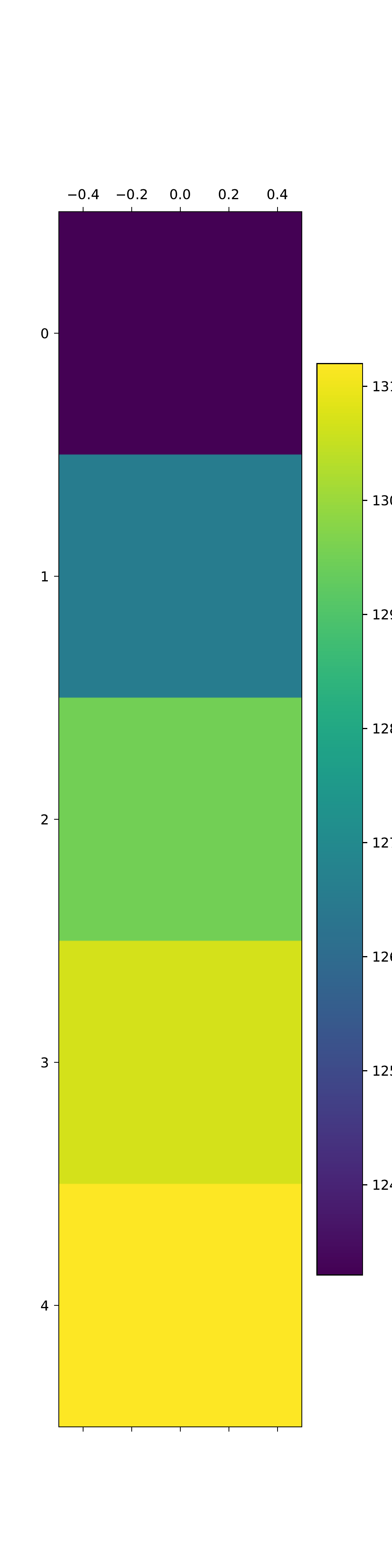}}
      \\ \\
     \multicolumn{4}{c}{Double Chain}\\
      \subfloat[$- \mbf{c}_{\mathrm{true}}$]{%
       \includegraphics[height=4cm]{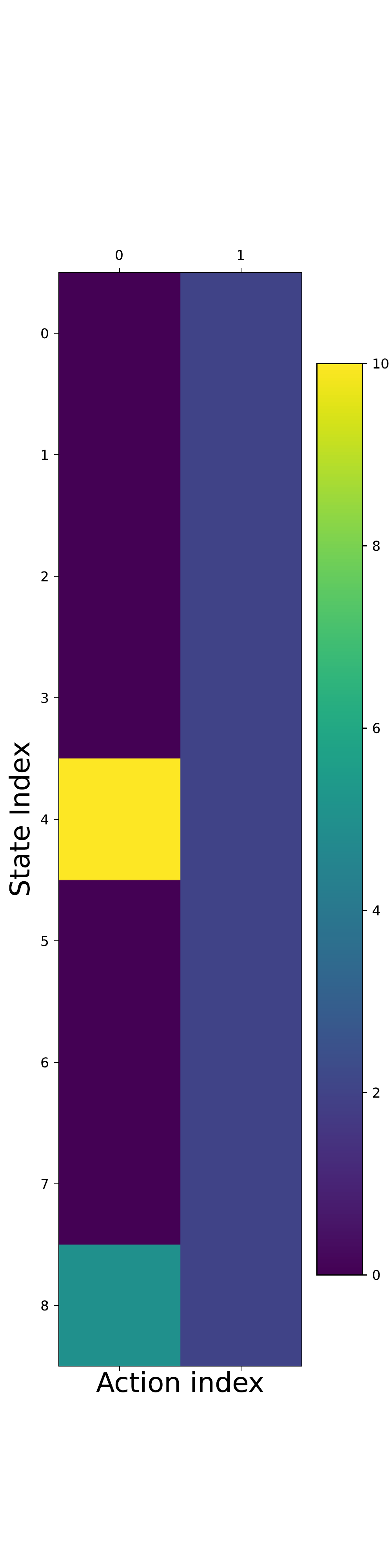}
     } &
     \subfloat[$- \mbf{c}_{K}$]{%
       \includegraphics[height=4cm]{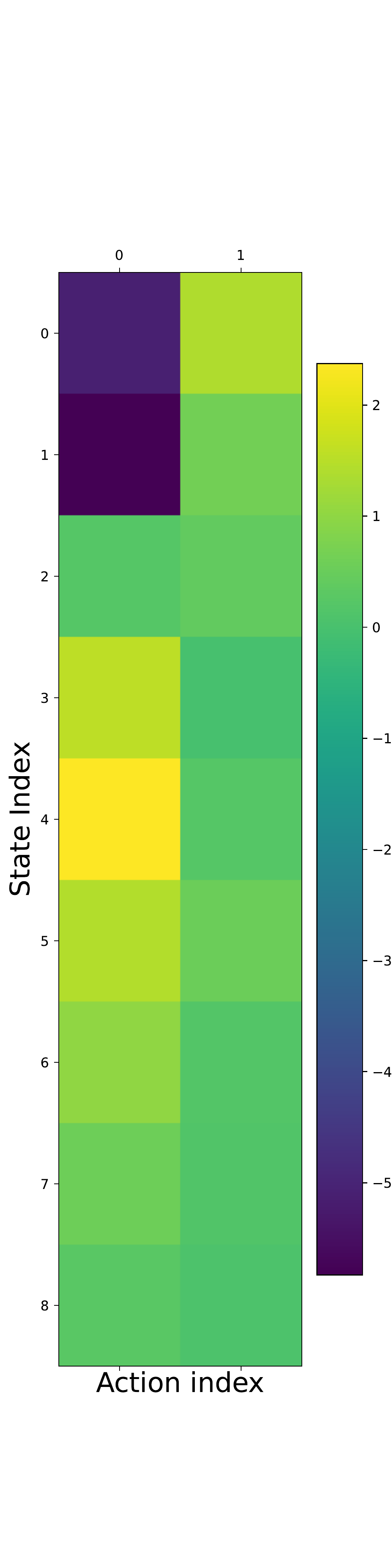}
     } &
     \subfloat[- $V^\star_{\mbf{c}_{\mathrm{true}}}$]{%
       \includegraphics[height=4cm]{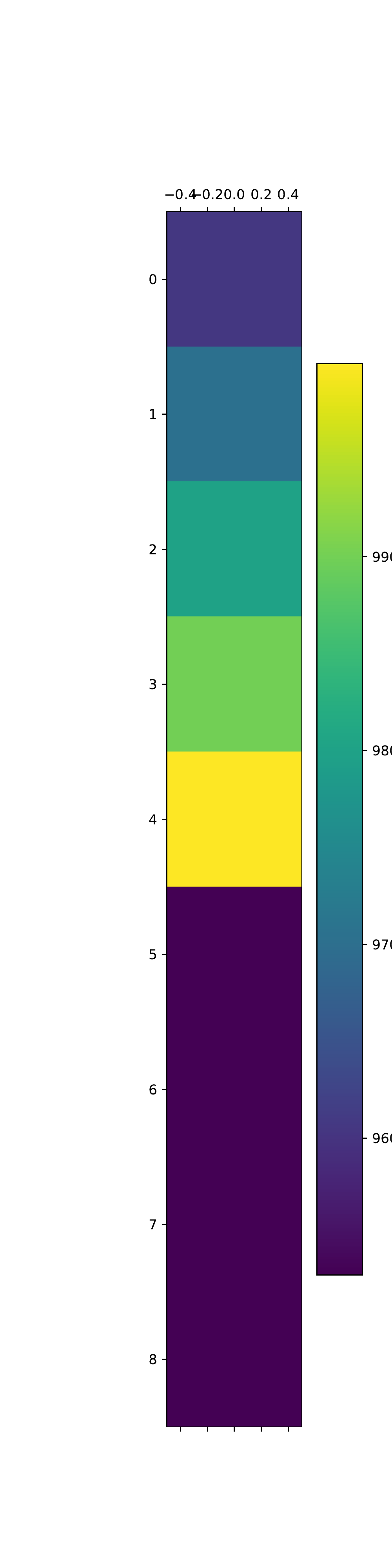}
     } &
     \subfloat[-$V^\star_{\mbf{c}_{K}}$]{%
       \includegraphics[height=4cm]{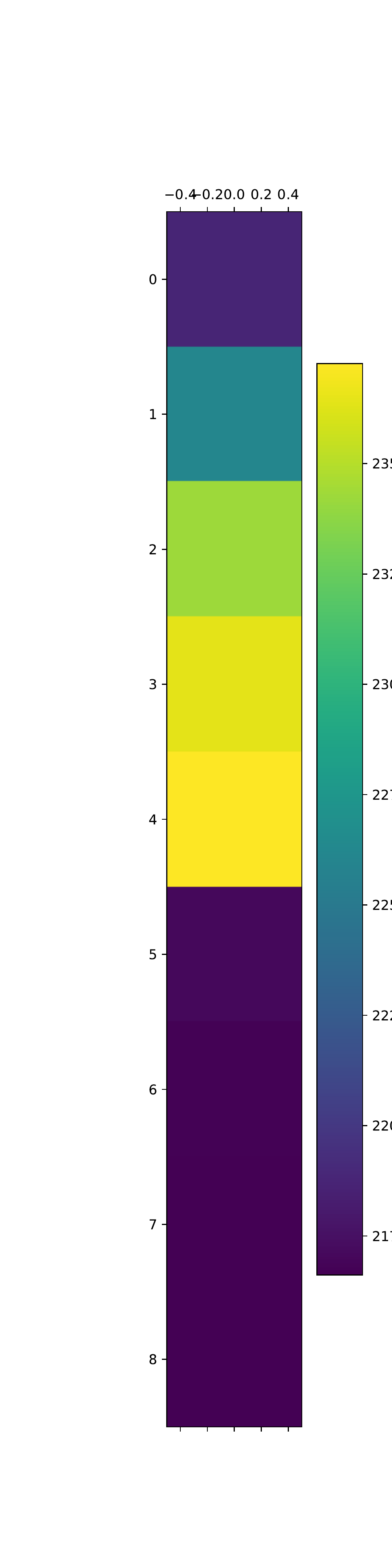}
     }
\end{tabular}
\caption{\textbf{Recovered Costs.} Comparison between the true cost $\cost_{\mathrm{true}}$ and the cost $\cost_K$ recovered by \texttt{P$^2$IL}. We notice that the optimal value functions $V^\star_{\mbf{c}_{\mathrm{true}}}$ and $V^\star_{\mbf{c}_{K}}$ present the same pattern. Hence, the optimal policy with respect to $\mbf{c}_{K}$ is nearly optimal with respect to $\mbf{c}_{\mathrm{true}}$.
\label{fig:cost}}
\end{figure}
\if 0
\begin{figure}[t]
    \centering
\begin{tabular}{cccc}
\subfloat[$- \mbf{c}_{\mathrm{true}}$]{%
       \includegraphics[height=8cm]{plot/cost/True_RewardWindyGrid-v0201.pdf}
     } &
     \subfloat[$- \mbf{c}_{K}$]{%
       \includegraphics[height=8cm]{plot/cost/Recovered_RewardWindyGrid-v0201.pdf}
     } \\
\subfloat[-$V^\star_{\mbf{c}_{\mathrm{true}}}$]{%
       \includegraphics[height=4cm]{plot/cost/True_ValueWindyGrid-v0201.pdf}
     } &
     \subfloat[-$V^\star_{\mbf{c}_{K}}$]{%
       \includegraphics[height=4cm]{plot/cost/Recovered_ValueWindyGrid-v0201.pdf}}
\end{tabular}
\caption{\textbf{Recovered Costs in \texttt{Gridworld}.} Comparison between the true cost $\cost_{\mathrm{true}}$ and the cost $\cost_K$ recovered by \texttt{P$^2$IL}. We notice that the optimal value functions $V^\star_{\mbf{c}_{\mathrm{true}}}$ and $V^\star_{\mbf{c}_{K}}$ present the same pattern. Hence, the optimal policy with respect to $\mbf{c}_{K}$ is nearly optimal with respect to $\mbf{c}_{\mathrm{true}}$.
\label{fig:gridworld_cost}}
\end{figure}

\begin{figure}[t]
    \centering
\begin{tabular}{cc}
\subfloat[$-\val^\star_{\widetilde{M},\cost_{\mathrm{true}}}$]{%
       \includegraphics[width=0.3\linewidth]{plot/cost/ExpertValueWindyGrid-v0202.pdf}
     } &
     \subfloat[$-\val_{\widetilde{M}, \cost_{\mathrm{true}}}^{\pi^\star_{M,\cost_{\mathrm{true}}}}$]{%
       \includegraphics[width=0.3\linewidth]{plot/cost/ImitationValueWindyGrid-v0202.pdf}
     } \\
\subfloat[$-\val_{\widetilde{M},\cost_{\mathrm{true}}}^{\pi^\star_{\widetilde{M},\cost_K}} $]{%
       \includegraphics[width=0.3\linewidth]{plot/cost/True_ValueWindyGrid-v0201.pdf}
     } &
     \subfloat[$-\val_{\widetilde{M}, \cost_{\mathrm{true}}}^{\pi_K}$]{%
       \includegraphics[width=0.3\linewidth]{plot/cost/ApproxImitationValueWindyGrid-v0201.pdf}
     }
\end{tabular}
\caption{\textbf{Cost Transfer Experiment in \texttt{Gridworld}.}We compare the performance of several policies in the new MDP environment $\widetilde{M}$ with cost function $\true$. Figure~(a) gives the corresponding optimal value function.
Figure~(b) presents the value function of the expert policy $\expert=\pi^\star_{M,\cost_{\mathrm{true}}}$ used as target by \texttt{P$^2$IL}. Figure~(d) shows the value function of the learned
 imitating policy $\pi_K$ from \texttt{P$^2$IL}. Finally, Figure~(b) depicts the value function of the optimal policy $\pi^\star_{\widetilde{M},\cost_K}$ for the environment $\widetilde{M}$ endowed with the recovered cost function $\cost_K$ by \texttt{P$^2$IL} (with access to samples from $M$).
 We conclude that the policy $\pi^\star_{\widetilde{M},\cost_K}$ is optimal in $\widetilde{M}$ with cost $\cost_{\mathrm{true}}$.
By contrast, the expert policy $\expert=\pi^\star_{M,\cost_{\mathrm{true}}}$ used as target by \texttt{P$^2$IL} performs poorly and as a consequence also the imitating policy $\pi_K$ does so.
\label{fig:transfer_cost}}
\end{figure}
\fi
\subsection{Preliminary theoretical arguments}
We have some preliminary theoretical arguments justifying the near optimality of the recovered costs/rewards. We present briefly the reasoning.

For brevity, we consider the case $\mathcal{W}=B_1^m$. Then $\pi_{\textup{E}}$ is optimal for the IL problem. Moreover, for simplicity, we consider the case $\boldsymbol{\Phi}=\mathbf{I}$. Otherwise, in the following derivations, we replace $\mathbf{Q}$-values by parameterized $\mathbf{Q}_{\boldsymbol{\theta}}$.

Let $(\widehat{\mathbf{w}}_K,\widehat{\mathbf{Q}}_K)$ be the output (average iterate) of \texttt{P$^2$IL} after $K$ outer loop iterations. We give a sketch of proof that  $\widehat{\mathbf{w}}_K$ converges to an optimal solution to the inverse problem as $K\rightarrow\infty$, i.e., $\widehat{\mathbf{w}}_K$ converges to some $\mathbf{w}_{\textup{A}}\in\mathcal{W}$ such that $\pi_{\textup{E}}$ is optimal for $\mathbf{c}_{\mathbf{w}_{\textup{A}}}$. To this end, we first introduce the following definition.

\begin{defn}
We say that $\mathbf{w}\in\mathcal{W}$ is $\varepsilon_1$-optimal and $\varepsilon_2$-feasible for the~(\ref{eq:dual}) program if-f there exists $\mathbf{V}\in\mathds{R}^{|\mathcal{S}|}$, such that
\begin{eqnarray}
\langle\boldsymbol{\mu}_{\pi_{\textup{E}}},\mathbf{c}_{\mathbf{w}}\rangle-(1-\gamma)\langle\boldsymbol{\nu}_{0},\mathbf{V}\rangle&\le& \varepsilon_1,\\
\mathbf{c}_{\mathbf{w}}-(\mathbf{B}-\gamma\mathbf{P})\mathbf{V}&\geq& -\varepsilon_2\mathbf{1}.
\end{eqnarray}
In this case, $\mathbf{V}\in\mathds{R}^{|\mathcal{S}|}$ is called a certificate.
\end{defn}
Note that the definition of $\varepsilon_1$-optimality for the ~(\ref{eq:dual}) program follows from the fact that the dual optimal value is $\zeta^\star=0$. Moreover, in the definition of $\varepsilon_2$-feasibility we have relaxed the nonnegativity constraint in the dual program~(\ref{eq:dual}). We make the following conjecture.

\textbf{Conjecture:} For a sufficiently large number of samples $N=\mathcal{O}\Big(\textup{poly}\big(\frac{1}{\varepsilon},\log(\frac{1}{\delta}\big),m)\Big)$, with probability at least $1-\delta$, the output cost weight $\widehat{\mathbf{w}}_K$ is $\varepsilon$-optimal and $\varepsilon$-feasible for the~(\ref{eq:dual}) program, with certificate the corresponding logistic value function $\mathbf{V}_{\widehat{\mathbf{Q}}_K}$.

This is easy to show for the exact PPM updates, since $(\mathbf{d}_{\widehat{\pi}_K},\mathbf{d}_{\widehat{\pi}_K},\widehat{\mathbf{w}}_K,\mathbf{V}_{\widehat{\mathbf{Q}}_K},\widehat{\mathbf{Q}}_K)$ is a saddle-point of the (SPP). The proof needs much more effort for the inexact updates used in the sampling-based algorithm.

\begin{lemma}
Assume that $\widetilde{\mathbf{w}}$ is $\varepsilon_1$-optimal and $\varepsilon_2$-feasible for the~(\ref{eq:dual}) program. Then, $\pi_{\textup{E}}$ is $(\varepsilon_1+\varepsilon_2)$-optimal for $\mathbf{c}_{\widetilde{\mathbf{w}}}$.
\end{lemma}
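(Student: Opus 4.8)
The goal is to show that if $\widetilde{\weight}$ is $\varepsilon_1$-optimal and $\varepsilon_2$-feasible for the~(\ref{eq:dual}) program with certificate $\val$, then $\expert$ is $(\varepsilon_1+\varepsilon_2)$-optimal for the forward RL problem with cost $\cost_{\widetilde{\weight}}$, i.e., $\rho_{\widetilde{\weight}}(\expert)-\rho_{\widetilde{\weight}}^\star\le\varepsilon_1+\varepsilon_2$. The plan is to sandwich $\rho^\star_{\widetilde{\weight}}$ between a lower bound coming from the relaxed dual feasibility of $\val$ and the value $\rho_{\widetilde{\weight}}(\expert)=\innerprod{\mexp}{\cost_{\widetilde{\weight}}}$, which is controlled from above by the $\varepsilon_1$-optimality inequality.

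\textbf{Key steps.} First I would recall that by the~(\ref{MDP-dual}) linear program, $\rho_{\widetilde{\weight}}^\star=\max\{(1-\gamma)\innerprod{\initial}{\val'}\mid\qval'\ge\bmat\val',\ \qval'=\cost_{\widetilde{\weight}}+\gamma\pmat\val'\}$; equivalently, in the "value-only" form, $\rho^\star_{\widetilde{\weight}}\ge(1-\gamma)\innerprod{\initial}{\val'}$ for any $\val'$ with $\cost_{\widetilde{\weight}}+\gamma\pmat\val'-\bmat\val'\ge\mbf{0}$, i.e. $\cost_{\widetilde{\weight}}-(\bmat-\gamma\pmat)\val'\ge\mbf 0$. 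Second, the $\varepsilon_2$-feasibility of the certificate $\val$ gives only the relaxed inequality $\cost_{\widetilde{\weight}}-(\bmat-\gamma\pmat)\val\ge-\varepsilon_2\mbf 1$. To repair this, I would perturb $\val$ by a constant shift: set $\val'\triangleq\val-\tfrac{\varepsilon_2}{1-\gamma}\mbf 1$. Since $(\bmat-\gamma\pmat)\mbf 1=(1-\gamma)\mbf 1$ (each row of $\bmat$ and of $\pmat$ sums to $1$), we get $\cost_{\widetilde{\weight}}-(\bmat-\gamma\pmat)\val'=\cost_{\widetilde{\weight}}-(\bmat-\gamma\pmat)\val+\varepsilon_2\mbf 1\ge\mbf 0$, so $\val'$ is genuinely dual-feasible. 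Third, apply weak duality / the above bound: $\rho^\star_{\widetilde{\weight}}\ge(1-\gamma)\innerprod{\initial}{\val'}=(1-\gamma)\innerprod{\initial}{\val}-\varepsilon_2$, using $\innerprod{\initial}{\mbf 1}=1$. Finally, combine with the $\varepsilon_1$-optimality inequality $\innerprod{\mexp}{\cost_{\widetilde{\weight}}}-(1-\gamma)\innerprod{\initial}{\val}\le\varepsilon_1$ and the fact that $\rho_{\widetilde{\weight}}(\expert)=\innerprod{\mexp}{\cost_{\widetilde{\weight}}}$ (Section~\ref{sec:IL:MDPs}):
\begin{align*}
\rho_{\widetilde{\weight}}(\expert)-\rho^\star_{\widetilde{\weight}}&\le\innerprod{\mexp}{\cost_{\widetilde{\weight}}}-\big((1-\gamma)\innerprod{\initial}{\val}-\varepsilon_2\big)\\
&=\big(\innerprod{\mexp}{\cost_{\widetilde{\weight}}}-(1-\gamma)\innerprod{\initial}{\val}\big)+\varepsilon_2\le\varepsilon_1+\varepsilon_2.
\end{align*}
Since $\expert$ is feasible for~(\ref{MDP-primal}) with cost $\cost_{\widetilde{\weight}}$, this says $\expert$ achieves the optimal value up to $\varepsilon_1+\varepsilon_2$, which is the claim.

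\textbf{Main obstacle.} There is no deep obstacle here; the one point that requires care is the constant-shift argument, namely verifying that shifting $\val$ by a multiple of $\mbf 1$ both restores exact dual feasibility (using the row-sum identity $(\bmat-\gamma\pmat)\mbf 1=(1-\gamma)\mbf 1$) and changes the objective $(1-\gamma)\innerprod{\initial}{\val}$ by exactly $-\varepsilon_2$ (using $\initial\in\Delta_{\sspace}$). One should also be mindful that "$\varepsilon$-optimal for $\cost_{\widetilde{\weight}}$" is interpreted in terms of the normalized discounted cost $\rho_{\widetilde{\weight}}$, so that the certificate inequality and the LP value are on the same normalization scale; with the conventions of Section~\ref{sec:background} this is automatic.
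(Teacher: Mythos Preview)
Your proof is correct and reaches the same conclusion as the paper, but the mechanics differ slightly. The paper fixes an optimal policy $\widetilde{\pi}$ for $\cost_{\widetilde{\weight}}$ and pairs the relaxed feasibility inequality $\cost_{\widetilde{\weight}}-(\bmat-\gamma\pmat)\widetilde{\val}\ge-\varepsilon_2\mbf 1$ with the occupancy measure $\mv_{\widetilde{\pi}}$; the Bellman flow identity $(\bmat-\gamma\pmat)^{\mathsf T}\mv_{\widetilde{\pi}}=(1-\gamma)\initial$ then yields $\rho_{\widetilde{\weight}}^\star=\innerprod{\mv_{\widetilde{\pi}}}{\cost_{\widetilde{\weight}}}\ge(1-\gamma)\innerprod{\initial}{\widetilde{\val}}-\varepsilon_2$ directly. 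You instead repair the certificate by the constant shift $\val'=\val-\tfrac{\varepsilon_2}{1-\gamma}\mbf 1$, verify via the row-sum identity $(\bmat-\gamma\pmat)\mbf 1=(1-\gamma)\mbf 1$ that $\val'$ is exactly dual-feasible, and then invoke weak duality for~(\ref{MDP-dual}) as a black box. The two arguments are dual to each other: the paper works on the primal side (pairing with $\mv_{\widetilde{\pi}}$), you work on the dual side (shifting $\val$), and both produce the same inequality $\rho_{\widetilde{\weight}}^\star\ge(1-\gamma)\innerprod{\initial}{\val}-\varepsilon_2$. Your route is slightly more modular in that it reduces to the exact LP bound, while the paper's is marginally more direct since it avoids introducing an auxiliary $\val'$.
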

\begin{proof}
There exists $\mathbf{\widetilde{V}}\in\mathds{R}^{|\mathcal{S}|}$, such that
\begin{eqnarray}
\langle\boldsymbol{\mu}_{\pi_{\textup{E}}},\mathbf{{c}}_{\mathbf{\widetilde{w}}}\rangle-(1-\gamma)\langle\boldsymbol{\nu}_{0},\mathbf{\widetilde{V}}\rangle&\le& \varepsilon_1,\\
\mathbf{c}_{\mathbf{\widetilde{w}}}-(\mathbf{B}-\gamma\mathbf{P})\mathbf{\widetilde{V}}&\geq& -\varepsilon_2\mathbf{1}.
\end{eqnarray}
Let $\widetilde{\pi}$ be an optimal policy for $\mathbf{c}_{\widetilde{\mathbf{w}}}$. 
Then, we have that
$$\big\langle\boldsymbol{\mu}_{\widetilde{\pi}},\mathbf{c}_{\mathbf{\widetilde{w}}}-(\mathbf{B}-\gamma\mathbf{P})\mathbf{\widetilde{V}}\big\rangle\geq -\varepsilon_2 \langle\boldsymbol{\mu}_{\widetilde{\pi}},\mathbf{1}\rangle=-\varepsilon_2.$$
By using that $(\mathbf{B}-\gamma\mathbf{P})^{\mathsf{T}}\boldsymbol{\mu}_{\widetilde{\pi}}=(1-\gamma)\boldsymbol{\nu}_0$, we equivalently that
$$
\langle\boldsymbol{\mu}_{\widetilde{\pi}},\mathbf{c}_{\mathbf{\widetilde{w}}}\rangle-(1-\gamma)\langle\boldsymbol{\nu}_{0},\mathbf{\widetilde{V}}\rangle\geq-\varepsilon_2.
$$
Therefore,
$$\langle\boldsymbol{\mu}_{\textup{E}},\mathbf{c}_{\widetilde{\mathbf{w}}}\rangle\le(1-\gamma)\langle\boldsymbol{\nu}_{0},\mathbf{\widetilde{V}}\rangle+\varepsilon_1\le\langle\boldsymbol{\mu}_{\widetilde{\pi}},\mathbf{c}_{\mathbf{\widetilde{w}}}\rangle+\varepsilon_1+\varepsilon_2.$$
Thus, $\pi_{\textup{E}}$ is $(\varepsilon_1+\varepsilon_2)$-optimal for $\mathbf{c}_{\widetilde{\mathbf{w}}}$.
\end{proof}

\textbf{Claim:} As $K\rightarrow\infty$ one may approach as closely as desired an optimal solution to the inverse problem. 

\begin{proofof}{Proof for the ideal PPM updates}
We recall that by Proposition~\ref{prop:primal-dual-q-function}, the set of such solutions is characterized as the set of $\weight$-optimizers to~(\ref{eq:dual}).

Let $\widehat{\mathbf{V}}_K=\mathbf{V}_{\widehat{\mathbf{Q}}_K}$.
By the conjecture, for all $K$, we have 
\begin{eqnarray}
\langle\boldsymbol{\mu}_{\pi_{\textup{E}}},\mathbf{c}_{\widehat{\mathbf{w}}_K}\rangle-(1-\gamma)\langle\boldsymbol{\nu}_{0},\widehat{\mathbf{V}}_K\rangle&\le& \varepsilon_K,\label{limit1}\\
\mathbf{c}_{\widehat{\mathbf{w}}_K}-(\mathbf{B}-\gamma\mathbf{P})\widehat{\mathbf{V}}_K&\geq& -\varepsilon_K\mathbf{1} \label{limit2},
\end{eqnarray}
for some sequence $\{\varepsilon_K\}_{K=1}^\infty$ such that $\lim_{K\rightarrow\infty}\varepsilon_K=0$.
The sequence $\{\widehat{\mathbf{w}}_K\}_{K=1}^\infty\subset\mathcal{W}$ is bounded and so there exists a subsequence $\{\widehat{\mathbf{w}}_{K_l}\}_{l=1}^\infty$, such that $\lim_{l\rightarrow\infty}\widehat{\mathbf{w}}_{K_l}=\mathbf{w}_{\textup{A}}$, for some $\mathbf{w}_{\textup{A}}\in\mathcal{W}$.
Similarly, by Proposition~\ref{prop:optimal_theta_bound} the sequence $\{\widehat{\mathbf{V}}_{K_l}\}_{l=1}^\infty$ is bounded and so there exists a subsequence $\{\widehat{\mathbf{V}}_{K_{l_n}}\}_{n=1}^\infty$, such that $\lim_{n\rightarrow\infty}\widehat{\mathbf{V}}_{K_{l_n}}=\mathbf{V}_{\textup{A}}$, for some $\mathbf{V}_{\textup{A}}$. By Equations(\ref{limit1})--(\ref{limit2}), we have that for all $n\in\mathds{N}$,
\begin{eqnarray}
\langle\boldsymbol{\mu}_{\pi_{\textup{E}}},\mathbf{c}_{\widehat{\mathbf{w}}_{K_{l_n}}}\rangle-(1-\gamma)\langle\boldsymbol{\nu}_{0},\widehat{\mathbf{V}}_{K_{l_n}}\rangle&\le& \varepsilon_{K_{l_n}},\\ 
\mathbf{c}_{\widehat{\mathbf{w}}_{K_{l_n}}}-(\mathbf{B}-\gamma\mathbf{P})\widehat{\mathbf{V}}_{K_{l_n}}&\geq& -\varepsilon_{K_{l_n}}\mathbf{1} . 
\end{eqnarray}

Taking $n\rightarrow\infty$, we end up that
\begin{eqnarray}
\langle\boldsymbol{\mu}_{\pi_{\textup{E}}},\mathbf{c}_{{\mathbf{w}}_{\textup{A}}}\rangle-(1-\gamma)\langle\boldsymbol{\nu}_{0},{\mathbf{V}}_{\textup{A}}\rangle&\le& 0,\\
\mathbf{c}_{{\mathbf{w}}_{\textup{A}}}-(\mathbf{B}-\gamma\mathbf{P}){\mathbf{V}}_{\textup{A}}&\geq& 0 .
\end{eqnarray}
Equivalently,
\begin{eqnarray}
\langle\boldsymbol{\mu}_{\pi_{\textup{E}}},\mathbf{c}_{{\mathbf{w}}_{\textup{A}}}\rangle-(1-\gamma)\langle\boldsymbol{\nu}_{0},{\mathbf{V}}_{\textup{A}}\rangle&=& 0,\\
\mathbf{c}_{{\mathbf{w}}_{\textup{A}}}-(\mathbf{B}-\gamma\mathbf{P}){\mathbf{V}}_{\textup{A}}&\geq& 0.
\end{eqnarray}
Therefore, by Proposition~\ref{prop:primal-dual-q-function}, $\pi_{\textup{E}}$ is optimal for $\mathbf{c}_{\mathbf{w}_{\textup{A}}}$.

\end{proofof}

\end{document}